\titleformat{\subsubsection}[runin]{\normalfont\bfseries}{\thesubsubsection}{0.7em}{}[.]
\newcommand{\norm}[1]{\|#1\|}
\newcommand{\nor}{\norm{\cdot}}
\newcommand{\abs}[1]{|#1|}
\newcommand{\op}[1]{\operatorname{#1}}
\newcommand{\one}{\boldsymbol{1}}
\newcommand{\zero}{\boldsymbol{0}}
\newcommand{\argmin}{\mathop{\op{argmin}}}
\newcommand{\Argmin}{\mathop{\op{Argmin}}}
\newcommand{\Argmax}{\mathop{\op{Argmax}}}
\newcommand{\proj}{\Pi}
\newcommand{\prox}{\op{prox}}
\newcommand{\conv}{\op{conv}}
\newcommand{\cone}{\op{cone}}
\newcommand{\ext}{\op{ext}}
\newcommand{\diag}{\op{diag}}
\newcommand{\sign}{\op{sign}}
\newcommand{\range}{\op{range}}
\newcommand{\dist}{\op{dist}}
\newcommand{\supp}{\op{Supp}}
\newcommand{\val}{\op{val}}
\newcommand{\card}{\op{card}}
\newcommand{\<}{\langle}
\renewcommand{\>}{\rangle}
\renewcommand{\int}{\mathcal{I}}
\newcommand{\ptn}{\texttt{P}}
\newcommand{\ptnn}{\bar{\texttt{P}}}
\newcommand{\BD}{\texttt{BD}}
\newcommand{\BDbar}{\overline{\texttt{BD}}}
\newcommand{\T}{\texttt{T}}
\newcommand{\simplex}{\mathbf{\Delta}}
\newcommand{\Mmat}{M}
\newcommand{\Mset}{\mathcal{M}}
\newcommand{\reals}{\mathbb{R}}
\newcommand{\hbeta}{\widehat{\beta}}
\newcommand{\sT}{{\sf T}}
\newcommand{\hSigma}{\widehat{\Sigma}}
\newcommand{\Tr}{{\sf Tr}}
\newcommand{\Sc}{\mathcal{S}}
\newcommand{\Bc}{\mathcal{B}}
\newcommand{\Ac}{\mathcal{A}}
\newcommand{\cG}{\mathcal{G}}
\newcommand{\cB}{\mathcal{B}}
\newcommand{\cC}{\mathcal{C}}
\newcommand{\cF}{\mathcal{F}}
\newcommand{\x}{\beta}
\newcommand{\y}{\theta}
\newcommand{\z}{z}
\newcommand{\g}{g}
\newcommand{\owl}{w}
\newcommand{\ksp}{{k-\op{sp}}}
\newcommand{\errL}{{\sf err}^{\rm Lasso}}
\newcommand{\errDS}{{\sf err}^{\rm DS}}
\def\errkk{{\sf err}^{k\square k}}
\def\errk1{{\sf err}^{k\square 1}}
\newcommand{\agg}{\Lambda}
\newcommand{\pran}{\beta;\nor}
\newcommand{\prans}{\beta^\star;\nor}
\definecolor{ygcolor}{RGB}{0,0,0}
\definecolor{darkblue}{RGB}{0,0,200}
\newcommand{\ErrSet}{{\color{ygcolor}\Xi}}
\newcommand{\vpsi}{{\color{ygcolor}{\psi}}}
\newcommand{\REc}{{\color{ygcolor}{\alpha}}}
\newcommand{\rat}{{\color{ygcolor}{\gamma}}}
\newcommand{\qc}{{\color{ygcolor}q}}
\newcommand{\ql}{{\color{ygcolor}\widetilde{q}}}
\def\sq{{\mathord{\scalerel*{\square}{o}}}} 
\theoremstyle{plain}
\newtheorem{proposition}{Proposition}[section]
\newtheorem{theorem}[proposition]{Theorem}
\newtheorem{lemma}[proposition]{Lemma}
\newtheorem{corollary}[proposition]{Corollary} 
\newtheorem{definition}[proposition]{Definition}
\newtheorem{remark}[proposition]{Remark}
\newtheorem{example}[proposition]{Example}
\numberwithin{equation}{section}
\title{New Computational and Statistical Aspects of Regularized Regression with Application to Rare Feature Selection and Aggregation}
\author{Amin Jalali\thanks{Technicolor AI Lab.
Email: \texttt{amin.jalali@technicolor.com}.} \and 
Adel Javanmard\thanks{Marshall School of Business, University of Southern California. Email: \texttt{ajavanma@usc.edu}.} \and 
Maryam Fazel\thanks{Department of Electrical Engineering, University of Washington. Email: \texttt{mfazel@uw.edu}.}}
\date{April 2019}
\begin{document}
\maketitle

\begin{abstract}
Prior knowledge on properties of a target model often come as discrete or combinatorial descriptions. This work provides a {\em unified computational framework} for defining norms that promote such structures. More specifically, we develop associated tools for optimization involving such norms given only the orthogonal projection oracle onto the non-convex set of desired models. As an example, we study a norm, which we term the doubly-sparse norm, for promoting vectors with few nonzero entries taking only a few distinct values. We further discuss how the K-means algorithm can serve as the underlying projection oracle in this case and how it can be efficiently represented as a quadratically constrained quadratic program. Our motivation for the study of this norm is regularized regression in the presence of rare features which poses a challenge to various methods within high-dimensional statistics, and in machine learning in general. The proposed estimation procedure is designed to perform automatic feature selection and aggregation for which we develop statistical bounds. The bounds are general and offer a statistical framework for norm-based regularization. The bounds rely on novel geometric quantities on which we attempt to elaborate as well. 
\end{abstract}

Keywords: Convex geometry, Hausdorff distance, structured models, combinatorial representations, K-means, regularized linear regression, statistical error bounds, rare features.

\tableofcontents
\section{Introduction}
A large portion of estimation procedures in high-dimensional statistics and machine learning have been designed based on principles and methods in continuous optimization. In this pursuit, incorporating prior knowledge on the target model, often presented as discrete and combinatorial descriptions, has been of interest in the past decade. Aside from many individual cases that have been studied in the literature, a number of general frameworks have been proposed. For example, \citep{bach2013learning,obozinski2016unified} define sparsity-related norms and their associated optimization tools from support-based monotone set functions. On the other hand, several unifications have been proposed for the purpose of providing estimation and recovery guarantees. A well-known example is the work of \citep{chandrasekaran2012convex} which connects the success of norm minimization in model recovery given random linear measurements to the notion of Gaussian width \citep{Gordon88}. 
However, many of the final results of these frameworks (excluding discrete approaches such as \citep{bach2013learning}) are quantities that are hard to compute; even evaluating the norm. 
Therefore, many a time computational aspects of these norms and their associated quantities are treated on a case by case basis. In fact, a {\em unified} framework for turning discrete descriptions into continuous tools for estimation, that 1) provides a {\em computational} suite of optimization tools, and 2) is amenable to {\em statistical analysis}, is largely underdeveloped. 

Consider a measurement model $y = X\beta^\star + \epsilon$, where $X\in\mathbb{R}^{n\times p}$ is the {\em design} matrix and $\epsilon\in\mathbb{R}^n$ is the {\em noise} vector. Given combinatorial descriptions of the underlying model, say $\beta^\star\in\Sc \subset\mathbb{R}^p$, in addition to $X$ and $y$, much effort and attention has been dedicated to understanding {\em constrained estimators} for recovery. For example, only assuming access to the (non-convex) projection onto the set of desired models~$\Sc$ enables devising a certain class of recovery algorithms constrained to~$\Sc$; Iterative Hard Thresholding (IHT) algorithms, \citep[Section 3]{blumensath2008iterative} \citep{blumensath2011sampling} (projects onto the set of $k$-sparse vectors), \citep[Section 2]{jain2010guaranteed} (projects onto the set of rank-$r$ matrices), \citep{roulet2017iterative} (does 1-dimensional K-means which is projection onto the set of models with $K$ distinct values), belong to this class. However, a major subset of estimation procedures focus on norms, designed based on the non-convex structure sets, for estimation. Working with convex functions, such as norms, for promoting a structure is a prominent approach due to its flexibility and robustness. Namely, the proposed norms can be used along with different loss functions and constraints\footnote{This is in contrast to the specific constrained loss minimization setups required in IHT.}. In addition, the continuity property of these functions allows the optimization problems to take into account points that are {\em near} (but not necessarily inside) the structure set; a {\em soft} approach to specifying the model class. The seminal work of \citep{chandrasekaran2012convex} provides guarantees for norm minimization estimation, constrained with $X\beta=y$ or $\norm{X\beta-y}_2\leq \delta$, using the notion of Gaussian width. Dantzig selector is another popular category of constrained estimators studied in the literature (e.g., \citep{chatterjee2014generalized}) but other variations also exist (\autoref{sec:varphi-insight} provides a list). In analyzing all of these constrained estimators, {\em the tangent cone}, at the target model with respect to the norm ball, is the determining factor for recoverability. Then, the notion of Gaussian width of such cone \citep{chandrasekaran2012convex,Gordon88} allows for establishing high probability bounds for recovery from many random ensembles of design. In a way, the Gaussian width, or a related quantity known as the statistical dimension \citep{amelunxen2014living}, are local quantities that can be understood as an operational method for {\em gauging the model complexity with respect to the norm} and determining the minimal acquisition requirements for recovery from random linear measurements.

However, regularized estimators pose further challenges for analysis. More specifically, consider
\begin{align}\label{eq:estimator}
\hbeta ~\equiv~ \argmin_\beta ~~ \frac{1}{2n}\|y - X\beta\|_2^2 + \lambda \norm{\beta}
\end{align}
where $\lambda$ is the regularization parameter. 
From an optimization theory perspective, for a fixed design and noise, \eqref{eq:estimator} and a norm minimization problem constrained with $\norm{X\beta-y}\leq \delta$ (see \eqref{eq:estimator-constrained} and \eqref{eq:estimator-tube}) are equivalent if a certain value of $\delta$, corresponding to $\lambda$, is being used; meaning that $\hbeta$ for these estimators will be equal. However, the mapping between theses problem parameters is in general complicated (e.g., see \citep{aravkin2016level}) which renders the aforementioned equivalence useless when studying error bounds that are expressed in terms of these problem parameters (e.g., see bounds in \autoref{thm:estimation} and their dependence on $\lambda$). Furthermore, in the study of {\em expected} error bounds for a family of noise vectors (or design matrices), such equivalence is in general irrelevant (e.g., fixing $\lambda$, each realization of noise will imply a different $\delta$ corresponding to the given value of $\lambda$). 
Nonetheless, a good understanding of regularized estimators with {\em decomposable norms} have been developed; see \citep{negahban2012unified,candes2013simple,foygel2014corrupted,wainwright2014structured,vaiter2015model} for slightly different definitions. These are norms with a special geometric structure and only a handful of examples are known (including the $\ell_1$ norm and the nuclear norm). 
In regularization with general norms, it is possible to provide a high-level analysis, inspired by the analysis for decomposable norms, and provide error bounds; e.g., see \citep{banerjee2014estimation} and follow up works. However, the proposed bounds are in a way {\em conceptual} and no general computational guidelines for evaluating these bounds exist. 
In this work, we introduce a geometric quantity for gauging model complexity with respect to a norm in {\em regularized estimation.} Such quantity, accompanied by a few computational guidelines and connections to the rich literature on convex geometry, then allows for principled approach towards evaluating the previous conceptual error bounds leading to our final statistical characterizations for \eqref{eq:estimator} that are sensitive to 1) norm-induced properties of design, and 2) non-local properties of the model with respect to the norm. 

A motivation behind our pursuit of a computational and statistical framework for regularization is to handle the {\em presence of many rare features} in real datasets, which has been a challenging proving ground for various methods within high-dimensional statistics, and in machine learning in general; see \autoref{sec:intro} for further motivation. In this work, we study an efficient estimator, namely a regularized least-squares problem, for {\em automatic feature selection and aggregation} and develop statistical bounds. The regularization, an atomic norm proposed by \citep{jalali2013convex}, poses new challenges for computation (even norm evaluation) and statistical analysis (e.g., non-decomposability). We extend the computational framework provided in \citep{jalali2013convex} for this norm, in \autoref{sec:the-norm}, and provide statistical error bounds in \autoref{sec:kd-final}. We also establish advantages over Lasso (\autoref{sec:examples-kd}). Moreover, our estimation and prediction error bounds, rely on simple geometric notions to gauge condition numbers and model complexity with respect to the norm. These bounds are quite general and go beyond regularization for feature selection and aggregation.

\subsection{Summary of Technical Contributions}
In this work, we consider regularized regression in the presence of rare features (presented in \autoref{sec:intro}) as our main case study. In our attempt to address this problem, we develop several general results for defining norms from given combinatorial descriptions and for statistical analysis of norm-regularized least-squares, as summarized in the following: 
\begin{enumerate}
\item We adopt an approach to defining norms from given descriptions of desired models, and provide a unified machinery to derive useful quantities associated to a norm for optimization (e.g., the dual norm, the subdifferential of the norm and its dual, the proximal mapping, projection onto the dual norm ball, etc); see \autoref{sec:structure-norms}. Our approach relies on {\em the non-convex orthogonal projection onto the set of desired models.} In \autoref{sec:the-norm}, we discuss how a discrete algorithm such as K-means clustering can be used to define a norm, namely the doubly-sparse norm, for promoting vectors with few nonzero entries taking only a few distinct values. Our results extend those of \citep{jalali2013convex} to any structure.
 
\end{enumerate}
Complementing the existing statistical analysis approaches, for least-squares regularized with any norm, we take a variational approach, through quadratic functions, to understanding norms and provide alternative error bounds that can be easier to interpret, compute, and generalize: 
\begin{enumerate}\setcounter{enumi}{1}

\item 
We provide a prediction error bound 
in terms of {\em norm-induced aggregation measures} of the design matrix for when the noise satisfies the convex concentration property or is a subgaussian random vector. We do this by making a novel use of the Hanson-Wright inequality for when the dual to the norm has a concise variational representation; \autoref{sec:concise-var}. The new bounds are deterministic with respect to the design matrix, are interpretable, and allow for taking detailed information on the design matrix into account, going beyond results on well-known random ensembles which might be unrealistic in real applications. 
\end{enumerate}
Most of the existing estimation bounds for norm-regularized regression can be unified under the notion of {\em decomposability}; see \citep{negahban2012unified,candes2013simple,foygel2014corrupted,vaiter2015model} for slightly different definitions. Our results, in contrast, do not rely on such assumption:

\begin{enumerate}\setcounter{enumi}{2}
\item In gauging model complexity with respect to the regularizer, we introduce a novel geometric measure, termed as {\em the relative diameter}, which then allows for simplified derivations for restricted eigenvalue constants and prediction error bounds. More specifically, we go beyond decomposability {\em and} we provide techniques to compute such complexity measure (\autoref{sec:varphi}). We provide calculations for a variety of norms (e.g., ordered weighted $\ell_1$ norms) used in the high-dimensional statistics literature; \autoref{sec:varphi} and \autoref{app:varphi}. In \autoref{sec:varphi-insight}, we provide further insight into the notion of relative diameter and compare with existing quantities in the literature. Through illustrative examples, we showcase the sensitivity of the relative diameter to the properties of the model and the norm. 
\end{enumerate}
Finally, we use the aforementioned developments to design and analyze a regularized least-squares estimator for regression in the presence of rare features: 
\begin{enumerate}\setcounter{enumi}{3}
\item We propose to use doubly-sparse regularization for regression in the presence of rare features (\autoref{sec:reg-est}). We discuss how such choice allows for {\em automatic feature aggregation}. 
We use the insights and tools we develop in the paper for regression with rare features and establish the advantage of regularizing the least-squares regression with the doubly-sparse norm, given in \eqref{eq:estimator-kd}, over Lasso, in \autoref{sec:kd-final}. 
\end{enumerate}
Last but not least, we provide various characterizations related to a number of norms common in the high-dimensional statistics literature such as the ordered weighted $\ell_1$ norms (commonly used for simultaneous feature selection and aggregation; e.g., see \citep{figueiredo2016ordered}.) which could be of independent interest. See \autoref{sec:varphi-owl} and \autoref{app:varphi}. Proof of technical lemmas are deferred to Appendices.

\paragraph{Notations.}	
Denote by $\norm{A}$ and $\norm{A}_F$ the operator norm and the Frobenius norm of a matrix $A$. We also represent its smallest and largest singular values by $\sigma_{\min}(A)$ and $\sigma_{\max}(A)$. 
For a positive integer $p$, we denote by $[p]$ the set $\{1,2,\ldots, p\}$.
For a compact set $\mathcal{M}\subset \mathbb{R}^p$, the polar set is denoted by $\mathcal{M}^\star = \{x:~ \langle x,y\rangle \leq 1, ~ \forall y\in \mathcal{M} \}$. For a positive integer $p$, we denote by $\mathbb{S}^{p-1}$ the $(p-1)$-dimensional unit sphere, $\mathbb{S}^{p-1} \equiv \{ x\in \reals^p:\, \|x\|_2 = 1\}$. Given a set $\mathcal{M} \subset \reals^p$, we denote by $\conv(\mathcal{M})$ the convex hull of $\mathcal{M}$, i.e., $\conv(\mathcal{M})\equiv \{\sum_{i=1}^k w_i x_i:\, \sum_{i=1}^k w_i = 1,\, w_i\ge0,\, x_i\in \mathcal{M},\,k\in\mathbb{N}\}$. Moreover, define $\cone (\mathcal{M}) \equiv \{\alpha a:~ \alpha \in\mathbb{R}_+,~a\in \mathcal{M} \}$. In addition, given a compact set $\mathcal{M}\subset \reals^p$, a point $a\in \mathcal{M}$ is an {\em extreme point} of $\mathcal{M}$ if $a=(b+c)/2$ for $b,c\in \mathcal{M}$ implies $a=b=c$. Denote by $\one_p$ and $\zero_p$ the vectors of all ones and all zeros in~$\mathbb{R}^p$, respectively. We may drop the subscripts when clear from the context. 
For two vectors $\beta,\theta\in\mathbb{R}^p$, their Hadamard (entry-wise) product is denoted by $\beta \circ \theta$ where $(\beta \circ \theta)_i = \beta_i\theta_i$ for $i\in[p]$. 
The unit simplex in $\mathbb{R}^p$ is denoted by $\simplex_p\equiv \{u\in\mathbb{R}^p:~ u\geq \zero_p,~ \one^\sT u = 1\}$. The full unit simplex is denoted by $\widetilde\simplex_p\equiv \{u\in\mathbb{R}^p:~ u\geq \zero_p,~ \one^\sT u \leq 1\}$. 
In all of this work, we assume the model ($\beta$ or $\beta^\star$) is nonzero.

\section{Motivation: Regularized Regression for Rare Features}\label{sec:intro}
Data sparsity has been a challenging proving ground for various methods. Sparse sensing matrices in the established field of compressive sensing \citep{berinde2008combining}, the inherent sparsity of document-term matrices in text data analysis \citep{wang2010latent}, the ubiquitous sparsity of biological data, from gut microbiota to gene sequencing data, and the sparse interaction matrices in recommendation systems, have been challenging the established methods that otherwise have provable guarantees when certain well-conditioning properties (e.g., the restricted isometry property in compressive sensing) hold. See \citep{yan2018rare} for further motivations.

A common approach when lots of rare features are present is to remove the very rare features in a pre-processing step (e.g., Treelets by \citep{lee2008}). This is not efficient as it may discard large amount of information and better approaches are needed to make use of the rare features to boost estimation and predictive power. 
On the other hand, there have been efforts for establishing success of $\ell_1$ minimization in case of {\em certain sparse sensing matrices} (e.g., see \citep{berinde2008combining,berinde2008sparse, gilbert2010sparse}) where gaps between their statistical requirements and information-theoretical limits exist. 
Combinatorial approaches for subset selection, through integer programming, have also been restricted to certain sparse design matrices to achieve polynomial-time recovery \citep{del2018subset}. 
Instead, a variety of ad-hoc methods, based on solving different optimization programs, have been proposed for going beyond sparse models and making use of rare features \citep{bondell2008simultaneous, zeng2014ordered} and 
there has been a recent interest in this problem within the high-dimensional statistics community. 
While some of these estimators come with a statistical theory, they may require extensive prior knowledge \citep{li2018graph,yan2018rare} which could be expensive or difficult to gather in real applications.

\subsection{Our Approach: Doubly-Sparse Regularization}\label{sec:reg-est}
We approach this problem through feature aggregation, but unlike previous works, we do so in an automatic fashion at the same time as estimation. More specifically, in learning a linear model from noisy measurements, we use the model proposed by \citep{jalali2013convex}: we are interested in vectors that are not only sparse (to be able to ignore unnecessary features) but also have only a few distinct values, which induces a grouping among features and allows for automatic aggregation. We refer to this prior as {\em double-sparsity} and elaborate on it in the sequel. Considering the {\em structure norm} (see \citep{jalali2013convex}, \autoref{sec:Snorms-def}, or \autoref{sec:the-norm}) corresponding to this prior, we study a regularized least-squares optimization program in \eqref{eq:estimator-kd}. Since the existing machinery of atomic norms \citep{chandrasekaran2012convex} does not come with tools for optimization, we develop new tools in \autoref{sec:structure-norms} that can be used to efficiently compute and analyze the proposed estimator. Superior performance over the use of $\ell_1$ regularization (Lasso) in the presence of rare features is showcased in \autoref{sec:kd-final}. 

\subsubsection{The Prior and the Regularization}
A $k$-sparse vector $\beta\in\mathbb{R}^n$ can be expressed as a linear combination of $k$ indicator functions for singletons in $\{1,\ldots,n\}$; i.e., $\beta = \sum_{t=1}^k \beta_t \one(\{i_t\})$ where $\supp(\beta) = \{i_1,\ldots,i_k\}$. In contrast, we are interested in vectors that can be expressed as a linear combination of a few indicator functions using a coarse partitioning of $\{1,\ldots,n\}$; i.e., $\beta = \sum_{t=1}^d \beta_t \one(S_t)$ where $S_1,\ldots,S_d$ partition $\{1,\ldots,n\}$ and $d$ is small. Here, $\beta_t$'s can be zero; i.e., we are allowing $0$ to be one of the $d$ distinct values. 
To combine the two priors, for two fixed values $1\leq d \leq k \leq p$, one can consider vectors $\beta = \sum_{t=1}^d \beta_t \one(S_t)$ where $S_1,\ldots,S_d\subset\{1,\ldots,n\}$ are non-empty and disjoint and $\abs{S_1\cup\ldots\cup S_d}=k$. Those are the vectors with at most $k$ nonzero values where the top $k$ entries have at most $d$ distinct values. 
Finally, to make the prior more suitable for our regression setting, we allow for arbitrary sign patterns within each part. 

Given a vector $\beta$ denote by $\bar{\beta}$ the sorted version of $\abs{\beta}$ in descending order; i.e., $\bar{\beta}_1\geq \bar{\beta}_2\geq \cdots \geq \bar{\beta}_p \geq 0$. Then, we consider
\begin{equation}\label{eq:def-Skd}
\Sc_{k,d} \equiv \bigl\{\beta:~ \card(\beta) \leq k \,,~ \abs{\{\bar{\beta}_1,\ldots,\bar{\beta}_k\}} \leq d \bigr\}; 
\end{equation}
{\em the vectors with at most $k$ nonzero values whose top $k$ absolute values take at most $d$ distinct values}. \autoref{piece_const} illustrates an example. See \citep{jalali2013convex} for further detail and existing works around this idea. 
With the aid of the machinery presented in \autoref{sec:structure-norms}, we can define a norm, referred to as the \emph{doubly-sparse norm}, that can help in recovery of models from $\Sc_{k,d}$ in a sense characterized by our statistical error bounds. For two fixed values $1\leq d \leq k \leq p$, we refer to this norm as the $(k\square d)$-norm, denoted by $\norm{\cdot}_{k\square d}$.

\newcommand{\smalldot}{\circle{0.01}}
\newcommand{\meddot}{\circle{0.18}}
\newcommand{\bigdot}{\circle*{0.3}}
\begin{figure}[h]
\vskip 1in
\hskip 1.9in
\begin{picture}(0,0)
	\setlength{\unitlength}{.35cm}
	\linethickness{0.25mm}
	\put(1.5,1){\vector(1,0){20}}
	\put(19,0){sorted index $i$}
	\put(2,.5){\vector(0,1){6}}
	\put(.6,6){$\bar \beta_i$}
		
	\put(3,5){\bigdot}
	\put(4,5){\bigdot}
	\put(5,5){\bigdot}
	\put(6,5){\bigdot}
	\put(7,3){\bigdot}
	\put(8,3){\bigdot}
	\put(9,3){\bigdot}
	\put(10,2){\bigdot}
	\put(11,2){\bigdot}
	\put(12,2){\bigdot}
	\put(13,2){\bigdot}
	\put(14,2){\bigdot}
	
	\put(15,1){\bigdot}
	\put(16,1){\bigdot}
	\put(17,1){\bigdot}
	\put(18,1){\bigdot}
	\put(19,1){\bigdot}
	\put(20,1){\bigdot}
	
\end{picture}
	\caption{Sorted absolute values of a doubly-sparse vector $\beta\in\Sc_{12,3} \subset\mathbb{R}^{18}$.} 
	\label{piece_const}
\end{figure}
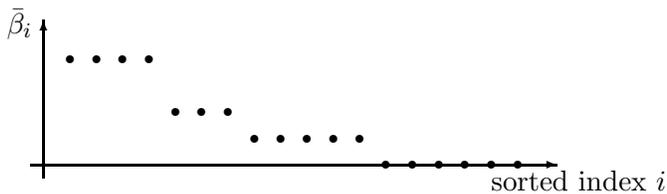

\subsubsection{A Statistical Analysis}
Consider a measurement model $y = X\beta^\star + \epsilon$, where $X\in\mathbb{R}^{n\times p}$ is the {\em design} matrix and $\epsilon\in\mathbb{R}^n$ is the {\em noise} vector. We then consider the following estimator,
\begin{align}\label{eq:estimator-kd}
\hbeta \equiv \argmin_\beta ~~ \frac{1}{2n}\|y - X\beta\|_2^2 + \lambda \norm{\beta}_{k\square d}\,,
\end{align}
where $\lambda$ is the regularization parameter. In \autoref{sec:kd-final}, we analyze \eqref{eq:estimator-kd} and provide {\em prediction error} bounds, namely bounds for $\norm{X(\beta^\star - \hat\beta)}_2$. 

More generally, we consider regularization with any norm. 
In providing a prediction error bound, we show how {\em norm-specific aggregation measures} can be used to bound the regularization parameter (\autoref{sec:concise-var}). For estimation error, we provide a general tight analysis through the introduction of {\em relative diameter} (\autoref{sec:new-bounds}, \autoref{sec:varphi}, and \autoref{sec:varphi-insight}). We make partial progress in computing the relative diameter, namely we do so for $\nor_{k\square 1}$, and its dual, but we also provide computations for some important classes of polyhedral norms to showcase possible strategies; for ordered weighted $\ell_1$ norms studied in \citep{figueiredo2016ordered}, and, for weighted $\ell_1$ and $\ell_\infty$ norms. 
See \autoref{sec:varphi-owl} and \autoref{app:varphi} for details of computations.

\subsubsection{Optimization Procedures}\label{sec:opt-algs-kd}
In computing $\hbeta$ from~\eqref{eq:estimator-kd}, or more generally \eqref{eq:estimator}, one can use different optimization algorithms. While $\nor_{k\square d}$ might seem complicated to even be evaluated, we show in \autoref{sec:the-norm} that there exists an efficient procedure for computing its proximal mapping (for a definition, see \autoref{def:prox}, and for a characterization in the case of $\nor_{k\square d}$, see \autoref{sec:qcqp}). Therefore, here, we only discuss two proximal-based optimization strategies to illustrate the computational efficiency of the estimator in \eqref{eq:estimator-kd}. The optimization program in~\eqref{eq:estimator-kd} is unconstrained and its objective is convex and the sum of a smooth and a non-smooth term. Therefore, as we have access to the proximal mapping associated to the non-smooth part, proximal gradient algorithm seems like a natural choice for optimization. For $t=1,\ldots, T$, we compute 
\begin{align}\label{eq:iterative}
g^t = \frac{1}{n}X^\sT X \beta^{t} - X^\sT y~,~~
\beta^{t+1} = \prox(\beta^t - \eta_t\, g^t; \nor) 
\end{align}
where $\eta_t$ is the step size. The algorithm, with an appropriate choice of step size, reaches an $\delta$-accurate solution (in prediction loss) in $O(1/\delta)$ steps. See \citep{parikh2014proximal} for further details on proximal algorithms.

As we will see later, the proximal mapping is the solution to a convex optimization program and may not admit a closed form representation unlike simple norms such as the $\ell_1$ norm (whose proximal mapping is soft-thresholding). Therefore, it might be inevitable to work with approximate solutions. In such case, {\em inexact proximal methods} \citep{schmidt2011convergence} may be employed which allow for a controlled inexactness in computation of the proximal mapping (more specifically, inexactness in the objective) but provide similar convergence rates as in the exact case.

Alternating Direction Method of Multipliers (ADMM) may also be used to solve \eqref{eq:estimator-kd}, similar to the discussions in Section 6.4 of \citep{boyd2011distributed} for the $\ell_1$ norm. The non-trivial ingredient of such strategy is the proximal mapping for the regularizer, which is available here. 

While this paper is concerned with the regularized estimation, it is worth mentioning that the ability to compute the proximal mapping also enables solving the generalized Dantzig selector (defined in \eqref{eq:estimator-dantzig}) as discussed in \citep{chatterjee2014generalized}. 
	
\section{Projection-based Norms}\label{sec:structure-norms}

Given a compact set $\Ac\subset \mathbb{R}^{p}$ of desired model parameters, which is symmetric, spans $\mathbb{R}^p$, and none of its members belongs to the convex hull of the others, the {\em atomic norm} framework \citep{bonsall1991general,chandrasekaran2012convex} defines a norm through 
\begin{align}	\label{eq:atomic_repr}
\norm{\x}_\Ac = \inf \bigl\{ \sum_{\omega\in\Ac} c_\omega  :\;  \x = \sum_{\omega\in\Ac} c_\omega\, \omega  \;,\; c_\omega \geq 0 \, \bigr\}.
\end{align}
This optimization problem is hard to solve in general and one might end up with linear programs that are difficult to solve or might have to resort to discretization (e.g., \citep{shah2012linear}) or to case-dependent reformulations (e.g., \citep{tang2013compressed}). 

Alternatively, one might consider the dual norm as the building block for further computations: the support function 
to the norm ball or to the atomic set, namely
\begin{align}\label{eq:dual-norm-lin}
\norm{\y}_\Ac^\star 
\equiv 
\sup_{ \norm{\x}_\Ac \leq 1} ~\langle \x,\y\rangle 
= \sup_{a\in \Ac} ~\langle a,\y\rangle .
\end{align} 
Assuming $\Ac\subseteq \mathbb{S}^{p-1}$, using the above variational characterization, and $\dist^2(\y,\Ac) 
\equiv \inf_{a\in \Ac} \|a-\y\|_2^2$, we get 
\begin{align}\label{eq:atomic-dual-dist}
\dist^2(\y,\Ac) 
= 1 + \norm{\y}_2^2 - 2 \norm{\y}_\Ac^\star.
\end{align}
While the dual norm is 1-homogeneous, the other terms above are not, which limits the uses of this expression. As evident from the result of \autoref{lem:dual-len-proj}, homogenizing the atomic set $\Ac$ into $\Sc \equiv \cone(\Ac) = \{\lambda a:~ \lambda\in\mathbb{R},\, a\in\Ac\}$ provides a better object to work with. Next, we elaborate on this direction and provide a framework for defining norms that comes with a computational suite for computing various quantities associated to these norms.

Some of the material in \autoref{sec:Snorms-def} and \autoref{sec:the-norm} have been previously mentioned in \citep{jalali2013convex} without proof and restricted to the so-called {\em $d$-valued models}. We generalize this framework and use it for addressing the problem of interest in \autoref{sec:intro}.
 
\subsection{Definition and Characterizations}\label{sec:Snorms-def}
Given a closed set $\Sc\subseteq \mathbb{R}^p$ that is scale-invariant (closed with respect to scaling by any $a\in\mathbb{R}$ which make it symmetric with respect to the origin as well) and spans $\mathbb{R}^p$, consider an associated convex set $\Bc_\Sc$ defined as
\begin{align}\label{eq:Snorm-gauge}
\Bc_\Sc = \conv \{ \beta: \; \beta\in\Sc \,,\; \norm{\beta}_2 = 1 \} \,.
\end{align}
Since $\Bc_\Sc$ is a symmetric compact convex body with the origin in its interior, the corresponding symmetric gauge function is defined as 
\begin{align} \label{eq:gauge}
	\norm{\beta}_\Sc \equiv \inf \{ \gamma>0 :~ \beta \in \gamma \Bc_\Sc \},
\end{align}
is a norm with $\Bc_\Sc$ as the unit norm ball. 
%
One can view $\norm{\cdot}_\Sc$ as an atomic norm with atoms given by the extreme points of the unit norm ball as $\Ac_\Sc = \ext(\Bc_\Sc)$. Using atoms, we can express $\norm{\cdot}_\Sc$ as in \eqref{eq:atomic_repr} with $\Ac = \Ac_\Sc$. 
As we will see later, $\x\in\Ac_\Sc$ if and only if $\norm{\x}_\Sc = \norm{\x}_2 = \norm{\x}_\Sc^\star$.

As an alternative to \eqref{eq:dual-norm-lin}, \autoref{lem:dual-len-proj} provides a way to compute the dual to this norm. 
Denote by \[\proj(\theta; \Sc) = \proj_\Sc(\theta)= \textstyle\argmin_\beta\{ \norm{\theta - \beta}_2:~ \beta \in \Sc \}\] the (non-convex) orthogonal projection onto $\Sc$. Note that the projection mapping onto a non-convex set is set-valued in general. We refer to \autoref{app:Snorm-summary} for further details and proofs of the following statements. 
\begin{figure}[htbp]
\begin{center}
\begin{tikzpicture}[scale=0.6]
\draw [gray, ->] (-4,0) -- (4,0);
\draw [gray, ->] (0,-3) -- (0,3);
\draw [thick] (-1,-1.5) rectangle (1,1.5);
\draw [gray, dashed, thick] (-1.3,-1.3*1.5) -- (1.3,1.3*1.5) node[above,black]{$\Sc$};
\draw [gray, dashed, thick] (-1.3,1.3*1.5) -- (1.3,-1.3*1.5);
\draw [gray, dotted, thick] (0,0) circle ({sqrt(13)/2});
\draw (0,13/6) -- (13/4,0) -- (0,-13/6) -- (-13/4,0) -- (0,13/6);
\end{tikzpicture}
\caption{The value of dual norm, $\nor_\Sc^\star$, is equal to the length of projection onto the structure set~$\Sc$. In the above schematic, $\Sc$ is the union of the two dashed lines. The norm ball is the convex hull of $\Sc\cap\Bc_2$ and is represented by the thick rectangle. The skewed diamond represents the dual norm ball. }
\label{fig:dual-norm-proj}
\end{center}
\end{figure}
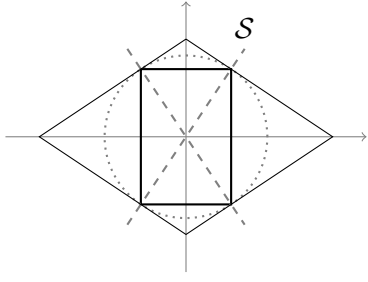

\begin{proposition}[The Dual Norm]\label{lem:dual-len-proj}
	Given any closed scale-invariant set $\Sc\subseteq \mathbb{R}^p$ which spans $\mathbb{R}^p$, the dual norm to $\norm{\cdot}_\Sc$ is given by 
	\begin{align}\label{eq:dual_norm_gen}
		\norm{\theta}_\Sc^\star 
		\equiv \sup_{\norm{\beta}_\Sc\leq 1} \langle\beta,\theta \rangle		
		= \norm{ \proj(\theta; \Sc) }_2
	\end{align}
	where $\norm{ \proj(\theta; \Sc) }_2$ refers to the $\ell_2$ norm of any member of the set and is well-defined. Moreover, 
\begin{equation}\label{eq:dual-pair}
\langle \theta , \proj(\theta; \Sc) \rangle = \norm{\proj(\theta;\Sc)}_2^2= \norm{\proj(\theta; \Sc)}_\Sc\, \norm{\theta}_\Sc^\star 
\end{equation}
which illustrates the pair of achieving vectors in the definition of dual norm and yields
\begin{align}\label{eq:Snorm-dual-dist}
(\norm{\theta}_\Sc^\star )^2
= \norm{\theta}_2^2 - \dist^2(\theta,\Sc). 
\end{align}
\end{proposition}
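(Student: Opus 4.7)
The plan is to exploit the scale-invariance of $\Sc$ by decomposing it into lines through the origin. Set $\Ac := \Sc\cap\mathbb{S}^{p-1}$; by scale-invariance, $\Ac=-\Ac$, and by construction $\Bc_\Sc=\conv(\Ac)$, so $\Sc=\bigcup_{a\in\Ac}\reals a$. The key observation is that both the dual norm $\norm{\theta}_\Sc^\star$ and the squared distance $\dist^2(\theta,\Sc)$ can be computed as two-stage optimizations (minimize along each ray, then over atoms). This uniform treatment yields all three identities at once.

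For \eqref{eq:Snorm-dual-dist}, since the support function of $\conv(\Ac)$ equals the supremum over $\Ac$, one has $\norm{\theta}_\Sc^\star=\sup_{a\in\Ac}\langle a,\theta\rangle$; by $\Ac=-\Ac$ this also equals $\sup_{a\in\Ac}|\langle a,\theta\rangle|$, and in particular $\sup_{a\in\Ac}\langle\theta,a\rangle^2=(\norm{\theta}_\Sc^\star)^2$. A one-dimensional least-squares calculation gives $\min_{\lambda\in\reals}\norm{\theta-\lambda a}_2^2=\norm{\theta}_2^2-\langle\theta,a\rangle^2$, attained at $\lambda^\star=\langle\theta,a\rangle$; minimizing over $a\in\Ac$ immediately yields $\dist^2(\theta,\Sc)=\norm{\theta}_2^2-(\norm{\theta}_\Sc^\star)^2$. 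For \eqref{eq:dual_norm_gen}, any projection $\beta^\star\in\proj(\theta;\Sc)$ either equals $0$ (the trivial $\theta=0$ case) or lies on some ray $\reals a^\star$ with $a^\star\in\Ac$ and, being globally optimal, is also optimal along that ray, so the 1D formula forces $\beta^\star=\langle\theta,a^\star\rangle\, a^\star$. Matching $\norm{\theta-\beta^\star}_2^2$ to the just-proved distance formula forces $|\langle\theta,a^\star\rangle|=\norm{\theta}_\Sc^\star$, so $\norm{\beta^\star}_2=\norm{\theta}_\Sc^\star$ regardless of which projection was chosen, giving both the well-definedness claim and \eqref{eq:dual_norm_gen}.

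For \eqref{eq:dual-pair}, the identity $\langle\theta,\beta^\star\rangle=\langle\theta,a^\star\rangle^2=\norm{\beta^\star}_2^2$ is immediate from the explicit form $\beta^\star=\langle\theta,a^\star\rangle\, a^\star$. The remaining equality $\norm{\beta^\star}_2^2=\norm{\beta^\star}_\Sc\norm{\theta}_\Sc^\star$ reduces to the fact that every atom satisfies $\norm{a}_\Sc=1$: since $\Bc_\Sc$ is the convex hull of unit-norm vectors, $\Bc_\Sc\subseteq\Bc_2$, so $\norm{\cdot}_\Sc\ge\norm{\cdot}_2$ pointwise, forcing $\norm{a}_\Sc\ge 1$; the reverse inequality comes from $a\in\Bc_\Sc$. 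The two steps requiring genuine care are (i) invoking the symmetry $\Ac=-\Ac$ at the right moment so that $\sup_a\langle\theta,a\rangle^2$ really equals $(\norm{\theta}_\Sc^\star)^2$ rather than a possibly larger signed-squared quantity, and (ii) arguing that a global minimum over $\Sc$ must also be optimal along its own ray, which is what pins down the explicit projection formula used throughout; everything else is a routine unwinding of definitions.
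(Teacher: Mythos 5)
Your proof is correct and rests on the same ingredients as the paper's own argument: scale-invariance makes $\Sc$ a union of lines through the origin, the projection onto such a line has the explicit one-dimensional form, and the dual norm is the supremum of $\langle \theta, a\rangle$ over the atoms $a\in\Sc\cap\mathbb{S}^{p-1}$. The only difference is organizational: you establish \eqref{eq:Snorm-dual-dist} first via the ray-then-atom minimization and then read \eqref{eq:dual_norm_gen} off it (and you make the verification of $\norm{a}_\Sc=1$ for the last equality in \eqref{eq:dual-pair} explicit), whereas the paper proves \eqref{eq:dual_norm_gen} directly by comparing the projection against rescaled atoms and obtains \eqref{eq:Snorm-dual-dist} by expansion.
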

\autoref{fig:dual-norm-proj} illustrates \autoref{lem:dual-len-proj}. \autoref{eq:dual-pair} is also known as the {\em alignment property} in the literature. In contrast with \eqref{eq:atomic-dual-dist}, the expression in \eqref{eq:Snorm-dual-dist} is 2-homogeneous in $\theta$. 
With the above characterization for the dual norm we get 
\begin{align}\label{eq:Snorm-from-dual}
\norm{\beta}_\Sc = \sup \, \{\, \langle \beta , \theta \rangle :~ \norm{ \proj(\theta; \Sc) }_2 \leq 1 \, \}. 
\end{align}

Since the optimal $\beta$ in the definition of the dual norm in \eqref{eq:dual_norm_gen} is known to be $\proj_\Sc(\theta)$, we can easily characterize the subdifferential as in the following. 

\begin{lemma}[Subdifferential of dual norm] \label{subdiff_dualnorm_gen}
The subdifferential of the dual norm at $\x\neq 0$ is given by
\begin{equation*}
\partial\norm{\x}_\Sc^\star = \frac{1}{\norm{\proj_\Sc(\x)}_2} \conv\left(\proj_\Sc(\x)\right) 
\end{equation*}
which in turn implies $\partial (\tfrac{1}{2}{\norm{\x}_\Sc^\star}^2) = \conv \left(\proj_\Sc(\x) \right)$.
\end{lemma}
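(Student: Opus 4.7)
The plan is to identify $\partial \norm{x}_\Sc^\star$ as the exposed face of the primal unit ball $\Bc_\Sc$ in direction $x$, and then use Proposition~\ref{lem:dual-len-proj} to parametrize that face explicitly via the projection set $\proj_\Sc(x)$. Concretely, since $\norm{\cdot}_\Sc^\star$ is the support function of $\Bc_\Sc$, a standard result in convex analysis gives
\[
\partial \norm{x}_\Sc^\star \;=\; \Argmax_{\beta\in \Bc_\Sc}\, \langle \beta, x\rangle.
\]
Because $\Bc_\Sc = \conv\{\beta\in\Sc:\norm{\beta}_2=1\}$, any maximizer decomposes as a convex combination of unit-norm atoms $a_i\in\Sc\cap \mathbb{S}^{p-1}$; the averaging identity forces each such $a_i$ (with positive weight) to attain the same inner product with $x$, yielding
\[
\partial\norm{x}_\Sc^\star \;=\; \conv\bigl\{ a\in \Sc\cap\mathbb{S}^{p-1} :\; \langle a, x\rangle = \norm{x}_\Sc^\star \bigr\}.
\]

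Next I would show this extremal set coincides with $\tfrac{1}{\norm{\proj_\Sc(x)}_2}\proj_\Sc(x)$ by double inclusion. For the forward direction, take $p\in\proj_\Sc(x)$: by the alignment identity \eqref{eq:dual-pair}, $\langle p,x\rangle=\norm{p}_2^2$ and $\norm{p}_2=\norm{x}_\Sc^\star$, so $p/\norm{p}_2$ lies in $\Sc\cap\mathbb{S}^{p-1}$ (using scale invariance of $\Sc$) and attains the maximum $\norm{x}_\Sc^\star$. For the reverse direction, given any $a\in\Sc\cap\mathbb{S}^{p-1}$ with $\langle a,x\rangle=\norm{x}_\Sc^\star=\norm{\proj_\Sc(x)}_2$, set $p\equiv \norm{\proj_\Sc(x)}_2\cdot a\in\Sc$ and compute
\[
\norm{x-p}_2^2 \;=\; \norm{x}_2^2 - 2\langle x,p\rangle + \norm{p}_2^2 \;=\; \norm{x}_2^2-\norm{\proj_\Sc(x)}_2^2 \;=\; \dist^2(x,\Sc),
\]
where the last equality invokes \eqref{eq:Snorm-dual-dist}. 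Hence $p\in\proj_\Sc(x)$ and $a=p/\norm{p}_2$, completing the identification and proving the first claim.

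For the second statement I would invoke the subdifferential chain rule: writing $\tfrac{1}{2}(\norm{x}_\Sc^\star)^2 = g(\norm{x}_\Sc^\star)$ with $g(t)=t^2/2$ nondecreasing and differentiable on $[0,\infty)$, the composition rule gives $\partial(\tfrac{1}{2}(\norm{x}_\Sc^\star)^2) = \norm{x}_\Sc^\star\cdot \partial\norm{x}_\Sc^\star$, which multiplies out to $\conv(\proj_\Sc(x))$ since $\norm{x}_\Sc^\star=\norm{\proj_\Sc(x)}_2$.

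The only delicate step is the argument that the exposed face of $\Bc_\Sc$ is already the convex hull of the maximizing atoms in $\Sc\cap\mathbb{S}^{p-1}$ (rather than a strictly larger set of extreme points of $\Bc_\Sc$); this is the place where one must be careful to use the averaging observation above, noting that $\Bc_\Sc$ is compact and that any element is a finite convex combination of points from the generating set $\Sc\cap\mathbb{S}^{p-1}$. Once that point is settled, the remainder is bookkeeping with the alignment identity already provided by Proposition~\ref{lem:dual-len-proj}.
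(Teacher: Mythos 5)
Your proposal is correct, but it follows a genuinely different route from the paper's proof. You start from the support-function fact $\partial\norm{\x}_\Sc^\star=\Argmax_{\beta\in\Bc_\Sc}\langle \beta,\x\rangle$, show that this exposed face of $\Bc_\Sc$ is exactly the convex hull of the maximizing atoms in $\Sc\cap\mathbb{S}^{p-1}$ (your averaging argument is sound, since every point of $\Bc_\Sc$ is a finite convex combination of atoms), and then identify the maximizing atoms with $\proj_\Sc(\x)/\norm{\proj_\Sc(\x)}_2$ by a double inclusion using the alignment identity \eqref{eq:dual-pair} and the distance formula \eqref{eq:Snorm-dual-dist}; the squared-norm statement then follows by the chain rule for a nondecreasing differentiable outer function. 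The paper instead verifies only the inclusion $\supseteq$ directly from \autoref{lem:dual-len-proj}, and obtains the reverse inclusion by applying a Danskin-type theorem on subdifferentials of parametric maxima (Theorem 3 of Yu, 2012) to the variational form $(\norm{\x}_\Sc^\star)^2=\sup_{\y\in\Sc}\{2\langle\x,\y\rangle-\norm{\y}_2^2\}$, after localizing to a compact constraint set so that the theorem's hypotheses hold; the squared-norm identity is obtained there essentially as a byproduct. What your approach buys is a self-contained, elementary convex-geometric argument that avoids the external parametric-differentiability theorem and the compactification step; what the paper's approach buys is that the same variational/Danskin machinery extends directly to the squared dual norm and mirrors the representation-based computations used elsewhere in \autoref{sec:repr}. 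Two small points to make explicit in a final write-up: $\norm{\proj_\Sc(\x)}_2=\norm{\x}_\Sc^\star>0$ because $\x\neq 0$ and $\Sc$ spans $\mathbb{R}^p$, so the normalization is legitimate; and membership $p\in\Sc$ in your reverse inclusion uses the scale invariance of $\Sc$.
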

Proof of \autoref{subdiff_dualnorm_gen} is given in \autoref{app:projection}.

While an oracle that computes the projection enables us to carry out many computations for quantities related to the structure norm (e.g., the value of $\norm{\x}_\Sc\,$, the proximal operators for the norms and squared norms, 
as well as projection onto $\Bc_\Sc$, as discussed in the rest of this section), some properties of the structure set can highly simplify these computations. In the following, we consider the {\em invariance} properties of the structure (under permutations and sign changes) and in \autoref{lem:monot}, we discuss {\em monotonicity} properties of the structure. \autoref{lem:inv-proj} is not entirely new and has been discussed in the literature in one form or another. 

\begin{lemma}[Invariance in Projection]\label{lem:inv-proj}
Consider a closed set $A\subseteq\mathbb{R}^p$, convex or non-convex, and the orthogonal projection mapping $\proj(\cdot\,;A)$. Then, 
\begin{itemize}
\item Provided that $A$ is closed under a change of signs of entries (i.e., $\beta\in A$ implies $s \circ \beta\in A$ for any sign vector $s\in\{\pm1\}^p$) then $\theta \circ \beta \geq 0$ for any $\theta\in\proj(\beta; A)$. 

\item Provided that $A$ is closed under permutation of entries (i.e., $\beta\in A$ implies $\pi(\x)\in A$ for any permutation operator $\pi(\cdot)$) then $\beta$ and any $\theta\in\proj(\beta; A)$ have the same ordering: $\beta_i > \beta_j$ implies $\theta_i \geq \theta_j$ for all $i,j\in[p]$. 
\end{itemize}
\end{lemma}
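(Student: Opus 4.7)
The plan is to prove both items by a direct optimality argument: for each invariance hypothesis, we exhibit a candidate member of $A$ obtained from $\theta$ by applying an allowed symmetry (a sign flip, or a transposition of two coordinates), and then show that unless the conclusion holds, this candidate would strictly improve the distance to $\beta$, contradicting $\theta \in \proj(\beta; A)$. Both arguments reduce to elementary one- or two-coordinate algebra; neither requires any convexity of $A$.

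For the first item, fix $\theta \in \proj(\beta; A)$ and any index $i$. Define the sign vector $s \in \{\pm 1\}^p$ by $s_i = -1$ and $s_j = 1$ for $j \neq i$; by hypothesis $s \circ \theta \in A$. The squared distances differ only in coordinate $i$, and the difference is
\begin{equation*}
\|s \circ \theta - \beta\|_2^2 - \|\theta - \beta\|_2^2
= (-\theta_i - \beta_i)^2 - (\theta_i - \beta_i)^2
= 4\, \theta_i \beta_i.
\end{equation*}
Optimality of $\theta$ forces this to be nonnegative, i.e.\ $\theta_i \beta_i \geq 0$. Since $i$ was arbitrary, $\theta \circ \beta \geq 0$.

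For the second item, fix $\theta \in \proj(\beta; A)$ and indices $i,j$ with $\beta_i > \beta_j$. Let $\pi$ be the transposition swapping $i$ and $j$; by hypothesis $\pi(\theta) \in A$. Only coordinates $i$ and $j$ contribute to the difference in squared distances, and a short expansion yields
\begin{equation*}
\|\pi(\theta) - \beta\|_2^2 - \|\theta - \beta\|_2^2
= 2(\theta_i - \theta_j)(\beta_j - \beta_i).
\end{equation*}
Optimality of $\theta$ again forces this to be nonnegative; since $\beta_j - \beta_i < 0$, we conclude $\theta_i - \theta_j \geq 0$, as required.

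No step looks genuinely hard; the only thing to be careful about is that $\proj(\beta;A)$ may be set-valued for non-convex $A$, but the argument treats an arbitrary element $\theta$ of that set, and the invariance hypotheses are used precisely to keep the perturbed candidate inside $A$ so that the minimality of $\|\theta-\beta\|_2$ can be invoked. It is also worth noting that the second item makes no symmetry assumption on $A$ beyond permutation-invariance, and the first makes none beyond sign-invariance, so the two parts are logically independent and can be stated and proved separately as above.
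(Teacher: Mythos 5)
Your overall strategy is sound and, in fact, more elementary than the paper's: for the sign-invariance part the paper compares $\theta$ against the single globally realigned candidate $\abs{\theta}\circ\sign(\beta)$ and deduces $\theta_i\beta_i\ge 0$ from a chain of inner-product inequalities forced to be equalities, whereas you flip one coordinate at a time and read off $4\theta_i\beta_i\ge 0$ directly; for the permutation part the paper routes through the sorted vectors $\pi_\theta(\theta)$, $\pi_\beta(\beta)$ and the rearrangement inequality before running a swap-based contradiction, whereas you go straight to the two-coordinate transposition. Your localized perturbations are legitimate because the hypotheses guarantee the perturbed point stays in $A$, and treating an arbitrary element of the (possibly set-valued) projection is exactly right. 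The two approaches buy essentially the same thing; yours is shorter and isolates precisely which symmetry is used where.

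There is, however, a sign slip in your second display. Expanding the two affected coordinates gives
\begin{equation*}
\|\pi(\theta) - \beta\|_2^2 - \|\theta - \beta\|_2^2
= 2\theta_i(\beta_i-\beta_j) - 2\theta_j(\beta_i-\beta_j)
= 2(\theta_i - \theta_j)(\beta_i - \beta_j),
\end{equation*}
not $2(\theta_i - \theta_j)(\beta_j - \beta_i)$. With your version, nonnegativity together with $\beta_j-\beta_i<0$ would force $\theta_i-\theta_j\le 0$, the opposite of what you claim; so as literally written the final inference does not follow from the displayed identity. With the corrected identity, optimality gives $2(\theta_i-\theta_j)(\beta_i-\beta_j)\ge 0$, and since $\beta_i-\beta_j>0$ you conclude $\theta_i\ge\theta_j$ as desired. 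This is a fixable algebra typo rather than a gap in the argument.
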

Proof of \autoref{lem:inv-proj} is given in \autoref{app:projection}.

\subsection{Examples}
In the following, we provide a few examples of structure norms, both existing and new; 
\begin{itemize}
\item projection of $\beta$ onto $\Sc = \{\lambda e_i :\lambda \in\mathbb{R},\, i\in[p]\}$, where $e_i$ is the $i$-th standard basis vector, is the set of all $\|\beta\|_\infty e_{i^*}$ with $i^* \in \arg\max\{i\in[p]:\, \beta_i = \|\beta\|_\infty\}$. The length of such projections is indeed the $\ell_\infty$ norm which is dual to $\ell_1$ norm. 

\item When $\Sc$ is the set of all rank-1 matrices, projection onto $\Sc$ is the principal component and its length is the largest singular value of the matrix, the operator norm.

\item For structure norms defined based on $\Sc_{k,d}$, given in \eqref{eq:def-Skd}, see \autoref{sec:the-norm}. \autoref{fig:kdnorms} provides a schematic of this family of norms, for different values of $k$ and $d$, as well as their dual norms.
\item 
consider $w\in\mathbb{R}^p$ satisfying $w_1\geq w_2 \geq \cdots \geq w_p >0$ and $\Sc =\{\gamma Qw: \gamma\in\mathbb{R},~Q\in\mathcal{P}_\pm \}\subset \mathbb{R}^{p}$ where $\mathcal{P}_\pm $ is the set of signed permutation matrices. As established in \autoref{lem:extBst-owl}, we have
\begin{align*}
\norm{\beta}_\Sc \equiv \norm{w}_2 \cdot \norm{\beta}_\owl^\star
\end{align*}
where $\norm{\beta}_\owl \equiv \langle w,\bar\beta\rangle$ is the ordered weighted $\ell_1$ norm associated to $w$. Projection onto $\Sc$ requires sorting the absolute values of the input vector.

\item As another example, consider $\Sc =\{\gamma Q: \gamma\in\mathbb{R},~Q\in\mathcal{P}_\pm \}\subset \mathbb{R}^{p\times p}$ where $\mathcal{P}_\pm $ is the set of signed permutation matrices. 
Given a matrix $A$, its projection onto $\Sc$ can be derived by projecting $\abs{A}$ onto $\{\gamma P:~ \gamma\in\mathbb{R},~P\in\mathcal{P}\}$ where $\mathcal{P}$ is the set of permutation matrices. However, we already know efficient algorithms for finding the nearest permutation matrix (without a scaling factor $\gamma$); algorithms for solving the assignment problem such as the Hungarian method. \autoref{lem:proj_S_SB2} establishes that these two solutions are related.

\end{itemize}
\begin{lemma}\label{lem:proj_S_SB2}
We have $\cone(\proj_\Sc(\x)) = \cone(\proj_{\Sc\cap\mathbb{S}^{p-1}}(\x))$. In other words, one can project onto $\Sc\cap\mathbb{S}^{p-1}$ and later find the correct scaling of the projected point to get $\proj_\Sc(\x)$. 
\end{lemma}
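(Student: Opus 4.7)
The plan is to reduce the projection onto $\Sc$ to the projection onto $\Sc\cap\mathbb{S}^{p-1}$ by separating magnitude from direction. Since $\Sc$ is scale-invariant, any nonzero $\beta\in\Sc$ can be written as $\beta=\lambda a$ with $a\in\Sc\cap\mathbb{S}^{p-1}$ and $\lambda\in\mathbb{R}$; throughout I take $\x\neq 0$, which is the paper's standing assumption on the model. Expanding
\begin{equation*}
\norm{\x-\lambda a}_2^2 \;=\; \norm{\x}_2^2 - 2\lambda\langle a,\x\rangle + \lambda^2
\end{equation*}
and minimizing in $\lambda$ for fixed $a$ yields $\lambda^\star(a)=\langle a,\x\rangle$ with residual $\norm{\x}_2^2-\langle a,\x\rangle^2$. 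Consequently, minimizing the squared distance over $\Sc$ is equivalent to maximizing $|\langle a,\x\rangle|$ over $a\in\Sc\cap\mathbb{S}^{p-1}$.

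Next I would exploit the antipodal symmetry of $\Sc\cap\mathbb{S}^{p-1}$ (a consequence of scale-invariance) together with the observation that the candidate projection point $\lambda^\star(a)\,a=\langle a,\x\rangle\,a$ is invariant under $a\mapsto -a$. Thus one may restrict without loss of generality to the signed maximizers and write
\begin{equation*}
\proj_\Sc(\x) \;=\; \bigl\{\,M\,a :\; a\in\argmax_{a\in\Sc\cap\mathbb{S}^{p-1}}\langle a,\x\rangle\,\bigr\},\qquad M \;=\; \max_{a\in\Sc\cap\mathbb{S}^{p-1}}\langle a,\x\rangle.
\end{equation*}
Because $\Sc$ spans $\mathbb{R}^p$ and is antipodally symmetric, some $a\in\Sc\cap\mathbb{S}^{p-1}$ has $\langle a,\x\rangle>0$, so $M>0$. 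On the other hand, $\norm{a-\x}_2^2=1-2\langle a,\x\rangle+\norm{\x}_2^2$ for any $a\in\mathbb{S}^{p-1}$, so $\proj_{\Sc\cap\mathbb{S}^{p-1}}(\x)$ is exactly the same $\argmax$ set.

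Combining the two identifications gives $\proj_\Sc(\x)=M\cdot\proj_{\Sc\cap\mathbb{S}^{p-1}}(\x)$ with $M>0$. Since the paper defines $\cone(\mathcal{M})=\{\alpha a:\alpha\in\mathbb{R}_+,\,a\in\mathcal{M}\}$, and $\cone(cS)=\cone(S)$ for any $c>0$, the two cones coincide, which proves the lemma. I expect the main delicate point to be the sign bookkeeping in the second step: one must be careful that in the $|\cdot|$-maximization each maximizer $a$ and its antipode $-a$ give the same projection point $\langle a,\x\rangle a$, so that $\proj_\Sc(\x)$ is indexed by the signed (not absolute) maximizers---this is precisely what lines it up, up to the common positive scalar $M$, with $\proj_{\Sc\cap\mathbb{S}^{p-1}}(\x)$.
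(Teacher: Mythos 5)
Your proof is correct, but it takes a different route from the paper's primary argument. You separate magnitude from direction: writing each nonzero point of $\Sc$ as $\lambda a$ with $a\in\Sc\cap\mathbb{S}^{p-1}$, minimizing over $\lambda$ first, and observing that both projection problems reduce to maximizing $\langle a,\x\rangle$ over $\Sc\cap\mathbb{S}^{p-1}$ (with the sign bookkeeping you note, and with $M>0$ because $\Sc$ spans $\mathbb{R}^p$ and $\x\neq 0$; attainment of $M$ follows since $\Sc$ is closed, so $\Sc\cap\mathbb{S}^{p-1}$ is compact, and $M>0$ also rules out $0\in\Sc$ as a minimizer). This yields the explicit identity $\proj_\Sc(\x)=M\cdot\proj_{\Sc\cap\mathbb{S}^{p-1}}(\x)$ with the single scalar $M=\norm{\x}_\Sc^\star$, which is slightly stronger than the stated equality of cones, and it is self-contained. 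The paper instead argues by cross-optimality: taking arbitrary $\y\in\proj_{\Sc\cap\mathbb{S}^{p-1}}(\x)$ and $\z\in\proj_\Sc(\x)$, it combines the two optimality inequalities with the alignment identity $\langle\x,\z\rangle=\norm{\z}_2^2$ from the proof of Proposition~\ref{lem:dual-len-proj} to conclude $\norm{\z}_2\,\y\in\proj_\Sc(\x)$ and $\z/\norm{\z}_2\in\proj_{\Sc\cap\mathbb{S}^{p-1}}(\x)$; that version is shorter given the earlier lemma but leans on it, whereas your decomposition re-derives everything from scratch (and in fact coincides with the informal ``find the direction first, then the scaling'' explanation the paper sketches right after its proof). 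Both arguments are valid and give the lemma.
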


Proof of \autoref{lem:proj_S_SB2} is given in \autoref{app:projection}. The above is also helpful in making use of $\proj(\cdot\,;\Sc)$ in place of $\proj(\cdot\,; \Sc\cap \mathbb{S}^{p-1})$ in greedy algorithms such as the one studied in \citep{tewari2011greedy}.

\subsection{Quantities based on a Representation}\label{sec:repr}
Note that while the dual norm (or its subdifferential, characterized in \autoref{subdiff_dualnorm_gen}) can be directly computed from the projection, computation of quantities such as the norm value in \eqref{eq:Snorm-from-dual}, or objects we discuss next, namely the projection onto the dual norm ball, the proximal mapping for the norm, or the subdifferential for the norm, could greatly benefit from a {\em representation} of the projection onto the structure which can then be plugged into the aforementioned optimization programs. For the structure $\Sc_{k,d}$ considered in \autoref{sec:the-norm}, we have access to an efficient representation for the dual norm in terms of a quadratically constrained quadratic program (QCQP). 

The subdifferential of a norm is useful in devising subgradient-based algorithms and can be computed via
\begin{align}\label{eq:subd-proj}
\partial \norm{\beta} = \Argmax_\theta \bigl\{ \langle \beta,\theta \rangle : \norm{\theta}^\star \leq 1 \bigr\}.
\end{align}
Alternatively, consider the proximal mapping associated to $\nor$ which is defined as the unique solution to the following optimization program, 
\begin{align}\label{def:prox}
\prox(\pran) \equiv \argmin_\theta ~ \frac{1}{2}\norm{\beta-\theta}_2^2 + \norm{\theta} .
\end{align}
The proximal mapping enables a wide range of optimization strategies that are commonly more efficient that subgradient-based methods; e.g., \citep{parikh2014proximal}. For example, in \autoref{sec:opt-algs-kd}, we briefly mentioned proximal gradient descent as well as ADMM for solving the regularized least-squares problem \eqref{eq:estimator-kd} or \eqref{eq:estimator} assuming an efficient routine for evaluating the proximal mapping. 

The proximal mapping admits a closed form solutions for simple cases such as the $\ell_1$ norm or the nuclear norm; soft-thresholding. However, more generally it can be computed through projection onto the dual norm ball, namely as
\begin{align}\label{eq:prox-proj}
\prox(\pran) 
= \beta - \argmin_\theta \bigl\{ \norm{\beta-\theta}_2^2 
:~ \norm{\theta}^\star \leq 1
\bigr\}.
\end{align} 
For computing \eqref{eq:subd-proj} or \eqref{eq:prox-proj}, one may express the dual norm ball as $\Bc^\star = \{\theta:~ \langle \beta, \theta \rangle \leq 1 ~\forall \beta\in\Bc \}$ where $\Bc = \{\beta:~ \norm{\beta} \leq 1\}$. 
Therefore, the proximal mapping may be computed through 
\begin{align*}
\prox(\pran) 
= \beta - \argmin_\theta \bigl\{ \norm{\beta-\theta}_2^2 
:~ \langle \tilde\beta,\theta \rangle \leq 1 ~~ \forall \tilde\beta\in\Bc
\bigr\} .
\end{align*}
Since $\Bc$ may have an infinite number of elements, or exponentially-many, it is not straightforward to solve such a quadratic optimization problem especially in each iteration of another algorithm such as proximal gradient descent or ADMM described in \autoref{sec:opt-algs-kd}. Therefore, a more efficient representation of the dual norm ball could enable an efficient computation of the proximal mapping, subgradients, etc.

\paragraph{Black-box versus Representable.} In the case of structure norms, namely $\nor_\Sc$, we have (by assumption) an efficient routine to evaluate the projection onto $\Sc$ which allows us to check membership (feasibility) in $\{\theta:~\norm{\theta}_\Sc^\star\leq 1\} = \{\theta:~ \norm{\proj(\theta;\Sc)}_2\leq 1\}$. Optimization (for \eqref{eq:subd-proj} or \eqref{eq:prox-proj}) given only a feasibility oracle is still not easy. However, in cases such as $\Sc_{k,d}$, it is possible to derive {\em an efficient representation} for the projection onto $\Sc$ and the dual norm, which can then replace the dual norm ball membership constraints and yield the objects of interest (subgradients or the proximal mapping) as solutions to manageable convex optimization programs. More concretely, assume we can establish 
\begin{align}\label{eq:dualnorm-eff-repr}
\norm{\theta}^\star = \min_u \bigl\{ f(\theta, u):~ (\theta, u)\in \mathcal{T} \bigr\}
\end{align}
where $\mathcal{T}$ is a finite-dimensional convex set and $f$ is a convex function. Then, the proximal mapping can be expressed as 
\[
\prox(\pran) = \beta - \argmin_\theta \bigl\{ \norm{\beta-\theta}_2^2 
:~ f(\theta,u)\leq 1,~ (\theta,u)\in\mathcal{T}
\bigr\}.
\]
Deriving a representation as in \eqref{eq:dualnorm-eff-repr} is the main focus of \autoref{sec:the-norm} for $\nor_{k\square d}^\star$; given in \autoref{lem:projS-qcqp}.

\subsection{Doubly-sparse Norms (\texorpdfstring{$k\square d$-norm}{kd-norm})}\label{sec:the-norm}
Here, we discuss a structure motivated by the statistical estimation problem at hand, namely regression in the presence of rare features. As we show, a fast discrete algorithm, namely the 1-dimensional K-means algorithm, can be used to define a norm for feature aggregation as well as for computing its optimization-related quantities.

For two fixed values $1\leq d \leq k \leq p$, the structure set $\Sc=\Sc_{k,d}$ in \eqref{eq:def-Skd} is scale-invariant and spans $\mathbb{R}^p$. Therefore, we consider the structure norm associated to $\Sc_{k,d}$ to which we refer as the $(k\square d)$-norm and we denote by $\norm{\cdot}_{k\square d}$, or $\norm{\cdot}_\sq$ when clear from the context. Specifically,
\begin{align}\label{eq:kd-def}
\norm{\beta}_{k\square d} \equiv \inf \{ \gamma>0 :~ \beta \in \gamma \Bc_{\Sc_{k,d}}\}\,,
\end{align}
with $\Bc_{\Sc_{k,d}} = \conv \{ \beta: \; \beta\in\Sc_{k,d} \,,\; \norm{\beta}_2 = 1 \}$.
According to \autoref{lem:dual-len-proj}, we have $\norm{\theta}_{k\square d}^\star(\theta) = \norm{ \proj(\theta; \Sc_{k,d}) }_2$, and in turn, $\norm{\beta}_{k\square d} = \sup\{\langle \theta, \beta\rangle:~ \norm{\theta}_{k\square d}^\star \leq 1 \}$. Next, we address the computational aspects.

\subsubsection{Examples; for Different Values of $k$ and $d$}\label{sec:kdnorms-examples}
It is clear from~\eqref{eq:def-Skd} that $\Sc_{k,d_1}\subset \Sc_{k,d_2}$ for $d_1\leq d_2$: since $k$ is fixed, if $\abs{\{\bar{\beta}_1,\ldots,\bar{\beta}_k\}} \leq d_1$ then $\abs{\{\bar{\beta}_1,\ldots,\bar{\beta}_k\}} \leq d_2$. Therefore, $\norm{\cdot}_{k\square 1} \geq \cdots \geq \norm{\cdot}_{k\square k} $ for any $k\in\{1,\ldots,p\}$. 
\begin{remark}
Note that a similar monotonicity does not hold with respect to $k$. Consider $1\leq d\leq k_1\leq k_2 \leq p$. If $\card(\beta)\leq k_1$ then $\card(\beta)\leq k_2$. However, if $\abs{\{\bar{\beta}_1,\ldots,\bar{\beta}_{k_1}\}} \leq d$, the addition of elements $\bar{\beta}_{k_1+1}=\ldots=\bar{\beta}_{k_2}=0$ to the set may increase the number of distinct values by $1$. Therefore, $\Sc_{k_1,d}\subseteq \Sc_{k_2,d+1}$ for any $1\leq d\leq k_1\leq k_2 \leq p$. 

However, with $\val(\beta)\equiv \abs{\{ \abs{\beta_i}\neq 0:~ i\in[p]\}}$ and $\widetilde{\Sc}_{k,d} \equiv \{\beta:~\card(\beta)\leq k,~ \val(\beta)\leq d\}$, the addition of the extra zero elements do not change $\val$, and we get $\widetilde{\Sc}_{k_1,d} \subseteq \widetilde{\Sc}_{k_2,d}$ for any $1\leq d\leq k_1\leq k_2 \leq p$. The new definition differs from~\eqref{eq:def-Skd} in not counting zero as a separate value among the top $k$ entries. For example, the dual norm corresponding to $\widetilde{\Sc}_{p,1}$ is $(\norm{\beta}^\star)^2 = \max_{r\in[p]} \frac{1}{r}(\sum_{i=1}^r \bar\beta_i)^2$. 
\end{remark}
Nonetheless, we have $ \norm{\cdot}_1 = \norm{\cdot}_{1\square 1}\geq \cdots \geq \norm{\cdot}_{p\square p}=\norm{\cdot}_2$. It is worth noting that for any $k\in \{1,\ldots,p\}$, $\norm{\cdot}_{k\square k}$ coincides with the $k$-support norm \citep{argyriou2012sparse}. Furthermore, \autoref{lem:all-k-1} (\autoref{lem:all-k-1-normk1}) establishes that 
\begin{align}\label{lem:norm-k-1}
\norm{\beta}_{k\square 1} = \max\{\frac{1}{\sqrt{k}}\norm{\beta}_1 , \sqrt{k}\norm{\beta}_\infty \}.
\end{align} 
As a corollary, we get $\norm{\cdot}_{p\square 1} =\sqrt{p}\norm{\cdot}_\infty$. See \autoref{fig:kdnorms} for a full picture for $\nor_{k\square d}$ and $\nor_{k\square d}^\star$.


\begin{figure}[h]
\vskip 2.3in
\begin{center}
\hskip -6.5in
\begin{picture}(0,0)
	\setlength{\unitlength}{.5cm}
	\linethickness{0.25mm}
	\put(2.5,1){\vector(1,0){11}}
	\put(14,.8){$k$}
	\put(3,.5){\vector(0,1){9}}
	\put(2.3,9){$d$}
		
	\put(4.5,2.5){\bigdot}
	\put(3.6,3){{$\nor_1$}}
	\put(6,2.5){\bigdot}
		\put(7,2.5){\smalldot}
		\put(7.5,2.5){\bigdot}
		\put(3.8,1.5){{
		\footnotesize $\max\{ \tfrac{1}{\sqrt{k}} \nor_1, \sqrt{k} \nor_\infty \}$
		}}
		\put(8,2.5){\smalldot}
	\put(9,2.5){\bigdot}
	\put(10.5,2.5){\bigdot}
	\put(10.8,2.4){{$\nor_{\infty}\cdot\sqrt{p}$}}
	
	\put(6,4){\bigdot}
		\put(7,4){\smalldot}
		\put(7.5,4){\smalldot}
		\put(8,4){\smalldot}
	\put(9,4){\bigdot}
	\put(8.3,4.4){}
	\put(10.5,4){\bigdot}

		\put(7,5){\smalldot}
		\put(7.5,5.5){\bigdot}
		\put(5.7,6.1){{$\nor_\ksp$}}
		\put(8,6){\smalldot}	

		\put(9,5){\smalldot}	
		\put(9,5.5){\smalldot}	
		\put(9,6){\smalldot}	

		\put(10.5,5){\smalldot}	
		\put(10.5,5.5){\bigdot}	
		\put(10.8,5.4){{$\nor_{p \sq d}$}}
		\put(10.5,6){\smalldot}	

	\put(9,7){\bigdot}
	\put(10.5,7){\bigdot}

	\put(10.5,8.5){\bigdot}	
	\put(10.8,8.4){{$\nor_2$}}
\end{picture}

\vskip -.2in \hskip -.2in
\begin{picture}(0,0)
	\setlength{\unitlength}{.5cm}
	\linethickness{0.25mm}
	\put(2.5,1){\vector(1,0){11}}
	\put(14,.8){$k$}
	\put(3,.5){\vector(0,1){9}}
	\put(2.3,9){$d$}
		
	\put(4.5,2.5){\bigdot}
	\put(3.6,3){{$\nor_\infty$}}
	\put(6,2.5){\bigdot}
		\put(7,2.5){\smalldot}
		\put(7.5,2.5){\bigdot}
		\put(5.1,1.6){{
		$\nor_{1,\text{top-}k} \cdot \frac{1}{\sqrt{k}}$
		}}
		\put(8,2.5){\smalldot}
	\put(9,2.5){\bigdot}
	\put(10.5,2.5){\bigdot}
	\put(10.8,2.4){{$\nor_1\cdot\tfrac{1}{\sqrt{p}}$}}
	
	\put(6,4){\bigdot}
		\put(7,4){\smalldot}
		\put(7.5,4){\smalldot}
		\put(8,4){\smalldot}
	\put(9,4){\bigdot}
	\put(8.3,4.4){}
	\put(10.5,4){\bigdot}

		\put(7,5){\smalldot}
		\put(7.5,5.5){\bigdot}
		\put(5.4,6.3){{$\nor_{2,\text{top-}k}$}}
		\put(8,6){\smalldot}	

		\put(9,5){\smalldot}	
		\put(9,5.5){\smalldot}	
		\put(9,6){\smalldot}	

		\put(10.5,5){\smalldot}	
		\put(10.5,5.5){\bigdot}	
		\put(10.8,5.4){{$\nor_{p \sq d}^\star $}}
		\put(10.5,6){\smalldot}	

	\put(9,7){\bigdot}
	\put(10.5,7){\bigdot}

	\put(10.5,8.5){\bigdot}	
	\put(10.8,8.4){{$\nor_2$}}
\end{picture}
\end{center}
	\caption{Doubly-sparse norms $\nor_{k,d}$ (on the left) and their dual norms (on the right) for all possible pairs $(k,d)$ which unifies some new and existing vector norms. $\nor_\ksp$ denotes the $k$-support norm and is dual to the $\ell_2$ norm of top $k$ entries in absolute value denoted by $\nor_{2,\text{top-}k}\,$. The $\ell_1$ norm of top $k$ entries in absolute value is denoted by $\nor_{1,\text{top-}k}\,$. This figure has been adapted from \citep{jalali2013convex}.} 
	\label{fig:kdnorms}
\end{figure}
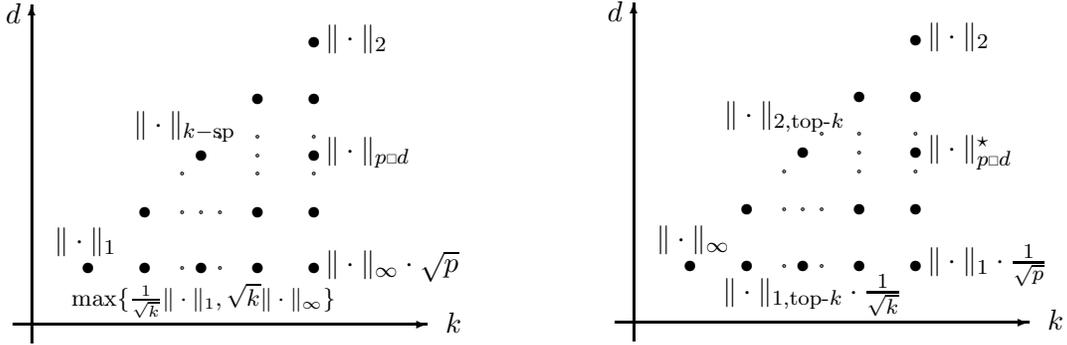

\subsubsection{The Projection and its Combinatorial Representation}
Before discussing the projection onto~$\Sc_{k,d}$, in \autoref{lem:proj-Skd}, we state a lemma to establish a reduction principle that allows simplifying such projection. This reduction makes use of the {\em invariance} and {\em monotonicity} properties for such projection. We established the former in \autoref{lem:inv-proj}. For the latter, \autoref{lem:monot} can be thought of as an implication of the Occam razor principle. In simple terms, if the characteristic property that defines a structure ignores zero values, the projected vector will have a support included in the support of original vector; there is no need to have new values in those places when computing the projection. Similarly, if the characteristic property treats similar values as one value, there is no need to map them to distinct values in the projection. These suggest that we can always consider problems in a reduced space; only considering non-zero entries and distinct values in our structure of interest, namely $\Sc_{k,d}$.

\begin{lemma}[Monotonicity]\label{lem:monot}
Consider a closed scale-invariant set $\Sc\subseteq\mathbb{R}^p$ that spans $\mathbb{R}^p$. Moreover, consider any orthogonal projection $\theta \in \proj(\beta; \Sc)$. We have: 
\begin{itemize}
\item 
If $u\in\Sc$ implies $u-u_ie_i\in \Sc$ for all $i\in[p]$, then $\supp(\theta) \subseteq \supp(\beta)$ for any $\theta \in \proj(\beta; \Sc)$; i.e., $\beta_i=0$ implies $\theta_i = 0$ for any $i\in[p]$ and any $\theta\in \proj(\beta;\Sc)$. 

More generally, consider an orthogonal projection matrix $P=P^\sT=P^2$. If (i) $u\in\Sc$ implies $Pu\in\Sc$, and, (ii) $\beta=P\beta$, then, $\theta\in\proj(\beta;\Sc)$ implies $P\theta=\theta$. 

\item 
If $u\in\Sc$ implies $u - (u_i-u_j)e_i\in\Sc$ for all $i,j\in[p]$, then $\beta_i = \beta_j$ implies $\theta - (\theta_i-\theta_j)e_i \in\proj(\beta;\Sc)$ for any $\theta\in\proj(\beta;\Sc)$.

More generally, consider a pair $(A,B)$ of oblique projection matrices, i.e., $A^2=A$ and $B^2=B$, satisfying $A^\sT A + B^\sT B =2I$. Assume $A\beta=B\beta=\beta$, and that $u\in\Sc$ implies $Au,Bu\in\Sc$. Then, for any $\theta\in\proj(\beta;\Sc)$, we have $A\theta, B\theta\in \proj(\beta;\Sc)$. 

\end{itemize}
\end{lemma}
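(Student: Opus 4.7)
The lemma splits into two items, and in each I would first establish the general operator-theoretic statement and then derive the coordinate-wise version as a special case via an explicit choice of projection(s). Throughout, fix $\theta \in \proj(\beta;\Sc)$, so $\theta \in \Sc$ and $\norm{\beta - \theta}_2 \leq \norm{\beta - u}_2$ for every $u \in \Sc$.

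\textbf{Item 1.} For the general form, the plan is to note $P\theta \in \Sc$ by assumption~(i), and then to exploit non-expansiveness of an orthogonal projection: using $P\beta = \beta$,
\[
\norm{\beta - P\theta}_2^2 = \norm{P(\beta - \theta)}_2^2 \leq \norm{\beta - \theta}_2^2.
\]
Optimality of $\theta$ forces equality, and for an orthogonal projection equality $\norm{Pv}_2 = \norm{v}_2$ means $v \in \range(P)$; hence $P(\beta - \theta) = \beta - \theta$, which combined with $P\beta = \beta$ rearranges to $P\theta = \theta$. For the coordinate-wise version I would take $P$ to be the diagonal $0/1$ projection onto $\supp(\beta)$ and verify, by iterating the hypothesis $u \mapsto u - u_i e_i$ over the coordinates $i \notin \supp(\beta)$, that $P$ preserves $\Sc$; that $P\beta = \beta$ is immediate.

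\textbf{Item 2.} The core is a parallelogram-type identity: the hypothesis $A^\sT A + B^\sT B = 2I$ yields, for every vector $v$,
\[
\norm{Av}_2^2 + \norm{Bv}_2^2 = \langle v, (A^\sT A + B^\sT B)\, v \rangle = 2\norm{v}_2^2.
\]
Applied to $v = \beta - \theta$ and using $A\beta = B\beta = \beta$ this becomes
\[
\norm{\beta - A\theta}_2^2 + \norm{\beta - B\theta}_2^2 = 2\norm{\beta - \theta}_2^2.
\]
Because $A\theta, B\theta \in \Sc$, optimality of $\theta$ gives $\norm{\beta - A\theta}_2^2 \geq \norm{\beta - \theta}_2^2$ and similarly for $B\theta$, so the displayed identity forces both inequalities to be equalities; hence $A\theta, B\theta \in \proj(\beta;\Sc)$. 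For the coordinate-wise version, when $\beta_i = \beta_j$, I would choose $A = I - e_i(e_i - e_j)^\sT$ and $B = I - e_j(e_j - e_i)^\sT$, which respectively overwrite coordinate $i$ by the value at $j$ and coordinate $j$ by the value at $i$. A short calculation gives $A^2 = A$, $B^2 = B$, and (using $\beta_i = \beta_j$) $A\beta = B\beta = \beta$; computing further, $A^\sT A = I - e_i e_i^\sT + e_j e_j^\sT$ and $B^\sT B = I - e_j e_j^\sT + e_i e_i^\sT$, which sum to $2I$. The scalar hypothesis supplies $Au, Bu \in \Sc$ for all $u \in \Sc$, so the general form applies and delivers $A\theta = \theta - (\theta_i - \theta_j) e_i \in \proj(\beta;\Sc)$.

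\textbf{Main obstacle.} I expect the delicate point to be Item~2: one must recognize that the correct generalization of ``coordinate-averaging'' is governed precisely by $A^\sT A + B^\sT B = 2I$, even though neither $A$ nor $B$ is itself an orthogonal projection (indeed, they are not symmetric), and then choose the two coordinate-replacement operators \emph{asymmetrically} so their Gram products complement each other. Once that identity is in hand the rest is optimality plus bookkeeping. Item~1 is comparatively routine, being the standard non-expansiveness argument for orthogonal projections plus the observation that equality forces membership in the range.
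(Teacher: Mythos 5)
Your proof is correct and follows essentially the same route as the paper: both items are established in their general operator form via the optimality inequality (with $\norm{P(\beta-\theta)}_2^2\le\norm{\beta-\theta}_2^2$ forced to equality in Item 1, and the two inequalities summed against $A^\sT A+B^\sT B=2I$ in Item 2), and then specialized with exactly the same choices $P=I-e_ie_i^\sT$ (or its iterate) and $A=I-e_ie_i^\sT+e_ie_j^\sT$, $B=I-e_je_j^\sT+e_je_i^\sT$. No gaps.
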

Proofs for \autoref{lem:monot}, \autoref{lem:proj-cardk}, \autoref{lem:proj-bar-Skd}, \autoref{lem:proj-Skd}, and \autoref{lem:proj-Skd-combinat-rep}, are given in \autoref{app:kd-norm}.

\begin{lemma}\label{lem:proj-cardk}
If $\Sc$ is sign and permutation invariant and $\Sc\subseteq \{\beta:~ \card(\beta)\leq k\}$, then for all $\theta\in\proj(\beta; \Sc)$ we have $\theta_i=0$ whenever $\abs{\beta_i} < \bar\beta_k$. 
\end{lemma}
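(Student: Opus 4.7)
The plan is to argue by contradiction. Suppose some $\theta\in\proj(\beta;\Sc)$ has $\theta_i\neq 0$ for an index $i$ with $\abs{\beta_i}<\bar\beta_k$; I will construct a $\theta'\in\Sc$ strictly closer to $\beta$ by relocating the nonzero mass at $i$ into a top-$k$ coordinate of $\abs{\beta}$ that $\theta$ does not already occupy.

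First I would reduce to the case $\beta\geq 0$ via sign invariance. Picking any $s\in\{\pm 1\}^p$ with $s\circ\beta=\abs{\beta}$, the map $u\mapsto s\circ u$ is an $\ell_2$-isometry preserving supports that maps $\Sc$ onto $\Sc$, so replacing $(\beta,\theta)$ by $(\abs{\beta},s\circ\theta)$ preserves the statement to be proved. Henceforth assume $\beta\geq 0$, and note that $\bar\beta_k$ is the $k$-th largest entry of $\beta$.

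A pigeonhole step then selects the target index: let $T=\{j:\beta_j\geq \bar\beta_k\}$, so $\abs{T}\geq k$ and $i\notin T$ (since $\beta_i<\bar\beta_k$). Since $\theta\in\Sc\subseteq\{\beta:\card(\beta)\leq k\}$ and $i\in\supp(\theta)$, we have $\abs{\supp(\theta)\cap T}\leq k-1<\abs{T}$, so some $j^*\in T$ satisfies $\theta_{j^*}=0$. Define $\theta'$ by $\theta'_i=0$, $\theta'_{j^*}=\abs{\theta_i}$, and $\theta'_\ell=\theta_\ell$ for $\ell\notin\{i,j^*\}$. Applying the permutation that swaps $i$ and $j^*$ (permutation invariance) and, if $\theta_i<0$, a sign flip at coordinate $j^*$ (sign invariance) keeps $\theta'\in\Sc$. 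A one-line computation, using $\beta_i\theta_i\leq \beta_i\abs{\theta_i}$ (valid because $\beta_i\geq 0$) and $\beta_{j^*}\geq \bar\beta_k>\beta_i$, yields
\[
\|\beta-\theta\|_2^2-\|\beta-\theta'\|_2^2 \;=\; 2\abs{\theta_i}\beta_{j^*}-2\beta_i\theta_i \;\geq\; 2\abs{\theta_i}(\beta_{j^*}-\beta_i) \;>\;0,
\]
contradicting $\theta\in\proj(\beta;\Sc)$.

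The main obstacle is really just bookkeeping: verifying that $\theta'$ lies in $\Sc$, which simultaneously requires permutation closure (to relocate the entry from $i$ to $j^*$) and sign closure (to orient it consistently with the positive coordinate $\beta_{j^*}$ when $\theta_i<0$). Both ingredients are exactly the hypotheses of the lemma, and no finer structural assumption on $\Sc$ beyond "$k$-sparse, sign- and permutation-invariant" is needed.
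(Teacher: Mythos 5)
Your proof is correct and follows essentially the same route as the paper's: argue by contradiction, use a pigeonhole count on the at-most-$k$ support of $\theta$ to find an unoccupied index $j^*$ with $\abs{\beta_{j^*}}\geq\bar\beta_k$, relocate the mass $\abs{\theta_i}$ there using sign and permutation invariance, and verify a strict decrease of $2\abs{\theta_i}(\abs{\beta_{j^*}}-\abs{\beta_i})>0$ in squared distance. The only cosmetic difference is that you first normalize to $\beta\geq 0$ via sign invariance, while the paper carries the signs through the computation directly.
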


\begin{lemma}\label{lem:proj-bar-Skd}
For a given $\beta$, consider $\sign(\beta)$ (where $\sign(0)$ is arbitrary from $\{+1,-1\}$) and a permutation $\pi$ for which $\pi(\abs{\beta})$ is sorted in descending order. Then 
\[
\proj(\beta;\Sc_{k,d}) = \bigl\{
\pi^{-1}(\theta) \circ \sign(\beta):~ \theta\in\proj(\bar\beta; \Sc_{k,d})
\bigr\}~,~~
\proj(\bar\beta;\Sc_{k,d}) = \bigl\{
\pi(\abs{\theta}):~ \theta\in\proj(\beta; \Sc_{k,d})
\bigr\}
\]
\end{lemma}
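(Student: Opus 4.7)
The plan is to exhibit an explicit isometry of $\mathbb{R}^p$ that sends $\beta$ to $\bar\beta$ and preserves $\Sc_{k,d}$ setwise; both identities then follow from the general fact that orthogonal projection onto a set commutes with any isometry stabilizing that set.

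With $\sign(\beta)\in\{\pm 1\}^p$ and $\pi$ fixed as in the statement, I would define $T:\mathbb{R}^p\to\mathbb{R}^p$ by $T(v)=\pi(v\circ\sign(\beta))$. As a composition of an entrywise sign flip and a coordinate permutation, $T$ is an orthogonal transformation with inverse $T^{-1}(w)=\pi^{-1}(w)\circ\sign(\beta)$, and a direct computation gives $T(\beta)=\pi(|\beta|)=\bar\beta$. Moreover, the defining conditions of $\Sc_{k,d}$ in \eqref{eq:def-Skd} depend only on the multiset of absolute values of the entries, so $\Sc_{k,d}$ is closed under arbitrary entrywise sign flips and coordinate permutations, and hence $T(\Sc_{k,d})=\Sc_{k,d}$.

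Combining these two properties, the substitution $\phi=T(\psi)$ in the minimization $\min_{\phi\in\Sc_{k,d}}\|T(v)-\phi\|_2$, together with $\|T(v)-T(\psi)\|_2=\|v-\psi\|_2$, yields
\[
\proj(T(v);\Sc_{k,d}) \;=\; T\bigl(\proj(v;\Sc_{k,d})\bigr) \qquad \text{for every } v\in\mathbb{R}^p.
\]
Specializing to $v=\beta$ gives $\proj(\bar\beta;\Sc_{k,d})=T(\proj(\beta;\Sc_{k,d}))$, and applying $T^{-1}$ to both sides yields the first identity of the lemma.

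For the second identity, $T(\phi)=\pi(\phi\circ\sign(\beta))$ has to be rewritten as $\pi(|\phi|)$. By \autoref{lem:inv-proj}, sign-invariance of $\Sc_{k,d}$ forces $\phi_i\beta_i\ge 0$ for every $\phi\in\proj(\beta;\Sc_{k,d})$, so on coordinates with $\beta_i\neq 0$ one has $\phi_i\sign(\beta_i)=|\phi_i|$ automatically. The only bookkeeping obstacle, and the reason for the stipulation that $\sign(0)$ be arbitrary in $\{\pm 1\}$, is the handling of coordinates where $\beta_i=0$: there the free sign of $\beta_i$ is selected to agree with $\sign(\phi_i)$, which restores $\phi\circ\sign(\beta)=|\phi|$ and hence $T(\phi)=\pi(|\phi|)$. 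All remaining steps are mechanical consequences of the invariance of $\Sc_{k,d}$ under sign flips and permutations.
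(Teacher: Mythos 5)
Your isometry argument is correct in its core and takes a genuinely different, and for the first identity cleaner, route than the paper. The paper works directly from optimality: it takes a projection of $\bar\beta$, verifies by a distance computation that $\pi^{-1}(\theta)\circ\sign(\beta)$ achieves the same objective value for $\beta$, argues the converse direction separately, and invests effort (via \autoref{lem:inv-proj} and \autoref{lem:monot}) in making the maps well defined in the presence of ties among the $\abs{\beta_i}$ and of zero entries. Your observation that $T(v)=\pi(v\circ\sign(\beta))$ is an orthogonal map sending $\beta$ to $\bar\beta$ and stabilizing $\Sc_{k,d}$, so that $\proj(T(v);\Sc_{k,d})=T(\proj(v;\Sc_{k,d}))$, packages both inclusions into one line and dispenses with the tie-handling entirely: the identity holds for every admissible choice of $\pi$ and of $\sign(0)$, and the independence of the right-hand side from these choices falls out as a corollary rather than requiring the paper's separate invariance argument.

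The one step that does not work as written is your treatment of the coordinates with $\beta_i=0$ in the second identity. You cannot ``select the free sign of $\beta_i$ to agree with $\sign(\phi_i)$'': $\sign(\beta)$ is a single vector fixed once in the statement and used uniformly across all $\phi\in\proj(\beta;\Sc_{k,d})$ and in both displayed identities, so it cannot be re-chosen per projection; and \autoref{lem:inv-proj} only gives $\phi_i\beta_i\ge 0$, which says nothing when $\beta_i=0$. Fortunately the obstacle is vacuous: $\Sc_{k,d}$ is closed under zeroing out a coordinate, so the first bullet of \autoref{lem:monot} (equivalently \autoref{lem:proj-cardk}) gives $\supp(\phi)\subseteq\supp(\beta)$ for every $\phi\in\proj(\beta;\Sc_{k,d})$. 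Hence $\phi_i=0$ wherever $\beta_i=0$, so $\phi\circ\sign(\beta)=\abs{\phi}$ holds identically for any fixed choice of $\sign(0)$, and $T(\phi)=\pi(\abs{\phi})$ as needed. Replace the sign-reselection sentence with this support-inclusion observation and the proof is complete; the same fact is also what makes the right-hand sides of both identities independent of the arbitrary choice of $\sign(0)$.
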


\begin{lemma}[\citep{jalali2013convex}]\label{lem:proj-Skd}
The following procedure returns all of the projections of $\beta\in\mathbb{R}^p$ onto $\Sc_{k,d}$ defined in~\eqref{eq:def-Skd}:
\begin{itemize}
\item[(i)] project $\beta$ onto $\Sc_{k,k}$ (zero out all entries except the $k$ of the entries with largest absolute values) and consider the shortened output $\beta^{(k)}\in\mathbb{R}^k$, 
\item[(ii)] project $\beta^{(k)}$ onto $\Sc_{k,d} \subset \mathbb{R}^k$ (perform the 1-dimensional K-means algorithm on entries of $\abs{\beta^{(k)}}$ and stack the corresponding centers with signs according to $\beta^{(k)}$), 
\item[(iii)] put the new entries back in a $p$-dimensional vector, by padding with zeros. 
\end{itemize}
Repeat this procedure when there are multiple choices in steps (i) or (ii). 
\end{lemma}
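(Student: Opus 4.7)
The plan is to prove both directions simultaneously: every vector produced by the three-step procedure lies in $\proj(\beta;\Sc_{k,d})$, and conversely every element of $\proj(\beta;\Sc_{k,d})$ can be obtained by suitable tie-breaking in steps (i)--(ii). The strategy is to successively reduce the $p$-dimensional projection problem to a problem on the top-$k$ coordinates, and then to a one-dimensional K-means instance, leveraging the invariance and monotonicity machinery already developed.

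First, I would remove the sign and permutation nuisance. Since $\Sc_{k,d}$ is closed under entrywise sign flips and coordinate permutations, \autoref{lem:inv-proj} guarantees that any $\theta\in\proj(\beta;\Sc_{k,d})$ satisfies $\theta\circ\beta\geq 0$ and is comonotone with $\beta$ in absolute value. Combined with \autoref{lem:proj-bar-Skd}, this lets me assume without loss of generality that $\beta=\bar\beta$ is sorted in descending order and nonnegative; the inverse permutation and sign restoration are exactly what step (iii) (together with the sign-stacking in step (ii)) performs.

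Second, I would reduce to a $k$-dimensional problem. Because $\Sc_{k,d}\subseteq\{\beta:\card(\beta)\leq k\}$ and is sign/permutation-invariant, \autoref{lem:proj-cardk} forces $\theta_i=0$ whenever $\beta_i<\bar\beta_k$, i.e., outside the indices selected in step (i). When $\beta_k>\beta_{k+1}$ the support of $\theta$ is uniquely the top-$k$ index set; when there is a tie $\bar\beta_k=\bar\beta_{k+1}=\cdots$, any $k$-subset of the tied indices together with the strictly larger ones is a valid support, which is precisely the multiple-choice branch of step (i). Given the support, the objective $\|\beta-\theta\|_2^2$ separates into the unavoidable contribution from the discarded coordinates and the contribution $\|\beta^{(k)}-\theta^{(k)}\|_2^2$ on the chosen ones, so minimizing over $\Sc_{k,d}$ reduces to minimizing over $\theta^{(k)}\in\mathbb{R}^k$ with at most $d$ distinct absolute values.

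Third, I would identify the reduced problem as one-dimensional K-means. Applying \autoref{lem:inv-proj} one more time inside $\mathbb{R}^k$ (sign invariance of ``at most $d$ distinct absolute values'') shows $\theta^{(k)}\geq 0$, so the task is to find nonnegative $\theta^{(k)}$ with at most $d$ distinct values minimizing $\sum_{i=1}^k(\beta^{(k)}_i-\theta^{(k)}_i)^2$. For any fixed partition of $[k]$ into $d$ blocks, the $\ell_2$-optimal common value on a block is its mean of $\beta^{(k)}$; optimizing over partitions is exactly the 1-D K-means objective. Thus step (ii) enumerates (up to tie-breaking among equally optimal clusterings) all optimal $\theta^{(k)}$.

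The main obstacle will be the careful bookkeeping of ties: both the tie at the $k$-th order statistic in step (i) and the possibility of equally-optimal K-means clusterings in step (ii) genuinely produce distinct projections, and the proof must show the procedure captures exactly these and no spurious ones. This is handled by the forward-and-backward reading of the reduction above: every optimizer decomposes into a support choice plus a K-means solution on $\beta^{(k)}$, and every such pair yields an optimizer, so the set of outputs obtained by exhausting the multiple choices in steps (i)--(ii) equals $\proj(\beta;\Sc_{k,d})$.
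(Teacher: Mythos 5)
Your proposal is correct and follows essentially the same route as the paper: reduce to the sorted nonnegative case via \autoref{lem:inv-proj} and \autoref{lem:proj-bar-Skd}, restrict the support to the top-$k$ indices via \autoref{lem:proj-cardk} (with \autoref{lem:monot} handling ties at the $k$-th order statistic), exploit the additive separation of $\|\beta-\theta\|_2^2$, and identify the residual $k$-dimensional problem as 1-D K-means. The only nitpick is that the support of a projection is \emph{contained in}, not necessarily equal to, the selected top-$k$ index set (zero may be one of the $d$ values), but this does not affect the reduction.
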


We will use \autoref{eq:dual_norm_gen} to compute the dual norm and further derive a combinatorial representation for it. Note that while computing the projection itself can be done through K-means, we are interested in a representation for this projection which can can then be used in computing other quantities; as discussed in \autoref{sec:repr}.

\begin{lemma}\label{lem:proj-Skd-combinat-rep}
For a given vector $\theta\in \reals^p$, denote by $\bar{\theta}$ the sorted version of $|\theta|$ in descending order, i.e., $\bar{\theta}_1\ge \cdots\ge \bar{\theta}_p \ge 0$. Then, 
\begin{align*}
\norm{ \proj(\theta; \Sc_{k,d}) }_2^2 
= \max \bigl\{ \sum_{i=1}^d \frac{1}{\abs{\int_i}} (\one^\sT \bar{\theta}_{\int_i})^2 :~ (\int_1,\cdots,\int_d)\in\ptnn(k,d) \bigr\} \nonumber
\end{align*}
where $\ptnn(k,d)$ is the set of all partitions of $\{1,\ldots,k\}$ into 
$d$ groups of consecutive elements. Then, 
\begin{align*}
\left[
	\frac{\one^\sT \bar{\theta}_{\int_i}}{\abs{\int_i}}\one_{\int_i}, 
\cdots
	\frac{\one^\sT \bar{\theta}_{\int_d}}{\abs{\int_d}}\one_{\int_d}, 
	0, \cdots, 0
\right]^\sT
\in \proj(\bar{\theta}; \Sc_{k,d}).
\end{align*}
\end{lemma}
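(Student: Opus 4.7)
The plan is to compute $\norm{\proj(\theta;\Sc_{k,d})}_2$ by reducing the problem to a one-dimensional K-means on a sorted nonnegative sequence, and then showing that the optimal clustering must consist of consecutive intervals.

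By \autoref{lem:proj-bar-Skd}, the elements of $\proj(\theta;\Sc_{k,d})$ and of $\proj(\bar{\theta};\Sc_{k,d})$ have the same $\ell_2$ lengths, so I may replace $\theta$ by $\bar{\theta}$ with $\bar{\theta}_1\geq\cdots\geq \bar{\theta}_p\geq 0$. By \autoref{lem:proj-Skd}, a projection is then obtained by keeping only the top $k$ coordinates (which, after sorting, are $\bar{\theta}_1,\ldots,\bar{\theta}_k$), padding the remaining positions with zeros, and running one-dimensional K-means with $d$ centers on the sequence $\bar{\theta}_1,\ldots,\bar{\theta}_k$. Both the achieving vector displayed in the statement and its squared norm are therefore determined by the resulting clustering of $\{1,\ldots,k\}$.

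The key combinatorial step is that an optimal such K-means clustering is a partition of $\{1,\ldots,k\}$ into $d$ consecutive intervals. This follows from the ordering part of \autoref{lem:inv-proj}: since $\Sc_{k,d}$ is permutation-invariant and $\bar{\theta}$ is sorted in decreasing order, every $\theta^\star\in\proj(\bar{\theta};\Sc_{k,d})$ must also be sorted in decreasing order, which forces the coordinates sharing a common value to occupy a consecutive run of indices. Each such run is then one of the $d$ groups, encoded by a partition $(\int_1,\ldots,\int_d)\in\ptnn(k,d)$. For fixed intervals $\int_i$, the representative $c_i$ that minimizes $\sum_{j\in\int_i}(\bar{\theta}_j-c_i)^2$ is the block mean $c_i=\one^\sT\bar{\theta}_{\int_i}/\abs{\int_i}$, producing exactly the candidate vector displayed in the lemma.

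To extract the claimed max formula, I will observe that with block means the candidate $\alpha$ satisfies $\langle\bar{\theta},\alpha\rangle=\sum_i c_i\,\one^\sT\bar{\theta}_{\int_i}=\sum_i\abs{\int_i}c_i^2=\norm{\alpha}_2^2$, and therefore $\norm{\bar{\theta}-\alpha}_2^2=\norm{\bar{\theta}}_2^2-\norm{\alpha}_2^2$ (the contribution of the zeroed-out tail of $\bar{\theta}$ is included on the right-hand side and is constant across partitions). Minimizing the squared distance over $(\int_1,\ldots,\int_d)\in\ptnn(k,d)$ is hence equivalent to maximizing $\sum_i(\one^\sT\bar{\theta}_{\int_i})^2/\abs{\int_i}$, which is the claimed identity. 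The only non-routine ingredient is the consecutive-blocks claim; obtaining it through monotonicity in \autoref{lem:inv-proj} (rather than through a classical case-based swap argument for one-dimensional K-means) is what keeps the proof short.
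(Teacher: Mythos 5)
Your proposal is correct and follows essentially the same route as the paper's proof: reduce to the sorted vector $\bar{\theta}$ via \autoref{lem:proj-bar-Skd}, use sortedness (hence consecutive level sets) together with the at-most-$d$-values constraint to turn the projection into a one-dimensional K-means over partitions in $\ptnn(k,d)$ with block means as optimal centers, and convert the distance minimization into the stated maximization via the Pythagorean/alignment identity $\norm{\bar{\theta}-\alpha}_2^2=\norm{\bar{\theta}}_2^2-\norm{\alpha}_2^2$ (the paper cites \eqref{eq:Snorm-dual-dist} where you compute it directly, and cites the sortedness of the optimizer where you invoke \autoref{lem:proj-Skd} and \autoref{lem:inv-proj}). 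These are cosmetic differences only, so no further comparison is needed.
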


Using \autoref{eq:dual_norm_gen}, the statement of the \autoref{lem:proj-Skd-combinat-rep} can be alternatively represented as 
\begin{align}\label{dual-norm-characterization}
	(\norm{\beta}_{k\square d}^\star)^2
	=(\norm{\bar{\beta}}_{k\square d}^\star)^2 
	= \sup_{A\in \BDbar(k,d)} \, \bar{\beta}^\sT A \bar{\beta}
\end{align}
where $\bar{\beta}$ is nonnegative and non-increasing, and $\BDbar(k,d)$ is the set of block diagonal matrices 
with $d$ blocks exactly covering the first $k$ rows and columns and zero elsewhere, where on each block of size $q$, all of the entries are equal to~$\frac{1}{q}$. 
Note that if the input is not a sorted nonnegative vector, then we need to consider $\BD(k,d)\equiv\{PAP^\sT:P\in \mathcal{P}_\pm,A\in\BDbar(k,d)\}$, 
where $\mathcal{P}_\pm $ is the set of signed permutation matrices. 
This brings us to
\begin{align}\label{dual-norm-var}
	(\norm{\beta}_{k\square d}^\star)^2
	= \sup_{A\in \BD(k,d)} \, \beta^\sT A \beta.
\end{align}
The aforementioned representations, in \autoref{lem:proj-Skd-combinat-rep}, \autoref{dual-norm-characterization}, and \autoref{dual-norm-var}, all depend on an efficient characterization of combinatorial sets such as $\ptnn(k,d)$ or $\BD(k,d)$. \autoref{lem:BD-size} below shows that $\BD(k,d)$ is of exponential size, which renders direct optimization inefficient.

		\begin{lemma}\label{lem:BD-size}
				$|\BD(k,d)| < (\frac{2epd}{k})^k$.
		\end{lemma}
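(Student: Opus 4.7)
The plan is a direct counting argument: parameterize each matrix in $\BD(k,d)$ by a small amount of combinatorial data, then bound the number of choices by a loose union bound.

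First, I would read off the combinatorial data that determines a matrix $M = P A P^\sT \in \BD(k,d)$. Writing $P = D\Pi$ with $D$ a $\pm 1$ diagonal matrix and $\Pi$ a permutation matrix, one checks that $M$ is characterized by (i) the $k$-element support $S = \Pi([k]) \subseteq [p]$, (ii) a partition of $S$ into $d$ nonempty blocks inherited from the block structure of $A$, and (iii) a sign assignment $(\sigma_i)_{i \in S} \in \{\pm 1\}^S$ coming from the diagonal of $D$. The sign assignment is redundant, in that flipping all signs within any one block leaves $M$ unchanged, since each entry of $M$ depends only on products $\sigma_i \sigma_j$ on a common block.

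Next, I would upper bound the number of such choices by a coarse union bound, ignoring both the surjectivity requirement on the partition and the sign redundancy within blocks: at most $\binom{p}{k}$ supports, at most $d^k$ maps $S \to [d]$ giving block assignments, and at most $2^k$ sign patterns. This yields
\[
|\BD(k,d)| \;\leq\; \binom{p}{k}\, d^k\, 2^k .
\]

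Finally, I would invoke the standard strict inequality $\binom{p}{k} < (ep/k)^k$ for $k \geq 1$, which follows from $\binom{p}{k} \leq p^k/k!$ together with the strict Stirling bound $k! > (k/e)^k$, to conclude
\[
|\BD(k,d)| \;<\; (ep/k)^k\, (2d)^k \;=\; (2epd/k)^k .
\]
There is no serious obstacle; the only care needed is that the inequality be strict, and this is guaranteed several times over (by Stirling, and by the fact that non-surjective block assignments and intra-block sign flips both over-count). A sharper route using the exact count $|\BD(k,d)| = \binom{p}{k}\, S(k,d)\, 2^{k-d}$, where $S(k,d)$ is the Stirling number of the second kind, together with $S(k,d) \leq d^k/d!$, actually yields the stronger estimate $|\BD(k,d)| \leq \tfrac{1}{2}(2epd/k)^k$ for $d \geq 1$, but this refinement is unnecessary for the claim as stated.
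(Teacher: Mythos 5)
Your proof is correct and follows essentially the same route as the paper's: count a $k$-element support ($\binom{p}{k}$ choices), overcount block assignments by $d^k$, multiply by $2^k$ sign patterns, and bound $\binom{p}{k}$ by $(ep/k)^k$. Your attention to strictness (via the strict Stirling bound) and the remark on sign/partition redundancy are minor refinements of the same argument, not a different approach.
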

\autoref{lem:BD-size} is proved in \autoref{app:kd-norm}.

Next, we review a dynamic programming approach to reformulate the above in terms of a quadratic program. 

\subsubsection{A Dynamic Program and a QCQP Representation}\label{sec:qcqp}

Consider a non-negative sorted vector $\bar{\beta}$ with $\bar{\beta}_1\geq \cdots \geq \bar{\beta}_p\geq 0$. A dynamic program can be used to perform 1-dimensional K-means clustering required in the second step of projection onto $\Sc_{k,d}$ (detailed in \autoref{lem:proj-Skd}) as well as in \autoref{lem:proj-Skd-combinat-rep}. For example, see \citep{wang2011ckmeans} for how a 1-dimensional K-means clustering problem can be cast as a dynamic program. Furthermore, this dynamic program can be represented as a quadratically-constrained quadratic program (QCQP) \citep{jalali2013convex} as discussed next. 
More specifically, the following two lemmas describe how projection onto $\Sc_{k,d}$ and the dual norm unit ball $\cB^*$ can be computed as solving a QCQP. 
See \autoref{fig:parallelogram} for an illustration related to $\ptnn(k,d)$ and the dynamic program. 

\begin{lemma}\label{lem:projS-qcqp}
We have 
\begin{align*}
\norm{\proj(\bar{\beta}; \Sc_{k,d})}_2^2 = 
\min_{\{\nu_{m,e}\}} \Bigl\{ \nu_{k,d}:~
\frac{1}{s-m+1} ( \one^\sT \bar{\beta}_{[m,s]})^2 \leq \nu_{s,e} - \nu_{m-1,e-1} ~~ \forall (e,m,s)\in \T(k,d)
\Bigr\},
\end{align*}
where 
$\T(k,d)\; \equiv\; \{(e,m,s) : 1\leq e \leq d ,~ e \leq m \leq s \leq k-d+e\}$, and
$u_{[m,s]} = [u_m, \cdots, u_s]$.
\end{lemma}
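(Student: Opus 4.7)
The plan is to recognize the QCQP on the right-hand side as an LP encoding of a dynamic program that computes the partition-maximum furnished by \autoref{lem:proj-Skd-combinat-rep}. By that lemma,
\begin{align*}
\norm{\proj(\bar\beta;\Sc_{k,d})}_2^2 \;=\; \max_{(\int_1,\ldots,\int_d)\in\ptnn(k,d)} \sum_{i=1}^d \frac{1}{\abs{\int_i}} (\one^\sT \bar\beta_{\int_i})^2,
\end{align*}
so I introduce the value function $V(s,e)$ equal to the analogous maximum over partitions of $\{1,\ldots,s\}$ into $e$ consecutive nonempty blocks, with conventions $V(0,0)=0$ and $V(j,0)=0$ for $j\ge 1$. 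Splitting on the last block $\{m,\ldots,s\}$ yields the Bellman recursion
\begin{align*}
V(s,e) \;=\; \max_{e\le m\le s}\bigl\{\,V(m-1,e-1) + c_{m,s}\,\bigr\},\qquad c_{m,s}:=\tfrac{1}{s-m+1}(\one^\sT \bar\beta_{[m,s]})^2.
\end{align*}

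Before using the convention $V(j,0)=0$ for $j\ge 1$, I verify that the extra constraints it introduces at level $e=1$ with $m>1$ (namely $\nu_{s,1}\ge c_{m,s}$) are dominated by the $m=1$ constraint $\nu_{s,1}\ge c_{1,s}$. Because $\bar\beta_1\ge\cdots\ge\bar\beta_p\ge 0$, a Chebyshev-type estimate gives $\tfrac{1}{s}\sum_{i=1}^s \bar\beta_i \ge \tfrac{1}{s-m+1}\sum_{i=m}^s \bar\beta_i$, and squaring together with $s\ge s-m+1$ yields $c_{1,s}\ge c_{m,s}$. Thus the convention is harmless and the LP is unambiguous.

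I then argue the QCQP minimum equals $V(k,d)$ in two directions. For the upper bound, the assignment $\nu_{s,e}=V(s,e)$ is feasible since the Bellman recursion shows $V(s,e)\ge V(m-1,e-1)+c_{m,s}$ for each admissible $m$. For the matching lower bound, I induct on $e$: for any feasible $\{\nu_{s,e}\}$, the inductive hypothesis $\nu_{m-1,e-1}\ge V(m-1,e-1)$ combined with the constraint gives $\nu_{s,e}\ge V(m-1,e-1)+c_{m,s}$ for every admissible $m$, and maximizing over $m$ yields $\nu_{s,e}\ge V(s,e)$. Evaluating at $(s,e)=(k,d)$ closes the argument.

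Finally, the range restriction $s\le k-d+e$ in $\T(k,d)$ only suppresses states $(s,e)$ that cannot lie on any partition path reaching $(k,d)$ (they would leave fewer than $d-e$ positions for the remaining $d-e$ blocks), so dropping those triples does not change the optimum. The main (small) subtlety is the base-case convention for $\nu_{j,0}$, which is precisely the place where sorting of $\bar\beta$ is used; everything else is routine Bellman/LP bookkeeping.
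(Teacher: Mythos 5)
Your proof is correct and takes essentially the same route as the paper's: define the optimal cost-to-go values of the consecutive-partition dynamic program furnished by \autoref{lem:proj-Skd-combinat-rep}, write the Bellman recursion, and observe that relaxing the recursion to inequalities yields the stated QCQP with the same optimal value. You additionally spell out the two-sided argument (feasibility of the value function plus induction for the lower bound) and the base-case convention $\nu_{j,0}=0$, verifying via the sortedness of $\bar\beta$ that the induced level-one constraints are dominated --- details the paper's proof leaves implicit.
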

Proof for \autoref{lem:projS-qcqp} is given in \autoref{app:kd-norm}.

\begin{lemma}\label{lem:proj-dual-ball-qcqp}
For $\Bc^\star = \{u:~\norm{u}^\star_{k\square d} \leq 1\}$, we have
\begin{multline*}
\proj(\bar{\theta}; \cB^\star) 
= \argmin_{u} \min_{\{\nu_{m,e}\}} 
\Bigl\{ \norm{\bar{\theta}-u}_2^2 :~ 
	\nu_{k,d}\leq 1, ~ u_1\geq \cdots \geq u_p \geq 0,\\
\frac{1}{s-m+1} ( \one^\sT u_{[m,s]})^2 \leq \nu_{s,e} - \nu_{m-1,e-1} ~~ \forall (e,m,s)\in \T(k,d)
\Bigr\}
\end{multline*}
which is a QCQP. 
\end{lemma}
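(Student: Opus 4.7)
The plan is to start from the defining variational formulation of the projection,
\[
\proj(\bar{\theta}; \cB^\star) \;=\; \argmin_u \bigl\{ \|\bar{\theta} - u\|_2^2 :~ \norm{u}_{k\square d}^\star \leq 1 \bigr\},
\]
and to replace the single dual-norm constraint by the QCQP representation supplied by \autoref{lem:projS-qcqp}. Merging the two minimizations (outer over $u$, inner over the auxiliary $\{\nu_{m,e}\}$) then yields the program in the statement.

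A first reduction restricts the feasible $u$ to nonnegative, descending vectors. Since $\bar{\theta}$ is nonnegative and sorted in descending order, and since $\cB^\star$ is invariant under both sign flips and coordinate permutations (because $\nor_{k\square d}^\star$ depends only on the sorted absolute values of its argument), \autoref{lem:inv-proj} applied with $A=\cB^\star$ guarantees that every element of $\proj(\bar{\theta};\cB^\star)$ is entry-wise nonnegative and inherits the order of $\bar{\theta}$. Hence we may freely append the linear constraints $u_1 \geq u_2 \geq \cdots \geq u_p \geq 0$ without losing any minimizer.

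Under this reduction, \autoref{lem:dual-len-proj} gives $(\norm{u}_{k\square d}^\star)^2 = \|\proj(u;\Sc_{k,d})\|_2^2$, and \autoref{lem:projS-qcqp} expresses the right-hand side as the minimum of $\nu_{k,d}$ over $\{\nu_{m,e}\}$ subject to the triangular family of quadratic inequalities indexed by $(e,m,s)\in\T(k,d)$. Substituting, the constraint $\norm{u}_{k\square d}^\star \leq 1$ becomes the existence of $\{\nu_{m,e}\}$ with $\nu_{k,d}\leq 1$ satisfying those inequalities; since the $\nu_{m,e}$ do not enter the objective $\|\bar{\theta}-u\|_2^2$, the inner and outer minimizations can be combined into a single joint program, which is exactly the display in the statement. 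The QCQP structure is then immediate by inspection: the objective is a convex quadratic in $u$, the ordering constraints and $\nu_{k,d}\leq 1$ are linear, and each $\tfrac{1}{s-m+1}(\one^\sT u_{[m,s]})^2 \leq \nu_{s,e} - \nu_{m-1,e-1}$ is a convex quadratic (square of a linear form) bounded above by a linear expression.

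The main point to verify carefully — and the principal obstacle — is that the two nested minimizations can be merged without altering the set of optimal $u$. Given any $u$ with $\norm{u}_{k\square d}^\star \leq 1$, \autoref{lem:projS-qcqp} supplies $\nu$ variables attaining $\nu_{k,d}=(\norm{u}_{k\square d}^\star)^2 \leq 1$, so $(u,\nu)$ is feasible in the joint program; conversely, feasibility of any $(u,\nu)$ forces $(\norm{u}_{k\square d}^\star)^2 \leq \nu_{k,d} \leq 1$. Thus the set of feasible $u$ in the joint program is exactly $\cB^\star$ (intersected with the WLOG sort/sign constraints), and its $\argmin$ in $u$ coincides with $\proj(\bar{\theta}; \cB^\star)$. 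Everything else is mechanical.
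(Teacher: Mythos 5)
Your proposal is correct and takes essentially the same route as the paper's proof: restrict to sorted nonnegative $u$ using the sign/permutation invariance of $\cB^\star$, substitute the representation of $(\norm{u}_{k\square d}^\star)^2$ from \autoref{lem:projS-qcqp} for the dual-norm constraint, and merge the outer minimization over $u$ with the inner one over $\{\nu_{m,e}\}$. Your explicit check that the merge preserves the feasible set in $u$ (hence the $\argmin$) is precisely the fact the paper invokes implicitly in its step $(a)$.
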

Proof for \autoref{lem:proj-dual-ball-qcqp} is given in \autoref{app:kd-norm}.

The above provides us with the proximal mapping through $\prox_{\norm{\cdot}_\sq} (\bar{\theta}) 
= \bar{\theta} - \proj(\bar{\theta}; \cB^\star)$. A QCQP such as the one above can be solved via interior point methods among many others.

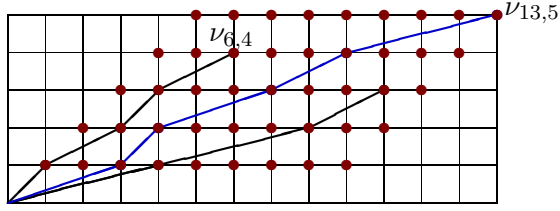
\begin{figure}[h]

\vskip 1in
\centering
~~~~~~~~
\begin{picture}(200,0)
	\setlength{\unitlength}{.5cm}
	
\linethickness{0.075mm}
\multiput(0,0)(1,0){14}{\line(0,1){5}}
\multiput(0,0)(0,1){6}{\line(1,0){13}}

\linethickness{0.025mm}
\multiput(0,0)(.85,.85){6}{\line(1,1){.7}}
\multiput(13,5)(-.85,-.85){6}{\line(-1,-1){.7}}
\put(13.2,5){$\nu_{13,5}$}
\thicklines

\put(0,0){\line(4,1){4}}	\put(4,1){\circle{.2}}
\put(4,1){\line(4,1){4}}	\put(8,2){\circle{.2}}
\put(8,2){\line(2,1){2}}	\put(10,3){\circle{.2}}

\put(0,0){\line(1,1){1}}	\put(1,1){\circle{.2}}
\put(1,1){\line(2,1){2}}	\put(3,2){\circle{.2}}
\put(3,2){\line(1,1){1}}	\put(4,3){\circle{.2}}
\put(4,3){\line(2,1){2}}	\put(6,4){\circle{.2}}
\put(5.35,4.3){$\nu_{6,4}$}

\put(0,0){\line(4,1){4}}	\put(4,1){\circle*{.2}}

\color{darkblue}
\put(0,0){\line(3,1){3}}	\put(3,1){\circle*{.3}}
\put(3,1){\line(1,1){1}}	\put(4,2){\circle*{.3}}
\put(4,2){\line(3,1){3}}	\put(7,3){\circle*{.3}}
\put(7,3){\line(2,1){2}}	\put(9,4){\circle*{.3}}
\put(9,4){\line(4,1){4}}	\put(13,5){\circle*{.3}}
\color{black}

\color{red!50!black}
\multiput(1,1)(1,0){9}{\circle*{.3}}
\multiput(2,2)(1,0){9}{\circle*{.3}}
\multiput(3,3)(1,0){9}{\circle*{.3}}
\multiput(4,4)(1,0){9}{\circle*{.3}}
\multiput(5,5)(1,0){9}{\circle*{.3}}
\color{black}

\end{picture}

	\caption{Red dots correspond to coordinates $(m,e)$ where $1\leq e \leq d$ and $e \leq m \leq k-d+e$, $k=13$, and $d=5$. Each $(\int_1,\cdots,\int_d)\in\ptnn(k,d)$ can be uniquely represented as a continuous union of $d$ line segments $\ell_i$ connecting $( \max(\int_{i-1}),i-1)$ to $( \max(\int_{i}),i)$, for $i\in[d]$, with $\int_0\equiv \{0\}$. 
	Associating to a segment $\ell_i$ a cost of $\frac{1}{\abs{\int_i}} (\one^\sT \bar{\theta}_{\int_i})^2$, the dynamic program (or its reformulation as a QCQP) is aimed at finding the most expensive path of $d$ segments (of the form described above) from $(0,0)$ to $(k,d)$. The optimal values of $\nu_{m,e}$ will correspond to the maximal cost of such paths from $(0,0)$ to $(m,e)$.
	}
	\label{fig:parallelogram}
\end{figure}

\begin{remark}
The representation of the dual norm in~\eqref{eq:atomic-dual-dist} is through a maximization. Therefore, in replacing a dual norm constraint with this representation, we will have as many as $\abs{\Ac}$ constraints which leads to a semi-infinite optimization program in many cases of interest. The representation in \eqref{eq:Snorm-dual-dist} is also a maximization problem ($\ell_2$ squared minus distance squared) with possibly many constraints. 
However, in the case of $\Sc_{k,d}$, the use of \eqref{eq:Snorm-dual-dist} allows for reformulation in terms of a dynamic program which reduces the number of constraints from exponentially-many, namely $\abs{\BD(k,d)}$, to $\abs{\T(k,d)} \leq k^2d$. 
\end{remark}

\section{Prediction Error Bound for Regularized Least-Squares}\label{sec:concise-var}
Consider a measurement model $y = X\beta^\star + \epsilon$, where $X\in\mathbb{R}^{n\times p}$ is the {\em design} matrix and $\epsilon\in\mathbb{R}^n$ is a noise vector. For any given norm $\nor$, and not only those studied in \autoref{sec:Snorms-def}, we then consider the regularized estimator in~\eqref{eq:estimator} with $\lambda$ as the regularization parameter. Rather standard analysis of \eqref{eq:estimator} yields {\em prediction error bounds}, namely bounds for $\norm{X(\beta^\star - \hat\beta)}_2$, as well as {\em estimation error bounds}, namely bounds on $\|\hbeta-\beta^\star\|$ and $\|\hbeta-\beta^\star\|_2$. In this section, we review a standard prediction error bound (\autoref{lem:oracle}) and then present a novel analysis for establishing bounds on the regularization parameter which is needed in such prediction error bound. Estimation error bounds will be studied in \autoref{sec:est} building upon the results presented here. 

	\begin{lemma}[Prediction Error]\label{lem:oracle} 
	If $\lambda \ge \|\frac{2}{n} X^\sT \epsilon\|^\star$, then $\hbeta$ obtained from \eqref{eq:estimator-kd} satisfies 	\begin{align}\label{prediction-error}
		\frac{1}{n}\norm{X(\beta^\star - \hbeta)}_2^2 \leq 3\lambda \norm{\beta^\star} \,.
	\end{align}
	\end{lemma}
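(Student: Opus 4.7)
The plan is to run the standard \emph{basic inequality} argument for regularized least-squares, using the noise measurability condition $\lambda \ge \|\tfrac{2}{n}X^\sT \epsilon\|^\star$ to absorb the stochastic cross-term into the regularization slack. Concretely, I would first write the optimality of $\hbeta$ for the objective in \eqref{eq:estimator-kd} as
\begin{align*}
\frac{1}{2n}\|y - X\hbeta\|_2^2 + \lambda \norm{\hbeta} \;\le\; \frac{1}{2n}\|y - X\beta^\star\|_2^2 + \lambda \norm{\beta^\star},
\end{align*}
and then substitute $y = X\beta^\star + \epsilon$ on both sides.

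Next, I would expand the quadratic term $\tfrac{1}{2n}\|X(\beta^\star - \hbeta) + \epsilon\|_2^2$ and cancel the common $\tfrac{1}{2n}\|\epsilon\|_2^2$. This isolates the prediction-error term and produces a single cross-term $\tfrac{1}{n}\langle X(\hbeta - \beta^\star),\epsilon\rangle$, giving
\begin{align*}
\frac{1}{2n}\norm{X(\beta^\star - \hbeta)}_2^2 \;\le\; \frac{1}{n}\langle \hbeta - \beta^\star,\; X^\sT\epsilon\rangle + \lambda\bigl(\norm{\beta^\star} - \norm{\hbeta}\bigr).
\end{align*}

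The key step is to control the inner product by the dual pairing: by the definition of the dual norm,
$\tfrac{1}{n}\langle \hbeta - \beta^\star, X^\sT \epsilon\rangle \le \norm{\hbeta - \beta^\star} \cdot \norm{\tfrac{1}{n}X^\sT\epsilon}^\star \le \tfrac{\lambda}{2}\norm{\hbeta - \beta^\star}$, where the last inequality is exactly the hypothesis on $\lambda$. I would then apply the triangle inequality $\norm{\hbeta-\beta^\star}\le \norm{\hbeta}+\norm{\beta^\star}$ and combine with the $\lambda(\norm{\beta^\star}-\norm{\hbeta})$ term on the right. The $\norm{\hbeta}$ contributions combine as $\tfrac{\lambda}{2}\norm{\hbeta} - \lambda\norm{\hbeta} = -\tfrac{\lambda}{2}\norm{\hbeta} \le 0$, while the $\norm{\beta^\star}$ contributions sum to $\tfrac{3\lambda}{2}\norm{\beta^\star}$. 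Multiplying through by $2$ yields \eqref{prediction-error}.

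There is no real obstacle here; this is a textbook manipulation whose only subtle point is choosing the factor of $2$ in the assumption $\lambda\ge \|\tfrac{2}{n}X^\sT\epsilon\|^\star$ so that the cross-term is dominated by half of $\lambda\norm{\hbeta-\beta^\star}$, which in turn is exactly what makes the $\norm{\hbeta}$ term nonpositive and leaves the clean constant $3$ in front of $\lambda\norm{\beta^\star}$. The same chain of inequalities would go through with any constant in place of $2$, but $2$ is the sharpest choice giving the prefactor $3$ in the statement.
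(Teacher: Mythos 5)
Your proposal is correct and is essentially identical to the paper's own proof: the same basic inequality from optimality, the same dual-norm bound on the cross-term using $\lambda \ge \|\tfrac{2}{n}X^\sT\epsilon\|^\star$, and the same triangle-inequality step yielding the constant $3/2$ before doubling. No gaps.
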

	
	\autoref{lem:oracle} follows from a standard oracle inequality and is proved in \autoref{app:prediction}.

The prediction error bound in \autoref{lem:oracle}, and the estimation error bounds in \autoref{thm:estimation}, are conditioned on $\lambda \geq \norm{\frac{2}{n}X^\sT\epsilon}^\star$. In this section we make a novel use of the Hanson-Wright inequality to compute this bound for a broad family of noise vectors $\epsilon \in\mathbb{R}^n$ while our bounds are deterministic with respect to the design matrix. Our proof assumes a concise variational representation for the dual norm (as in \eqref{eq:squared-dual-norm-var}) and provides a bound in terms of novel {\em aggregate measures} of the design matrix {\em induced by the norm} (given in \eqref{eq:def-all-phi}). In the following, we elaborate on the variational formulation. In \autoref{sec:UoS}, we examine this property for structure norms (defined in \autoref{sec:Snorms-def}). In \autoref{sec:concise-examples}, we provide examples of norms admitting a concise representation, and finally, in \autoref{sec:bounds-reg-param}, we state the bounds.

\paragraph{A Concise Variational Formulation.} 
Any squared vector norm can be expressed in a variational form (\citep{bach2012optimization} (Prop.~1.8~and Prop.~5.1) and \citep{jalali2017variational}): consider any norm $\norm{\cdot}$ and its dual $\norm{\cdot}^\star$. Then, 
\begin{align}\label{eq:squared-dual-norm-var}
(\norm{\beta}^\star)^2 
= \sup_{\norm{\theta}\leq 1} \langle \theta, \beta \rangle ^2 
= \sup_{\Mmat\in\Mset} \beta^\sT \Mmat \beta
\end{align}
where $\Mset= \{\theta\theta^\sT:~ \norm{\theta}\leq 1\}$. It is easy to see that the set $\Mset$ that is used in the variational representation above is not unique. For example, $\conv(\Mset)$ or $\Mset = \{\theta\theta^\sT:~ \theta \in\ext(\Bc_{\norm{\cdot}})\}$ also work. 
For an atomic norm (defined in \eqref{eq:atomic_repr}), it is clear from the above that $\abs{\Mset} \leq \abs{\Ac}$. However, in cases such as $\norm{\cdot}_{k\square d}$, one can find a set $\Mset$ which is much smaller than $\Ac$. For example, in \autoref{ex:group-l1-Mset}, \autoref{ex:ksupp-Mset}, as well as for $\nor_{k\square d}$, the atomic set is infinite while we can find a small finite-size $\Mset$. For a norm such as the ordered weighted $\ell_1$ norm \citep{zeng2014ordered}, which has a finite number of atoms, it seems that a smaller $\Mset$ cannot be found; see \autoref{ex:Mset-dual-owl}. 

For a norm that admits a representation as in \eqref{eq:squared-dual-norm-var} with a reasonably-sized $\Mset$, we can provide a prediction error bound in terms of $\abs{\Mset}$ as well as certain {\em aggregation} quantities defined based on the elements in $\Mset$. For example, in the case of $\nor_{k\square d}$, with a corresponding variational representation given in \eqref{dual-norm-var}, we provide the prediction error bound in \autoref{thm:pred-err-kd}. As another example, in \autoref{sec:examples-kd}, we provide these calculations for the case of $k$-support norm as well as the $(k\square 1)$-norm.

\subsection{Example: Structure Norms with Finite Unions of Subspaces}\label{sec:UoS}

Consider a closed scale-invariant set $\Sc$ that spans $\mathbb{R}^p$ and the corresponding structure norm $\nor_\Sc$ and unit norm ball $\Bc_\Sc = \{\beta:~\norm{\beta}_\Sc\leq 1\}$. 
In this section, we connect a representation for $\nor_\Sc^\star$ as in \eqref{eq:squared-dual-norm-var} to a representation of $\Sc$ as a union of subspaces. 

A closed scale-invariant set $\Sc$ can always be represented as a union of subspaces. However, imagine this is possible for a given set with finitely many subspaces; namely $m\geq 1 $ subspaces. For $i\in[m]$, denote by $U_i\in\mathbb{R}^{p\times d_i}$ an orthonormal basis for the $i$-th subspace. Then, 
\begin{align*}
\norm{\theta}_\Sc^\star 
&= \sup\{ \langle \beta, \theta \rangle:~ \norm{\beta}_\Sc\leq 1 \} \\
&= \sup\{ \langle \beta, \theta \rangle:~ \beta\in \ext(\Bc_\Sc) \} \\
&= \sup\{ \langle \beta, \theta \rangle:~ \beta = U_iw,~ w\in\mathbb{S}^{d_i-1},~i\in[m] \} \\
&= \max_{i\in[m]}~\norm{U_i^\sT\theta}_2.
\end{align*}
Then, it is easy to see that we get a representation as in \eqref{eq:squared-dual-norm-var} with 
\begin{align}
\Mset = \bigl\{ U_iU_i^\sT:~ i \in[m] \bigr\}
\end{align}
where each element of $\Mset$, namely $U_iU_i^\sT$, is an orthogonal projector of rank~$d_i$. \autoref{lem:dualvar-UoS} summarizes these observations and its proof is given in \autoref{app:prediction}. 

\begin{lemma}\label{lem:dualvar-UoS}
Consider a finite set of positive semidefinite matrices $\Mset = \{M_1, \ldots, M_m\}\subset \mathbb{R}^{p\times p}$ and $f(\beta) \equiv \sup_{\Mmat\in\Mset} \beta^\sT \Mmat \beta$. Then, $\sqrt{f}$ is a semi-norm. 

Suppose $\conv(\Mset)\cap \mathbb{S}_{++}^p\neq \emptyset$. Then, 
(i) $\sqrt{f}$ is a norm. 
(ii) if each $M_i$ is an orthogonal projector then $\sqrt{f}\equiv \nor_\Sc^\star$ for $\Sc = \bigcup_{i\in[m]} \range(M_i)$. 
\end{lemma}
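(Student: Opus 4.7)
The overall strategy is to reduce everything to the observation that, since each $M_i$ is PSD, it admits a symmetric square root and
\[
\sqrt{f(\beta)} \;=\; \max_{i\in[m]} \, \|M_i^{1/2}\beta\|_2\,,
\]
which displays $\sqrt{f}$ as the pointwise maximum of a finite family of Hilbert seminorms on $\reals^p$. Nonnegativity, absolute homogeneity, and the triangle inequality are each preserved under pointwise maxima, so $\sqrt{f}$ inherits all of them and the seminorm assertion follows.

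For part (i), I would deduce definiteness from the positive-definite convex combination. If $\sqrt{f(\beta)}=0$ then $\beta^\sT M_i\beta=0$ for every $i$; taking the convex combination that lies in $\mathbb{S}_{++}^p$ gives $\beta^\sT M\beta = 0$ with $M\succ 0$, forcing $\beta=0$. Combined with the seminorm part, this promotes $\sqrt{f}$ to a norm.

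For part (ii), write $M_i=U_iU_i^\sT$ where $U_i$ is an orthonormal basis of $\range(M_i)$. The plan is first to check that $\Sc=\bigcup_{i\in[m]}\range(M_i)$ is eligible for the structure-norm construction of \autoref{sec:Snorms-def}: it is closed and scale-invariant as a finite union of linear subspaces, and it spans $\reals^p$ because $\range(\sum_i\lambda_iM_i) = \range(M_1)+\cdots+\range(M_m)$ for PSD summands, with the left side equal to $\reals^p$ by the assumed positive definiteness. The display preceding the lemma statement then reads $\nor_\Sc^\star(\theta)=\max_i\|U_i^\sT\theta\|_2$, and since $\|U_i^\sT\theta\|_2^2 = \theta^\sT M_i\theta$, we conclude $\nor_\Sc^\star \equiv \sqrt{f}$.

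\textbf{Expected obstacle.} The only step beyond routine manipulation is the spanning verification in (ii); I would justify $\range(\sum_i\lambda_i M_i)=\sum_i\range(M_i)$ by iterating the PSD identity $\ker(A+B)=\ker(A)\cap\ker(B)$ valid for $A,B\succeq 0$, then reading off spanning from positive definiteness of the convex combination. Everything else rests on the variational formula for $\nor_\Sc^\star$ already derived in \autoref{sec:UoS} and elementary stability of (semi)norm axioms under finite maxima.
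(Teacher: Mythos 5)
Your proposal is correct, but it takes a genuinely different route from the paper's proof. For the seminorm and norm claims, the paper argues via convexity and 2-homogeneity of $f$ (and, for definiteness, strong convexity obtained by passing the supremum to $\conv(\Mset)$ and using the positive definite element), then delegates the conclusion to Lemma 3.5 of \citep{jalali2017variational}; you instead write $\sqrt{f(\beta)}=\max_{i\in[m]}\|M_i^{1/2}\beta\|_2$ and use the elementary facts that a finite maximum of seminorms is a seminorm and that $\beta^\sT M\beta=0$ with $M=\sum_i\lambda_i M_i\succ 0$ forces $\beta=0$. Your version is more self-contained and avoids the external citation; the paper's version is shorter on the page and reuses a general lemma about homogeneous (strongly) convex functions that serves it elsewhere. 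For part (ii) the computational core is the same identity $\max_{i}\|U_i^\sT\theta\|_2$, but the packaging differs: the paper re-derives $\sqrt{f}\equiv\nor_\Sc^\star$ by comparing support functions of the dual ball and of $B=\Sc\cap\mathbb{S}^{p-1}$ and identifying extreme points, whereas you first verify the hypotheses of the structure-norm construction --- closedness, scale invariance, and the spanning of $\Sc=\bigcup_{i\in[m]}\range(M_i)$, the last deduced from the positive definite convex combination --- and then invoke the variational computation displayed in \autoref{sec:UoS}. Your explicit spanning check is a genuine plus, since the paper leaves it implicit even though it is needed for $\nor_\Sc$ (hence $\nor_\Sc^\star$) to be well defined; note also that for that step the trivial containment $\range\bigl(\sum_i\lambda_iM_i\bigr)\subseteq\sum_i\range(M_i)$ already suffices, so the kernel identity $\ker(A+B)=\ker(A)\cap\ker(B)$ is not even required there.
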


\subsection{Examples of Norms with a Concise Variational Representation}\label{sec:concise-examples}
In the following, we review some examples with a concise variational representation. 

\begin{example}\label{ex:group-l1-Mset}
Consider the group $\ell_1$ norm with $K$ non-overlapping groups (sum of $\ell_2$ norms over each group). Then, in the representation of the dual norm, we can use $\Mset = \{\Mmat_1, \ldots, \Mmat_K\}$ where $\Mmat_i$ is the identity matrix over rows and columns corresponding to the $i$-th group and zero elsewhere. We get $\abs{\Mset} = K$, the number of groups.

More generally, consider the overlapping group Lasso norm \citep{jacob2009group} defined as 
\[
\norm{\beta} \equiv \inf \bigl\{ \sum_{i=1}^K \norm{v^{(i)}}_2 :~ \beta = \sum_{i=1}^K v^{(i)},~ v^{(i)}\in\mathbb{R}^p,~\supp(v^{(i)})\subseteq \cG_i, ~ \text{for }i\in[K] \bigr\}
\]
where $\cG=(\cG_1,\ldots, \cG_K)$ is a given set of $K$ subsets of $[p]$ that may overlap; if they do not overlap and they partition $[p]$, $\nor$ reduces to the group $\ell_1$ norm mentioned above. As characterized in Lemma 2 of \citep{jacob2009group}, the dual norm is given by 
\[
\norm{\theta}^\star = \max_{i\in [K]}~ \norm{\theta_{\cG_i}}_2
\]
where $\theta_{\cG_i}$ is the restriction of $\theta\in\mathbb{R}^p$ to the entries in $\cG_i\subseteq[p]$. The above representation of $\nor^\star$ can be used to derive a representation as in~\eqref{eq:squared-dual-norm-var} where $\Mset = \{M_1,\ldots, M_K\}$, and, for each $i\in[K]$, $M_i$ is the identity matrix over rows and columns corresponding to $\cG_i$ and zero elsewhere. 
\end{example}

The bound given in \autoref{lem:BD-size} quickly deteriorates as $d$ gets close to $k$ or $1$. \autoref{ex:ksupp-Mset} and \autoref{ex:Mset-k1} are presented to provide improved bounds for $\nor_{k\square k}$ and $\nor_{k\square 1}$, respectively.

\begin{example}\label{ex:ksupp-Mset}
Consider the $k$-support norm, denoted by $\norm{\cdot}_\ksp$ and defined as the symmetric gauge function corresponding to $\Ac = \{x:~ \norm{x}_0 \leq k,~ \norm{x}_2 =1\}$ \citep{argyriou2012sparse}. It is easy to see that the $k$-support norm coincides with the doubly-sparse norm for $k=d$. It has been shown that $(\norm{\theta}_\ksp^\star)^2 = \sum_{i=1}^k \bar\theta_i^2$ \citep{argyriou2012sparse}. A representation as in \eqref{eq:squared-dual-norm-var} through outer products of atoms of the $k$-support norm ball, namely $\Mset = \{\theta\theta^\sT:~ \theta\in\ext(\Bc_\ksp)\}$, leaves us with a set $\Mset$ with infinite number of elements. On the other hand, it is easy to verify that 
\begin{align}
\Mset = \bigl\{ \diag(s):~ s\in \{0,1\}^p, \norm{s}_0 = k \bigr\}
\end{align}
provides a valid expression for $(\norm{\theta}_\ksp^\star)^2$ as in \eqref{eq:squared-dual-norm-var}. Observe that $\abs{\Mset} = \binom{p}{k} \leq (ep/k)^k$. 
\end{example}

\begin{example}\label{ex:Mset-k1}
It is shown in \autoref{lem:all-k-1} that 
\begin{itemize}
\item 
$\ext(\Bc_{k\square 1}) = \Sc_{k,1}\cap \mathbb{S}^{p-1} = \{Q\theta:~ Q\in \mathcal{P}_\pm,~ \theta = \frac{1}{\sqrt{k}}[\one_k^\sT ~,~ \zero_{p-k}^\sT]^\sT\}$,

\item 
$\ext(\Bc_{k\square 1}^\star) = \{Q\theta:~\theta\in A,~Q\in\mathcal{P}_\pm\}$ where $A = \{\sqrt{k}e_1, \frac{1}{\sqrt{k}}\one_p\}$.
\end{itemize}
Therefore, it is easy to see that a concise representation exists,
\begin{itemize}
\item in the case of regularization with $\nor_{k\square 1}$, with $\abs{\Mset}\leq \binom{p}{k}\leq (ep/k)^k$,
\item in the case of regularization with $\nor_{k\square 1}^\star$, with $\abs{\Mset}\leq p+1$,
\end{itemize}
for representing their dual norms. 
\end{example}

From \autoref{lem:all-k-1} we know that $\nor_{k\square 1}^\star$ is an ordered weighted $\ell_1$ norm with $w = \frac{1}{\sqrt{k}}[\one_k^\sT~,~ \zero_{p-k}^\sT]^\sT$. While \autoref{ex:Mset-k1} establishes a concise variational formulation in this case, an arbitrary ordered weighted $\ell_1$ norm may not be concisely representable, as discussed next. 

\begin{example}\label{ex:Mset-dual-owl}
Here, we provide a quadratic variational representation for $\nor_\owl$ inspired by Example 1.2~in \citep{chen2015structured}. 
Recall the atomic set for $\nor_\owl$ from Theorem 1 of \citep{zeng2014ordered} and the variational representation from \eqref{eq:squared-dual-norm-var} with 
\[
\Mset= \{\theta\theta^\sT:~ \theta\in \ext(\Bc_{\nor_\owl})\}
= \bigcup_{i\in[p]} \bigcup_{S:\abs{S}=i} \Bigl\{\frac{1}{(\sum_{j=1}^i w_j)^2} v_Sv_S^\sT: v\in\{\pm1\}^p \Bigr\}.
\]
It is easy to see that $\abs{\Mset} \leq \sum_{i=1}^p \binom{p}{i}2^{i-1} = (3^p-1)/2$ which is not a good bound for problems in which $p$ is big. 
\end{example}

\begin{example}
Consider two arbitrary norms $\nor_{(1)}$ and $\nor_{(2)}$ with representations for their squared dual norms as in \eqref{eq:squared-dual-norm-var} through $\Mset_1$ and $\Mset_2$, respectively. Then, for the infimal convolution of the two norms, defined as 
\begin{align}
\norm{\beta} \equiv \inf \bigl\{ \norm{u}_{(1)} + \norm{v}_{(2)}:~ \beta = u+v \bigr\}\,,
\end{align}
we know (e.g., see Fact 2.21~in \citep{artacho2014applications}) that $\nor^\star \equiv \max\{ \nor_{(1)}^\star , \nor_{(2)}^\star \} $. Therefore, we get a representation for $\nor^\star$ as in \eqref{eq:squared-dual-norm-var} with $\Mset = \Mset_1\cup \Mset_2$. See \citep{jalali2010dirty,agarwal2012noisy} for applications of the infimal convolution in regularization. 
\end{example}

\begin{remark}
The above is not an exhaustive list of norms with a concise variational representation for their dual. For example, consider $\Omega_2^\star(\cdot)$ ($p=q=2$) defined in Equation (2) of \citep{obozinski2016unified}. Depending on the submodular function $F$ used in this definition, one might be able to get smaller representations. 
\end{remark}

\subsection{Bounds on the Regularization Parameter}
\label{sec:bounds-reg-param}

		\begin{definition}[Convex concentration property] \label{def:cvx-conc-prop}
		Let ${x}$ be a random vector in $\reals^n$. We will say that ${x}$ has the convex concentration property with constant $K$ if for every $1$-Lipschitz convex function $h:\reals^n\rightarrow\reals$, we have $\mathbb{E}[h({x})]<\infty$ and for every $t>0$,
			\begin{align*}
				\mathbb{P}\big\{\abs{h({x})-\mathbb{E}[h({x})]}\ge t\big\}\le 2 e^{-\frac{t^2}{2K^2}}.
			\end{align*}
		\end{definition}

\begin{lemma}[Hanson-Wright inequality; \citep{adamczak2015note}]\label{lem0} Let $u$ be a mean-zero random vector in $\reals^n$. There exists a constant $c>2$, such that if $u$ has the convex concentration property with constant $K$ then for any matrix ${B}\in\reals^{n\times n}$ and every $t>0$,
			\begin{align}\label{eq:hanson}
				\mathbb{P}\big\{\abs{u^\sT Bu-\mathbb{E}[u^\sT{B}u]}\ge t\big\} \le 2\exp\left(-\frac{1}{c}\min\left(\frac{t^2}{2K^4\|B\|_F^2},\frac{t}{K^2\|B\|}\right)\right).
			\end{align}
		\end{lemma}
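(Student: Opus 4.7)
The plan is to establish the Hanson--Wright tail by reducing the quadratic form to a combination of $1$-Lipschitz convex functions, to which the convex concentration property applies directly. First, since $u^\sT B u = u^\sT B_s u$ with $B_s = (B + B^\sT)/2$, and since $\|B_s\| \leq \|B\|$ and $\|B_s\|_F \leq \|B\|_F$, I would reduce to symmetric $B$. Then, using the spectral decomposition $B_s = B^+ - B^-$ with $B^\pm \succeq 0$ supported on orthogonal eigenspaces, I would further reduce to the PSD case, noting $\|B^\pm\| \leq \|B\|$ and $\|B^+\|_F^2 + \|B^-\|_F^2 = \|B_s\|_F^2$, and splitting the target deviation as $t = t/2 + t/2$ between the two parts at the end.

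For a fixed $A \succeq 0$ (standing in for $B^+$ or $B^-$), the map $h(u) = \|A^{1/2} u\|_2$ is convex and $\sqrt{\|A\|}$-Lipschitz, so $h/\sqrt{\|A\|}$ is $1$-Lipschitz and convex. The convex concentration property of $u$ then yields
\begin{align*}
\mathbb{P}\bigl(|h(u) - \mathbb{E}\, h(u)| \geq s\bigr) \leq 2 \exp\!\left(-\frac{s^2}{2 K^2 \|A\|}\right).
\end{align*}
Setting $M = \mathbb{E}\, h(u)$, the identity $h^2 - M^2 = (h - M)^2 + 2M(h - M)$ shows that $|h(u)^2 - M^2| \geq t$ forces $|h(u) - M| \geq c_0 \min\bigl(t/M,\, \sqrt{t}\bigr)$ for a universal constant $c_0$. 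Plugging this into the sub-Gaussian bound produces the characteristic mixed sub-Gaussian/sub-exponential tail for $u^\sT A u - M^2$.

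The last adjustment is to replace $M^2$ by $\mathbb{E}[u^\sT A u] = \mathbb{E}[h^2]$. Integrating the sub-Gaussian bound on $h - M$ gives $\mathrm{Var}(h) \leq C K^2 \|A\|$, so $|\mathbb{E}[h^2] - M^2| \leq C K^2 \|A\|$, and this offset can be absorbed into the tail at the price of adjusting $c$ (for $t \gtrsim K^2 \|A\|$ it merely doubles the exponent, and for smaller $t$ the claim is trivial since the right-hand side exceeds $1$). Finally, combining the resulting tails for $u^\sT B^+ u$ and $u^\sT B^- u$ by the triangle inequality and substituting $\|B^\pm\| \leq \|B\|$ and $\|B^\pm\|_F \leq \|B\|_F$ yields the advertised $\min\bigl(t^2/(K^4 \|B\|_F^2),\, t/(K^2 \|B\|)\bigr)$ exponent after a final absorption into~$c$.

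The main obstacle I expect is not the strategy itself but the careful constant tracking through the transfer $h \rightsquigarrow h^2$ and the shift from $M^2$ to $\mathbb{E}[h^2]$, so that a single universal constant $c > 2$ absorbs all of them uniformly. Conceptually, the heart of the argument is the passage from Lipschitz-convex concentration of $\|A^{1/2} u\|_2$ to a quadratic tail via the factorization $(h-M)(h+M)$; this is what produces the min-type Hanson--Wright exponent, and the rest is an orthogonal decomposition of $B$ combined with standard algebra.
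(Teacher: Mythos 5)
First, note that the paper does not prove this lemma at all: it is imported verbatim from \citep{adamczak2015note} and used as a black box, so there is no in-paper argument to compare yours against. Judged on its own terms, your proposal has a genuine gap, and it is exactly at the step you flag as "the heart of the argument."

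The reduction to symmetric and then PSD parts is fine, as is the observation that $h(u)=\|A^{1/2}u\|_2$ is convex and $\sqrt{\|A\|}$-Lipschitz, and the algebra $|h^2-M^2|\ge t \Rightarrow |h-M|\ge c_0\min(\sqrt{t},\,t/M)$ is correct. But tracing through, the sub-Gaussian branch of the tail you obtain is $\exp\bigl(-c\,t^2/(M^2K^2\|A\|)\bigr)$ with $M=\mathbb{E}\|A^{1/2}u\|_2$, whereas the lemma requires $\exp\bigl(-c\,t^2/(K^4\|A\|_F^2)\bigr)$. To pass from one to the other you would need $M^2\|A\|\lesssim K^2\|A\|_F^2$. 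Generically $M^2\asymp \Tr(A\Sigma)\lesssim K^2\Tr(A)$, so what you actually control is $K^2\Tr(A)\|A\|$, and for PSD $A$ one only has $\|A\|_F^2\le \Tr(A)\|A\|$ --- the inequality goes the wrong way, and the gap can be polynomially large. Concretely, for $u\sim N(0,I_n)$ (so $K\asymp1$) and $A=\diag(1,2^{-1/2},\dots,n^{-1/2})$ one has $\Tr(A)\|A\|\asymp\sqrt{n}$ while $\|A\|_F^2\asymp\log n$, so your bound in the sub-Gaussian regime is $\exp(-ct^2/\sqrt{n})$ against the claimed $\exp(-ct^2/\log n)$. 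This is the well-known limitation of the "concentrate the norm, then square" trick: it recovers the sub-exponential branch $t/(K^2\|B\|)$ but systematically loses the Frobenius norm in the Gaussian branch, replacing it by $\sqrt{\Tr(A)\|A\|}$. Getting the genuine $\|B\|_F$ requires a different mechanism --- Adamczak's proof controls the moment generating function of the quadratic form by conditioning/decoupling and applying the convex concentration property to the (conditionally linear) map $u\mapsto\langle Bu',u\rangle$ and to $u\mapsto\|Bu\|_2$, which is where the second moment of the spectrum, rather than $\Tr(A)\|A\|$, enters. Your remaining steps (variance of $h$, recentering from $M^2$ to $\mathbb{E}[h^2]$, recombining $B^{+}$ and $B^{-}$) are sound, but they cannot repair this loss.
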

		
	\begin{proposition}\label{lem:noise-dual-norm-Mset}
		Suppose that $\epsilon\in \reals^n$ is a zero-mean random vector with covariance matrix $\Sigma \equiv\mathbb{E}[\epsilon \epsilon^\sT] \in \reals^{n\times n}$, such that $\Sigma^{-1/2} \epsilon$ satisfies the convex concentration property (\autoref{def:cvx-conc-prop}) with parameter at most $\eta$. 
Moreover, assume \autoref{eq:squared-dual-norm-var} holds for $\nor$ and a finite set $\Mset\subset\mathbb{R}^{p\times p}$. 		
		Then, for any value of $0<p_0<\frac{1}{2}$, the following holds true with probability at least $1 - 2 p_0$,
		\begin{align}
			\norm{\frac{1}{n} X^\sT \epsilon}^\star \le \agg
		\end{align}
		where 
		\begin{align}
		\begin{split}\label{eq:def-all-phi}
		\agg &\equiv \frac{1}{\sqrt{n}}\left( \agg_0 + 2\eta^2 \cdot \max\left\{ \agg_2 \sqrt{\kappa} ~,~ \agg_1 \kappa \right\}
			\right)^{1/2} \\
			\tilde{X} &\equiv \Sigma^{1/2} X\,,\\
		\agg_0 &\equiv \sup_{A\in\Mset}~ \frac{1}{n} \Tr(\tilde{X}A\tilde{X}^\sT), \\
		\agg_1 &\equiv \sup_{A\in\Mset}~ \frac{1}{n}\norm{\tilde{X}A\tilde{X}^\sT}_{\rm op},\\
		\agg_2 &\equiv \sup_{A\in\Mset}~ \frac{1}{n}\norm{\tilde{X}A\tilde{X}^\sT}_F\,,\\
		\kappa &\equiv \frac{c}{2}\log\frac{\abs{\Mset}}{p_0}\,,
        \end{split}
		\end{align}
		where $c>2$ is the constant in the Hanson-Wright inequality given in \autoref{lem0}.
	\end{proposition}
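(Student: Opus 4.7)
The plan is to combine the concise variational representation of $\nor^\star$ with a Hanson-Wright tail bound applied uniformly over the finite index set $\Mset$. Starting from the definition, write
\[
\bigl(\norm{\tfrac{1}{n}X^\sT\epsilon}^\star\bigr)^2
\;=\;\sup_{A\in\Mset} \tfrac{1}{n^2}\,\epsilon^\sT X A X^\sT \epsilon,
\]
and whiten the noise by setting $u \equiv \Sigma^{-1/2}\epsilon$, so $\epsilon = \Sigma^{1/2} u$ with $\mathbb{E}[u]=0$ and $u$ enjoying the convex concentration property with constant at most $\eta$. For each $A\in\Mset$ define $B_A \equiv \tilde X A \tilde X^\sT = \Sigma^{1/2} X A X^\sT \Sigma^{1/2} \in \mathbb{R}^{n\times n}$, so that $\epsilon^\sT X A X^\sT \epsilon = u^\sT B_A u$ and $\mathbb{E}[u^\sT B_A u] = \Tr(B_A) = \Tr(\tilde X A \tilde X^\sT)$ (since $\mathbb{E}[u u^\sT] = I$).

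Next, I would apply \autoref{lem0} separately for each $B_A$. For a deviation level $t_A>0$, the inequality yields
\[
\mathbb{P}\bigl\{u^\sT B_A u - \Tr(B_A) \ge t_A\bigr\}
\;\le\; 2\exp\!\left(-\tfrac{1}{c}\min\!\left(\tfrac{t_A^2}{2\eta^4\|B_A\|_F^2},\tfrac{t_A}{\eta^2\|B_A\|_{\rm op}}\right)\right).
\]
I would choose $t_A$ so that the right-hand side equals $2p_0/\abs{\Mset}$, which, after setting $\kappa = \tfrac{c}{2}\log(\abs{\Mset}/p_0)$, forces both arguments of the $\min$ to be at least $2\kappa$. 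Solving each separately gives $t_A \ge 2\eta^2 \|B_A\|_F\sqrt{\kappa}$ (the Gaussian/Frobenius regime) and $t_A \ge 2\eta^2 \kappa \|B_A\|_{\rm op}$ (the exponential/operator regime); take $t_A = 2\eta^2\max\{\|B_A\|_F\sqrt{\kappa},\,\kappa\|B_A\|_{\rm op}\}$.

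A union bound over the finite set $\Mset$ then shows that, except on an event of probability at most $2p_0$, one has $u^\sT B_A u \le \Tr(B_A) + t_A$ simultaneously for every $A\in\Mset$. Taking the supremum over $\Mset$ and applying $\sup(a+b) \le \sup a + \sup b$ together with the identifications $\agg_0 = \tfrac{1}{n}\sup_A\Tr(B_A)$, $\agg_1 = \tfrac{1}{n}\sup_A\|B_A\|_{\rm op}$, $\agg_2 = \tfrac{1}{n}\sup_A\|B_A\|_F$, the squared dual norm is bounded above by
\[
\tfrac{1}{n}\bigl(\agg_0 + 2\eta^2\max\{\agg_2\sqrt{\kappa},\,\agg_1\kappa\}\bigr),
\]
and taking a square root recovers exactly $\agg$ as defined in \eqref{eq:def-all-phi}.

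The only mildly delicate step is the two-regime nature of the Hanson-Wright tail: one must not collapse the $\min$ prematurely, and the choice of $t_A$ must separately dominate the Frobenius and operator contributions so that the single union bound (with log factor $\kappa$) controls both pieces simultaneously. Once the supremum is pushed inside via $\sup(a+b)\le\sup a+\sup b$, the stated deterministic-in-$X$ bound follows directly; finiteness of $\Mset$ is what makes the union bound cheap (only a $\log\abs{\Mset}$ price), which is exactly why the concise variational representation of \autoref{sec:concise-var} is valuable here.
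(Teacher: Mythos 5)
Your proposal is correct and follows essentially the same route as the paper's proof: the variational representation of the squared dual norm, the Hanson--Wright tail bound from \autoref{lem0} applied to the whitened quadratic form, a union bound over the finite set $\Mset$, and the choice of deviation level forcing both regimes of the $\min$ to clear $2\kappa$. The only (immaterial) differences are that you make the whitening $u=\Sigma^{-1/2}\epsilon$ explicit and pick a per-$A$ threshold $t_A$ before taking suprema, whereas the paper bounds $\|B_A\|_F$ and $\|B_A\|_{\rm op}$ by $n\agg_2$ and $n\agg_1$ up front and uses a single $t$.
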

\begin{proof}[Proof of \autoref{lem:noise-dual-norm-Mset}]
Define $g = \frac{1}{n} X^\sT \epsilon$ which is a random vector. Invoking the characterization of dual norm $\|\cdot\|_{k\square d}^\star$ given by~\eqref{eq:squared-dual-norm-var}, we have
		\begin{align*}
			(\norm{g}^\star)^2
			= \sup_{A\in \Mset} \, g^\sT A g 
			= \sup_{A\in \Mset} \, \frac{1}{n} \epsilon^\sT (\frac{1}{n}X A X^\sT) \epsilon . 
		\end{align*}
		We next use a Hanson-Wright inequality to upper bound the right-hand side with high probability. More specifically, we use a result by~\citep{adamczak2015note} on the Hanson-Wright inequality given in \autoref{lem0}. 

For any fixed $A\in\Mset$ (need not be positive semidefinite) define $B = \frac{1}{n}XAX^\sT$. Then, 
\[
\mathbb{E} [\epsilon^\sT B \epsilon] 
= \langle \mathbb{E} [\epsilon\epsilon^\sT], B \rangle 
= \langle \Sigma, B \rangle \leq \agg_0,
\]
where $\agg_0$ is defined in \autoref{eq:def-all-phi}. 
Therefore, for any $t>0$, Hanson-Wright inequality implies 
\[
\mathbb{P}\left( \epsilon^\sT B \epsilon \geq \agg_0 + t \right)
\leq 2 \exp \left(
	-\frac{1}{c} \min\left\{ \frac{t^2}{2\eta^4 \agg_2^2} ~,~
	\frac{t}{\eta^2\agg_1} \right\}
\right)
\]
where $\agg_1$ and $\agg_2$ are defined in \eqref{eq:def-all-phi}. 
Taking a union bound over all $A\in\Mset$, we get 
\begin{align*}
\mathbb{P}\left( 
	\sup_{A\in \Mset} \, \epsilon^\sT (\frac{1}{n}X A X^\sT) \epsilon 
	\,\geq\, \agg_0 + t \right)
&\leq 2 \exp \left(
	-\frac{1}{c} \min\left\{ \frac{t^2}{2\eta^4 \agg_2^2} ~,~
	\frac{t}{\eta^2\agg_1} \right\}
\right) \cdot \abs{\Mset} \\
&\leq 2p_0 \cdot \exp \left(
	-\frac{1}{c} \min\left\{ \frac{t^2}{2\eta^4 \agg_2^2} ~,~
	\frac{t}{\eta^2\agg_1} \right\}
	+ \log \frac{\abs{\Mset}}{p_0}
\right). 
\end{align*}

The right-hand side will be bounded by $2p_0$ (as desired in the statement) if the argument to the exponential is non-positive. This provides a lower bound for $t$ which is consistent with the fact that we would like $t$ to be as small as possible in the left-hand side of the above chain of inequalities. Therefore, we choose 
\[
t = \eta^2 \cdot \max\left\{
	\agg_2 \sqrt{2c \log \frac{\abs{\Mset}}{p_0}}
	~,~
	\agg_1 c \log \frac{\abs{\Mset}}{p_0}
\right\}
\]
which establishes the claim. 
\end{proof}

\begin{remark}\label{rem:other-HW}
In proving \autoref{lem:noise-dual-norm-Mset}, we use a variation of the Hanson-Wright inequality given in \autoref{lem0}, from \citep{adamczak2015note}. This result is particularly useful when matrices $A\in \Mset$ are not necessarily positive semidefinite. As an example, see Example 1 in \citep{jalali2017variational}. 
On the other hand, when $\Mset\subset \mathbb{S}_+^p$, other variations of the Hanson-Wright inequality may be used (a tail inequality -- not necessarily a two-sided inequality -- suffices) to establish variations of \autoref{lem:noise-dual-norm-Mset}. These variations may allow for other classes of noise distributions. As an example, working with the Hanson-Wright inequality in \citep{hsu2012tail} requires $\Mset\subset \mathbb{S}_+^p$ but allows for $\epsilon\in\mathbb{R}^n$ to be {\em a subgaussian random vector}; for some $K\geq 0$, $\mathbb{E}\exp\langle \epsilon, u\rangle \leq \exp(K^2 \norm{u}_2^2/2)$ for all $u\in\mathbb{R}^n$. This class neither covers nor is included in the class with the convex concentration property. 
\end{remark}

Finally, let us complement the bound of \autoref{lem:noise-dual-norm-Mset} with an upper bound on $\lambda$. The following bound is well-known but has been provided for completeness. The proof is given in \autoref{app:prediction}.
\begin{lemma}\label{lem:lam-upper-bnd}
Consider measurements of the form $y = X\beta^\star + \epsilon$ and the estimator in \eqref{eq:estimator-kd}. If $\lambda \geq \frac{1}{n}\norm{X^\sT y}^\star$, then $\hbeta=0$.
\end{lemma}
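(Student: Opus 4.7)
The plan is to verify the first-order optimality condition at $\beta=0$ for the convex objective
\[
f(\beta) \equiv \frac{1}{2n}\|y - X\beta\|_2^2 + \lambda \norm{\beta}.
\]
Since $f$ is convex, $\hbeta = 0$ is a minimizer if and only if $0 \in \partial f(0)$. The gradient of the smooth quadratic part at $\beta=0$ is $-\tfrac{1}{n}X^\sT y$, so the condition becomes
\[
\tfrac{1}{n}X^\sT y \in \lambda\, \partial \norm{0}.
\]

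The next step is to identify $\partial \norm{0}$. For any norm $\nor$, the subdifferential at the origin equals the unit ball of the dual norm:
\[
\partial \norm{0} = \{ v \in \mathbb{R}^p :~ \norm{v}^\star \leq 1 \}.
\]
This is standard; it follows directly from $\norm{\beta} = \sup\{\langle v,\beta\rangle : \norm{v}^\star\leq 1\}$ together with the definition of subgradient at zero. Using this, the optimality condition $\tfrac{1}{n}X^\sT y \in \lambda\, \partial\norm{0}$ reduces, by positive homogeneity, to $\norm{\tfrac{1}{n}X^\sT y}^\star \leq \lambda$.

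Finally, combining the two steps: the hypothesis $\lambda \geq \tfrac{1}{n}\norm{X^\sT y}^\star$ is exactly the optimality certificate, so $\hbeta = 0$ is a minimizer. There is no substantive obstacle here; the only delicate point is to invoke the correct characterization of $\partial\norm{0}$ and to note that since $f$ is strictly convex in the direction of the range of $X$ (and coercive thanks to the norm term), any minimizer found this way yields $\hbeta = 0$ as the (unique) solution to \eqref{eq:estimator-kd}.
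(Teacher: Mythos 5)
Your proof is correct, but it runs through a different (and equally standard) mechanism than the paper's. You certify optimality of $0$ via the first-order condition $0\in\partial f(0)$, using the subdifferential sum rule and the fact that $\partial\norm{0}$ is the dual-norm unit ball, so the hypothesis $\lambda\ge\frac{1}{n}\norm{X^\sT y}^\star$ is read as a subgradient certificate. The paper instead argues purely on the primal side: it writes $\norm{X^\sT y}^\star=\sup_{\beta\neq0}\beta^\sT X^\sT y/\norm{\beta}$, throws in the free nonnegative term $\frac{1}{2}\norm{X\beta}_2^2$, and rearranges to get $\frac{1}{2n}\norm{X\beta-y}_2^2+\lambda\norm{\beta}\ge\frac{1}{2n}\norm{y}_2^2=f(0)$ for every $\beta$ — no subdifferential calculus at all. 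Both arguments pivot on the same duality $\norm{\theta}^\star=\sup_{\beta\neq0}\langle\theta,\beta\rangle/\norm{\beta}$; yours is the textbook KKT-style route and generalizes verbatim to any differentiable loss (the condition becomes $\lambda\ge\norm{\nabla L(0)}^\star$), while the paper's is more elementary and gives the explicit global inequality $f(\beta)\ge f(0)$ directly. One small caveat: your closing parenthetical about uniqueness is stronger than what the lemma (or the paper's proof) asserts, and the justification ``strictly convex in the direction of the range of $X$ plus coercive'' is loose as stated — the clean argument is that any other minimizer $\beta$ would make $f$ affine, hence constant, on the segment $[0,\beta]$, forcing $X\beta=0$, and then $\lambda\norm{\beta}>0$ (for $\lambda>0$, $\beta\neq0$) contradicts optimality; for $\lambda=0$ uniqueness can genuinely fail, so either add that argument or simply claim, as the paper does, that $0$ attains the minimum.
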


\subsection{Existing Approaches}\label{sec:lambda-bnds-existing}
\citep{jalali2018missing} also leverage the Hanson-Wright inequality in regularized regression where they consider a modification of Lasso for recovery of a sparse transition matrix in a vector autoregressive process with subgaussian noise and incomplete observations. In such problem, the design is constructed through the action of the transition matrix on previous innovations. Therefore, instead of aggregate quantities $\agg_0$, $\agg_1$, and $\agg_2$ here, for the design matrix, they arrive at structural summary quantities for the transition matrix (Section 1.3~in this reference) which allow for quantifying the {\em dependence} within design caused by autoregression. The bounds of \citep{jalali2018missing} in terms of these structural summary quantities can be compared with the bounds in \cite[Theorem 3.3]{melnyk2016estimating} that are agnostic to the model properties. Following a similar line of thought as that of \citep{jalali2018missing}, combined with the general machinery provided in this section, one can derive bounds on the regularization parameter for many correlation scenarios (beyond autoregression) in the design matrix. 

On the other hand, most of the existing literature for bounding the regularization parameter assume both $X$ and $\epsilon$ are drawn from well-known random ensembles for which concentration results exist. Most notably, generic chaining \citep{talagrand2014upper} is used leading to bounds in terms of the Gaussian width (or subgaussian width, sub-exponential width, etc) of the unit norm ball. For example, 
see \citep{banerjee2014estimation,chen2016structured} for certain subgaussian design matrices, 
\citep{sivakumar2015beyond} for results on sub-exponential noise and design, \cite[Theorem 3.3]{melnyk2016estimating} for the case of autoregressive models, and \citep{johnson2016structured} for an active sampling scenario.

Even beyond the random nature of existing results, computing the Gaussian width of a norm ball is not straightforward and requires a case by case consideration; e.g., see \citep{chen2015structured}. General approaches for bounding this Gaussian width include bounding the Gaussian width of {\em all} tangent cones (Lemma 3 in \citep{banerjee2014estimation}) as well as careful partitioning of the extreme points of the norm ball (Lemma 2 in \citep{maurer2014inequality}).

\section{Estimation Error Bounds and the Relative Diameter}\label{sec:est}
Consider the setup of \autoref{sec:concise-var}: a measurement model $y = X\beta^\star + \epsilon$, where $X\in\mathbb{R}^{n\times p}$ is the {\em design} matrix and $\epsilon\in\mathbb{R}^n$ is the noise vector. For any given norm $\nor$, and not only those studied in \autoref{sec:Snorms-def}, we then consider the regularized estimator in \autoref{eq:estimator}. Rather now-well-known analysis of \eqref{eq:estimator} yields {\em estimation error bounds}, namely bounds on $\|\hbeta-\beta^\star\|$ and $\|\hbeta-\beta^\star\|_2$. In this section, we review existing estimation error bounds (e.g., see \citep{wainwright2014structured} for a review) and provide proofs for the sake of completeness. Let us summarize the main ingredients in establishing these bounds:
\begin{itemize}
\item 
Optimality condition for the regularized estimator in \eqref{eq:estimator}, with $\lambda \geq \norm{\frac{2}{n}X^\sT \epsilon}^\star$, yields $v = \hbeta-\beta^\star \in \ErrSet(\prans)$ where 
\begin{align}\label{eq:ErrSet}
	\ErrSet(\prans) &\equiv \bigl\{v 
		:~ \frac{1}{2}\|v\| + \|\beta^\star\| \ge \|\beta^\star+v\| \bigr\}
\end{align}
is in general a non-convex set and hard to characterize.

\item 
The restricted eigenvalue (RE) constant, defined as
\begin{align}\label{eq:def-REc}
\REc(A) = \min_{u\in A\backslash \{0\}}\frac{ \frac{1}{n}\norm{Xu}_2^2 }{ \norm{u}_2^2},
\end{align}
characterizes the effect of $X$ on the error $v$, and when evaluated positive on $\ErrSet(\prans)$ allows for transforming the prediction error bound into estimation error bounds.


\item The restricted norm compatibility constant \citep{negahban2012unified} is defined as 
\begin{align}\label{eq:norm-compat}
	\vpsi(A) = \sup_{u\in A\backslash 0} \frac{\norm{u}}{\norm{u}_2}, 
\end{align}	
and when evaluated on $\ErrSet(\prans)$, allows for relating $\norm{v}$ and $\norm{v}_2$ in establishing estimation error bounds using a prediction error bound and the restricted eigenvalue condition. 
\end{itemize}

\begin{theorem}[Estimation Error]\label{thm:estimation}
	Suppose that the sample covariance $\hSigma\equiv (X^\sT X)/n$ satisfies the RE condition on $\ErrSet$ with constant $\REc>0$. For $\lambda \ge \|\frac{2}{n}X^\sT \epsilon\|^\star$, 
	then, the estimator $\hbeta$ given by~\eqref{eq:estimator} satisfies the bounds
	\begin{align}
		\|\hbeta-\beta^\star\| &\le \frac{3}{\REc}\lambda\vpsi^2\,,\label{eq:square-B}\\
		\|\hbeta-\beta^\star\|_2 &\le \frac{3}{\REc} \lambda \vpsi\,. \label{eq:L2-B}
	\end{align}
where $\vpsi = \vpsi(\ErrSet)$; see \eqref{eq:ErrSet} and \eqref{eq:norm-compat}. 
\end{theorem}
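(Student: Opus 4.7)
The plan is to derive both bounds in a single pass from the optimality of $\hbeta$, via the standard two-step route: first certify that the error vector lies in $\ErrSet$, then combine a ``basic inequality'' with the RE condition and the norm compatibility constant. Set $v = \hbeta - \beta^\star$, and substitute $y = X\beta^\star + \epsilon$ into the optimality of $\hbeta$ in \eqref{eq:estimator} to obtain
\[
\tfrac{1}{2n}\|Xv\|_2^2 - \tfrac{1}{n}\langle X^\sT \epsilon, v\rangle + \lambda \|\beta^\star + v\| \le \lambda \|\beta^\star\|.
\]
By Hölder's inequality and the assumption $\lambda \ge \|\tfrac{2}{n}X^\sT\epsilon\|^\star$, we have $\tfrac{1}{n}|\langle X^\sT\epsilon,v\rangle| \le \tfrac{\lambda}{2}\|v\|$, so that
\[
\tfrac{1}{2n}\|Xv\|_2^2 + \lambda \|\beta^\star+v\| \le \lambda \|\beta^\star\| + \tfrac{\lambda}{2}\|v\|. \tag{$\ast$}
\]

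Dropping the nonnegative $\tfrac{1}{2n}\|Xv\|_2^2$ term in $(\ast)$ yields $\|\beta^\star + v\| \le \|\beta^\star\| + \tfrac{1}{2}\|v\|$, i.e.\ $v\in\ErrSet(\prans)$ as defined in \eqref{eq:ErrSet}. This makes $\vpsi(\ErrSet)$ and $\REc$ applicable to $v$. Next, rearranging $(\ast)$ and using the reverse triangle inequality $\|\beta^\star\| - \|\beta^\star + v\| \le \|v\|$ gives
\[
\tfrac{1}{n}\|Xv\|_2^2 \le 2\lambda(\|\beta^\star\| - \|\beta^\star+v\|) + \lambda \|v\| \le 3\lambda \|v\|,
\]
which is the key ``basic inequality.'' Applying the RE condition $\REc \|v\|_2^2 \le \tfrac{1}{n}\|Xv\|_2^2$ and the norm compatibility bound $\|v\| \le \vpsi \|v\|_2$ (both valid because $v\in\ErrSet$) yields
\[
\REc \|v\|_2^2 \;\le\; 3\lambda \|v\| \;\le\; 3\lambda \vpsi \|v\|_2,
\]
and dividing by $\|v\|_2$ (trivially true if $v=0$) produces the $\ell_2$ bound \eqref{eq:L2-B}. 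Finally, multiplying the $\ell_2$ bound by $\vpsi$ and invoking $\|v\| \le \vpsi\|v\|_2$ once more yields \eqref{eq:square-B}.

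There is no real obstacle here; every step is either the optimality of $\hbeta$, the dual-norm Hölder inequality, the triangle inequality, or a direct application of the definitions of $\ErrSet$, $\REc$, and $\vpsi$. The only point requiring a sentence of care is the appearance of the factor $3$ (rather than $2$): it comes from combining the $\tfrac{\lambda}{2}\|v\|$ slack from Hölder with the $\lambda\|v\|$ slack from the reverse triangle inequality on $\|\beta^\star\| - \|\beta^\star+v\|$. No representation result on $\nor$ or $\nor^\star$ (e.g.\ the material of Sections~3--4) is needed in this proof; the deterministic bound is purely an optimization/geometry argument, with the stochastic content entirely absorbed into the hypothesis $\lambda \ge \|\tfrac{2}{n}X^\sT\epsilon\|^\star$.
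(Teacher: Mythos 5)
Your proof is correct and follows essentially the same route as the paper's: derive the basic inequality $\tfrac{1}{n}\|Xv\|_2^2 \le 3\lambda\|v\|$ from optimality, H\"older, and the triangle inequality; observe $v\in\ErrSet$; then chain the RE condition with $\|v\|\le\vpsi\|v\|_2$. The only cosmetic difference is that the paper routes the same inequalities through an intermediate ratio $\rat(\ErrSet)=\sup_u \norm{u}^2/(\tfrac{1}{n}\norm{Xu}_2^2)$ bounded by $\vpsi^2/\REc$, which yields identical constants.
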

\autoref{thm:estimation} is proved in \autoref{sec:estimation}. 

However, the main point of deviation from the existing standard analysis is the introduction of a new quantity, namely {\em the relative diameter of the norm ball at $\beta^\star$}; see \autoref{def:varphi}. Using this quantity, we define a superset for $\ErrSet(\prans)$, in \autoref{lem:cones}, which allows for bounding all of the above quantities and leads to concrete (as opposed to {\em conceptual}) bounds.

\subsection{Relative Diameter}
Replacing $\ErrSet$ with a more computational-friendly {\em superset of $\ErrSet$}, in computing $\vpsi$ and $\REc$, allows for deriving valid bounds that can be explicitly evaluated. We do so by introducing a new quantity, namely the {\em relative diameter of the dual norm ball with respect to $\beta^\star$}, and by providing \autoref{lem:cones} which replaces $\ErrSet$ with a simple cone defined in terms of the relative diameter. Further elaborations and discussions on the notion of relative diameter are postponed to \autoref{sec:varphi} and \autoref{sec:varphi-insight}.

Before defining our main quantity in \autoref{def:varphi}, let us review some definitions from convex geometry. 
	Let $A$ and $B$ be two non-empty subsets of~$\reals^p$. Define the Hausdorff distance $\dist_H(A,B)$ by
	\begin{align*}
		\dist_H(A,B) \equiv \max\,\{\sup_{a\in A} \dist(a,B),\, \sup_{b\in B} \dist(b,A)\},
	\end{align*}
	where for a given point $a$ and a set $B$, $\dist(a,B) = \inf_{b\in B} \|a-b\|_2$ denotes the distance of point $a$ from set $B$ in $\ell_2$ norm. 
	For a given set $A\subset \reals^p $, the corresponding support function $\sigma_A(v):\reals^p \mapsto \reals$ is defined as 
	$
	\sigma_A(v) \equiv \sup_{a\in A}~ \<a,v\>
	$. 
	Note that $B\subseteq A \subset \reals^p$ if and only if $\sigma_B(v) \le \sigma_A(v)$ for all $v\in \reals^p$. 
The Hausdorff distance can then be defined alternatively as
\begin{align}\label{eq:hausdorff}
	\dist_H(A,B) = \sup_{\|v\|_2\le1 } |\sigma_A(v) - \sigma_B(v)|\,.
\end{align}

\begin{definition}[Relative Diameter]\label{def:varphi}
Given a norm $\nor$ on $\mathbb{R}^p$, denote the unit ball in the dual norm by $\cB^\star\equiv \{z\in\reals^p: \|z\|^\star \le1\}$ and the subdifferential of $\nor$ at $\beta$ by $\partial \norm{\beta}$. 
We define {\em a measure of complexity of $\beta\in\mathbb{R}^p \backslash \{0\}$ with respect to the norm $\nor$} denoted by $\varphi(\beta; \nor)$ as follows, 
\begin{eqnarray}\label{eq:varphi}
	\varphi = \varphi(\beta; \nor)\equiv \dist_H(\cB^\star, \partial\norm{\beta}).
\end{eqnarray}
Furthermore, since $\partial \norm{\beta}$ is a subset of (in fact, a face of) $\Bc^\star$ we have 
\begin{align}\label{eq:varphi-max}
\varphi (\beta; \nor)
= \adjustlimits\max_{z\in \Bc^\star} \min_{g\in \partial\norm{\beta}}~ \norm{z-g}_2.
\end{align}
\end{definition}
	As an example, for the case of $\ell_1$ norm we have $\varphi(\beta; \nor_1) = 2\sqrt{\norm{\beta}_0}$. 
In \autoref{sec:varphi}, we present a few strategies for computing or upper bounding the relative diameter accompanied by detailed computations for a few families of norms in \autoref{sec:varphi-owl} and \autoref{app:varphi}. In \autoref{sec:varphi-insight}, we provide further insights on $\varphi(\beta; \nor)$.

\subsection{New Estimation Bounds}\label{sec:new-bounds}
Recall the error set $\ErrSet = \ErrSet(\pran)$ defined in \eqref{eq:ErrSet}. As it may be seen from the definition, this is generally a non-convex set with a complicated structure. Therefore, it is not in general easy to compute the associated restricted norm compatibility constant $\vpsi(\ErrSet)$ or the restricted eigenvalue constant $\REc(\ErrSet)$ for a given design. Therefore, a reasonable strategy is to find a simpler set to which $\ErrSet$ is a subset. Computing the two aforementioned constants for such a superset of $\ErrSet$ cannot decrease $\vpsi$ and cannot increase $\REc$. Therefore, the prediction error bound of \autoref{lem:oracle} and the estimation error bounds of \autoref{thm:estimation} cannot decrease meaning that we will have new valid error bounds. 

Next, we use the notion of relative diameter to define a computationally-friendly set that covers $\ErrSet$ and replaces it in the computation of $\vpsi$ and $\alpha$.

\begin{lemma}\label{lem:cones}
	Consider the set $\ErrSet(\pran)$ from \eqref{eq:ErrSet} and the cone $\cC(\varphi)$ defined as
\begin{align}\label{eq:cone-varphi}
\cC(\varphi) = \bigl\{v:~ \norm{v} \le 2\varphi \norm{v}_2\bigr\}
\end{align}	 
with $\varphi=\varphi(\pran)$ defined in \autoref{def:varphi}. Then, $\ErrSet(\pran)\subseteq \cC(\varphi)$.
\end{lemma}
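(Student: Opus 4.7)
The plan is to show that any $v$ in the error set satisfies the compatibility-type inequality $\|v\|\le 2\varphi\|v\|_2$ by combining three standard ingredients: the variational representation of the norm via the dual ball, a subgradient inequality at $\beta^\star$, and the Hausdorff-distance definition of $\varphi$. No calculation beyond these should be required.

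First, I would write the norm as a support function: $\|v\|=\sup_{z\in\Bc^\star}\langle z,v\rangle$, and pick $z^\star\in\Bc^\star$ achieving the supremum, so that $\|v\|=\langle z^\star,v\rangle$. By \autoref{def:varphi}, $\varphi=\dist_H(\Bc^\star,\partial\|\beta^\star\|)$, and since $\partial\|\beta^\star\|\subseteq\Bc^\star$ the Hausdorff distance reduces to $\sup_{z\in\Bc^\star}\inf_{g\in\partial\|\beta^\star\|}\|z-g\|_2$, as recorded in \eqref{eq:varphi-max}. Hence there exists $g^\star\in\partial\|\beta^\star\|$ with $\|z^\star-g^\star\|_2\le\varphi$.

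Next, I would split
\begin{equation*}
\|v\|=\langle z^\star,v\rangle=\langle g^\star,v\rangle+\langle z^\star-g^\star,v\rangle\le\langle g^\star,v\rangle+\varphi\,\|v\|_2,
\end{equation*}
using Cauchy--Schwarz on the second term. The subgradient inequality for the convex function $\|\cdot\|$ at $\beta^\star$ gives $\langle g^\star,v\rangle\le\|\beta^\star+v\|-\|\beta^\star\|$, and the defining inequality of $\ErrSet$ in \eqref{eq:ErrSet} bounds the right-hand side by $\tfrac{1}{2}\|v\|$. Chaining yields $\|v\|\le\tfrac{1}{2}\|v\|+\varphi\|v\|_2$, hence $\|v\|\le 2\varphi\|v\|_2$, so $v\in\cC(\varphi)$.

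There is no serious obstacle here; the only subtle point is to justify selecting an achieving $z^\star\in\Bc^\star$ (which uses compactness of the dual ball) and verifying that \autoref{def:varphi} genuinely forces $\inf_{g\in\partial\|\beta^\star\|}\|z^\star-g\|_2\le\varphi$ rather than only bounding the Hausdorff distance symmetrically. Both facts are immediate once we recall that $\partial\|\beta^\star\|\subseteq\Bc^\star$ (so the $\sup_{g\in\partial\|\beta^\star\|}\dist(g,\Bc^\star)$ term in the Hausdorff distance vanishes), which is exactly the observation used to pass from \eqref{eq:varphi} to \eqref{eq:varphi-max}.
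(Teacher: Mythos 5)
Your proof is correct and follows essentially the same route as the paper: the chain (error-set inequality) $\Rightarrow$ (subgradient inequality at $\beta^\star$) $\Rightarrow$ (bounding $\sup_{z\in\cB^\star}\langle z,v\rangle-\sup_{g\in\partial\|\beta^\star\|}\langle g,v\rangle$ by $\varphi\|v\|_2$) is exactly the paper's argument, with your explicit choice of $z^\star$ and nearby $g^\star$ plus Cauchy--Schwarz being just an unpacked version of the paper's support-function identity \eqref{eq:hausdorff} together with the observation $\partial\|\beta^\star\|\subseteq\cB^\star$.
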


In the above, $\varphi = \varphi(\pran)$ is defined based on the Hausdorff distance between the dual norm ball and the subdifferential of the norm at $\beta$. 
For example, for the case of $\ell_1$ norm we have $\varphi = 2\sqrt{\norm{\beta^\star}_0}$ and hence \autoref{thm:estimation}, with $\ErrSet$ replaced by $\cC(2\sqrt{\norm{\beta^\star}_0})$ recovers the classical estimation result on Lasso \citep{buhlmann2011statistics}.

\begin{proof}[Proof of \autoref{lem:cones}]
	For $v\in \ErrSet$, we have
	\begin{align}\label{dummy1}
		\frac{1}{2}\|v\| \le \|v\| + \|\beta^\star\| - \|\beta^\star+v\|\,.
	\end{align}
	By convexity of $\|\cdot\|$ we have 
	\[
	\sup_{w\in \partial\|\beta^\star\|} \<w,v\> \le \|\beta^\star+v\| - \|\beta^\star\|\,.
	\]
	Therefore,
	\begin{align}\label{Hauss1}
		\|\beta^\star\| - \|\beta^\star+v\| + \|v\| 
		\le \sup_{\|z\|^\star\le 1} \<z,v\> - \sup_{w\in \partial\|\beta^\star\|} \<w,v\>\,.
	\end{align}
Recall the notation $\cB^\star$ for the unit ball in the dual norm. 
		We proceed by writing the right-hand side of~\eqref{Hauss1} in terms of support functions:
	\begin{align}
	\begin{split}\label{Hauss2}
		\|\beta^\star\| - \|\beta^\star+v\| + \|v\| 
		&\le \|v\|_2 \left[\sigma_{\cB^\star}\Big(\frac{v}{\|v\|_2}\Big) - \sigma_{\partial\|\beta^\star\|}\Big(\frac{v}{\|v\|_2}\Big) \right]
		\\
		&\stackrel{(a)}{=} \|v\|_2 \left|\sigma_{\cB^\star}\Big(\frac{v}{\|v\|_2}\Big) - \sigma_{\partial\|\beta^\star\|}\Big(\frac{v}{\|v\|_2}\Big) \right|
		\\
		&\stackrel{(b)}{=} \|v\|_2\, \dist_H(\cB^\star, \partial\|\beta^\star\|) = \varphi \|v\|_2\,,
	\end{split}
	\end{align}
	where $(a)$ follows from the characterization of subdifferential \citep{watson1992characterization} as $\partial\|\beta^\star\| = \{w: \<w,\beta^\star\> = \|\beta^\star\|, \, \|w\|^\star = 1 \} \subset \cB^\star$ and the fact that $\sigma_A (\cdot) \leq \sigma_B(\cdot)$ for $A\subseteq B$, and $(b)$ follows from the characterization of Hausdorff distance, given by~\eqref{eq:hausdorff}. 
	By combining~\eqref{dummy1} and \eqref{Hauss2}, we get $\|v\|\le 2\varphi \|v\|_2$, and hence $v\in \cC(\varphi)$. This completes the proof. 
\end{proof}

Recall from above that evaluating different ingredients of the statistical error bounds on a superset of $\ErrSet$ yields valid bounds. As an example, recall the restricted norm compatibility constant defined in \eqref{eq:norm-compat} as $\vpsi(A) = \sup_{u\in A\backslash 0} \frac{\norm{u}}{\norm{u}_2}$. It is then easy to see from \autoref{lem:cones} that 
\begin{align}
\vpsi(\ErrSet(\pran)) \leq \vpsi(\cC(\varphi(\pran))) = 2 \varphi(\pran).
\end{align}
In the sequel, we study the RE condition for a family of subgaussian design matrices where in the proof we leverage \autoref{lem:cones} and compute the RE constant for $\cC(\varphi)$ instead of $\ErrSet$.

\begin{theorem}\label{thm:RE-random}
Consider 
\begin{itemize}
\item A closed scale-invariant set~$\Sc$, spanning $\mathbb{R}^p$, that further satisfies $\Sc \subseteq \{\beta:~ \card(\beta)\leq k\}$, and the corresponding cone $\cC(\varphi)$ for $\varphi=\varphi(\beta^\star;\nor_\Sc)$. 
\item 
A sequence of design matrices $X\in \reals^{n\times p}$, with dimensions $n\to \infty$, $p = p(n)\to \infty$ satisfying the following assumptions, for constant $\lambda_{\min}, \lambda_{\max}, \kappa$ independent of $n$. For each $n$, $\Sigma\in \reals^{p\times p}$ is such that $\lambda_{\min}(\Sigma) \ge c_{\min}>0$ and $\lambda_{\max}(\Sigma)\le c_{\max}<\infty$. 
\item Assume that the rows of $X$ are independent subgaussian random vectors in $\reals^p$ rows with second moment matrix $\Sigma$.
\end{itemize}	
	Then, for any fixed constant $c>0$, the empirical covariance $\hSigma \equiv (X^\sT X)/n$ satisfies the RE condition over $\cC(\varphi)$ for $\REc = \lambda_{\min}/2$, with probability at least $1-2p^{-ck}$, provided that 
	\begin{align}
	n\ge C\lambda_{\min}^{-2} \varphi^4 k\log p, \label{n-condition} 
	\end{align}
	where $C = C(c, \lambda_{\min},\lambda_{\max},\kappa)$. 
\end{theorem}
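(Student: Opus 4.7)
The plan is to reduce the RE condition on $\cC(\varphi)$ to a Gaussian width bound on its intersection with the unit sphere, and then to invoke a standard concentration inequality for subgaussian designs to convert that width bound into a uniform lower bound on $v^\sT \hSigma v$.

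Step 1 (Gaussian width of the cone). By the definition of $\cC(\varphi)$, the set $\cC(\varphi)\cap \mathbb{S}^{p-1}$ is contained in $\{v:\; \|v\|_2\le 1,\; \|v\|_\Sc\le 2\varphi\}$. For a standard Gaussian vector $g\in\reals^p$, H\"older's inequality gives $\langle g,v\rangle \le 2\varphi\,\|g\|_\Sc^\star$. The hypothesis $\Sc\subseteq \{\beta: \card(\beta)\le k\}$ together with $\Bc_\Sc=\conv(\Sc\cap \mathbb{S}^{p-1})$ forces every extreme point of $\Bc_\Sc$ to be a $k$-sparse unit vector, so
\[
\|g\|_\Sc^\star \;=\; \sup_{a\in\ext(\Bc_\Sc)}\langle g,a\rangle \;\le\; \|g\|_{(k),2},
\]
where $\|g\|_{(k),2}$ denotes the $\ell_2$ norm of the $k$ largest entries of $g$ in magnitude. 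A standard maximum-of-Gaussians bound yields $\mathbb{E}\|g\|_{(k),2}\le C\sqrt{k\log(ep/k)}$, together with a subgaussian tail of the same order. Hence $w(\cC(\varphi)\cap \mathbb{S}^{p-1})\;\le\; C_1 \varphi\sqrt{k\log p}$.

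Step 2 (Uniform concentration and the RE bound). With the width bound in hand, I would invoke an off-the-shelf deviation inequality for quadratic forms of iid subgaussian rows over a set of bounded Gaussian width (e.g.\ Mendelson's small-ball method, or the Gordon-type bound underlying Theorem~6 of \citep{banerjee2014estimation}): on an event of probability at least $1-2e^{-u}$,
\[
\sup_{v\in \cC(\varphi)\cap \mathbb{S}^{p-1}} \bigl|v^\sT \hSigma v - v^\sT \Sigma v\bigr| \;\le\; C_2 \kappa^2 \Bigl(\tfrac{w^2}{n}+\tfrac{w\sqrt{u}}{\sqrt{n}}\Bigr).
\]
Choosing $u = c k \log p$ makes the failure probability at most $2 p^{-ck}$, and on the complementary event $v^\sT\Sigma v \ge \lambda_{\min}$ on the sphere. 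Both deviation terms are driven below $\lambda_{\min}/2$ as soon as $n\ge C\lambda_{\min}^{-2}\varphi^4 k\log p$; the fourth power of $\varphi$ and the squared inverse of $\lambda_{\min}$ arise from squaring a Gordon-type inequality $\inf\sqrt{v^\sT \hSigma v}\ge \sqrt{\lambda_{\min}}-C\kappa w/\sqrt{n}$, which after squaring and subtracting leaves $v^\sT \hSigma v \ge \lambda_{\min}/2$, i.e., the claimed RE inequality with constant $\REc = \lambda_{\min}/2$.

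Main obstacle. The delicate step is the width bound in Step 1: it is what connects the abstract hypothesis $\Sc\subseteq \{\card\le k\}$ to a concrete $\sqrt{k\log p}$ scaling, and it uses crucially that the extreme points of $\Bc_\Sc$ are $k$-sparse unit vectors, allowing the dual-norm Gaussian expectation to be controlled by the $\ell_2$ norm of the top $k$ entries of $g$. The remaining ingredients are standard chaining/peeling, but one must tune the free parameter in the subgaussian concentration inequality carefully so that the exceptional probability exits as $p^{-ck}$ while keeping the sample-size requirement proportional to $k\log p$ (rather than $\log p$ only).
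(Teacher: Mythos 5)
Your outline is correct, but it follows a genuinely different route from the paper. The paper's proof is a Loh--Wainwright-style argument: it writes $v^\sT\hSigma v \ge \lambda_{\min}\|v\|_2^2 - |v^\sT(\Sigma-\hSigma)v|$ and then proves a purely deterministic reduction lemma showing that if $|v^\sT\Gamma v|\le\delta\|v\|_2^2$ on $\Sc\oplus\Sc$ (a set of $2k$-sparse vectors), then $|v^\sT\Gamma v|\le 3\delta\|v\|_\Sc^2$ everywhere; this is done by expanding $v=\sum_i\alpha_i v_i$ over atoms $v_i\in\Sc$ with $\|v_i\|_2\le\|v\|_\Sc$ and bounding each cross term $v_i^\sT\Gamma v_j$ by polarization. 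Combined with $\|v\|_\Sc\le 2\varphi\|v\|_2$ on $\cC(\varphi)$, the problem reduces to controlling $\|(\hSigma-\Sigma)_{A,A}\|$ uniformly over supports $|A|\le 2k$, which is handled by an elementary union bound over $\binom{p}{2k}$ sub-matrices; the required $\delta=\lambda_{\min}/(24\varphi^2)$ squared against the $\sqrt{k\log p/n}$ deviation is exactly what produces the $\lambda_{\min}^{-2}\varphi^4$ in \eqref{n-condition}. Your route instead passes through the Gaussian width of $\cC(\varphi)\cap\mathbb{S}^{p-1}\subseteq 2\varphi\,\Bc_\Sc$ and the dual-norm bound $\|g\|_\Sc^\star\le\|g\|_{(k),2}$ (which is valid: the extreme points of $\Bc_\Sc$ lie in $\Sc\cap\mathbb{S}^{p-1}$ and are hence $k$-sparse unit vectors), followed by a chaining-based uniform deviation inequality. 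This buys a sharper dependence --- squaring the one-sided Gordon-type bound only requires $n\gtrsim\lambda_{\max}\lambda_{\min}^{-2}\varphi^2\,k\log p$, i.e.\ $\varphi^2$ rather than $\varphi^4$ --- at the cost of heavier machinery and of having to track anisotropy (the width of $\Sigma^{1/2}(\cC(\varphi)\cap\mathbb{S}^{p-1})$, which brings in $\lambda_{\max}$; this is admissible since $C$ may depend on it). The paper's approach is more elementary and self-contained, and the deterministic Lemma on $\Sc\oplus\Sc$ is reusable for other structure sets.

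One point needs care in your write-up: the two-sided deviation inequality you display, with the cross term $w\sqrt{u}/\sqrt{n}$ and failure probability $e^{-u}$, would force $u=ck\log p$ and hence $n\gtrsim k^2(\log p)^2$, which does not match \eqref{n-condition}. The correct mechanism is the one you describe afterwards: apply the deviation inequality to $\tfrac{1}{\sqrt{n}}\|Xv\|_2-\|\Sigma^{1/2}v\|_2$ (where the additive error is $C\kappa^2(w+u)/\sqrt{n}$ with failure probability $e^{-u^2}$, so $u=\sqrt{ck\log p}$ is on the same scale as $w$) and only then square. Stated that way, your argument closes at the claimed rate.
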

Proof of \autoref{thm:RE-random} is given in \autoref{app:proof-RE}. We follow a similar approach to that of \citep{loh2012}. However, instead of considering as many atoms as present in the target model, we only consider two atoms which allows for easy generalization to cases beyond sparsity.

\begin{remark}\label{rem:Eq}
For any $\qc>1$, consider 
\begin{align}\label{eq:ErrSet-q}
	\ErrSet^{(\qc)}(\prans) &\equiv \bigl\{v 
		:~ \frac{1}{\qc}\|v\| + \|\beta^\star\| \ge \|\beta^\star+v\| \bigr\},
\end{align}
which for $\qc=2$ yields $\ErrSet^{(2)} = \ErrSet $ defined in \eqref{eq:ErrSet}. Note that $\ErrSet^{(\qc)}$ is the whole space for $0<\qc\leq 1$ which is not of interest in our discussion. 
An easy adaptation of \autoref{lem:cones} yields $\vpsi(\ErrSet^{(\qc)}) \leq \frac{\qc}{\qc-1}\varphi(\pran)$. Notice the complicated dependence of the left-hand side on $\qc$ while the right-hand side's dependence is clear.

Define $\theta = \norm{\frac{1}{n}X^\sT \epsilon}^\star$. Then, for any $\lambda > \theta$ used in \eqref{eq:estimator}, the prediction error bound of \autoref{lem:oracle} and the estimation error bounds of \autoref{thm:estimation} read as
\begin{align*}
	\frac{1}{n}\norm{X(\beta^\star - \hbeta)}_2^2 
	\leq 2\,(\lambda+\theta) \norm{\beta^\star}\,,~	
	\norm{\hbeta-\beta^\star} 
	\leq 2\,\frac{\lambda^2(\lambda+\theta)}{(\lambda-\theta)^2} \Bigl(\frac{\varphi^2}{\REc}\Bigr)\,,~	
	\norm{\hbeta-\beta^\star}_2 
	\leq 2\,\frac{\lambda(\lambda+\theta)}{\lambda-\theta} \Bigl(\frac{\varphi}{\REc}\Bigr) \,,
\end{align*}
where $\REc = \REc(\cC^{(\lambda/\theta)})$. 
Moreover, an adaptation of \autoref{thm:RE-random} yields $\REc = \REc(\cC^{(\lambda/\theta)}) = \lambda_{\min}/2$ for 
\[
n \geq (36C^2 k \log p) (\lambda_{\min}^{-2} \varphi^4) (\frac{\lambda}{\lambda-\theta})^4.
\]
Proof of the above statements is deferred to \autoref{sec:estimation}.

For future reference, we define $\ErrSet^{(\infty)} \equiv \bigl\{v :~ \|\beta^\star\| \ge \|\beta^\star+v\| \bigr\}$ known as the set of descent directions at $\beta$ with respect to $\nor$. The closed convex hull of $\ErrSet^{(\infty)}$ is the tangent cone at $\beta$. We refer to $\ErrSet^{(\infty)}$ as the {\em constrained error set}, as it an important object in the analysis of the Dantzig selector \citep{chatterjee2014generalized,chen2015structured}.
\end{remark}

\section{Computing the Relative Diameter}\label{sec:varphi}
Recall the definition of relative diameter $\varphi(\pran)$ in \autoref{def:varphi}. 
Here, we provide some tools to exactly compute or upper bound $\varphi$. The rest of this section focuses on such computations for a few major classes of norms: ordered weighted $\ell_1$ norms and their dual norms (which are polyhedral norms) as well as doubly-sparse norms and their dual norms.

\subsection{Tools for Computing $\varphi$}\label{sec:varphi-props}
The following is easy to see from the definition.
\begin{lemma}
$\varphi(\beta; \nor)$ is order-0 homogeneous with respect to its first argument and order-1 homogeneous with respect to its second argument.
\end{lemma}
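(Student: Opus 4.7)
The plan is to expand $\varphi(\beta;\nor)$ through its definition as a Hausdorff distance and then track how the two arguments enter the dual norm ball $\cB^\star$ and the subdifferential $\partial\norm{\beta}$ separately. Since Hausdorff distance satisfies $\dist_H(cA,cB)=c\,\dist_H(A,B)$ for any $c>0$, both homogeneity properties will reduce to verifying that the two sets scale in a transparent way under the corresponding rescaling.

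For the first argument, I would use the standard characterization
\[
\partial\norm{\beta} \;=\; \bigl\{ z \in \cB^\star :\; \langle z,\beta\rangle = \norm{\beta} \bigr\}.
\]
For any $t>0$, replacing $\beta$ by $t\beta$ scales both the linear constraint value and $\norm{t\beta}=t\norm{\beta}$ by $t$, so the constraint is unchanged, giving $\partial\norm{t\beta}=\partial\norm{\beta}$. For $t<0$, the same calculation yields $\partial\norm{t\beta}=-\partial\norm{\beta}$; combined with the central symmetry of $\cB^\star$ (being the unit ball of a norm), the Hausdorff distance is invariant under reflection through the origin. Thus $\varphi(t\beta;\nor)=\varphi(\beta;\nor)$ for every $t\neq 0$, which is order-$0$ homogeneity in the first argument. (When $t>0$, order-$0$ follows directly; the $t<0$ case is the only spot requiring a brief extra symmetry remark.)

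For the second argument, replacing $\nor$ by $c\nor$ with $c>0$ produces dual norm $\tfrac{1}{c}\nor^\star$, hence new dual ball $c\cB^\star$; similarly the subdifferential becomes $c\,\partial\norm{\beta}$, since scaling the norm scales its subdifferential at any point by the same factor. Therefore
\[
\varphi(\beta;c\nor)
\;=\; \dist_H\!\bigl(c\cB^\star,\; c\,\partial\norm{\beta}\bigr)
\;=\; c\cdot \dist_H\!\bigl(\cB^\star,\; \partial\norm{\beta}\bigr)
\;=\; c\,\varphi(\beta;\nor),
\]
where the middle equality uses positive homogeneity of $\dist_H$. This gives order-$1$ homogeneity in the second argument.

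I do not expect any substantive obstacle: the statement is a routine verification that boils down to (i) scale-invariance of the face of $\cB^\star$ cut out by $\langle z,\beta\rangle=\norm{\beta}$, (ii) symmetry of $\cB^\star$ to handle negative scalars, and (iii) positive homogeneity of the Hausdorff distance under simultaneous scaling of both sets. The only place one should be careful is not to conflate ``homogeneous in $\beta$'' with only positive scalars; stating the result for all nonzero $t$ requires the short symmetry argument above.
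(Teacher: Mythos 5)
Your argument is correct and is precisely the routine verification the paper has in mind: the paper states this lemma without proof ("easy to see from the definition"), and your expansion via $\partial\norm{t\beta}=\partial\norm{\beta}$ for $t>0$, the symmetry remark for $t<0$, the identities $(c\nor)^\star=\tfrac{1}{c}\nor^\star$ and $\partial(c\norm{\beta})=c\,\partial\norm{\beta}$, and positive homogeneity of $\dist_H$ fills in exactly that intended computation. No gaps; the care you take to note that negative scalars in the first argument need the central symmetry of $\cB^\star$ is a nice touch beyond what the paper records.
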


\begin{lemma}\label{lem:varphi-max-ext}
Denote by $\ext(A)$ the set of extreme points of a compact convex set $A$. Then, 
\begin{align}\label{eq:varphi-max-ext}
\varphi(\beta;\nor) 
= \adjustlimits\max_{z\in \ext\Bc^\star} \min_{g\in \partial\norm{\beta}} ~\norm{z-g}_2.
\end{align}
\end{lemma}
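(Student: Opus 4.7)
The plan is to reduce the outer maximization over the dual norm ball to its extreme points via a Bauer-type maximum principle. The key observation is that the inner minimization defines a convex function of $z$, and a convex continuous function on a compact convex set attains its maximum at an extreme point.

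More concretely, I would first define
\[
f(z) \;\equiv\; \min_{g \in \partial\norm{\beta}}\,\norm{z-g}_2 \;=\; \dist(z,\partial\norm{\beta}),
\]
and note that $\partial\norm{\beta}$ is a nonempty, closed, convex set (as the subdifferential of a convex function at the nonzero point $\beta$; in fact, as used in the proof of \autoref{lem:cones}, it is a face of $\Bc^\star$). From standard convex analysis, the distance function to a nonempty closed convex set is $1$-Lipschitz and convex in its argument; I would verify convexity in one line by the usual chord argument, taking $g_1, g_2$ to be the (unique) projections of $z_1, z_2$ onto $\partial\norm{\beta}$ and using the triangle inequality on $\lambda z_1+(1-\lambda)z_2 - (\lambda g_1+(1-\lambda)g_2)$.

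Next, I would invoke that $\Bc^\star=\{z:\norm{z}^\star\le 1\}$ is compact (bounded since $\nor^\star$ is a norm, closed by continuity of $\nor^\star$) and convex. By the Bauer maximum principle, every upper semicontinuous convex function on a compact convex subset of $\reals^p$ attains its supremum at some extreme point. Applying this to $f$ restricted to $\Bc^\star$ yields
\[
\max_{z\in \Bc^\star} f(z) \;=\; \max_{z\in \ext\Bc^\star} f(z),
\]
which is exactly the claimed equality (together with the characterization in \eqref{eq:varphi-max}, so the ``max'' is genuinely attained on both sides since $\ext\Bc^\star$ is nonempty by Krein--Milman and $f$ is continuous on the compact set $\conv(\ext\Bc^\star)=\Bc^\star$).

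I do not expect a real obstacle here. The only thing to be mildly careful about is justifying convexity/continuity of $f$ (one short paragraph) and citing the correct form of Bauer's principle for convex (not merely affine) functions; if one preferred to avoid invoking Bauer directly, an alternative is to represent any $z\in\Bc^\star$ as a convex combination of extreme points via Krein--Milman/Carath\'eodory and use convexity of $f$ to bound $f(z)$ by the maximum of $f$ over those extreme points. Either route is routine.
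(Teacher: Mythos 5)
Your proposal is correct and is essentially identical to the paper's proof: both observe that $z\mapsto\dist(z,\partial\norm{\beta})$ is a continuous convex function on the compact convex set $\Bc^\star$ and apply Bauer's maximum principle to conclude the maximum is attained at an extreme point. Your added verification of convexity of the distance function and the Krein--Milman fallback are fine but not needed beyond what the paper states.
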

\begin{proof}[Proof of \autoref{eq:varphi-max-ext}]
Distance to a convex set is a continuous convex function. Moreover, $\Bc^\star$ is a compact convex set. Therefore, by Bauer's Maximum Principle (e.g., see \citep[Proposition~1.7.8]{schirotzek2007nonsmooth}) a maximizer can be found among the extreme points of $\Bc^\star$. 
\end{proof}
Recall that $\varphi^2(\beta;\nor_1)=4\norm{\beta}_0$ and observe that $\varphi^2(\beta; \nor_2) = 4$, for any $\beta\neq 0$. 
\autoref{lem:varphi-max-ext} provides us with a procedure to compute $\varphi$ for many other common norms:
\begin{enumerate}
\item characterize $\ext(\Bc^\star)$ as well as $\partial \norm{\cdot}$,
\item characterize $\dist(z, \partial \norm{\beta})$ for each $z\in \ext(\Bc^\star)$, possibly making use of any structure in members of $\ext(\Bc^\star)$, 
\item possibly simplify the previous step by ignoring those $z\in \ext(\Bc^\star)$ that can be seen that are sub-optimal in the final maximization over all $z\in \ext(\Bc^\star)$, 
\item take the maximum of all the computed distances $\dist(z, \partial \norm{\beta})$ over $z\in \ext(\Bc^\star)$.
\end{enumerate}
We follow this procedure to exactly compute $\varphi$, 
\begin{itemize}
	\item for weighted $\ell_1$ norms in \autoref{lem:varphi-weighted-l1-exact}, 
	\item for weighted $\ell_\infty$ norms in \autoref{lem:varphi-weighted-linf-exact}, and directly for the $\ell_\infty$ norm in \autoref{lem:varphi-linf-exact}, 
	\item for $\nor_{k\square 1}$ in \autoref{lem:varphi-k-1}.
\end{itemize}
Furthermore, \autoref{lem:varphi-max-ext} allows for simplifying the computation of $\varphi$, when the dual norm is a structure norm; i.e., all of the extreme points of $\Bc^\star$ have the same $\ell_2$ norm, namely $\eta$. Then, since we are only dealing with the extreme points and not all members of $\Bc^\star$ as in the original definition, we get 
\begin{align*}
\varphi^2(\beta;\nor) 
= \eta^2 + \adjustlimits\max_{z\in \ext\Bc^\star} \min_{g\in \partial\|\beta\|} ~\norm{g}_2^2 - 2\<z,g\>.
\end{align*}
For example, the dual to an ordered weighted $\ell_1$ norm is a structure norm; see \autoref{lem:extBst-owl}.

For structure norms (norms whose extreme points are all on the unit sphere), we can simplify $\varphi(\beta; \nor_\Sc^\star)$ as follows. Recall that the orthogonal projection onto a non-convex set, such as $\proj_\Sc(\beta)$, is a set-valued mapping in general. However, in the case of closed scale-invariant sets $\Sc$, \autoref{lem:dual-len-proj} establishes that all of the outputs have the same $\ell_2$ norm. 
\begin{lemma}\label{lem:varphi-dualSnorm}
Given a closed scale-invariant set $\Sc\subset \mathbb{R}^p$, consider the corresponding structure norm $\nor_\Sc$. Then, 
\begin{align}
\varphi(\beta;\nor_\Sc^\star) 
&= \adjustlimits\max_{z} \min_{g} \left\{ \norm{z-g}_2:~
z\in \ext\Bc,~g\in \partial\norm{\beta}_\Sc^\star \right\} \nonumber\\
&= \adjustlimits\max_{z} \min_{g} \left\{ \norm{z-g}_2:~
z\in \Sc \cap \mathbb{S}^{p-1},~
g\in \frac{1}{\norm{\proj_\Sc(\beta)}_2} \conv\left(\proj_\Sc(\beta)\right)
\right\}
\end{align}
where we used \autoref{eq:Snorm-gauge} and \autoref{subdiff_dualnorm_gen}. 
\end{lemma}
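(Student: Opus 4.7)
The proof should be a direct assembly of three ingredients already established in the paper, plus one small observation about extreme points. Here is how I would organize it.

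First, specialize the definition of relative diameter. Applying \autoref{def:varphi} with the norm taken to be $\nor_\Sc^\star$, the relevant ``dual norm'' unit ball is the unit ball of $(\nor_\Sc^\star)^\star = \nor_\Sc$, which is $\Bc \equiv \Bc_\Sc$. Hence by \eqref{eq:varphi-max},
\begin{equation*}
\varphi(\beta;\nor_\Sc^\star) = \adjustlimits\max_{z\in \Bc}\min_{g\in \partial\norm{\beta}_\Sc^\star}\|z-g\|_2.
\end{equation*}

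Second, pass to extreme points in the outer maximization. The inner minimum $z \mapsto \dist(z,\partial\norm{\beta}_\Sc^\star)$ is a continuous convex function of $z$ (distance to a nonempty compact convex set), and $\Bc$ is compact convex, so \autoref{lem:varphi-max-ext} (Bauer's maximum principle) lets me restrict the outer max to $\ext(\Bc)$. This yields the first displayed equality in the lemma. To obtain the second one, I would invoke the construction in \eqref{eq:Snorm-gauge}: $\Bc = \conv(\Sc\cap\mathbb{S}^{p-1})$, so $\ext(\Bc)\subseteq \Sc\cap \mathbb{S}^{p-1}\subseteq \Bc$. Maximizing the same continuous function over any set sandwiched between $\ext(\Bc)$ and $\Bc$ gives the same value (upper bounded by the max over $\Bc$, lower bounded by the max over $\ext(\Bc)$, and both equal $\varphi(\beta;\nor_\Sc^\star)$). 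Therefore the outer max may equivalently be taken over $\Sc\cap \mathbb{S}^{p-1}$.

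Third, substitute the explicit subdifferential. \autoref{subdiff_dualnorm_gen} gives
\begin{equation*}
\partial \norm{\beta}_\Sc^\star \;=\; \frac{1}{\norm{\proj_\Sc(\beta)}_2}\,\conv\!\left(\proj_\Sc(\beta)\right),
\end{equation*}
which is well-defined by \autoref{lem:dual-len-proj} (all elements of $\proj_\Sc(\beta)$ share the same $\ell_2$ norm). Plugging this into the inner minimization yields the second displayed equality in the statement.

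There is essentially no obstacle here beyond bookkeeping; the only step that deserves a sentence of justification is the sandwich $\ext(\Bc)\subseteq \Sc\cap \mathbb{S}^{p-1}\subseteq \Bc$, which follows from the definition $\Bc_\Sc = \conv(\Sc\cap \mathbb{S}^{p-1})$ (extreme points of a convex hull of a compact set lie in that set) together with the fact that $\Sc\cap\mathbb{S}^{p-1}\subseteq \Bc$ by construction.
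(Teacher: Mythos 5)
Your proposal is correct and follows exactly the route the paper intends: the paper gives no separate proof of this lemma, instead citing \eqref{eq:Snorm-gauge}, \autoref{subdiff_dualnorm_gen}, and the preceding discussion around \autoref{lem:varphi-max-ext} as the ingredients, which are precisely the three steps you assemble. Your one added observation — that $\ext(\Bc)\subseteq \Sc\cap\mathbb{S}^{p-1}\subseteq\Bc$ (via Milman's converse to Krein--Milman) so the maximum is unchanged when taken over the intermediate set — is the right justification for passing from $\ext\Bc$ to $\Sc\cap\mathbb{S}^{p-1}$ in the second equality.
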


\paragraph{Upper-bounding $\varphi$.} In some cases, it is not straightforward to follow the procedure we discussed before for exact computation of $\varphi$. In such cases, we upper bound $\varphi$ instead:
\begin{itemize}
\item Ordered weighted $\ell_1$ norms $\nor_\owl$ in \autoref{lem:varphi-OWL}, implying an upper bound for $\ell_\infty$ norm in \autoref{lem:varphi-linf}, 

\item \autoref{fig:kdnorms} illustrates the doubly-sparse norms and their dual norms. We provide an upper bound for $\nor_{k\square 1}^\star$ in \autoref{lem:varphi-bnd-norm-k-1-dual}. 

\end{itemize}
Here is an upper bounding strategy: 
\begin{lemma}\label{lem:varphi-maxmin}
The max-min inequality gives
\[
\varphi(\beta; \nor) \leq 
\adjustlimits \min_{g\in \partial\norm{\beta}} \max_{z\in \ext\Bc^\star} ~\norm{z-g}_2.
\]
\end{lemma}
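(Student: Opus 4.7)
The plan is to derive the stated upper bound as a direct application of the max-min (weak minimax) inequality to the exact characterization of $\varphi(\beta;\nor)$ already established in \autoref{lem:varphi-max-ext}. That lemma gives
\[
\varphi(\beta;\nor) \;=\; \adjustlimits\max_{z\in\ext\Bc^\star}\min_{g\in\partial\norm{\beta}}\,\norm{z-g}_2,
\]
so the claim reduces to showing that swapping the order of the max and min can only make the value larger, for the particular bivariate function $f(z,g)=\norm{z-g}_2$.

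The key step is the standard fact that for any function $f:Z\times G\to\reals$ (with no assumptions needed beyond the objects being well-defined),
\[
\sup_{z\in Z}\inf_{g\in G} f(z,g) \;\leq\; \inf_{g\in G}\sup_{z\in Z} f(z,g).
\]
I would include a one-line justification: for every fixed $z_0\in Z$ and every $g\in G$, $\inf_{g'\in G}f(z_0,g') \leq f(z_0,g) \leq \sup_{z\in Z}f(z,g)$; taking $\inf_g$ on the right then $\sup_{z_0}$ on the left yields the inequality. Apply this with $Z=\ext\Bc^\star$, $G=\partial\norm{\beta}$, and $f(z,g)=\norm{z-g}_2$ to obtain
\[
\varphi(\beta;\nor) \;=\; \max_{z\in\ext\Bc^\star}\min_{g\in\partial\norm{\beta}}\norm{z-g}_2 \;\leq\; \min_{g\in\partial\norm{\beta}}\max_{z\in\ext\Bc^\star}\norm{z-g}_2.
\]

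There is no real obstacle here: the result is weak duality, and the only subtle point is whether the suprema and infima are attained (so that $\max$/$\min$ are justified rather than $\sup$/$\inf$). This is immediate because $\ext\Bc^\star$ is contained in the compact set $\Bc^\star$, $\partial\norm{\beta}$ is a compact convex subset of $\Bc^\star$, and $\norm{z-g}_2$ is continuous in both arguments; hence the inner minimum and outer maximum are attained on each side. No further machinery is required, and in particular one does not need convexity/concavity assumptions that would upgrade the inequality to equality (indeed the bound is generally strict, which is why \autoref{lem:varphi-maxmin} is stated as an upper bound rather than as an identity).
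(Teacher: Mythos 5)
Your proposal is correct and matches the paper's (essentially implicit) argument: the lemma is just the weak max--min inequality applied to the characterization $\varphi(\beta;\nor)=\adjustlimits\max_{z\in \ext\Bc^\star}\min_{g\in\partial\norm{\beta}}\norm{z-g}_2$ from \autoref{lem:varphi-max-ext}, which is exactly how the paper uses it in its subsequent computations. The only nitpick is that $\ext\Bc^\star$ need not be closed, so justifying attainment of the outer maximum is better done via Bauer's principle (as in \autoref{lem:varphi-max-ext}) than via compactness of $\Bc^\star$ alone; this does not affect the validity of the inequality.
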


In the following, we present the bound for $\varphi$ for ordered weighted $\ell_1$ norms as a sample of results in \autoref{app:varphi}.

\subsection{Ordered Weighted $\ell_1$ Norms}\label{sec:varphi-owl}
Here, we provide bounds on $\varphi$ for a class of norms, namely the ordered weighted $\ell_1$ norms. The main technique is to upper bound \eqref{eq:varphi-max} using the max-min inequality as given in \autoref{lem:varphi-maxmin}.

Given $\beta$, sort $\abs{\beta}$ in descending order to get $\bar\beta$. Given $w_1\geq w_2 \geq \cdots \geq w_p \geq 0$, the ordered weighted $\ell_1$ norm is defined as $\norm{\beta}_\owl = \sum_{i=1}^p w_i \bar\beta_i$. This norm encompasses $\ell_1$, $\ell_\infty$, and OSCAR \citep{bondell2008simultaneous}.

\begin{lemma}\label{lem:varphi-OWL}
Given $\beta\in\mathbb{R}^p$, set $d = \abs{\{ \abs{\beta_i}\neq 0:~ i\in[p]\}}$. Moreover, define $\cG = (\cG_1, \cdots, \cG_d)$ as the partition of $\supp(\bar\beta)$ into $d$ subsets where for any $i,j\in\supp(\bar\beta)$ and any $t\in[d]$: $i,j\in \cG_t$ if and only if $\bar\beta_i = \bar\beta_j$. Then, for~$\norm{\cdot}_\owl$, 
\begin{align*}
\varphi^2(\beta; \nor_\owl) 
~\leq~ \norm{w_\cG}_2^2 + 3\sum_{t=1}^d \frac{1}{\abs{\cG_t}} (\sum_{j\in \cG_t} w_j)^2 
~\leq~ 4\norm{w_\cG}_2^2\,,
\end{align*}
where we abuse the notation with $\cG = \cG_1\cup\cdots\cG_d = \supp(\bar\beta)$. The bounds are achieved with equality for $w=\one$ (the $\ell_1$ norm).
\end{lemma}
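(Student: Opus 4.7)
The plan is to bound $\varphi^2(\beta; \nor_\owl)$ via the extreme-point characterization in \autoref{lem:varphi-max-ext}, combined with a subgradient chosen adaptively as a function of the extreme point $z$. First I would identify $\ext \Bc_\owl^\star = \{Qw: Q \in \mathcal{P}_\pm\}$, the signed permutations of $w$; this follows from the structure-set identity $\nor_\Sc = \|w\|_2 \cdot \nor_\owl^\star$ with $\Sc = \{\gamma Q w: \gamma \in \reals, Q \in \mathcal{P}_\pm\}$ noted in the excerpt. Next I would characterize $\partial \nor_\owl(\beta)$ as the face of $\Bc_\owl^\star$ exposed by $\beta$: the convex hull of those signed permutations $Q^\ast w$ achieving $\langle Q^\ast w, \beta\rangle = \nor_\owl(\beta)$. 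On unsorted positions $S_t = \{i : |\beta_i| = v_t\}$ the sign is forced to $\sign(\beta_i)$ and the magnitudes form a permutation of $(w_j)_{j \in \cG_t}$; on the zero positions $S_0 = [p]\setminus \supp(\beta)$, the magnitudes form a permutation of $(w_j)_{j \in \cG_0}$ with free signs. These two blocks of constraints decouple, so $\partial \nor_\owl(\beta) = A \times B$ with $A \subseteq \reals^{\supp(\beta)}$ and $B \subseteq \reals^{S_0}$, and the squared distance splits as $\dist^2(z, \partial \nor_\owl(\beta)) = \dist^2(z|_{\supp(\beta)}, A) + \dist^2(z|_{S_0}, B)$.

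Then, for each extreme point $z = Qw$, I would construct $g(z) \in \partial \nor_\owl(\beta)$ adaptively: set $g(z)_i = \sign(\beta_i)\,\bar w_{t_i}$ for $i \in \supp(\beta)$, where $\bar w_t = \tfrac{1}{|\cG_t|} \sum_{j \in \cG_t} w_j$ (this lies in $A$ by a Birkhoff/convex-hull argument), and pick $g(z)|_{S_0}$ to be a feasible point of $B$ that tracks $z|_{S_0}$ wherever possible. For the $\supp(\beta)$ term I would expand
\[
\textstyle \sum_{i \in \supp(\beta)}(z_i - \sign(\beta_i)\bar w_{t_i})^2 = \sum_i z_i^2 + \sum_t |\cG_t|\bar w_t^2 - 2\sum_i z_i \sign(\beta_i)\bar w_{t_i},
\]
and bound each piece by rearrangement: the magnitudes of $z$ on $\supp(\beta)$ are $|\supp(\beta)|$ of the $w_j$'s, so $\sum_{i \in \supp(\beta)} z_i^2 \le \|w_\cG\|_2^2$; the middle term equals $\sum_t (\sum_{j \in \cG_t} w_j)^2/|\cG_t|$; the cross term has absolute value at most $2\sum_t \bar w_t \sum_{i \in S_t} |z_i|$, which by another rearrangement (matching the largest $w$'s to the groups with largest $\bar w_t$) is bounded by $2 \sum_t (\sum_{j \in \cG_t} w_j)^2 / |\cG_t|$. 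For the $S_0$ term, writing $I$ for the set of $w$-indices appearing as magnitudes of $z$ on $\supp(\beta)$, I would take $b \in B$ that matches $z_i$ on positions whose underlying $w$-index lies in $\cG_0$ and zeros out the rest (feasible because such a vector is weakly majorized by $w|_{\cG_0}$), yielding $\dist^2(z|_{S_0}, B) \le \sum_{j \in \cG \setminus I} w_j^2$.

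Putting the two pieces together would give the total bound $\|w_\cG\|_2^2 + 3\sum_t (\sum_{j \in \cG_t} w_j)^2 / |\cG_t|$; the second inequality $\le 4\|w_\cG\|_2^2$ is then an immediate Cauchy-Schwarz application, $(\sum_{j \in \cG_t} w_j)^2/|\cG_t| \le \sum_{j \in \cG_t} w_j^2$, summing to $\|w_\cG\|_2^2$. Equality at $w=\one$ (the $\ell_1$ case) falls out because each Cauchy-Schwarz application becomes an equality and the rearrangement bounds above are simultaneously tight.

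The main obstacle I expect is in the combining step. The bound $\sum_{i \in \supp(\beta)} z_i^2 \le \|w_\cG\|_2^2$ is tight only when $I = \cG$ (so the zero-position distance is $0$), whereas the bound $\sum_{j \in \cG \setminus I} w_j^2$ on the $S_0$ term is nontrivial only when $I \neq \cG$; guaranteeing that the two pieces add to at most $\|w_\cG\|_2^2 + 3\sum_t (\sum_{j \in \cG_t} w_j)^2/|\cG_t|$ for every $z$ requires a careful bookkeeping in which the rearrangement slack in the cross term on $\supp(\beta)$ exactly absorbs the zero-position contribution. Concretely, this reduces to a combinatorial inequality of the form $2\sum_t \bar w_t [\sum_{j \in \cG_t} w_j - \sum_{j \in R_t} w_j] \ge \sum_{j \in I \cap \cG_0} w_j^2$ valid for every partition $\{R_t\}$ of $I$ induced by an admissible permutation, which I anticipate resolving by a pairing/rearrangement argument that exploits the monotonicity of $w$ and of $\bar w_1 \ge \cdots \ge \bar w_d$.
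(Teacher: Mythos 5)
Your overall strategy is genuinely different from the paper's: you keep the max--min order of \autoref{eq:varphi-max-ext} and adapt the subgradient $g(z)$ to each extreme point $z$, whereas the paper first applies the max--min inequality (\autoref{lem:varphi-maxmin}) to swap the order, solves the inner maximization over $z\in\Bc^\star$ exactly for a \emph{fixed} $g$ (showing the worst $z$ is a sign- and order-aligned permutation of $w$, so that $\max_z\norm{z-g}_2^2=\norm{\bar g+w}_2^2$), and then plugs in a single subgradient whose $\cG_0$-block is $-w_{\cG_0}$ so that the zero-position contribution cancels identically. That swap is precisely what removes the bookkeeping you are worried about, and your instinct that the combining step is the crux is right.

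Unfortunately the combining step is not just delicate --- the combinatorial inequality you propose to close it is false, so the argument cannot be completed along the stated lines. Take $p=2$, $w=(1+\epsilon,1)$, $\beta=e_1$, so $d=1$, $\cG_1=\{1\}$, $\cG_0=\{2\}$, and consider the extreme point $z$ that places $w_2$ on the support position (i.e.\ $R_1=\{2\}$, $I=\{2\}$, $I\cap\cG_0=\{2\}$). Your inequality reads $2w_1(w_1-w_2)\ge w_2^2$, i.e.\ $2\epsilon(1+\epsilon)\ge 1$, which fails for small $\epsilon$. The root cause is your bound $\dist^2(z|_{S_0},B)\le\sum_{j\in\cG\setminus I}w_j^2$, obtained by zeroing out the entries of $z|_{S_0}$ whose underlying $w$-index lies in $\cG\setminus I$: in the example this charges $w_1^2\approx 1$ for an entry of magnitude $1+\epsilon$ that $B$ can actually approximate to within $\epsilon$ (since $B=[-1,1]$ contains $1$), and the rearrangement slack in the cross term on the support, which is only $2\bar w_1(w_1-w_2)=O(\epsilon)$, cannot absorb an $O(1)$ overcharge. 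A repair would require matching each unmatched weight $w_j$, $j\in\cG\setminus I$, to an available weight from $\cG_0$ and charging only the squared difference, which is a new matching argument you have not supplied; alternatively, adopting the paper's order swap avoids the issue entirely. The individual ingredients you do carry out (the extreme-point and face characterizations, the rearrangement bounds for the three terms on $\supp(\beta)$, and the final Cauchy--Schwarz step giving $4\norm{w_\cG}_2^2$) are all correct.
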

Proof of \autoref{lem:varphi-OWL} is given in \autoref{app:OWL}.

\begin{corollary}\label{lem:varphi-linf}
Setting $w$ to the first standard basis vector we get $\norm{\cdot}_\owl = \norm{\cdot}_\infty$. Hence, $\varphi(\beta; \nor_\infty) \leq \sqrt{1+3/t} \leq 2$ where $t = \abs{\{i\in[p]:~ \abs{\beta_i} = \norm{\beta}_\infty\}} \ge 1$. 
\end{corollary}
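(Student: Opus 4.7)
The plan is to apply \autoref{lem:varphi-OWL} directly with the weight vector $w = e_1 \in \mathbb{R}^p$ (the first standard basis vector). First I would verify that this choice of $w$ indeed recovers the $\ell_\infty$ norm: since $w_1 = 1$ and $w_i = 0$ for $i \geq 2$, the sum $\sum_{i=1}^p w_i \bar\beta_i = \bar\beta_1 = \|\beta\|_\infty$, so $\|\cdot\|_\owl = \|\cdot\|_\infty$ in this case. This justifies obtaining $\varphi(\beta; \|\cdot\|_\infty)$ as a corollary.

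Next I would compute each term in the bound
\[
\varphi^2(\beta; \nor_\owl) \;\leq\; \|w_\cG\|_2^2 + 3\sum_{t=1}^d \frac{1}{|\cG_t|}\Bigl(\sum_{j\in \cG_t} w_j\Bigr)^2
\]
with the partition $(\cG_1, \ldots, \cG_d)$ of $\supp(\bar\beta)$ induced by equal absolute values. The key observation is that in the sorted indexing, the first block $\cG_1 = \{1, 2, \ldots, t\}$ consists of the indices achieving the maximum $\bar\beta_1 = \|\beta\|_\infty$, where $t$ is precisely the multiplicity of the max as in the statement. Since $\beta \neq 0$, index $1$ lies in $\cG$, so $\|w_\cG\|_2^2 = w_1^2 = 1$. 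Moreover $\sum_{j\in\cG_t} w_j$ vanishes for all $t \geq 2$ (those blocks consist of indices $\geq t+1$, where $w$ is zero), while for $t=1$ we get $\sum_{j\in\cG_1} w_j = w_1 = 1$ and $|\cG_1| = t$.

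Putting these pieces together, the sum collapses to $1/t$, yielding
\[
\varphi^2(\beta; \|\cdot\|_\infty) \;\leq\; 1 + \frac{3}{t}.
\]
Taking square roots gives the first inequality; the second inequality $\sqrt{1 + 3/t} \leq 2$ is immediate from $t \geq 1$. There is essentially no obstacle here beyond careful bookkeeping of indices under the sorted convention used in \autoref{lem:varphi-OWL}, in particular confirming that the relevant $\cG_1$ is the block of maximal entries and that its cardinality coincides with the $t$ defined in the corollary.
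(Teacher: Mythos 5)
Your specialization is correct: with $w=e_1$ the ordered weighted $\ell_1$ norm is $\bar\beta_1=\norm{\beta}_\infty$, the block of $\supp(\bar\beta)$ containing index $1$ is exactly the set of maximal entries (of size $t$), every other block meets only indices $\ge 2$ where $w$ vanishes, and $\norm{w_\cG}_2^2=1$, so the bound of \autoref{lem:varphi-OWL} collapses to $1+3/t$ as claimed. The route differs from the paper's, though: the paper does not invoke \autoref{lem:varphi-OWL} in its proof of the corollary, but instead reruns the max--min argument directly for the $\ell_\infty$ norm --- bounding $\varphi^2$ by $\min_{g\in\partial\norm{\beta}_\infty}\max_{z\in\Bc^\star}\norm{g+z}_2^2$, noting the inner maximum is attained at vertices of the $\ell_1$ ball and equals $\norm{g}_2^2+1+2\norm{g}_\infty$, and then minimizing over the simplex-like subdifferential of $\ell_\infty$ to get the subgradient with $t$ entries equal to $1/t$, yielding the same $1+3/t$. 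Your argument is the cleaner ``corollary-style'' derivation that the statement's phrasing (``Setting $w$ to the first standard basis vector \dots Hence'') suggests, and it buys brevity by reusing the general lemma; the paper's self-contained computation buys independence from the OWL machinery and previews the exact calculation later carried out in \autoref{lem:varphi-linf-exact}, which sharpens the bound to $\varphi^2=1+1/\max\{t-1,1/3\}$. One small point worth stating explicitly if you use your route: \autoref{lem:varphi-OWL} is stated for $w_1\ge\cdots\ge w_p\ge 0$, so weights with zero entries (like $e_1$) are admissible there, even though some neighboring results on $\nor_\owl$ (e.g.\ \autoref{lem:extBst-owl}) assume $w_p>0$; also avoid reusing the letter $t$ both for the block index in the sum and for the multiplicity of the maximum, as you did when saying the sum ``vanishes for all $t\ge 2$.''
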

We next employ \autoref{lem:varphi-max-ext}, to precisely compute $\varphi$ for $\ell_\infty$ norm.
\begin{lemma}\label{lem:varphi-linf-exact}
For the $\ell_\infty$ norm and $\beta\neq 0$, 
\begin{align*}
\varphi^2(\beta;\nor_\infty) = 1 + \frac{1}{\max\{t-1,1/3\}}
\end{align*} 
where $t = \abs{\{i\in[p]:~ \abs{\beta_i} = \norm{\beta}_\infty\}} \ge 1$.
\end{lemma}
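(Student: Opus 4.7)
\textbf{Proof plan for \autoref{lem:varphi-linf-exact}.}

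The plan is to invoke \autoref{lem:varphi-max-ext}, which reduces the computation to a maximum over the extreme points of the dual ball and a minimum over the subdifferential. Since $\nor_\infty$ is dual to $\nor_1$, we have $\Bc^\star = \{z : \|z\|_1 \le 1\}$ with $\ext\Bc^\star = \{\pm e_j : j \in [p]\}$. By invariance under sign flips of the coordinates (absorb $\sign(\beta)$ into a signed permutation acting identically on $\ext\Bc^\star$), we may assume $\beta \ge 0$, in which case $\partial\|\beta\|_\infty = \conv\{e_i : i \in T\}$ with $T = \{i : \beta_i = \|\beta\|_\infty\}$ of cardinality $t$. Every $g \in \partial\|\beta\|_\infty$ thus has the form $g = \sum_{i \in T} \lambda_i e_i$ with $\lambda \in \simplex_t$, so $\|z - g\|_2^2$ admits a clean closed form.

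I would then do a three-case analysis on $z = \sigma e_j$ with $\sigma \in \{\pm 1\}$:
\begin{itemize}
\item[(i)] $z = e_j$, $j \in T$: choosing $g = e_j$ gives $\dist(z,\partial\|\beta\|_\infty)^2 = 0$.
\item[(ii)] $z = \pm e_j$, $j \notin T$: here $\|z-g\|_2^2 = 1 + \|\lambda\|_2^2$, and the minimizer of $\|\lambda\|_2^2$ over $\simplex_t$ is the uniform vector, so the value is $1 + 1/t$.
\item[(iii)] $z = -e_j$, $j \in T$: here $\|z-g\|_2^2 = (1+\lambda_j)^2 + \sum_{i \in T\setminus\{j\}} \lambda_i^2$, and I claim the optimum places $\lambda_j = 0$ and spreads mass uniformly over $T \setminus \{j\}$, yielding $1 + 1/(t-1)$ when $t \ge 2$, and the trivial value $4$ when $t = 1$ (since then $\lambda_j = 1$ is forced).
\end{itemize}

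The only subtle step is the minimization in case (iii), which I would justify by KKT: the unconstrained stationary conditions give $\lambda_j = 2/t - 1$, which is $\le 0$ for $t \ge 2$, so the active-set solution sets $\lambda_j = 0$; the remaining problem is the uniform minimizer of $\|\cdot\|_2^2$ on $\simplex_{t-1}$. (Alternatively, one can argue directly from convexity by contradiction, showing any optimum with $\lambda_j > 0$ can be improved by transferring that mass uniformly to the other coordinates.)

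Finally, I would combine the cases by noting $1 + \tfrac{1}{t-1} > 1 + \tfrac{1}{t}$ for $t \ge 2$, so case (iii) dominates case (ii), and $(i)$ is never maximal. Therefore
\[
\varphi^2(\beta;\nor_\infty) = \begin{cases} 1 + \tfrac{1}{t-1} & t \ge 2,\\ 4 & t = 1,\end{cases}
\]
which is unified as $1 + 1/\max\{t-1, 1/3\}$, since $1/(1/3) = 3$ recovers the value $4$ at $t=1$. I anticipate no real obstacle beyond bookkeeping and the KKT verification in case (iii); checking the boundary instance $p = 1$ (where only cases (i) and (iii) exist and the formula still gives $4$) is a one-line sanity check.
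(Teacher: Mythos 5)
Your proposal is correct and follows essentially the same route as the paper: it invokes \autoref{lem:varphi-max-ext}, reduces to the three cases $z=e_j$ with $j\in T$, $z=\pm e_j$ with $j\notin T$, and $z=-e_j$ with $j\in T$, and identifies the last case as the distance from $-e_j$ to the simplex on $T$. The only (immaterial) difference is that you settle that simplex minimization directly via KKT, while the paper delegates it to \autoref{lem:dist-ei-simplex}, which verifies the candidate projection through the Kolmogorov criterion.
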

Proofs for \autoref{lem:varphi-linf} and \autoref{lem:varphi-linf-exact} are given in \autoref{app:OWL}.

\begin{remark}
In the case of ordered weighted $\ell_1$ norms \citep{zeng2014ordered}, in \autoref{lem:varphi-OWL}, we provide a simple and interpretable bound on $\varphi(\beta;\nor_\owl)$ for any $\beta\in\mathbb{R}^p$. The bound relies on the clustering of values in $\beta$ as well as the sparsity pattern of $\beta$ in interaction with $w$, and is closely connected to the K-means objective for the entries of $\beta$. 

On the other hand, the computations in Theorem 5 and Example 3.2~of \citep{chen2015structured} rely on upper bounding $\nor_\owl$ with $\ell_1$ and $\ell_2$ norm and provide a crude bound on $\vpsi$ for the constrained error set in terms of $\norm{\beta}_0$, $w_1$, and the average of entries of~$w$, as $\frac{2pw_1^2}{\norm{w}_1}\sqrt{s}$ where $s = \card(\beta^\star)$. Note that the constrained error set is contained in $\ErrSet$, hence has a smaller value for~$\vpsi$. 

Since the bound in \cite[Example 3.2]{chen2015structured} is derived through upper bounding with $\ell_1$ norm (which coincides with $\nor_\owl$ for $w=\one_p$), it is easy to construct examples of $w$ for which the bound in \autoref{lem:varphi-OWL} is much better. For example, as an extreme case, consider the $\ell_\infty$ norm corresponding to $w=e_1$. In such case, for $\beta\neq0$, \autoref{lem:varphi-linf} gives $\varphi(\beta; \nor_\infty) \leq \sqrt{1+3/t}\leq 2$, for $t=\abs{\{i\in[p]:~\abs{\beta_i} = \norm{\beta}_\infty\}} \leq \norm{\beta}_0$, while \cite[Example 3.2]{chen2015structured} gives a bound of $(p+1)\sqrt{\norm{\beta}_0}$ for $\vpsi$ evaluated on the constrained error set.
\end{remark}

\section{Insights on Relative Diameter}\label{sec:varphi-insight}
Recall the discussion in the beginning of \autoref{sec:new-bounds} on the complexity of the error set $\ErrSet = \ErrSet(\pran)$, defined in \eqref{eq:ErrSet}, and how finding and working with a computationally-friendly superset of $\ErrSet$ allows for simplifying the computation of the associated restricted norm compatibility constant and the restricted eigenvalue constant for a given design. In the following, we review some of the existing approaches to finding such a superset and provide comparisons with the proposed superset in \autoref{lem:cones}.

\paragraph{When decomposable.}
For example, let us consider the class of norms that satisfy the decomposability condition of \cite[Definition~1]{negahban2012unified}. More specifically, suppose that $A\subseteq \bar A \subseteq \mathbb{R}^p$ and $\bar A^\perp = \{v:~ \langle u,v\rangle=0 ~ \forall u\in \bar A \}$ are such that for all $u\in A$ and all $v\in \bar A^\perp$ we have $\norm{u+v} = \norm{u}+\norm{v}$. This assumption is satisfied by the $\ell_1$ norm and the nuclear norm but is otherwise very restrictive. Relying on such assumption, namely the decomposability of $\nor$ with respect to $(A,\bar A)$, it is easy to show that (e.g., see end of Section 2 in \citep{negahban2012unified}) for $\beta\in A$, 
\[
\ErrSet(\pran) \subset \bigl\{ v:~ \norm{v} \leq 4 \norm{\proj(v; \bar A)}\}, 
\]
which then yields tight prediction and estimation error bounds. 
However, the above strategy cannot be applied to general norms; as easy examples as the $\ell_\infty$ norms or a weighted $\ell_1$ norm.

\paragraph{When the width is all we need.}
As discussed above, the approximation of $\ErrSet$ with a superset is being used to upper bound $\vpsi(\ErrSet)$ and to lower bound $\REc(\ErrSet)$. We are not aware of any proposals in the literature for the former and one of our main contributions lies in the introduction of the relative diameter and the associated superset for $\ErrSet$, provided in \autoref{lem:cones}, that makes both of these tasks possible. However, an alternative strategy has been used in the literature to lower bound $\alpha(\ErrSet)$ through connections to constrained estimators: 
\begin{align}
\hbeta_{\rm D} &\equiv \argmin_\beta \, \bigl\{ \norm{\beta} :~ \norm{ X^\sT(X\beta - y)}^\star \leq \lambda \bigr\} \,, \label{eq:estimator-dantzig} \\
\hbeta_{\rm E} &\equiv \argmin_\beta \, \bigl\{ \norm{\beta} :~ X\beta = y \bigr\} \,, \label{eq:estimator-constrained} \\
\hbeta_{\rm T} &\equiv \argmin_\beta \, \bigl\{ \norm{\beta} :~ \norm{X\beta - y}_2\leq \delta \bigr\} \,, \label{eq:estimator-tube} \\
\hbeta_{\rm N} &\equiv \argmin_\beta \, \bigl\{ \norm{X\beta - y}_2:~ \norm{\beta}\leq \tau \bigr\} \,, \label{eq:estimator-ball} 
\end{align}
where \eqref{eq:estimator-dantzig} is discussed in \citep{chatterjee2014generalized,banerjee2014estimation,chen2015structured,cai2016geometric}, \eqref{eq:estimator-constrained} and \eqref{eq:estimator-tube} are discussed in \citep{chandrasekaran2012convex}, and \eqref{eq:estimator-ball} is discussed in \citep{li2015geometric}, {and the analysis for all of them models the norm ball with its tangent cone at $\beta^\star$ and studies the interaction of the design matrix and the noise with such model (i.e., the tangent cone)}. More specifically, \citep{banerjee2014estimation} shows that the Gaussian width of the regularized error set $\ErrSet(\pran)$ and the constrained error set (namely $\{v:~ \norm{\beta+v} \leq \norm{\beta}\}$, whose closure is the tangent cone at $\beta$) are of the same order, which then allows for providing a sample complexity result to attain a desired RE constant (in the nature of \autoref{thm:RE-random}). See \citep{tropp2015convex} for general sample complexity results, in relation to RE, for independent subgaussian measurements established through tools for bounding a nonnegative empirical process as well as the notion of Gaussian width.

\paragraph{Relative diameter enables required computations.}
Alternatively, in this work, we observe that the error set can be bounded as in \autoref{lem:cones}: 
\[
\ErrSet(\pran) \subset \cC(\varphi) = \bigl\{v:~ \norm{v} \le 2 \norm{v}_2 \cdot \varphi(\pran) \bigr\}.\
\]
where $\varphi$, the relative diameter with respect to $\nor$ at $\beta$, is defined in \autoref{def:varphi}. 
This readily implies $\vpsi(\ErrSet)\leq 2\varphi$. Moreover, as illustrated through \autoref{thm:RE-random}, $\varphi$ and the associated superset also allow for a straightforward lower bounding of the RE constant $\alpha(\ErrSet)$.

\paragraph{Some Remarks.}
\begin{itemize}
\item Let us recall \autoref{lem:cones} implying $\vpsi(\ErrSet) \leq 2\varphi$ where 
\begin{align*}
\vpsi(\ErrSet) &= \sup_v\, \Bigl\{\frac{\norm{v}}{\norm{v}_2} :~ \frac{1}{2}\norm{v}+\norm{\beta} \geq \norm{\beta+v} \Bigr\}, \\
\varphi(\beta;\nor) &= \adjustlimits\sup_{z} \inf_g \Bigl\{ \norm{z-g}_2:~ \norm{z}^\star \leq 1,~ \norm{g}^\star\leq 1,~ \langle g,\beta\rangle =\norm{\beta} \Bigr\} .
\end{align*}
On a high level, the transformation from $\vpsi(\ErrSet)$ to $\varphi$ can be seen as going from a primal quantity to a dual quantity.

\item 
Note that, as clear from the definition of $\varphi$, it is not a local quantity, and as it can be seen from the example in \autoref{fig:maxWL1-quants}, can change with the changes in the norm even though the tangent cone at $\beta$ is being kept the same. This hints on suitability of $\varphi$ in analyzing the regularized problem (while tangent cone is relevant for constrained problems). However, the tangent cone still affects the computation of $\varphi$ through its relation to the subdifferential: the dual to tangent cone is the cone of subdifferential. 

\item 
It is worth mentioning that \citep{chen2015structured} is concerned with the Dantzig selector, not the regularized estimator, and only provides strategies to bound $\vpsi$ for the {\em constrained} error set. 

\item 
Several geometric quantities related to a norm have been studied in the high-dimensional statistics literature. Gaussian width \citep{Gordon88,chandrasekaran2012convex} has been a prominent quantity in linear models. 
See \citep{amelunxen2014living,foygel2014corrupted, jalali2014minimum,banerjee2014estimation,chen2015structured,vaiter2015model, su2016slope, figueiredo2016ordered} for other quantities.

\end{itemize}

\paragraph{An Illustrative Example.} 
Here, we consider a parametrized family of norms and examine the values of $\vpsi(\ErrSet^{(\infty)})$, $ \vpsi(\ErrSet)$, and $\varphi$, to showcase how $\varphi$ remains faithful to the true quantity $\vpsi(\ErrSet)$ as the norm changes, where $\ErrSet^{(\infty)} \equiv \{v:~\norm{\beta+v} \leq \norm{v}\}$; see \autoref{rem:Eq}. 

For any value $\gamma>0$, we consider the norm 
\begin{align*}
\norm{\beta} \equiv \max\bigl\{ 
	\abs{\beta_1}
	+\frac{3}{4}\abs{\beta_2} ~,~
	\frac{\gamma}{\gamma+4}\abs{\beta_1}
	+\frac{9}{10}\abs{\beta_2} ~,~
	\frac{\gamma}{\gamma+5}\abs{\beta_1}
	+\frac{9}{2}\abs{\beta_2} 
	\bigr\}
\end{align*}
in $\mathbb{R}^2$. Considering $\beta = [0,1]^\sT = e_2$, it is easy to see that $\varphi$ has three separate modes; i.e., as $\gamma$ changes, the optimal $z\in\Bc^\star$ jumps among three (distinct) possible choices. 
The subdifferential, and hence the tangent cone, do not change with $\gamma$. However, $\vpsi$ for the tangent cone (equal to $\vpsi(\ErrSet^{(\infty)})$) is not going to be a constant, as the norm changes with $\gamma$. 

\begin{figure}[h]
	\vskip .1in
	\centering
	\includegraphics[width=.6\textwidth]{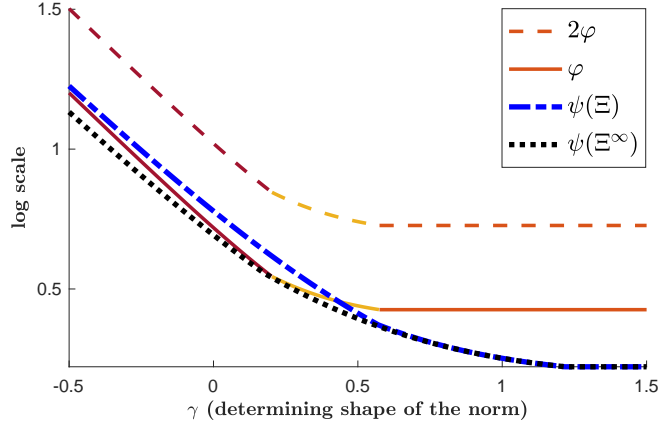}
	\caption{Values of $\varphi(e_2; \nor)$ (solid line with warm colors), $\vpsi = \vpsi(\ErrSet)$ (blue dash-dotted line), and $\vpsi(\ErrSet^{(\infty)})$ (black dotted line), evaluated numerically, for $\beta=[0,1]^\sT$ and different values of $\gamma$. The three colors on the solid line indicate the regimes under which the achieving $z\in\Bc^\star$ is the same. Observe that $\varphi$ closely follows the other two, in all three regimes.}
	\label{fig:maxWL1-quants}
\end{figure}

From \autoref{lem:cones}, we expect $\vpsi(\ErrSet^{(\infty)}) \leq \vpsi(\ErrSet) \leq 2\varphi$. Moreover, \autoref{rem:Eq} establishes $\vpsi(\ErrSet^{(q)}) \leq \frac{q}{q-1}\varphi$ for all $q>1$, which implies $\vpsi(\ErrSet^{(\infty)}) < \varphi$. All of these can be observed in \autoref{fig:maxWL1-quants} as well. As established in \autoref{rem:Eq}, larger values of the regularization constant $\lambda$ allow for basing the analysis on $\ErrSet^{(q)}$ for larger values of $q$, which in turn makes the error bounds in terms of $\varphi$ closer to those in terms of $\ErrSet^{(q)}$.

\paragraph{Comparison over maximum of weighted $\ell_1$ norms.} 
In this experiment, we randomly generate maximum of weighted $\ell_1$ norms and compute and plot $\varphi$, $\vpsi(\ErrSet)$, and $\vpsi(\ErrSet^{(\infty)})$ for them. \autoref{fig:varphi-vpsi-randnorms} provides the results. 
As it can be seen from \autoref{fig:varphi-vpsi-randnorms}, $\varphi$ closely approximates $\vpsi(\ErrSet)$ for most cases. In generating a norm, we first pick a random integer to determine the number of weighted $\ell_1$ norms that are involved. We always include $w=[1,1]^\sT$ (corresponding to the $\ell_1$ norm), and we choose the rest of the weight vectors as random points in the positive orthant to the right and below of $w=[1,1]^\sT$. 

As discussed in the previous experiment, we expect $\vpsi(\ErrSet^{(\infty)}) \leq \vpsi(\ErrSet) \leq 2\varphi$ and $\vpsi(\ErrSet^{(\infty)}) < \varphi$, both of which can be observed in \autoref{fig:varphi-vpsi-randnorms} as well. However, $\varphi$ has a lower bound as $2\,\dist(0,\partial \norm{\beta}) \leq \varphi(\pran)$ which holds generally whenever $\proj(0,\partial \norm{\beta}) \in\partial \norm{\beta}$. 

\begin{figure}[h]
\vskip .1in
\centering
\includegraphics[width=0.6\textwidth]{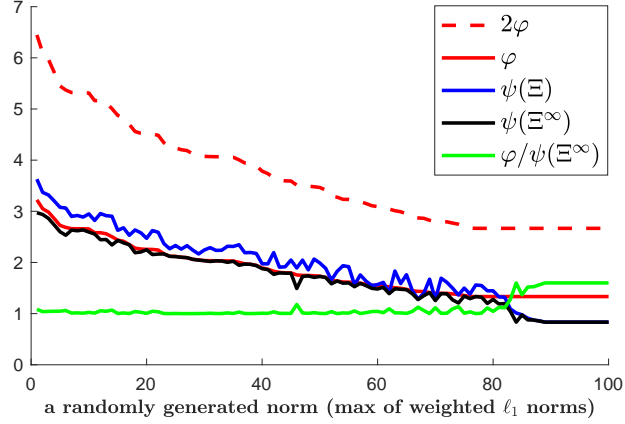}
\caption{For $100$ randomly generate maximum of weighted $\ell_1$ norms, and for $\beta=[0,1]^\sT$, we plot $\varphi$, $2\varphi$, $\vpsi(\ErrSet)$, $\vpsi(\ErrSet^{(\infty)})$, as well as the ratio between the first and the last, which as predicted by \autoref{rem:Eq}, is always above 1. }
\label{fig:varphi-vpsi-randnorms}
\end{figure}

\section{Doubly-Sparse Regularization; Optimization and Statistical Bounds}\label{sec:kd-final}

\subsection{Prediction Error for $\nor_{k\square d}$}\label{sec:PredErr}
Here, we consider the linear measurement model $y = X\beta^\star+\epsilon$, with $X\in \reals^{n\times p}$ the design matrix and $\epsilon\in \reals^n$ a noise vector. We apply the prediction bounds established in~\autoref{sec:concise-var} to the case of doubly-sparse regularized estimator given by~\eqref{eq:estimator-kd}. As a result, we bound $\norm{X(\beta^\star - \hbeta)}_2$ in terms of the $k$ and $d$ used in defining the regularizer $\norm{\cdot}_{k\square d}$, the properties of $\beta^\star$ (number of nonzeros and distinct values), and certain properties of $X$. As we will see, column aggregation in $X$ plays a natural role in the final bound.

\begin{theorem}\label{thm:pred-err-kd}
	Suppose that noise vector $\epsilon$ is zero mean Gaussian vector with covariance matrix $\Sigma \equiv \mathbb{E}[\epsilon \epsilon^\sT]$. Define
	\begin{equation}\label{eq:phi0-1}
	\begin{aligned}
		\phi_0\;&\equiv \sup_{J \subseteq [p]: |J|\le k-d+1} \, \frac{\norm{\Sigma^{1/2} X_{J} \one}_2^2}{n |J|} , \\ 
		\phi_1 \;&\equiv \sup_{J\subseteq[p]: |J|\le k} \frac{\norm{\Sigma^{1/2} X_J}_{\rm op}^2}{n}\,,
 	\end{aligned}\end{equation}
	and for an arbitrary fixed value of $0<p_0<1/2$, let 
\begin{align}\label{eq:phi}	
	\phi\; \equiv\; \frac{1}{\sqrt{n}} \left(d \phi_0 + c \min(d \phi_0, \phi_1) \Big[k \log({2epd}/{k}) + \log (1/p_0)\Big] \right)^{1/2}\,,
\end{align}
	where $c>2$ is the numerical constant in the Hanson-Wright inequality given in \autoref{lem0}. Let $\hbeta$ be obtained from \eqref{eq:estimator-kd} with $\lambda \geq \phi$. Then, with probability at least $1 - 2 p_0$, it satisfies
	\begin{align}\label{prediction-error-kd}
		\frac{1}{n}\norm{X(\beta^\star - \hbeta)}_2^2 \leq 3\lambda \|\beta^\star\|_{k\square d} \,.
	\end{align} 
\end{theorem}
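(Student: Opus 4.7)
The strategy is to combine \autoref{lem:oracle} with \autoref{lem:noise-dual-norm-Mset} specialized to $\nor_{k\square d}$. By \autoref{lem:oracle}, it suffices to show that the event $\{\lambda\ge\tfrac{2}{n}\|X^\sT\epsilon\|_{k\square d}^\star\}$ holds with probability at least $1-2p_0$ whenever $\lambda\ge\phi$. Since $\epsilon$ is zero-mean Gaussian with covariance $\Sigma$, the whitened vector $\Sigma^{-1/2}\epsilon$ is standard normal, hence satisfies the convex concentration property (\autoref{def:cvx-conc-prop}) with some absolute constant $\eta=O(1)$. By \eqref{dual-norm-var}, the squared dual norm admits the concise variational representation with $\Mset=\BD(k,d)$, so \autoref{lem:noise-dual-norm-Mset} applies.

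\textbf{Bounding the aggregation constants.} Every $A\in\BD(k,d)$ decomposes as $A=\sum_{t=1}^{d}\tfrac{1}{|J_t|}v_tv_t^\sT$ with $v_t\in\{-1,0,1\}^p$, $\supp(v_t)=J_t$, the $J_t$ disjoint, $\sum_t|J_t|=k$, and $|J_t|\le k-d+1$. Then $\tilde XA\tilde X^\sT=\sum_t\tfrac{1}{|J_t|}(\tilde Xv_t)(\tilde Xv_t)^\sT$ is PSD of rank at most $d$. Writing $\tilde Xv_t=\tilde X_{J_t^+}\one-\tilde X_{J_t^-}\one$ and applying the triangle inequality yields $\|\tilde Xv_t\|_2^2\le 2(\|\tilde X_{J_t^+}\one\|_2^2+\|\tilde X_{J_t^-}\one\|_2^2)\le 2|J_t|\,n\phi_0$, so $\Lambda_0\le 2d\phi_0$. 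Since $\|A\|_{\rm op}=1$ and only columns in $J=\cup_t J_t$ (with $|J|\le k$) appear, $\Lambda_1\le\phi_1$. Using $\|M\|_{\rm op}\le\Tr(M)$ for PSD $M$, we also get $\Lambda_1\le\Lambda_0\le 2d\phi_0$, and hence $\Lambda_1\le\min(d\phi_0,\phi_1)$ (up to an absolute constant). Finally, $\|M\|_F^2\le\|M\|_{\rm op}\Tr(M)$ for PSD $M$ gives $\Lambda_2\le\sqrt{\Lambda_0\Lambda_1}$.

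\textbf{Cardinality bound and simplification.} By \autoref{lem:BD-size}, $|\BD(k,d)|\le(2epd/k)^k$, so $\kappa=\tfrac{c}{2}\log(|\Mset|/p_0)\le\tfrac{c}{2}\bigl[k\log(2epd/k)+\log(1/p_0)\bigr]$. Plugging these bounds into the expression $\Lambda=\tfrac{1}{\sqrt n}\bigl(\Lambda_0+2\eta^2\max(\Lambda_2\sqrt\kappa,\Lambda_1\kappa)\bigr)^{1/2}$ from \autoref{lem:noise-dual-norm-Mset}, and applying AM--GM via $\Lambda_2\sqrt\kappa\le\sqrt{\Lambda_0\cdot\Lambda_1\kappa}\le\tfrac{1}{2}(\Lambda_0+\Lambda_1\kappa)$, one obtains $\max(\Lambda_2\sqrt\kappa,\Lambda_1\kappa)\le\tfrac12\Lambda_0+\Lambda_1\kappa$. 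Collecting terms and absorbing all absolute constants (including $\eta$ and the factor of $2$ from the sign-splitting) into the numerical constant $c$ in the Hanson--Wright inequality yields $\tfrac{2}{n}\|X^\sT\epsilon\|_{k\square d}^\star\le\phi$ with probability at least $1-2p_0$. The conclusion then follows from \autoref{lem:oracle}.

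\textbf{Main obstacle.} The delicate part is the bound $\Lambda_0\lesssim d\phi_0$. The atoms underlying $\BD(k,d)$ involve arbitrary sign patterns $v_t\in\{-1,0,1\}^p$, whereas $\phi_0$ in \eqref{eq:phi0-1} is defined using the all-ones vector. Handling this cleanly requires the sign-splitting $v_t=\one_{J_t^+}-\one_{J_t^-}$, which costs a factor of two (absorbed into constants) and relies on the fact that $\phi_0$ ranges over all subsets $J\subseteq[p]$ of size at most $k-d+1$, which includes both $J_t^+$ and $J_t^-$. The rest is bookkeeping: ensuring that the operator-norm and Frobenius bounds on $\tilde XA\tilde X^\sT$ match the $\min(d\phi_0,\phi_1)$ structure of the stated $\phi$, and that the logarithmic factor matches $k\log(2epd/k)+\log(1/p_0)$ through \autoref{lem:BD-size}.
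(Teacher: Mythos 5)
Your proposal follows essentially the same route as the paper: invoke \autoref{lem:oracle}, then specialize \autoref{lem:noise-dual-norm-Mset} with $\Mset=\BD(k,d)$ via \eqref{dual-norm-var}, bound $\agg_0,\agg_1,\agg_2$ using the block structure of $A$, and control $\kappa$ with \autoref{lem:BD-size}; the Gaussian assumption gives the convex concentration property (with constant exactly $1$, which the paper uses, rather than an unspecified $O(1)$).

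Two places where your constants drift from the stated $\phi$ in \eqref{eq:phi} are worth flagging. First, for $\agg_2$ the paper uses the rank bound $\norm{\tilde X A\tilde X^\sT}_F\le\sqrt{d}\,\norm{\tilde X A\tilde X^\sT}_{\rm op}$ together with $d\le\kappa$, so that $\max\{\agg_2\sqrt{\kappa},\agg_1\kappa\}=\agg_1\kappa$ and the leading term stays exactly $\agg_0$; your route via $\norm{M}_F^2\le\norm{M}_{\rm op}\Tr(M)$ plus AM--GM injects an extra $\eta^2\agg_0$ into the leading term, i.e.\ a constant multiple of $d\phi_0$ that cannot be absorbed into $c$, since that term carries no $c$. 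Second, your ``main obstacle'' about signs is a legitimate observation: elements of $\BD(k,d)$ do carry $\pm1$ patterns, while $\phi_0$ is defined with the all-ones vector, and the paper's own proof silently treats each block as having entries $+1/q$ (so it never pays this cost). Your sign-splitting fix is sound, but the resulting factor $2$ multiplies $d\phi_0$ and again cannot be hidden in $c$. Net effect: as written you prove \eqref{prediction-error-kd} with $\phi$ enlarged by a universal constant factor rather than with the exact $\phi$ of \eqref{eq:phi}; to literally match the theorem you must either adopt the paper's (implicit) unsigned reading of $\BD(k,d)$ in the definition of $\phi_0$, or restate $\phi_0$ with a supremum over sign patterns. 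Everything else — the decomposition $A=\sum_t|J_t|^{-1}v_tv_t^\sT$ with $|J_t|\le k-d+1$, the bound $\agg_1\le\min(d\phi_0,\phi_1)$, the cardinality/union-bound step — matches the paper's argument.
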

\begin{proof}[Proof of \autoref{thm:pred-err-kd}]
We first apply \autoref{lem:noise-dual-norm-Mset} to the case of doubly-sparse regularization and show that in this case $\agg \le \phi$, where $\phi$ is given by~\eqref{eq:phi}.
The result then follows readily from \autoref{lem:oracle}.

In specializing \autoref{lem:noise-dual-norm-Mset} to the case of doubly-sparse regularization, it is easy to see that $\mathcal{M} = \BD(k,d)$ due to the characterization~\eqref{dual-norm-var}. In addition, by the concentration inequality of Lipschitz function of Gaussian vectors, we have that $\Sigma^{-1/2}\epsilon$ satisfies the convex concentration with constant one. 

Let $\tilde{X}\equiv \Sigma^{1/2} X$ and write
\begin{align*}
\agg_0 \equiv \sup_{A\in \BD(k,d)} \frac{1}{n} \Tr(\tilde{X} A \tilde{X}^\sT ) \le d \times \sup_{J\subseteq[p], |J|\le k-d+1} \frac{\norm{\tilde{X}_J\one}_2^2}{n|J|} = d \phi_0\,,
\end{align*}
where we uses the structure of $A\in \BD(k,d)$, namely it has only a nonzero principle sub-matrix of size $k$. Further, this sub-matrix is block diagonal with $d$ blocks and for a block of size $q$, all of its entries are $1/q$. 

We also have
\begin{align*}
\agg_1 \equiv \sup_{A\in\BD(k,d)} \frac{1}{n} \norm{\tilde{X}A\tilde{X}^\sT}_{\rm op} \le \frac{1}{n} \sup_{A\in \BD(k,d)} \norm{A}_{\rm op}\times \sup_{J\subseteq{p}, |J|\le k} \norm{\tilde{X}_J}_{\rm op}^2 \le \phi_1\,,
\end{align*}
since $\norm{A}_{\rm op}\le 1$, for $A\in \BD(k,d)$. As another bound on $\agg_1$, note that any $A\in \BD(k,d)$ can be written as $A = u_1u_1^\sT+\dotsc+ u_du_d^\sT$, where each $u_i$ has entries $1/\sqrt{|J_i|}$ on a set $J_i\subseteq[p]$, with $|J_i|\le k-d+1$ and zero everywhere else. Hence,
\begin{align*}
 \frac{1}{n} \norm{\tilde{X} A\tilde{X}^\sT}_{\rm op} = \frac{1}{n} \norm{\tilde{X}u_iu_i^\sT \tilde{X}^\sT}_{\rm op} \le \frac{1}{n}\sum_{i=1}^d \norm{\tilde{X}u_i}_2^2 =\frac{1}{n|J_i|}\sum_{i=1}^d \norm{\tilde{X}_{J_i} \one}_2^2 \le d \phi_0\,.
\end{align*}
Combining the above two bounds we obtain $\agg_1 \le \min(d\phi_0,\phi_1)$.

By using \autoref{lem:BD-size}, we have
\begin{align*}
\kappa \equiv \frac{c}{2} \log \frac{|\BD(k,d)|}{p_0} < \frac{c}{2} \Big(k\log(2epd/k) + \log(1/p_0) \Big)\,.
\end{align*}
Finally, we note that 
\begin{align*}
\agg_2 \equiv \sup_{A\in \BD(k,d)} \frac{1}{n}\norm{\tilde{X}A\tilde{X}^\sT}_F \le \sqrt{d} \sup_{A\in \BD(k,d)} \frac{1}{n}\norm{\tilde{X}A\tilde{X}^\sT}_{\rm op} \le \sqrt{d} \agg_1 \,,
\end{align*}
where in the first inequality we used the fact that the matrices in $\BD(k,d)$ are at most of rank $d$. Consequently, $\agg_2< \agg_1\sqrt{\kappa}$. By plugging the above bounds on $\agg_0$, $\agg_1$, $\agg_2$, and $\kappa$ in \autoref{eq:def-all-phi}, we obtain that $\agg\le \phi$, which completes the proof. 
\end{proof}

\subsection{Examples} \label{sec:examples-kd}

\paragraph{Lasso.} Note that for $k=d=1$, the structure norm $\|\cdot\|_{1\square 1}$ becomes exactly the $\ell_1$ norm and the estimator $\hbeta$ in \eqref{eq:estimator-kd} reduces to the Lasso estimator with regularization parameter $\lambda$. We show that \autoref{thm:pred-err-kd} recovers the prediction bound of Lasso~\citep[Corollary 6.1]{buhlmann2011statistics}. 
Suppose that the noise $\epsilon$ has i.i.d.~zero mean Gaussian entries with variance at most $\sigma^2$, and the columns of $X$ are normalized so that each column has $\ell_2$-norm $\sqrt{n}$. Then, $\phi_0= \phi_1 = \sigma^2$. Setting $p_0 = 1/(2ep)$, we get $\phi = (\sigma/\sqrt{n}) (1+ 2c \log(2ep))^{1/2}$. Therefore, with $\lambda = \phi$, the bound~\eqref{prediction-error-kd} simplifies to
\begin{align}\label{eq:pred-lasso}
	\frac{1}{n}\norm{X(\beta^\star - \hbeta)}_2^2 
	\leq 3 \sigma \sqrt{\frac{1+ 2c \log(2ep)}{n}} \|\beta^\star\|_1 
	\lesssim \sigma \sqrt{\frac{\log p}{n} } \|\beta^\star\|_1\,.
\end{align}
We denote the right-hand side of~\eqref{eq:pred-lasso} by $\errL$.
Note that the design matrix $X$ appears in the prediction error bound through the quantities $\phi_0$ and $\phi_1$, which for rare-features are expected to be small.

\paragraph{Gain over Lasso with Doubly-sparse Norms.}
We next want to discuss the gain that the estimator~\eqref{eq:estimator-kd} achieves over Lasso when the true underlying parameter $\beta^\star$ is sparse and takes only a few distinct values. 
\begin{lemma}\label{lem:Gain}
Consider a sequence of design matrices $X\in \reals^{n\times p}$, with dimension $n \to \infty$, and $p = p(n) \to \infty$, satisfying the following assumptions for constants $C_{\max}, C > 0$ independent of $n$. For each $n$, $\Psi\in \reals^{p\times p}$ is such that $$\sigma_{\max} (\Psi)\le C_{\max} < \infty,\quad \sup_{J\subseteq[p], |J|\le k} \frac{1}{|J|} (\one^\sT \Psi_{J,J} \one) \le C_* \le C_{\max}\,.$$ In addition, $X\Psi^{-1/2}$ that has i.i.d.~subgaussian rows, with zero mean and subgaussian norm $\kappa=\|\Psi^{-1/2} x_1\|_{\psi_2}$, and the noise vector $\epsilon\in\reals^n$ has i.i.d.~Gaussian entries with variance at most $\sigma^2$. Then, there exist constants $ c_0, c, C >0$, depending on the subgaussian norm $\kappa$, such that the following holds. With probability at least $1- 2p^{-ck} - 2 p^{-c(k-d+1)}$, the following holds for $\phi_0$ and $\phi_1$ given by~\autoref{eq:phi0-1}:
\begin{align*}
\phi_0 \le C_{*}\sigma^2 \left(1+ C \sqrt{\frac{(k-d+1)\log p}{n}}\right) \,,\quad \quad \phi_1\le C_{\max}\sigma^2 \left(1+ C\sqrt{\frac{k \log p}{n}}\right) \,.
\end{align*}
Consequently, by \autoref{eq:phi}, if $n\ge c_0 k \log p$ we have 
\begin{align*}
\phi\le \tilde{C} \sigma\left[\min(dC_*,C_{\max}) \frac{k}{n} \log\Big(\frac{2epd}{k}\Big) \right]^{1/2}\,,
\end{align*}
for a constant $\tilde{C}>0$.
\end{lemma}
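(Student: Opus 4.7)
The plan is to bound $\phi_0$ and $\phi_1$ separately via subgaussian quadratic-form concentration combined with union bounds, and then substitute into the definition of $\phi$ in \autoref{eq:phi}. Throughout, let $Z \equiv X\Psi^{-1/2}$ so that $Z$ has i.i.d.\ rows $z_i\in\reals^p$ with $\|z_i\|_{\psi_2} \le \kappa$ (I will also assume, as is standard under the stated hypotheses, that $\mathbb{E}[z_iz_i^\sT]=I$ so that $\mathbb{E}[X^\sT X]/n = \Psi$). Since the noise is i.i.d.\ Gaussian with variance at most $\sigma^2$, we have $\Sigma \preceq \sigma^2 I$, and therefore
\begin{align*}
\phi_0 \le \sigma^2 \sup_{|J|\le k-d+1} \frac{\|X_J\one\|_2^2}{n|J|} , \qquad
\phi_1 \le \sigma^2 \sup_{|J|\le k} \frac{\|X_J\|_{\rm op}^2}{n} .
\end{align*}

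For $\phi_0$, fix $J\subseteq[p]$ with $|J|\le k-d+1$ and set $u = \Psi^{1/2}\mathbb{1}_J$. Then $\|X_J\one\|_2^2 = \|Zu\|_2^2 = \sum_{i=1}^n \langle z_i,u\rangle^2$, where each $\langle z_i,u\rangle$ is a centered subgaussian scalar with $\psi_2$-norm at most $\kappa\|u\|_2$ and $\mathbb{E}\langle z_i,u\rangle^2 = \|u\|_2^2 = \one^\sT \Psi_{J,J}\one$. A Bernstein-type inequality for subgaussian quadratic forms (equivalently, the Hanson-Wright inequality of \autoref{lem0}) yields, for any $\delta\in(0,1)$,
\begin{align*}
\P\bigl(\|Zu\|_2^2 \ge n\|u\|_2^2(1+\delta)\bigr) \le 2\exp\bigl(-c_\kappa n \delta^2\bigr).
\end{align*}
Choosing $\delta = C\sqrt{(k-d+1)\log p / n}$ with $C$ sufficiently large, and union-bounding over the at most $p^{k-d+1}$ subsets with $|J|\le k-d+1$, gives with probability at least $1-2p^{-c(k-d+1)}$ the uniform bound $\|X_J\one\|_2^2/(n|J|) \le (\one^\sT\Psi_{J,J}\one/|J|)(1+\delta) \le C_*(1+C\sqrt{(k-d+1)\log p/n})$, as required.

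For $\phi_1$, fix $J$ with $|J|\le k$ and expand $\|X_J\|_{\rm op}^2 = \sup_{\|w\|_2=1,\,\supp(w)\subseteq J} \|Xw\|_2^2$. Setting $v=\Psi^{1/2}w$, we again have $\|Xw\|_2^2 = \|Zv\|_2^2$ with $\|v\|_2^2 \le \sigma_{\max}(\Psi)\le C_{\max}$. A standard $\tfrac14$-net argument over the unit sphere in $\reals^{|J|}$ (which has covering number at most $9^{|J|}\le 9^k$), combined with the Hanson-Wright bound applied at each net point, gives for a fixed $J$ that $\|X_J\|_{\rm op}^2/n \le C_{\max}(1+C\sqrt{k/n})$ with probability at least $1-2\exp(-c_\kappa k)$ (for $n\gtrsim k$). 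Incorporating the union bound over $\binom{p}{\le k}\le p^k$ choices of $J$ inflates the deviation by a factor of $\sqrt{\log p}$, yielding the claimed $\phi_1 \le C_{\max}\sigma^2(1+C\sqrt{k\log p /n})$ on an event of probability at least $1-2p^{-ck}$.

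With both bounds in hand on a common event of probability at least $1-2p^{-ck}-2p^{-c(k-d+1)}$, the assumption $n\ge c_0 k\log p$ renders the correction factors in both $\phi_0$ and $\phi_1$ bounded by a constant. Substituting into \autoref{eq:phi}, the leading term is the second one under the square root, so
\begin{align*}
\phi \;\le\; \frac{1}{\sqrt{n}}\sqrt{d\phi_0 + c\min(d\phi_0,\phi_1)\bigl[k\log(2epd/k)+\log(1/p_0)\bigr]} \;\le\; \tilde C\sigma\sqrt{\frac{\min(dC_*,C_{\max})\,k}{n}\log\Bigl(\frac{2epd}{k}\Bigr)},
\end{align*}
absorbing the $\log(1/p_0)$ and the lower-order $d\phi_0$ term into the constant. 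The only subtlety is ensuring the union bound cost in the $\phi_1$ step (which already costs $9^k$ from the net and $p^k$ from the choice of $J$) is balanced against the subgaussian tail — this is the step where alignment of constants and the scaling $n\gtrsim k\log p$ matter most, and it constitutes the main technical obstacle; the $\phi_0$ calculation is more routine because no net is needed.
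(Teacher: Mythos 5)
Your proposal is correct and follows essentially the same route as the paper: bound $\phi_0$ via sub-exponential (Bernstein/Hanson--Wright-type) concentration of $\norm{X_J\one}_2^2$ with a union bound over the at most $p^{k-d+1}$ subsets, bound $\phi_1$ via sample-covariance/operator-norm concentration for subgaussian rows with a union bound over the at most $p^{k}$ subsets, and then substitute into \autoref{eq:phi} using $n\ge c_0 k\log p$. The only cosmetic difference is that the paper cites the standard singular-value concentration result for matrices with i.i.d.\ subgaussian rows (Vershynin) for the $\phi_1$ step instead of re-running the $\epsilon$-net argument by hand, which sidesteps the constant-factor bookkeeping you flag as the main technical obstacle.
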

We refer to \autoref{app:prediction} for the proof of \autoref{lem:Gain}. Plugging $\lambda \asymp \phi$ in~\eqref{prediction-error-kd} gives that with probability at least $1 - (pd/k)^{-k}$,
\begin{align}\label{eq:pred-gainDS}
	\frac{1}{n}\norm{X(\beta^\star - \hbeta)}_2^2 
	\lesssim \sigma \sqrt{\frac{k \log(pd/k))}{n}}\, \|\beta^\star\|_{k\square d}\,.
\end{align} 
We denote the right-hand side of~\eqref{eq:pred-gainDS} by $\errDS$.
Comparing the bounds~\eqref{eq:pred-gainDS} with the Lasso prediction bound~\eqref{eq:pred-lasso}, we get
\begin{align}\label{gain-ratioDS}
\frac{\errDS}{\errL} \le C \sqrt{k- \frac{k\log(k/d)}{\log p}}\times \frac{\|\beta^\star\|_{k\square d}}{\|\beta^\star\|_1}\,.
\end{align}
Note that $\|\beta^\star\|_{k\square d}/\|\beta^\star\|_1 \le 1$. To see this, note that the 1-sparse vectors are in $\Sc_{k,d}$ for all $k, d\ge 1$ and hence the $\ell_1$ unit ball is inside $\cB_{\Sc_{k,d}}$, which by definition implies the claim. To show the gain over Lasso (which corresponds to $k = d =1$), we consider the following two cases:

\begin{itemize}
\item Assume that $\max_{i\in \supp(\beta^\star)}|\beta^\star_i| / \min_{i\in \supp(\beta^\star)} |\beta^\star_i|\le c_0$. For $d=1$ and a value of $1\le k\le p$, by using \autoref{lem:norm-k-1}, we have
\begin{align*}
\frac{\norm{\beta^\star}_{k\square 1}}{\norm{\beta^\star}_1} \le \max\Big\{\frac{1}{\sqrt{k}}, c_0\frac{\sqrt{k}}{k^\star}\Big\} = \frac{1}{\sqrt{k}} \max\{1, c_0{k}/{k^\star}\}\,.
\end{align*}
Using this bound in \autoref{gain-ratioDS}, we obtain
\begin{align*}
\frac{\errDS}{\errL} \le C \sqrt{1- \frac{\log k}{\log p}}\times \max\{1, c_0{k}/{k^\star}\}\,.
\end{align*}
Since $k$ can grows as large as $p$, we see that the ratio above can be made arbitrarily small, showcasing the gain over Lasso.

\item Assume the doubly-sparse estimator with $k\ge k^\star$ and $d\ge d^\star$. Then, $\beta^\star \in \Sc_{k,d}$ and hence $\|\beta^\star\|_{k,d} = \|\beta^\star\|_2$ by definition of structured norms; see~\eqref{eq:kd-def}. Therefore, $\|\beta^\star\|_{k\square d}/\|\beta^\star\|_1$ can be made as small as $1/\sqrt{k^*}$ (when $d^\star =1$). Therefore, the bound in \autoref{gain-ratioDS} becomes
\[
\frac{\errDS}{\errL} \le C \sqrt{k- \frac{k\log(k/d)}{\log p}}\times \sqrt{\frac{1}{k^\star}} = C \sqrt{1- \frac{\log(k/d)}{\log p}}\times \sqrt{\frac{k}{k^\star}}\,.
\]

Again, as $k/k^\star\ge 1$ can get arbitrarily close to one, $k$ can grow up to $p$, and $d$ can be as small as one, this ratio can be made arbitrarily small which demonstrates the gain over Lasso in prediction error.
\end{itemize}

\paragraph{The $k$-support norm.}
The $k$-support norm coincides with $\nor_{k\square k}$ and the results of \autoref{sec:PredErr} can be specialized to yield prediction error bounds for the regularized regression with the $k$-support norm. However, in setting $d$ equal to $k$ in \autoref{lem:BD-size}, we can get a tighter bound on the size of the corresponding $\Mset$. More specifically, \autoref{ex:ksupp-Mset} improves the bound $\abs{\Mset}\leq (2ep)^k$ from \autoref{lem:BD-size}, to a bound $\abs{\Mset} \leq (ep/k)^k$. Using this bound and calculating $\phi_0$ and $\phi_1$ in Theorem~\ref{thm:pred-err-kd} for case of $k=d$, we obtain the following corollary which is analogous to \autoref{lem:Gain} for the $k$-support norm regularization:
\begin{corollary}
Consider a sequence of design matrices $X\in \reals^{n\times p}$, with dimension $n \to \infty$, and $p = p(n) \to \infty$, satisfying the following assumptions for constants $C_{\max}, C > 0$ independent of $n$. For each $n$, $\Psi\in \reals^{p\times p}$ is such that $$\sigma_{\max} (\Psi)\le C_{\max} < \infty,\quad \sup_{J\subseteq[p], |J|\le k} \frac{1}{|J|} (\one^\sT \Psi_{J,J} \one) \le C_* \le C_{\max}\,.$$ In addition, $X\Psi^{-1/2}$ that has i.i.d.~subgaussian rows, with zero mean and subgaussian norm $\kappa=\|\Psi^{-1/2} x_1\|_{\psi_2}$, and the noise vector $\epsilon\in\reals^n$ has i.i.d.~Gaussian entries with variance at most $\sigma^2$. 
 
Then, specializing \autoref{lem:Gain} for $k = d$, with probability at least $1- 2p^{-ck} - 2 p^{-c}$, 
\begin{align*}
\phi_0 \le C_{*}\sigma^2 \left(1+ C \sqrt{\frac{\log p}{n}}\right) \,,\quad \quad \phi_1\le C_{\max}\sigma^2 \left(1+ C\sqrt{\frac{k \log p}{n}}\right) \,,
\end{align*}
In addition, by \autoref{eq:phi}, if $n\ge c_0 k \log p$, for some constant $c_0> 0$, we obtain the following bound on $\phi$ for case of $k$-support norm 
\begin{align*}
\phi\le \tilde{C} \sigma\left[\min(kC_*, C_{\max})\frac{k}{n} \log\Big(\frac{ep}{k}\Big) \right]^{1/2}\,,
\end{align*}
for a constant $\tilde{C}>0$.
\end{corollary}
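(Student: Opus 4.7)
The plan is to recognize that this corollary is essentially a direct specialization of \autoref{lem:Gain} to the case $d = k$, combined with one sharpening that comes from the tighter count of the variational set $\Mset$ supplied by \autoref{ex:ksupp-Mset}.

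First I would invoke \autoref{lem:Gain} verbatim with $d$ set equal to $k$. Since $(k-d+1) = 1$ in this case, the high-probability bound on $\phi_0$ collapses to $\phi_0 \le C_\ast \sigma^2(1 + C\sqrt{\log p / n})$ and the failure event contributes probability $2p^{-c(k-d+1)} = 2p^{-c}$, while the bound on $\phi_1$ and its failure probability $2p^{-ck}$ are inherited unchanged. These together yield the two displayed bounds on $\phi_0, \phi_1$ and the claimed probability $1 - 2p^{-ck} - 2p^{-c}$ with no further work.

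Second, I would revisit the derivation of $\phi$ in the proof of \autoref{thm:pred-err-kd}, but swap out the generic bound $|\BD(k,d)| < (2epd/k)^k$ of \autoref{lem:BD-size} for the sharper count established in \autoref{ex:ksupp-Mset}. Concretely, when $d = k$, every element of $\BD(k,k)$ is a diagonal $0/1$ matrix with exactly $k$ ones, so $|\BD(k,k)| = \binom{p}{k} \le (ep/k)^k$. Consequently the Hanson--Wright union-bound constant in \autoref{lem:noise-dual-norm-Mset} becomes $\kappa \le (c/2)\bigl(k\log(ep/k) + \log(1/p_0)\bigr)$, and substituting into \autoref{eq:phi} gives
\[
\phi^2 \;\le\; \frac{1}{n}\Bigl(k\phi_0 \;+\; c\,\min(k\phi_0,\phi_1)\,\bigl[k\log(ep/k) + \log(1/p_0)\bigr]\Bigr).
\]

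Finally, plugging in the high-probability bounds $\phi_0 \lesssim C_\ast \sigma^2$ and $\phi_1 \lesssim C_{\max}\sigma^2$ (valid once $n \ge c_0 k \log p$, which absorbs the $\sqrt{\cdot}$ corrections into the constants), the first summand $k\phi_0$ is dominated by $\min(kC_\ast, C_{\max})\sigma^2 \cdot k\log(ep/k)$ whenever $\log(ep/k) \ge 1$, i.e.\ for $p \ge k$. This yields the advertised bound
\[
\phi \;\le\; \tilde{C}\sigma\left[\min(kC_\ast, C_{\max})\,\frac{k}{n}\,\log(ep/k)\right]^{1/2}.
\]
The only mildly delicate point, and the main bookkeeping obstacle, is verifying that the additive $\log(1/p_0)$ term together with the lower-order corrections in $\phi_0$ and $\phi_1$ are absorbed into the leading $k\log(ep/k)$ factor; this is where the sample-size hypothesis $n \ge c_0 k \log p$ does its job, exactly paralleling the analogous absorption step in the proof of \autoref{lem:Gain}.
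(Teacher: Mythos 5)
Your proposal is correct and follows essentially the same route the paper takes: the text immediately preceding the corollary says exactly that one specializes \autoref{lem:Gain} to $k=d$ and replaces the generic count $\abs{\BD(k,d)}<(2epd/k)^k$ of \autoref{lem:BD-size} with the sharper $\abs{\Mset}=\binom{p}{k}\le (ep/k)^k$ from \autoref{ex:ksupp-Mset} inside \autoref{eq:phi}. Your bookkeeping of the failure probabilities ($2p^{-c(k-d+1)}=2p^{-c}$ for $\phi_0$, $2p^{-ck}$ for $\phi_1$) and the absorption of the $k\phi_0$ term and the $\log(1/p_0)$ correction under $n\ge c_0 k\log p$ match the intended argument.
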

Plugging $\lambda \asymp \phi$ in~\eqref{prediction-error-kd} gives the following prediction bound for the $\|\cdot\|_{k\square k}$ regularized estimator~$\hat{\beta}$:
\begin{align}\label{eq:errkk}
	\frac{1}{n}\norm{X(\beta^\star - \hbeta)}_2^2 
	\lesssim \sigma \sqrt{ \min(kC_*, C_{\max}) \frac{k \log(p/k)}{n}}\, \|\beta^\star\|_{k\square k}\,.
\end{align}

\paragraph{The $\|\cdot\|_{k\square 1}$ norm.} Our next example is the other extreme case, namely $d =1$. We characterize the prediction error for $\norm{\cdot}_{k\square 1}$ regularized estimator in lemma below. The next corollary follows from \autoref{thm:pred-err-kd}. 
\begin{corollary}\label{lem:k1}
Consider a sequence of design matrices $X\in \reals^{n\times p}$, with dimension $n \to \infty$, and $p = p(n) \to \infty$, satisfying the following assumptions for constants $C_{\max}, C > 0$ independent of $n$. For each $n$, $\Psi\in \reals^{p\times p}$ is such that $$\sigma_{\max} (\Psi)\le C_{\max} < \infty,\quad \sup_{J\subseteq[p], |J|\le k} \frac{1}{|J|} (\one^\sT \Psi_{J,J} \one) \le C_* \le C_{\max}\,.$$ In addition, $X\Psi^{-1/2}$ that has i.i.d.~subgaussian rows, with zero mean and subgaussian norm $\kappa=\|\Psi^{-1/2} x_1\|_{\psi_2}$, and the noise vector $\epsilon\in\reals^n$ has i.i.d.~Gaussian entries with variance at most $\sigma^2$. 

There exist constants $C, c_0, c>0$ such that the following holds. Assume $n\ge c_0 k \log p$ and let
\begin{align*}
\phi = C \sigma \sqrt{C_* \frac{k \log(p/k)}{n}} 
\end{align*}
Let $\hbeta$ be obtained from \eqref{eq:estimator-kd} with $d =1$ and $\lambda \geq \phi$. Then, with probability at least $1- 2p^{-ck} - 2 (ep/k)^{-k}$, we have
\begin{align}\label{eq:pred-gain2}
	\frac{1}{n}\norm{X(\beta^\star - \hbeta)}_2^2 
	\lesssim 3\lambda \|\beta^\star\|_{k\square 1}\,.
\end{align} 
\end{corollary}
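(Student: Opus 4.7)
The plan is to specialize Theorem~\ref{thm:pred-err-kd} to the setting $d=1$, tracking the key quantities carefully, and replacing the crude bound $|\BD(k,d)|<(2epd/k)^k$ from Lemma~\ref{lem:BD-size} by the sharper count $|\Mset|\le \binom{p}{k}\le (ep/k)^k$ provided in Example~\ref{ex:Mset-k1}. This substitution is what turns the generic $\log(2epd/k)$ factor into the sharper $\log(p/k)$ factor claimed in the corollary.

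First, I would control the design-induced aggregation quantities $\phi_0$ and $\phi_1$. Because the noise is i.i.d.\ Gaussian with variance at most $\sigma^2$, we have $\Sigma\preceq \sigma^2 I$, so the suprema in \eqref{eq:phi0-1} simplify; at $d=1$ both ranges become $|J|\le k$. Invoking Lemma~\ref{lem:Gain} at $d=1$ gives, on an event of probability at least $1-2p^{-ck}-2p^{-c(k-d+1)}=1-2p^{-ck}$, the bounds $\phi_0\le C_*\sigma^2(1+C\sqrt{k\log p/n})$ and $\phi_1\le C_{\max}\sigma^2(1+C\sqrt{k\log p/n})$. Under the sample-size assumption $n\ge c_0 k\log p$, these simplify to $\phi_0\lesssim C_*\sigma^2$ and $\phi_1\lesssim C_{\max}\sigma^2$, and since $C_*\le C_{\max}$ we get $\min(d\phi_0,\phi_1)=\phi_0\lesssim C_*\sigma^2$.

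Second, I would re-run the bound on $\agg$ from the proof of Theorem~\ref{thm:pred-err-kd} with the tighter cardinality from Example~\ref{ex:Mset-k1}. At $d=1$ the ``block partition'' structure of $\BD(k,1)$ collapses to selecting a single block of size $k$, so the correct count is $|\Mset|\le \binom{p}{k}\le (ep/k)^k$ rather than $(2ep)^k$; consequently $\kappa=\frac{c}{2}\log(|\Mset|/p_0)\le \frac{c}{2}\bigl(k\log(ep/k)+\log(1/p_0)\bigr)$. Choosing $p_0=(ep/k)^{-k}$ balances the two contributions so that $\kappa\asymp k\log(ep/k)$. Plugging the resulting bounds on $\phi_0,\phi_1,\kappa$ into~\eqref{eq:phi} at $d=1$ yields $\phi\le C\sigma\sqrt{C_* k\log(p/k)/n}$, matching the expression in the corollary.

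Finally, with this bound on $\phi$ at hand, Theorem~\ref{thm:pred-err-kd} applied with any $\lambda\ge \phi$ gives $\frac{1}{n}\|X(\beta^\star-\hbeta)\|_2^2\le 3\lambda\|\beta^\star\|_{k\square 1}$ on the intersection of the design event (probability $\ge 1-2p^{-ck}$, from Lemma~\ref{lem:Gain}) and the Hanson--Wright event (probability $\ge 1-2p_0=1-2(ep/k)^{-k}$, from Proposition~\ref{lem:noise-dual-norm-Mset}). A union bound then produces the stated probability $1-2p^{-ck}-2(ep/k)^{-k}$. The main obstacle, and the only place that requires care, is verifying that the sharper count $(ep/k)^k$ from Example~\ref{ex:Mset-k1} is really applicable at $d=1$ in the Hanson--Wright step of Theorem~\ref{thm:pred-err-kd}: the argument there only uses $|\Mset|$ through a union bound over $\Mset$, so any valid variational representation of $(\nor_{k\square 1}^\star)^2$ through a smaller $\Mset$ can be substituted transparently, removing the spurious $\log d$ factor that would otherwise appear.
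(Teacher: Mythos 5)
Your proposal is correct and follows essentially the same route the paper intends: specialize Theorem~\ref{thm:pred-err-kd} (via the $\agg_0,\agg_1,\agg_2,\kappa$ bounds of Proposition~\ref{lem:noise-dual-norm-Mset} and the design-event bounds of Lemma~\ref{lem:Gain} at $d=1$), replace the generic cardinality bound of Lemma~\ref{lem:BD-size} by the sharper count from Example~\ref{ex:Mset-k1}, and take $p_0=(ep/k)^{-k}$ to obtain the stated $\phi$ and the probability $1-2p^{-ck}-2(ep/k)^{-k}$. The paper gives no explicit proof beyond asserting the corollary follows from Theorem~\ref{thm:pred-err-kd}, and your write-up correctly supplies the missing details, including the key observation that the union bound in the Hanson--Wright step only uses $\abs{\Mset}$, so any valid smaller variational representation can be substituted.
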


Using $\lambda \asymp \phi$ in~\eqref{eq:pred-gain2} gives the following prediction bound for the $\|\cdot\|_{k\square 1}$ regularized estimator~$\hat{\beta}$:
\begin{align}\label{eq:pred-gain3}
	\frac{1}{n}\norm{X(\beta^\star - \hbeta)}_2^2 
	\lesssim \sigma \sqrt{ C_{*} \frac{k \log(p/k)}{n}}\, \|\beta^\star\|_{k\square 1}\,.
\end{align} 
To compare with the $\norm{\cdot}_{k\square k}$ regularizer, we denote by $\errkk$ and $\errk1$ the right-hand side of~\eqref{eq:errkk} and \eqref{eq:pred-gain3}. 
We then have 
\begin{align*}
\frac{\errk1}{\errkk} \lesssim \sqrt{\frac{C_*}{\min(kC_*, C_{\max})}} \times \frac{\|\beta^\star\|_{k\square 1}}{\|\beta^\star\|_{k\square k}}\,.
\end{align*}
Now suppose that $\beta^\star \in \Sc_{k,1}$. Then, $\norm{\beta^\star}_{k\square 1} = \|\beta^\star\|_{k\square k} = \|\beta^\star\|_2$ and the above ratio becomes $\sqrt{\frac{C_*}{\min(kC_*, C_{\max})}}$. Recall that $C_*$ was the maximum of the quadratic forms $(\one^\sT \Psi_{J,J} \one)/|J|$, over all subsets $J\subseteq[p]$, with $|J|\le k$. In addition, $C_{\max}$ is the bound on the operator norm of the covariance~$\Psi$. Hence, $C_*\le C_{\max}$ and depending on $\Psi$, this ratio can be made as small as $1/\sqrt{k}$.

\section{Discussions}\label{sec:disc}

\paragraph{Challenges without Decomposability.}\label{sec:new-challenges}
Most of the existing work on norm regularization can be unified under the notion of {\em decomposability}; see \citep{negahban2012unified,candes2013simple,vaiter2015model} for slightly different definitions. While most of the works on statistical analysis for norm regularization, and especially the earlier works, do not explicitly mention decomposability, it is the main proof ingredient; e.g., see Lemma 4.1~in \citep{bickel2009simultaneous} for how decomposability comes into play. 
Therefore, common mechanisms established for analyzing Lasso, nuclear norm regularized estimators, or more generally those with decomposable norms, cannot be used in our case. Therefore, similar to \citep{banerjee2014estimation}, we aim at identifying more general geometric quantities but extend beyond conceptual bounds, introducing computation-friendly quantities.

\paragraph{Algorithms Based on Non-convex Projection.}
Only assuming access to the non-convex projection (onto the set of desired models) can also be used in devising algorithms. For example, Iterative Hard Thresholding algorithms \citep[Section 3]{blumensath2008iterative} \citep{blumensath2011sampling} (projects onto the set of $k$-sparse vectors, namely $\Sc_{k,k}$), \citep[Section 2]{jain2010guaranteed} (projects onto the set of rank-$r$ matrices), \citep{roulet2017iterative} (does K-means which is projection onto the set of $d$-valued models \citep{jalali2013convex}), belong to this class. However, the machinery proposed in this work allows for devising convex regularization functions (norms) which then can be combined with general loss functions and constraints; unlike the specific constrained loss minimization setups required in the aforementioned works.

\paragraph{Gaussian width of the Norm Ball and Unions of Subspaces.}
Lemma 2 of \citep{maurer2014inequality} provides an upper bound for the Gaussian width of a norm ball by splitting the computation over subsets of extreme points. Consider a structure norm associated to a set $\Sc$ which is a finite union of subspaces $\Sc_1,\ldots,\Sc_m$, with dimensions $d_1,\ldots, d_m$, respectively. Then, the Gaussian width of $\Sc_i\cap \mathbb{S}^{p-1}$ is given by 
\[
\omega(\Sc_i\cap \mathbb{S}^{p-1})
= \mathbb{E} _g \,\sup_{z\in \Sc_i\cap \mathbb{S}^{p-1}}\, \langle z, g \rangle 
= \mathbb{E} _g \,\sup_{z\in \Sc_i\cap \mathbb{S}^{p-1}}\, \langle z, \proj(g; \Sc_i) \rangle
= \mathbb{E} _g \norm{\proj(g; \Sc_i)}_2 \leq \sqrt{d_i}
\]
for $g\sim\mathcal{N}(0,I_p)$. Applying Lemma 2 of \citep{maurer2014inequality} to this splitting of $\Sc\cap \mathbb{S}^{p-1}$ yields
\[
\omega(\Bc_\Sc) 
= \omega(\Sc\cap \mathbb{S}^{p-1}) 
\leq \max_{i\in[m]} \sqrt{d_i} + 2 \sqrt{\log m}.
\]
The Gaussian width of the unit norm ball is the quantity used in \citep{banerjee2014estimation,chen2015structured} to bound $\frac{1}{n}\norm{X^\sT \epsilon}_\Sc^\star$ related to the regularization parameter. We instead make use of the Hanson-Wright inequality to get \autoref{lem:noise-dual-norm-Mset}, providing a bound that is deterministic with respect to the design (and not restricted to a few random ensembles of design) and is also sensitive to norm-induced properties of the design. 

\paragraph{Possible Generalizations.}
Our result can be easily extended to regularized loss minimization for smooth loss functions and beyond the least-squares loss. The introduction and characterization of $\varphi$ can also be used beyond the regression setup in this paper; e.g., see \citep{goldstein2018structured} for a possible application domain. 
For least-squares with random design, results of \autoref{lem:noise-dual-norm-Mset} and \autoref{thm:pred-err-kd} can be extended to many more noise distributions, as discussed in \autoref{rem:other-HW} and \autoref{sec:lambda-bnds-existing}, as well as to sub-exponential noise as remarked by \citep[Remark 2.8]{adamczak2015note}.

\section*{Acknowledgements}
Adel Javanmard was partially supported by an Outlier Research in Business (iORB) grant from the USC Marshall School of Business, a Google Faculty Research award and the NSF CAREER Award DMS-1844481. 
Maryam Fazel was supported in part by grants NSF TRIPODS CCF 1740551, ONR N00014-16-1-2789, and NSF CCF-1409836. This work was carried out in part while the authors were visiting the Simons Institute for the Theory of Computing.

\clearpage
\appendix

\section{Proofs: Projection-based Norms}\label{app:Snorm-summary}\label{app:projection}

\begin{proof}[Proof of \autoref{lem:dual-len-proj}]
	Since $\omega_0\in \proj_\Sc(\x)$ and $\Sc$ is scale invariant, we have $\{\lambda\omega_0 :\; \lambda\in\mathbb{R}\}\subset \Sc$ which in turn implies $\proj_{\{\lambda\omega_0 :\; \lambda\in\mathbb{R}\}}(\x) =\omega_0$. Note that projection onto a line is a singleton. Therefore, 
	\begin{equation} \label{eq:dual_val}
\langle{\x},{\omega_0}\rangle = \langle{\proj_{\{\lambda\omega_0 :\; \lambda\in\mathbb{R}\}}(\x)},{\omega_0}\rangle = \norm{\omega_0}_2^2 \,.
\end{equation}
Optimality of projection yields $\norm{\omega_0 - \x}_2 \leq \norm{\norm{\omega_0}_2 \cdot \y - \x}_2$ for all $\y\in\Sc \cap \mathbb{S}^{p-1}$. This, after algebraic manipulations and an application of \eqref{eq:dual_val}, yields
\begin{align*}
\norm{\omega_0}_2 
= \langle \frac{\omega_0}{\norm{\omega_0}_2}, \x \rangle
\geq \sup_{\y\in\Sc \cap \mathbb{S}^{p-1}} \langle \y,\x\rangle 
= \sup_{\y\in\Bc_\Sc} \langle \y,\x\rangle
= \norm{\x}_\Sc^\star .
\end{align*}
Since $\frac{\omega_0}{\norm{\omega_0}_2} \in \Sc\cap \mathbb{S}^{p-1}$, we get equality and the proof is finished. 
\end{proof}

The above also establishes 
$\langle{\x},{\proj_\Sc(\x)}\rangle = \norm{\proj_\Sc(\x)}_2^2= \norm{\proj_\Sc(\x)}_\Sc\, \norm{\x}_\Sc^\star$ which illustrates the pair of achieving vectors in the definition of dual norm. This has been known as the {\em alignment property} in the literature. As a corollary, we get the following. 
\begin{corollary}\label{cor:proj-nonexp}
The projection onto a closed scale-invariant set $\Sc$ is non-expansive; i.e., $\norm{\proj_\Sc(\x)}_2 \leq \norm{\x}_2$ for all $\x\,$.	
\end{corollary}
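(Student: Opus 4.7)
The plan is to derive \autoref{cor:proj-nonexp} as a direct corollary of the alignment identity established in (the proof of) Proposition \ref{lem:dual-len-proj}, namely $\langle \x, \proj_\Sc(\x)\rangle = \norm{\proj_\Sc(\x)}_2^2$. With this identity in hand, I would simply expand the squared distance:
\begin{align*}
\norm{\x - \proj_\Sc(\x)}_2^2
= \norm{\x}_2^2 - 2\langle \x, \proj_\Sc(\x)\rangle + \norm{\proj_\Sc(\x)}_2^2
= \norm{\x}_2^2 - \norm{\proj_\Sc(\x)}_2^2.
\end{align*}
This yields the Pythagorean-like decomposition $\norm{\x}_2^2 = \norm{\proj_\Sc(\x)}_2^2 + \norm{\x - \proj_\Sc(\x)}_2^2$, and nonnegativity of the second summand on the right gives the claim. (As a sanity check, scale invariance implies $0 \in \Sc$, so the bound $\norm{\x-\proj_\Sc(\x)}_2 \leq \norm{\x}_2$ is automatic from optimality of the projection; the nontrivial content of the corollary lies in the complementary bound on $\norm{\proj_\Sc(\x)}_2$, which is exactly what the alignment identity supplies.)

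An alternative one-line route, which I would mention for intuition, uses the dual-norm identification $\norm{\proj_\Sc(\x)}_2 = \norm{\x}_\Sc^\star$ from Proposition \ref{lem:dual-len-proj} together with the observation $\Bc_\Sc \subseteq \Bc_2$ (since, by construction \eqref{eq:Snorm-gauge}, $\Bc_\Sc$ is the convex hull of points lying on the unit $\ell_2$-sphere); then Cauchy--Schwarz gives $\norm{\x}_\Sc^\star = \sup_{\beta \in \Bc_\Sc}\langle \beta,\x\rangle \leq \sup_{\norm{\beta}_2\le 1}\langle \beta,\x\rangle = \norm{\x}_2$.

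There is no real obstacle here: the corollary is essentially a one-line consequence of material already proved. The only decision is which of the two reformulations to present; I would lead with the Pythagorean expansion since it is self-contained and mirrors the familiar convex-projection estimate, despite $\Sc$ being non-convex.
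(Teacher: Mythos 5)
Your proposal is correct and matches the paper's proof: the paper's primary argument is exactly your Pythagorean expansion of $\norm{\proj_\Sc(\x)-\x}_2^2 \geq 0$ combined with the alignment identity from \autoref{lem:dual-len-proj}, and your alternative via $\norm{\proj_\Sc(\x)}_2 = \norm{\x}_\Sc^\star \leq \norm{\x}_2$ (from $\Bc_\Sc \subseteq \Bc_2$, i.e.\ $\nor_\Sc \geq \nor_2$) is also the paper's stated alternative. Nothing is missing.
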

\begin{proof}
The proof is by expanding $\norm{\proj_\Sc(\x)-\x}_2^2\geq 0$ and using \autoref{lem:dual-len-proj}. 

Alternatively, since $\nor_\Sc\geq \nor_2$, we get $\nor_\Sc^\star \geq \nor_2$, which also establishes the claim. 
\end{proof} 
Note that while projection onto convex sets is always non-expansive, projection onto general non-convex sets can be expansive. However, the distance to a general set is still non-expansive (e.g., see \citep[Proposition 2.4.1, page 50]{clarke1990optimization}). 
This should not be confused with the Kolmogorov criterion for projection onto a {\em convex} set $\cC$, i.e., $\y = \proj_\cC(\x)$ if and only if $\y\in\cC$ and $\langle{\z-\y},{\x-\y}\rangle \leq 0$ for all $\z\in\cC\,$, since we are interested in projection onto a non-convex set $\Sc\,$.

\begin{proof}[Proof of \autoref{subdiff_dualnorm_gen}]
Consider the following characterization of the subdifferential \citep{watson1992characterization},
\begin{align*}
\partial \norm{\x}_\Sc^\star
&= \left\{ \g :\; \langle{\g},{\x}\rangle = \norm{\x}_\Sc^\star\,,\; \norm{\g}_\Sc \leq 1 \right\}	\\
&= \left\{ \g :\; \langle{\g},{\x}\rangle = \norm{\proj_\Sc(\x)}_2 \,,\; \norm{\g}_\Sc \leq 1 \right\} \,.
\end{align*}
Using the results of \autoref{lem:dual-len-proj}, one can check that any $\y\in\proj_\Sc(\x)/ \norm{\proj_\Sc(\x)}_2$ satisfies the definition of subgradients. Since subdifferential is a convex set, we get a one-sided inclusion; i.e.,~$\supseteq$.
Next, notice that the squared dual norm can be written as
\begin{align}\label{eq:dum0sub}
(\norm{\x}_\Sc^\star)^2 
= \norm{\beta}_2^2 - \norm{\beta - \proj(\beta; \Sc)}_2^2
= \norm{\beta}_2^2 - \inf_{\theta\in \Sc} \norm{\beta-\theta}_2^2
= \sup_{\y\in\Sc}\; 2\langle{\x},{\y}\rangle - \norm{\y}_2^2
\end{align}
where the inner function, say $f(\x,\y)\,$, is continuous, linear in $\x$, and concave quadratic in $\y$. 

On the other hand, by \autoref{cor:proj-nonexp}, we have $\norm{\beta}_\Sc^\star \leq \norm{\beta}_2$. Moreover, $\nor_\Sc^\star$ is continuous and Lipschitz. Therefore, there exists a neighborhood $U \ni \beta$ such that for every $u\in U$ 
\[
(\norm{u}_\Sc^\star)^2 
= \sup_{\y}\bigl\{ 2\langle{u},{\y}\rangle - \norm{\y}_2^2 :~ \y\in\Sc,~ \norm{\y}_2 \leq 2\norm{\x}_\Sc^\star\bigr\}.
\]
Note that the constraint set (indexing $\theta$) is compact as $\Sc$ is assumed to be a closed set. 
Moreover, it is clear from the second equality in \eqref{eq:dum0sub} that the optimal solutions to the above parametric minimization are the members of $\proj(u; \Sc)$. All in all, the above parametric minimization satisfies the requirements of Theorem 3 in \citep{yu2012differentiability} and we get equality for the subdifferential. 
\end{proof}

\begin{proof}[Proof of \autoref{lem:inv-proj}]
Consider the orthogonal projection mapping as $\proj(\beta;A) = \Argmin_{u\in A} \norm{\beta-u}_2^2 = \Argmin_{u\in A} \sum_{i=1}^p (u_i-\beta_i)^2$. Consider $\theta\in \proj(\beta;A)$. 

If $A$ is invariant with respect to sign flips, $\theta\in A $ implies $\abs{\theta} \circ \sign(\beta) \in A$, where $\sign(0)$ can be chosen as either $+1$ or $-1$. By optimality, $\norm{\theta-\beta}_2^2 \leq \norm{\abs{\theta} \circ \sign(\beta) - \beta}_2^2$ which implies $\sum_{i=1}^p \theta_i\beta_i = \langle \theta, \beta\rangle \geq \langle \abs{\theta}, \abs{\beta}\rangle = \sum_{i=1}^p \abs{\theta_i\beta_i} \geq \sum_{i=1}^p \theta_i\beta_i$. Therefore, all inequalities hold with equality implying $\theta_i\beta_i \geq 0$ for all $i\in[p]$. 

If $A$ is invariant under a permutation of the entries, $\theta \in A$ implies $\pi_{\beta}^{-1}(\pi_\theta(\theta))\in A$ where $\pi_u$ is any permutation for which $\pi_u(u)$ is sorted in non-increasing order. Optimality implies $\norm{\theta-\beta}_2^2 \leq \norm{\pi_\beta^{-1}(\pi_\theta(\theta)) - \beta}_2^2 = \norm{\pi_\theta(\theta) -\pi_\beta(\beta)}_2^2$. This implies $\langle \theta, \beta \rangle \geq \langle \pi_\theta(\theta), \pi_\beta(\beta)\rangle$. The reverse inequality also holds as a result of the rearrangement inequality. Therefore, $\langle \pi_\theta(\theta), \pi_\beta(\beta)\rangle = \langle \theta, \beta \rangle$. Consider any $i,j\in[p]$ for which $\beta_i>\beta_j$. If $\theta_i<\theta_j$, define $\tilde \theta$ with all entries the same as $\theta$ except for $\tilde \theta_i = \theta_j$ and $\tilde \theta_j= \theta_i$. Then, $\langle \pi_{\tilde\theta}(\tilde\theta), \pi_\beta(\beta) \rangle \geq \langle \tilde \theta, \beta\rangle > \langle \theta,\beta \rangle = \langle \pi_\theta(\theta),\pi_\beta(\beta)\rangle $ while the first and last terms are equal. This is a contradiction which implies that the claim should hold true.
\end{proof}

\begin{proof}[Proof of \autoref{lem:proj_S_SB2}]
Consider any $\y \in \proj_{\Sc\cap\mathbb{S}^{p-1}}(\x)$ and any $\z \in \proj_{\Sc}(\x)\,$. The optimality of $\z$ gives 
$\norm{\z}_2^2 - 2\langle{\x},{\z}\rangle \leq \norm{\norm{\z}_2\y}_2^2 - 2\langle{\x},{\norm{\z}_2\y}\rangle$ 
and the optimality of $\y$ gives $\langle{\x},{\y}\rangle \geq \langle{\x},{\z/\norm{\z}_2}\rangle$. Combining these two inequalities proves $\langle{\x},{\y}\rangle = \langle{\x},{\z/\norm{\z}_2}\rangle = \norm{\z}_2$ (last equality uses \eqref{eq:dual_val}) which illustrates that $\norm{\z}_2\y \in \proj_\Sc(\x)$ and $\z/\norm{\z}_2\in \proj_{\Sc\cap\mathbb{S}^{p-1}}(\x)$. 
In other words, given $\y \in \proj_{\Ac_\Sc}(\x)$, we have $\langle{\x},{\y}\rangle \y \in \proj_\Sc(\x) \,$.
\end{proof}
Here is another explanation: since $\Sc$ is scale-invariant, one can first find the direction of projection on $\Sc$ and later find the correct scaling as
\begin{align*}
	\proj_\Sc(\x) 
	&= \arg\left\{\min_{\y\in\Sc}\, \norm{\x-\y}_2^2 \right\} \\
	&= \arg\left\{\min_{\y\in\Sc}\, \norm{\y}_2^2 - 2\langle{\x},{\y}\rangle \right\} \\
	&= \arg\left\{\min_{\tau\geq0}\, \left(\tau^2 - 2\tau \max_{\y\in\Sc\cap\mathbb{S}^{p-1}}\, \langle{\x},{\y}\rangle \right)\right\} 
\end{align*}
which shows that for finding the direction of $\proj_\Sc(\x)$ it suffices to project onto $\Sc\cap\mathbb{S}^{p-1}\,$.
Yet another explanation comes from the result that says the dual norm is equal to the largest inner product with atoms. Hence, combining this with \autoref{lem:dual-len-proj}, we get
\begin{equation}
\max_{\y\in\Sc\cap\mathbb{S}^{p-1}} \langle{\x},{\y}\rangle=\norm{\x}_\Sc^\star= \langle{\x},{\proj_\Sc(\x)/\norm{\proj_\Sc(\x)}_2}\rangle,
\end{equation}
which proves our result.

\section{Proofs: The $(k\square d)$-norm}\label{app:kd-norm}

\begin{proof}[Proof of \autoref{lem:monot}]
For the first statement, we prove the more general version and then apply it to $P=I-e_ie_i^\sT$. Consider $\theta\in\proj(\beta;\Sc)$ and assume $P\beta=\beta$, $P=P^\sT=P^2$, and $Pu\in\Sc$ for all $u\in \Sc$. Then, $P\theta\in\Sc$ and optimality implies $\norm{P\theta-\beta}_2^2\geq \norm{\theta-\beta}_2^2$ which is equivalent to $\norm{(I-P)\theta}_2^2=0$ and in turn to $P\theta=\theta$. 

For the second statement, we prove the more general statement and then apply it to $A = I-e_ie_i^\sT + e_ie_j^\sT$ and $B=I-e_je_j^\sT + e_je_i^\sT$. Consider $\theta\in\proj(\beta;\Sc)$. By the assumption, $A\theta\in\Sc$. Therefore, optimality of $\theta$ implies $\norm{A\theta-A\beta}_2^2 = \norm{A\theta-\beta}_2^2 \geq \norm{\theta-\beta}_2^2$ which in turn implies $(\theta-\beta)^\sT(A^\sT A - I)(\theta-\beta)\geq 0$. A similar argument establishes $(\theta-\beta)^\sT(B^\sT B - I)(\theta-\beta)\geq 0$. Adding up the two inequalities we get $0=0$ and hence all the inequalities so far have to hold with equality, implying that $A\theta$ and $B\theta$ are also optimal; i.e., $A\theta,B\theta\in\proj(\beta; \Sc)$. 
\end{proof}

\begin{proof}[Proof of \autoref{lem:proj-cardk}]
Suppose $\abs{\beta_i} < \bar\beta_k$ and $\theta_i\neq 0$ for some $\theta\in\proj(\beta;\Sc)$ and some $i\in[p]$. Therefore, there exists $j\in[p]$ for which $\abs{\beta_j}\geq \bar\beta_k$ and $\theta_j=0$; otherwise, $\card(\theta)>k$. Consider a new vector $\tilde\theta$ with all entries equal to those of $\theta$ except for $\tilde\theta_i=0$ and $\tilde\theta_j = \abs{\theta_i}\sign(\beta_j)$. Then, 
$\dist^2(\theta;\Sc) - \dist^2(\tilde\theta;\Sc) 
=(\theta_i-\beta_i)^2+(\theta_j-\beta_j)^2-(\tilde\theta_j-\beta_j)^2-(\tilde\theta_i-\beta_i)^2 
=(\theta_i-\beta_i)^2+\beta_j^2-(\abs{\theta_i}-\abs{\beta_j})^2-\beta_i^2 
=2\abs{\theta_i}(\abs{\beta_j}-\beta_i \sign(\theta_i)) 
\geq 2\abs{\theta_i}(\abs{\beta_j}-\abs{\beta_i})>0$. 
This contradicts the optimality of $\theta$. Therefore, the claim is established.
\end{proof}

\begin{proof}[Proof of \autoref{lem:proj-bar-Skd}]
Observe that $\Sc_{k,d}$ satisfies all of the assumptions in \autoref{lem:inv-proj} and \autoref{lem:monot}. 
Consider $\theta\in\proj(\bar\beta; \Sc_{k,d})$. By \autoref{lem:monot}, if two entries of $\bar\beta$ are equal, the same entries in $\theta$ are going to be equal. Therefore, while there might be several options for $\pi$, $\pi^{-1}(\theta)$ is unique for all such $\pi$. Moreover, if $\bar\beta_i=0$, \autoref{lem:monot} implies $\theta_i=0$. Therefore, while there might be ambiguities in choosing $\sign(\beta)$ over its zero entries, the solution to $\pi^{-1}(\theta) \circ \sign(\beta)$ will be unique given a fixed $\theta\in\proj(\bar\beta; \Sc_{k,d})$. Therefore, we can fix a choice for $\sign(\beta)$ and a choice for $\pi$, which in turn makes $\theta\mapsto \pi^{-1}(\theta) \circ \sign(\beta)$ well-defined and {\em invertible}. 
Therefore, observe that 
$\norm{\beta - \pi^{-1}(\theta) \circ \sign(\beta)}_2 
=\norm{\abs{\beta} - \pi^{-1}(\theta)}_2 
=\norm{\pi(\abs{\beta}) - \theta}_2
=\norm{\bar\beta - \theta}_2$. This, in conjunction with sign and permutation invariance of $\Sc_{k,d}$ establishes the optimality of $\pi^{-1}(\theta) \circ \sign(\beta)$. 

On the other hand, consider $\gamma\in \proj(\beta;\Sc_{k,d})$ and define $\theta = \pi(\gamma \circ \sign(\beta))$. Again: 1) $\gamma$ will be zero off the support of $\beta$, hence $\gamma \circ \sign(\beta)$ is well-defined, 2) $\gamma \circ \beta \geq 0$ by \autoref{lem:inv-proj}, hence $\gamma \circ \sign(\beta)\geq 0$, and 3) $\gamma \circ \sign(\beta)$ will not have different entries where $\abs{\beta}$ has equal entries, therefore $\pi(\gamma \circ \sign(\beta))$ is well-defined. It remains to show that such vector is a projection of $\bar\beta$. Similar to the above, observe that $\norm{\gamma -\beta }_2 = \norm{\pi(\gamma \circ \sign(\beta)) - \pi(\beta \circ \sign(\beta))}_2 = \norm{\pi(\gamma \circ \sign(\beta)) - \bar\beta}_2$, which establishes optimality.
\end{proof}

\begin{proof}[Proof of \autoref{lem:proj-Skd}]
Preliminary observation: By \autoref{lem:monot}, $\supp(\theta) \subseteq \supp(\beta)$. By \autoref{lem:inv-proj}, $\theta \circ \beta \geq 0$ which together with the support inclusion result completely determines the sign of nonzeros in $\theta$; sign of any nonzero $\theta_i$ is $\sign(\beta_i)$. Since $\Sc_{k,d}$ is both sign and permutation invariant, and the sign of nonzero entries of the projections are determined, we will adjust the sign whenever we swap entries. 

By \autoref{lem:proj-cardk}, any projection $\theta\in\proj(\beta;\Sc_{k,d})$ will have $\card(\theta)\leq k$ and $S=\supp(\theta)\subseteq \{i:~ \abs{\beta_i}\geq \bar\beta_k\}$. 
Therefore, $\norm{\theta-\beta}_2^2 = \norm{\beta_{S^c}}_2^2 + \norm{\theta- \beta_{S}}_2^2$. Consider $A_k = \{i:\abs{\beta_i} = \bar\beta_k\}$. Then, by \autoref{lem:monot}, there exists a projection $\tilde\theta$ which takes a single absolute value over $A_k$. Therefore, the indices in $S\cap A_k$ can be re-assigned arbitrarily (with appropriate sign adjustment) within $A_k$ without changing the distance. This validates the first step of our procedure. 

Let us restrict the space to any set $A$ with $\supp(\theta)\subseteq A\subseteq \{i:~ \abs{\beta_i}\geq \bar\beta_k\}$ and $\abs{A}=k$. Observe that $\norm{\theta-\beta}_2^2 = \norm{\beta_{A^c}}_2^2 + \norm{\theta- \beta_{A}}_2^2$. Optimality of $\theta$ implies that $\theta_A$ has at most $d$ distinct absolute values and $\theta_A$ is closest to $\beta_A$ among all such vectors. This indeed is equivalent to $\theta_A = \proj(\beta_A; \Sc_{k,d})$ where $\Sc_{k,d}\subseteq \mathbb{R}^k$ here. This validates the second step of our procedure. 
\end{proof}

\begin{proof}{An alternative proof for \autoref{lem:proj-Skd}} 
We can work with $\bar\beta$ to simplify the presentation. Therefore, assume $\beta=\bar\beta$ for the rest of this proof. 
We follow the procedure discussed in proof of \autoref{lem:proj-Skd-combinat-rep} (given next) but instead keep track of the optimal solutions, rather than the optimal value, to characterize the projection itself. 

It can be seen from the reformulations in \eqref{eq:pf-dum1} that we can first project onto $\Sc_{k,k}$, namely the set of $k$-sparse vectors. This leads to zeroing out all entries except the $k$ with largest absolute values (corresponding to the first $k$ entries of $\bar\theta$).

The procedure resulting in \eqref{eq:dumm2}, as discussed, is a K-means procedure into $d$ groups. 

Finally, in comparing \eqref{eq:dumm3} and a similar expression for $\theta$, as in \eqref{eq:Snorm-dual-dist}, we can put the centers back into their original positions (before turning $\theta$ to $\bar\theta$), with the corresponding sign, to get the final result. This is the consequence of optimality in conjunction with the fact that minimal distance is achieved when two vectors have the same sign pattern and pattern of absolute values (rearrangement inequality.)
\end{proof}

\begin{proof}[Proof of \autoref{lem:proj-Skd-combinat-rep}]
Denote the optimal solution (the projection) with $\gamma^\star \in \Sc_{k,d}$. \autoref{lem:proj-bar-Skd} allows for computing $\proj(\theta; \Sc_{k,d})$ from $\proj(\bar{\theta}; \Sc_{k,d})$ and the sign and order patterns in $\theta$. In projecting $\bar{\theta}$, the optimal $\bar{\gamma}^\star$ will be nonnegative and sorted. Therefore, from \eqref{eq:Snorm-dual-dist}, the projection can be expressed as 
\begin{align}
&\norm{ \proj(\bar{\theta}; \Sc_{k,d}) }_2^2 = \norm{\bar{\theta}}_2^2 - \min \bigl\{ \norm{\bar{\theta}-\gamma}_2^2:~\gamma\in \Sc_{k,d} \bigr\} \label{eq:dumm3}\\
&= \norm{\bar{\theta}}_2^2 - \min \bigl\{ \norm{\bar{\theta}-\gamma}_2^2:~\gamma\in \Sc_{k,d} ,~ \gamma_1\geq \cdots \geq \gamma_k \geq 0 ,~
	\gamma_{k+1} = \ldots = \gamma_p = 0 \bigr\} \\ 
&= \norm{\bar{\theta}_{1:k}}_2^2 - \min \bigl\{ \norm{\bar{\theta}_{1:k}-\gamma_{1:k}}_2^2:~\gamma\in \Sc_{k,d} ,~ 	
	\gamma_1\geq \cdots \geq \gamma_k \geq 0,
	\gamma_{k+1} = \ldots = \gamma_p = 0 \bigr\}.
\label{eq:pf-dum1}
\end{align}
Considering the definition of $\Sc_{k,d}$ in~\eqref{eq:def-Skd}, observe that the last minimization is indeed a K-means clustering problem. Since $\gamma_{1:k}$ can only take $d$ distinct values, we can turn the optimization problem into choosing the optimal partition of entries and then assign the optimal value to each partition separately. This yields 
\begin{align}
&\norm{ \proj(\bar{\theta}; \Sc_{k,d}) }_2^2 \nonumber\\
&= \norm{\bar{\theta}_{1:k}}_2^2 - \min \bigl\{ 
	\sum_{i=1}^d \norm{\bar{\theta}_{\int_i}- \frac{1}{\abs{\int_i}}\one\one^\sT \bar{\theta}_{\int_i} }_2^2
	:~ (\int_1,\cdots,\int_d)\in\ptnn(k,d) \bigr\} \nonumber\\
&= \max \bigl\{ \sum_{i=1}^d \frac{1}{\abs{\int_i}} (\one^\sT \bar{\theta}_{\int_i})^2 :~ (\int_1,\cdots,\int_d)\in\ptnn(k,d) \bigr\} \label{eq:dumm2}
\end{align}
as claimed. 
\end{proof}

\begin{proof}[Proof of \autoref{lem:BD-size}]
Consider partitioning $[p]$ into $d+1$ groups, $d$ of which having a total size of $k$. This can be done by first selecting $k$ out $p$ elements and then partitioning these $k$ elements into $d$ groups. We can then get an upper bound by allowing for empty groups; hence, $|\BD(k,d)| = 2^k|\ptn(k,d)| 
		\leq 2^k\binom{p}{k} d^k 
		\leq (\frac{2epd}{k})^k$. 
\end{proof}

\begin{proof}[Proof of \autoref{lem:projS-qcqp}]
Consider a dynamic programming formulation of the 1-dimensional K-means clustering similar to \citep{wang2011ckmeans}. However, modify the formulation to align with the quantity of interest in \autoref{lem:proj-Skd-combinat-rep}; namely $\sum_{i=1}^d \frac{1}{\abs{\int_i}} (\one^\sT \bar{\theta}_{\int_i})^2$ which is to be maximized. 

More specifically, consider 
\begin{align*}
\norm{ \proj(\bar{\theta}; \Sc_{k,d}) }_2^2 
&= \max \bigl\{ \sum_{i=1}^d \frac{1}{\abs{\int_i}} (\one^\sT \bar{\theta}_{\int_i})^2 :~ (\int_1,\cdots,\int_d)\in\ptnn(k,d) \bigr\} \nonumber \\
&= \min\{t: \sum_{i=1}^d \frac{1}{\abs{\int_i}} (\one^\sT \bar{\theta}_{\int_i})^2 \leq t ~~ \text{for all } (\int_1,\cdots,\int_d)\in\ptnn(k,d)\} .
\end{align*}
Define
\begin{align*}
\tilde\nu_{s,e} = \max\{ \sum_{i=1}^e \frac{1}{\abs{\int_i}} (\one^\sT \bar{\theta}_{\int_i})^2 : (\int_1,\cdots,\int_d)\in\ptnn(s,e)\} 
\end{align*}
as the optimal {\em cost-to-go values} and observe that they satisfy the following, 
\begin{align*}
\tilde\nu_{s,e} = \max_{e\leq m \leq s} \{
\tilde\nu_{m-1,e-1} + \frac{1}{s-m+1}\abs{\bar\theta_{[m,s]}}_1^2
\}.
\end{align*}
The above notation can be turned into inequalities (as in the QCQP) which finishes the proof. Observe that the optimal values for $\nu_{s,e}$ in the QCQP are equal to the values for $\tilde\nu_{s,e}$.
\end{proof}

\begin{proof}[Proof of \autoref{lem:proj-dual-ball-qcqp}]
Observe that 
\begin{align*}
\proj(\bar{\theta}; \cB^\star) 
&= \argmin_u \bigl\{ \norm{u-\bar{\theta}}_2^2:~ \norm{u}_\sq^2 \leq 1 \bigr\} \\
&= \argmin_u \bigl\{ \norm{u-\bar{\theta}}_2^2:~ \norm{u}_\sq^2 \leq 1,~ 
	 u_1\geq \cdots \geq u_p \geq 0 \bigr\} \\
&\stackrel{(a)}{=} \argmin_u \Bigl\{ \norm{u-\bar{\theta}}_2^2:~ 
	 u_1\geq \cdots \geq u_p \geq 0 ,~ \\	 
&\qquad \qquad \qquad \min_{\{\nu_{m,e}\}} \Bigl\{ \nu_{k,d}:~
\frac{1}{s-m+1} ( \one^\sT u_{[m,s]})^2 \leq \nu_{s,e} - \nu_{m-1,e-1} ~~ \forall (e,m,s)\in \T(k,d)\Bigr\} \leq 1	 
	 \Bigr\} \\	 
&=	\argmin_u \min_{\{\nu_{m,e}\}}\bigl\{ \norm{u-\bar{\theta}}_2^2:~ 
	 u_1\geq \cdots \geq u_p \geq 0 ,~\\
&\qquad \qquad \qquad ~	 \nu_{k,d} \leq 1,~
	 \frac{1}{s-m+1} ( \one^\sT u_{[m,s]})^2 \leq \nu_{s,e} - \nu_{m-1,e-1} ~~ \forall (e,m,s)\in \T(k,d)
	 \bigr\}
\end{align*}
where in $(a)$ we uses the fact that the variable $u$ is sorted and we plugged in the representation for $\norm{u}_\sq^2$ given in \autoref{lem:projS-qcqp}. 
\end{proof}

\section{Proofs: Prediction Error} \label{app:prediction}

	\begin{proof}[Proof of \autoref{lem:oracle}]
		By optimality of $\hbeta$ we can write $\frac{1}{2n} \|y- X\hbeta\|_2^2 + \lambda \|\hbeta\| \le \frac{1}{2n} \|y- X\beta^\star\|_2^2 + \lambda \|\beta^\star\|$. By plugging in $y = X\beta^\star + \varepsilon$ and after some algebraic calculation, we get
		\[
		\frac{1}{2n}\|X(\hbeta-\beta^\star)\|_2^2 + \lambda \|\hbeta\| \le \lambda \|\beta^\star\| + \frac{1}{n}\epsilon^\sT X (\hbeta - \beta^\star)\,.
		\]
		By the choice of $\lambda$, this implies that
		\begin{align}
			&\frac{1}{2n}\|X(\hbeta-\beta^\star)\|_2^2 \\ &\le \frac{1}{n}\epsilon^\sT X (\hbeta - \beta^\star) + \lambda \|\beta^\star\| - \lambda \|\hbeta\| \nonumber \\
			&\le \|\frac{1}{n}\epsilon^\sT X\|^\star \|\hbeta - \beta^\star\| + \lambda \|\beta^\star\| - \lambda \|\hbeta\| \nonumber \\
			&\le \frac{\lambda}{2} \|\hbeta - \beta^\star\| + \lambda \|\beta^\star\| - \lambda \|\hbeta\|\label{eq:rearrange}\\
			&\le \frac{1}{2} \left(\|\hbeta\| + \|\beta^\star\| \right) + \lambda \|\beta^\star\| - \lambda \|\hbeta\|\nonumber\\
			&\le \frac{3}{2} \lambda \|\beta^\star\|\,,\nonumber
		\end{align}
		where we use the triangle inequality in the penultimate step. This concludes the proof.
	\end{proof}

\begin{proof}[Proof of \autoref{lem:dualvar-UoS}]
The positive semidefiniteness assumption on $M_i$, for $i\in[p]$, makes $\beta^\sT M_i \beta$ a convex function in $\beta$ and therefore $f$ is convex. Moreover, $f(a\beta) = a^2f(\beta)$ for any $a\in\mathbb{R}$. Therefore, \cite[Lemma 3.5]{jalali2017variational} establishes that $\sqrt{f}$ is a semi-norm. 

Next, observe that $f(\beta) = \sup\{ \beta^\sT M \beta:~ M\in\conv(\Mset) \}$ as the objective is linear in $M$. Therefore, if there exists a positive definite matrix in $\conv(\Mset)$ then $f$ is strongly convex. Then, \cite[Lemma 3.5]{jalali2017variational} establishes that $\sqrt{f}$ is a norm. 

Suppose, for each $i\in[m]$, $M_i$ is an orthogonal projector; i.e., there exists an orthonormal matrix $U_i\in\mathbb{R}^{p\times d_i}$ for some $d_i\in[p]$ where $M_i = U_iU_i^\sT$. Then, for the compact set $A = \{ \theta:~ \langle \beta, \theta \rangle\leq \sqrt{f(\beta)} \}$ and $\sigma_A(\beta) = \sup_{\theta\in A}\langle \beta,\theta\rangle$, which denotes the support function for the set $A$, we have 
\[
\sigma_{A}(\beta) 
= \sqrt{f(\beta)} 
= \max_{i\in[m]} \norm{U_i^\sT\beta}_2
= \max \bigl\{ \langle \beta, \theta \rangle:~ \theta = U_iw,~ w\in\mathbb{S}^{d_i-1},~i\in[m] \bigr\}
= \sigma_B(\beta)
\]
where $B = \bigcup_{i\in[m]}\{U_iw:~w\in\mathbb{S}^{d_i-1},~i\in[m]\}$ is a compact set. By the above equality of support functions for the two closed sets $A$ and $B$, we have $\conv(A) = \conv(B)$. On the other hand, $B$ being a subset of $\mathbb{S}^{p-1}$ implies $B = \ext(\conv(B))$. Therefore, $\ext(\conv(A)) = B$. Observe that $A$ is the dual norm ball for $\sqrt{f}$. Moreover, $B = \Sc\cap \mathbb{S}^{p-1}$ for the given set $\Sc$. Piecing all these together, we establish the claim. 
\end{proof}

\begin{proof}[Proof of \autoref{lem:lam-upper-bnd}]
From the assumption, observe that 
\begin{align*}
\lambda 
\geq \frac{1}{n}\norm{X^\sT y}^\star
= \frac{1}{n}\sup_{\beta\neq 0} \frac{\beta^\sT X^\sT y}{\norm{\beta}} 
\geq \frac{1}{n}\sup_{\beta\neq 0} \frac{\beta^\sT X^\sT y - \frac{1}{2}\norm{X\beta}_2^2 }{\norm{\beta}}
\end{align*}
which after a rearrangement yields 
\[
\frac{1}{2n}\norm{X\beta-y}_2^2 +\lambda \norm{\beta}\geq \frac{1}{2n}\norm{y}_2^2
\]
for all $\beta\neq 0$. This establishes the optimality of $\hbeta=0$. 
\end{proof}

\begin{proof}[Proof of \autoref{lem:Gain}]
We first bound $\phi_1$. Fix a subset $J\subset [p]$, with $|J|\le k$. Using the concentration bound for singular values of matrices with i.i.d.~subgaussian rows (see e.g.~\citep[Equation (5.26)]{vershynin2012}), we get
\[
\|\frac{1}{n} X^\sT_J X_J - \Psi_{J,J}\|\le C\sqrt{\frac{k\log p}{n}}C_{\max}\,,
\]
with probability at least $1 - 2p^{-ck}$, where $c=c_\kappa$ and $C= C_\kappa$ depend on the subgaussian norm $\kappa$. By choosing $C$ large enough, we can make constant $c>0$ sufficiently large. The claim for $\phi_1$ then follows by union bounding over all 
subsets $J\subseteq[p]$, with $|J|\le k$.

We next bound $\phi_0$. For a random variable $Z$, denote by $\|Z\|_{\psi_1}$ and $\|Z\|_{\psi_2}$ the sub exponential and subgaussian norms of $Z$, respectively. For a random vector $Z$, these norms are defined as $\|Z\|_{\psi_1} = \sup\{\|Z^\sT v\|_{\psi_1}:\, \|v\|_2 = 1\}$ and $\|Z\|_{\psi_2} = \sup\{\|Z^\sT v\|_{\psi_2}:\, \|v\|_2 = 1\}$. 

Fix a subset $J\subset [p]$, with $|J|\le k-d+1$ and define $Z\equiv \frac{1}{\sqrt{|J|}}(X_J\one)$. We then have $\|Z_i\|_{\psi_2} = \frac{1}{\sqrt{|J|}} \|X_{i,J} \one\|_{\psi_2} \le \frac{1}{\sqrt{|J|}}\|(\Psi_{J,J})^{1/2} \one\|_2\times \|X_{i,J} (\Psi_{J,J})^{-1/2}\|_{\psi_2} \le \frac{\kappa}{\sqrt{|J|}} \|(\Psi_{J,J})^{1/2} \one\|_2$. Then, by \citep[Lemma (5.14)]{vershynin2012} we have $\|Z_i^2\|_{\psi_1}\le 2\|Z_i\|_{\psi_2}^2\le (2\kappa^2/|J|) (\one^\sT \Psi_{J,J}\one) \le 2\kappa^2 C_*$.
We also have
\begin{align*}
\mathbb{E}[Z_i^2] = \frac{1}{|J|}\mathbb{E}[\one^\sT X_{i,J}^\sT X_{i,J} \one] = \frac{1}{|J|} (\one^\sT \Psi_{J,J} \one) \le C_*\,.
\end{align*}

 Employing concentration tail bound for sub-exponential random variables, 
see e.g.~\citep[Corollary 5.17]{vershynin2012}, we obtain 
\[
\frac{1}{n}\|Z\|_2^2 \le C_* + 2\kappa^2 C_* C \sqrt{\frac{(k-d+1) \log p}{n}}\,,
\]
with probability at least $1 - 2p^{-c(k-d+1)}$, for some constant $c>0$ (depending on constants $C, C_*, \kappa>0$). By choosing $C$ large enough, we can make constant $c>0$ sufficiently large. Recall the definition of $\phi_0$, given by~\autoref{eq:phi0-1}, specialized to i.i.d.~noise entries:
\[
\phi_0\;\equiv \sup_{J \subseteq [p]: |J|\le k-d+1} \, \frac{\sigma^2 \norm{X_{J} \one}_2^2}{n |J|}\,.
\]

The claim on the $\phi_0$ bound follows by union bounding over all subsets $J\subseteq[p]$, with $|J|\le k-d+1$.
\end{proof}

\section{Proofs: Estimation Error}\label{sec:estimation}

\begin{proof}[Proof of \autoref{thm:estimation}]
	By optimality of $\hbeta$ we have
	\[
	\frac{1}{2n} \|X\hbeta-y\|_2^2+ \lambda \|\hbeta\| \le \frac{1}{2n} \|X\beta^\star-y\|_2^2+ \lambda \|\beta^\star\|.
	\]
	By rearranging the terms we get
	\[
		\frac{1}{2n}\|X(\hbeta-\beta^\star)\|_2^2 \le 
		\frac{1}{n} \langle X^\sT \epsilon, \hbeta-\beta^\star\rangle 
		+ \lambda \|\beta^\star\| - \lambda \|\hbeta\|\,,		
	\] 
	and using the choice of $\lambda$ and the Cauchy-Schwarz inequality (similar to~\eqref{eq:rearrange}), we have
	\begin{align}\label{eq:opt-estimation-tri}
		\frac{1}{2n}\|Xv\|_2^2 
		\le \frac{\lambda}{2} \|v\| + \lambda \|\beta^\star\| - \lambda \|\beta^\star+v\| 
		\leq \frac{3}{2}\lambda \norm{v}\,,
	\end{align}
	where $v = \hbeta-\beta^\star$ and we used the triangle inequality to get the last bound. 
	As a consequence, $v\in \ErrSet$, where $\ErrSet$ is given by~\eqref{eq:ErrSet}. Define 
	\begin{align}
	\rat(\ErrSet) \equiv \sup_{u\in \ErrSet} \;\frac{\norm{u}^2}{\frac{1}{n}\norm{Xu}_2^2}.
	\end{align}
Since $v\in\ErrSet$, the definition of $\rat=\rat(\ErrSet)$ implies 
\begin{align}\label{eq:pf-est-norm}
\norm{v} 
\leq \frac{\frac{1}{n}\norm{Xv}_2^2}{\lambda \norm{v}}\lambda \rat 
\leq 3\lambda \rat 
\end{align}
where the second inequality is an application of \eqref{eq:opt-estimation-tri}. Next, to bound $\norm{v}_2$, recall the definition of the restricted eigenvalue constant $\REc=\REc(\ErrSet)$ from \eqref{eq:def-REc} and observe that 
\begin{align}\label{eq:pf-est-L2}
\REc \norm{v}_2^2 \leq \frac{1}{n}\norm{Xv}_2^2 
\stackrel{(a)}{\leq} 3\lambda \norm{v}
\stackrel{(b)}{\leq} 9\lambda^2 \rat
\end{align}
where $(a)$ is due to \eqref{eq:opt-estimation-tri} and $(b)$ is by \eqref{eq:pf-est-norm}. 

Next, observe that 
\begin{align*}
	\rat(\ErrSet) 
	= \sup_{u\in \ErrSet} \;\frac{\norm{u}^2}{\norm{u}_2^2} \frac{\norm{u}_2^2}{\frac{1}{n}\norm{Xu}_2^2}
	\leq ( \sup_{u\in \ErrSet} \;\frac{\norm{u}^2}{\norm{u}_2^2}) \cdot (\sup_{u\in \ErrSet} \; \frac{\norm{u}_2^2}{\frac{1}{n}\norm{Xu}_2^2})
	\leq \frac{\vpsi^2(\ErrSet)}{\REc(\ErrSet)} \,,
\end{align*}
which together with \eqref{eq:pf-est-norm} and \eqref{eq:pf-est-L2} establishes the desired bounds. 
\end{proof}

\begin{proof}[An Alternative Proof of \autoref{thm:estimation}; with slightly worse constants]
	By optimality of $\hbeta$ we have
	\[
	\frac{1}{2n} \|X\hbeta-y\|_2^2+ \lambda \|\hbeta\| \le \frac{1}{2n} \|X\beta^\star-y\|_2^2+ \lambda \|\beta^\star\|.
	\]
	By rearranging the terms we get
	\[
		\frac{1}{2n}\|X(\hbeta-\beta^\star)\|_2^2 \le 
		\frac{1}{n} \langle X^\sT \epsilon, \hbeta-\beta^\star\rangle 
		+ \lambda \|\beta^\star\| - \lambda \|\hbeta\|\,,		
	\] 
	and using the choice of $\lambda$ and the Cauchy-Schwarz inequality (similar to~\eqref{eq:rearrange}), we have
	\begin{align}\label{eq:opt-estimation}
		\frac{1}{2n}\|Xv\|_2^2 \le \frac{\lambda}{2} \|v\| + \lambda \|\beta^\star\| - \lambda \|\beta^\star+v\|\,,
	\end{align}
	where $v = \hbeta-\beta^\star$. 
	As a consequence, $v\in \ErrSet$, where $\ErrSet$ is given by~\eqref{eq:ErrSet}. 
From the definition of the restricted norm compatibility constant in \eqref{eq:norm-compat}, we get $\norm{v} \leq \vpsi \norm{v}_2$ for any $v\in \ErrSet$ and for $\vpsi = \vpsi(\beta^\star;\nor)$. 	
	Therefore, by RE condition on $\ErrSet$ for $\hSigma$ we obtain
	\begin{align}	\label{eq:dum-ap}
	\frac{1}{n}\|Xv\|_2^2 \ge \REc \|v\|_2^2 \ge \frac{\REc}{\vpsi^2} \|v\|^2\,.
	\end{align}
	In addition, by using triangle inequality in~\eqref{eq:opt-estimation}, we have $1/(2n)\|Xv\|_2^2\le (3/2) \lambda \|v\|$ and so
	\begin{align}\label{eq:opt-estimation2}
		\frac{1}{n}\|Xv\|_2^2 + \lambda \|v\| &\le 4\lambda \|v\|
		\stackrel{(a)}{\le} \frac{4\lambda\vpsi}{\sqrt{n\REc}} \|Xv\|_2
		\stackrel{(b)}{\le} \frac{1}{2n} \|Xv\|_2^2 + \frac{8}{\REc}\lambda^2\vpsi^2
		\,.
	\end{align}
	where $(a)$ is by \eqref{eq:dum-ap} and $(b)$ holds true because $(\frac{1}{\sqrt{n}}\norm{Xv}_2 - \frac{4\lambda \vpsi}{\sqrt{\REc}} )^2\geq 0$. 
	Therefore,
	\begin{align}\label{eq:opt-estimation3}
		\frac{1}{2n}\|Xv\|_2^2 + \lambda \|v\|\le \frac{8}{\REc}\lambda^2\vpsi^2\,.
	\end{align}
	This implies $\|v\|\le 8\lambda\vpsi^2/\REc$, which proves claim~\eqref{eq:square-B}. 
	
	To prove claim~\eqref{eq:L2-B}, we again apply the RE condition to~\eqref{eq:opt-estimation3} and write
	\[
	\REc\|v\|_2^2\le \frac{1}{n} \|Xv\|_2^2\le \frac{1}{n} \|Xv\|_2^2 + 2\lambda\|v\| \le \frac{16}{\REc} \lambda^2 \vpsi^2\,, 
	\] 
	which gives the desired result.
\end{proof}

\begin{proof}[Proof of \autoref{rem:Eq}]
For any $\qc>1$, consider 
\begin{align}\label{eq:ErrSet-q}
	\ErrSet^{(\qc)}(\prans) &\equiv \bigl\{v 
		:~ \frac{1}{\qc}\|v\| + \|\beta^\star\| \ge \|\beta^\star+v\| \bigr\},
\end{align}
which for $\qc=2$ yields $\ErrSet^{(2)} = \ErrSet $ defined in \eqref{eq:ErrSet}. Note that $\ErrSet^{(\qc)}$ is the whole space for $0<\qc\leq 1$ which is not of interest in our discussion. 
\begin{itemize}
\item An easy adaptation of \autoref{lem:cones} yields
\[
\ErrSet^{(\qc)} \subseteq \cC^{(\qc)} \equiv \bigl\{v:~ \norm{v} \leq \frac{\qc}{\qc-1}\cdot \varphi(\pran) \cdot \norm{v}_2\bigr\}
\]
and implies $\vpsi(\ErrSet^{(\qc)}) \leq \frac{\qc}{\qc-1}\varphi(\pran)$; for any $q>1$. 
\item If $\lambda \geq \ql \norm{\frac{1}{n}X^\sT \epsilon}^\star$ for some $\ql>1$, then the prediction error bound of \autoref{lem:oracle} reads as $\frac{1}{n}\norm{X(\beta^\star - \hbeta)}_2^2 \leq 2(1+\frac{1}{\ql})\lambda \norm{\beta^\star}$, and the estimation error bounds of \autoref{thm:estimation} read as: \eqref{eq:square-B} reads as $\norm{\hbeta-\beta^\star} \leq 2(1+\frac{1}{\ql})\lambda \vpsi^2/\REc$ and \eqref{eq:L2-B} reads as $\norm{\hbeta-\beta^\star}_2 \leq 2(1+\frac{1}{\ql})\lambda \vpsi/\REc$ where $\vpsi = \vpsi(\ErrSet^{(\ql)})$ and $\REc = \REc(\ErrSet^{(\ql)})$. 	

\item 
Combining the above two items, for $\qc=\ql$, we get 
\begin{align*}
	\frac{1}{n}\norm{X(\beta^\star - \hbeta)}_2^2 
	&\leq 2(1+\frac{1}{\ql})\lambda \norm{\beta^\star}
	\\
	\norm{\hbeta-\beta^\star} 
	&\leq 2(1+\frac{1}{\ql})\frac{\lambda}{\REc} (\frac{\ql}{\ql-1})^2 \varphi^2
	= (\frac{2\varphi^2}{\REc})\cdot \frac{\ql(\ql+1)}{(\ql-1)^2} \lambda 
	\\
	\norm{\hbeta-\beta^\star}_2 
	&\leq 2(1+\frac{1}{\ql})\frac{\lambda}{\REc}\frac{\ql}{\ql-1}\varphi
	= (\frac{2\varphi}{\REc})\cdot \frac{\ql+1}{\ql-1} \lambda
\end{align*}
where we now use $\REc = \REc(\cC^{(\ql)}) \geq \REc(\ErrSet^{(\ql)})$. 
\item 
Observe that we can use any $\ql\in (1, \frac{\lambda}{\theta}]$ we wish in our analysis. 
Define $\theta = \norm{\frac{1}{n}X^\sT \epsilon}^\star$. It is easy to see that among all $\ql\in (1, \frac{\lambda}{\theta}]$, largest $\ql$ minimizes all three bounds (ignoring the dependence of $\REc = \REc(\cC^{(\ql)})$ on $\ql$) and $\REc(\cC^{(\lambda/\theta)})\geq \REc(\cC^{(\ql)})$ for any $\ql\in (1, \frac{\lambda}{\theta}]$. Therefore, plugging $\ql=\frac{\lambda}{\theta}$ we get 
\begin{align*}
	\frac{1}{n}\norm{X(\beta^\star - \hbeta)}_2^2 
	&\leq 2(\lambda+\theta) \norm{\beta^\star}
	\\
	\norm{\hbeta-\beta^\star} 
	&\leq (\frac{2\varphi^2}{\REc})\cdot \frac{\lambda^2(\lambda+\theta)}{(\lambda-\theta)^2} 
	\\
	\norm{\hbeta-\beta^\star}_2 
	&\leq (\frac{2\varphi}{\REc})\cdot \frac{\lambda(\lambda+\theta)}{\lambda-\theta} 
\end{align*}
for any $\lambda > \theta$ used in \eqref{eq:estimator}, where $\REc = \REc(\cC^{(\lambda/\theta)})$.

\item an adaptation of \autoref{thm:RE-random} yields $\REc = \REc(\cC^{(\lambda/\theta)}) = \lambda_{\min}/2$ for 
\[
n \geq (C^2 k \log p) \cdot (\lambda_{\min}^{-1}\varphi^2 \cdot 6 (\frac{\ql}{\ql-1})^2 )^2
= (36C^2 k \log p) (\lambda_{\min}^{-2} \varphi^4) (\frac{\lambda}{\lambda-\theta})^4.
\]
\end{itemize}
\end{proof}

\section{Proofs: RE for Subgaussian Designs}\label{app:proof-RE}
\begin{proof}[Proof of \autoref{thm:RE-random}]
	By triangle inequality we have
	\[
	v^\sT \hSigma v \ge v^\sT \Sigma v - |v^\sT (\Sigma - \hSigma) v| \ge \lambda_{\min} \|v\|_2^2 - |v^\sT (\Sigma - \hSigma) v|\,.
	\] 
	
	Let $\Gamma\equiv \Sigma - \hSigma$. Using the above inequality, it suffices to show that 
	\begin{align}\label{claim0}
		|v^\sT \Gamma v| \le \frac{1}{2}\lambda_{\min}\|v\|_2^2\,, \quad \text{for all } v\in \cC(\varphi)\,.
	\end{align}
	
	\begin{lemma}\label{lem:quad-upper}
	Consider a closed scale-invariant set $\Sc$ that span $\mathbb{R}^p$ as well as the corresponding structure norm $\nor_\Sc$. 
		For a given matrix $\Gamma\in\mathbb{R}^{p\times p}$, and a given value $\delta>0$, 
		suppose the following holds for all $v\in (\Sc \oplus \Sc)$, 
		\begin{align}\label{eq:quad-upper-v-2s}
			\abs{v^\sT \Gamma v} \leq \delta \|v\|_2^2.
		\end{align}
		Then, we have the following for all $v\in\mathbb{R}^p$, 
		\begin{align}
			\abs{v^\sT \Gamma v} \leq 3\delta \norm{v}_\Sc^2.
		\end{align}
	\end{lemma}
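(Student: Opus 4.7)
The plan is to combine an atomic decomposition of $v$ with a polarization identity. Observe that $v^\sT \Gamma v = v^\sT \tilde\Gamma v$ for $\tilde\Gamma \equiv (\Gamma + \Gamma^\sT)/2$, and the hypothesis~\eqref{eq:quad-upper-v-2s} transfers to $\tilde\Gamma$ verbatim; hence without loss of generality $\Gamma$ is symmetric. Recalling from~\eqref{eq:Snorm-gauge}--\eqref{eq:gauge} that $\Bc_\Sc = \conv(\Sc\cap\mathbb{S}^{p-1})$, Carath\'eodory's theorem then produces a finite decomposition $v = \sum_{i=1}^N c_i \omega_i$ with $c_i \geq 0$, $\omega_i \in \Sc \cap \mathbb{S}^{p-1}$, and $\sum_{i=1}^N c_i = \norm{v}_\Sc$.

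The key step is the polarization identity for symmetric $\Gamma$,
\[
\omega_i^\sT \Gamma \omega_j = \tfrac{1}{2}\bigl[(\omega_i+\omega_j)^\sT \Gamma (\omega_i+\omega_j) - \omega_i^\sT \Gamma \omega_i - \omega_j^\sT \Gamma \omega_j\bigr],
\]
combined with the observation that each of the three vectors on the right lies in $\Sc \oplus \Sc$: scale invariance forces $0 \in \Sc$, so $\Sc \subseteq \Sc \oplus \Sc$ (giving $\omega_i, \omega_j \in \Sc \oplus \Sc$), while $\omega_i + \omega_j \in \Sc + \Sc = \Sc \oplus \Sc$ by construction. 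Applying the hypothesis~\eqref{eq:quad-upper-v-2s} to each of the three terms, together with the triangle inequality bound $\norm{\omega_i + \omega_j}_2^2 \leq 4$, yields $|\omega_i^\sT \Gamma \omega_j| \leq \tfrac{1}{2}(4\delta + \delta + \delta) = 3\delta$. Expanding the quadratic form over the atomic decomposition,
\[
|v^\sT \Gamma v| = \Big|\sum_{i,j} c_i c_j\, \omega_i^\sT \Gamma \omega_j\Big| \leq 3\delta \Big(\sum_i c_i\Big)^{\!2} = 3\delta\, \norm{v}_\Sc^2,
\]
which is the claimed conclusion.

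No substantive obstacle is expected; the only point requiring attention is the existence of an atomic decomposition attaining the infimum in~\eqref{eq:gauge}, which follows from compactness of $\Bc_\Sc$ together with Carath\'eodory's theorem. As a minor remark, the constant $3$ could be sharpened to $2$ by instead using the four-term polarization $\omega_i^\sT \Gamma \omega_j = \tfrac{1}{4}[(\omega_i+\omega_j)^\sT \Gamma(\omega_i+\omega_j) - (\omega_i-\omega_j)^\sT\Gamma(\omega_i-\omega_j)]$, where $\omega_i - \omega_j \in \Sc \oplus \Sc$ follows from $-\omega_j \in \Sc$ by scale invariance; this refinement is not needed for the stated bound.
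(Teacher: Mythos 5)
Your proof is correct and follows essentially the same route as the paper's: decompose $v$ into atoms of $\Bc_\Sc$ (the paper scales the atoms to have $\ell_2$ norm $\norm{v}_\Sc$ rather than carrying the scale in the weights, which is equivalent), bound each cross term $\abs{\omega_i^\sT \Gamma \omega_j}$ by $3\delta$ via polarization and the hypothesis applied to $\omega_i$, $\omega_j$, and $\omega_i+\omega_j \in \Sc\oplus\Sc$, then sum over the convex combination. Your closing remark on sharpening the constant is also sound (indeed, the four-term polarization combined with the parallelogram law even gives constant $1$), but as you note it is not needed for the stated bound.
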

	We follow a similar approach to proof of Lemma 12 in \citep{loh2012}. 
	\begin{proof}[Proof of \autoref{lem:quad-upper}]
		By definition of $\nor_\Sc$, there exists a set of $\alpha_i>0$ and $v_i\in \Sc$ for which ${v}=\sum_i \alpha_i v_i$, $\sum_i \alpha_i=1$ and $\norm{v_i}_2 \leq \|{v}\|$. Observe that, 
		\[
		v^\sT \Gamma v = \sum_{i,j} \alpha_i \alpha_j v_i^\sT \Gamma v_j. 
		\]
		Moreover, each of $v_i$, $v_j$, and $(v_i+v_j)/2$, are in $ (\Sc \oplus \Sc)$. Therefore, using~\eqref{eq:quad-upper-v-2s} we get
		\begin{align*}
			\abs{v_i^\sT \Gamma v_j} &\leq \frac{1}{2}\abs{(v_i+v_j)^\sT \Gamma (v_i+v_j)} + \frac{1}{2}\abs{v_i^\sT \Gamma v_i} + \frac{1}{2} \abs{v_j^\sT \Gamma v_j}\\
			&\leq \frac{\delta}{2} \|v_i+v_j\|_2^2 + \frac{\delta}{2} \|v_i\|_2^2 + \frac{\delta}{2} \|v_j\|_2^2\\
			&\le 3 \delta\|{v}\|^2 .
		\end{align*}
		Since $\alpha_i$'s define a convex combination, we get $\abs{{v}^\sT \Gamma {v}} \leq 3\delta\|{v}\|^2$. 
	\end{proof}
	
	By virtue of \autoref{lem:quad-upper} and definition of $\cC(\varphi)$, in order to prove Claim~\eqref{claim0} it suffices to show that~\eqref{eq:quad-upper-v-2s} holds for $\delta = {\lambda_{\min}}/{(24\varphi^2)}$.
	
	\begin{lemma}\label{lem:concentration}
		Under the assumptions of \autoref{thm:RE-random}, for any constants $c_0, c_1>0$, there exists $C = C(\lambda_{\max},\lambda_{\min},\kappa,c_0, c_1)$ such that
		\begin{align*}
		\max\left\{\|(\hSigma - \Sigma)_{A,A}\|_2:\,\; A\subseteq[p],\, |A| \le c_0 k \right\} \le C\sqrt{\frac{k\log p}{n}}\,,
		\end{align*}
		with probability at least $1-2p^{-c_1k}$.
	\end{lemma}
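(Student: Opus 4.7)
The plan is to reduce the supremum over subsets $A\subseteq[p]$ with $|A|\le c_0 k$ to a union bound over fixed subsets, each controlled by a standard concentration inequality for subgaussian sample covariance matrices. For any fixed $A$ of size $m$, the rows of $X_A\in\mathbb{R}^{n\times m}$ are i.i.d.\ subgaussian vectors with covariance $\Sigma_{A,A}$ and subgaussian norm bounded by a multiple of $\kappa$; this uses that restricting a subgaussian vector to a coordinate subset cannot increase its subgaussian norm, since a unit test vector in $\mathbb{R}^m$ extends by zeros to a unit test vector in $\mathbb{R}^p$, and that $\lambda_{\max}(\Sigma)\le c_{\max}$.

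For each such fixed $A$, I would invoke the standard covariance concentration bound for subgaussian rows (e.g.\ \citep[Theorem 5.39 and Remark 5.40]{vershynin2012}): there exist constants $c_\kappa,C_\kappa>0$, depending only on $\kappa$, such that for every $t\ge0$,
\begin{equation*}
\mathbb{P}\Bigl(\|\hSigma_{A,A}-\Sigma_{A,A}\|_2 > C_\kappa\, c_{\max}\,\max(\delta,\delta^2)\Bigr)\le 2e^{-c_\kappa t^2},
\qquad \delta := \sqrt{m/n}+t/\sqrt n.
\end{equation*}
I would then set $t=\beta\sqrt{k\log p}$ for a constant $\beta=\beta(c_0,c_1,c_\kappa)$ to be chosen.

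For the union step, the number of admissible subsets satisfies
\begin{equation*}
\#\{A\subseteq[p]:|A|\le c_0 k\}\;\le\;(c_0 k+1)\binom{p}{\lfloor c_0 k\rfloor}\;\le\;(c_0 k+1)\Bigl(\tfrac{ep}{c_0 k}\Bigr)^{c_0 k},
\end{equation*}
whose logarithm is at most $(c_0+o(1))k\log p$. Choosing $\beta$ so that $c_\kappa\beta^2\ge c_0+c_1+1$ makes the union-bounded failure probability at most $2p^{-c_1 k}$. On the good event, $\delta\le (\sqrt{c_0}+\beta)\sqrt{k\log p/n}$, and the sample-size hypothesis~\eqref{n-condition} of \autoref{thm:RE-random} forces $k\log p/n$ to be bounded by a constant depending on $(\lambda_{\min},\varphi)$, so $\delta^2\le\delta$ and the bound collapses to $\|\hSigma_{A,A}-\Sigma_{A,A}\|_2\le C\sqrt{k\log p/n}$ uniformly over admissible $A$, with $C$ of the required form $C(\lambda_{\max},\lambda_{\min},\kappa,c_0,c_1)$.

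There is no serious obstacle; the whole argument is bookkeeping. The two points that require care are (i) picking $\beta$ large enough so that the $e^{-c_\kappa\beta^2 k\log p}$ factor absorbs the combinatorial $\binom{p}{c_0 k}$ term after taking logs, and (ii) checking that~\eqref{n-condition} keeps the analysis in the linear regime of the covariance concentration inequality, so that the $\delta^2$ contribution is dominated by $\delta$ and a single factor $\sqrt{k\log p/n}$ suffices on the right-hand side.
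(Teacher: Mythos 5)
Your proposal is correct and follows essentially the same route as the paper's own proof: fix a subset $A$, apply the Vershynin subgaussian covariance concentration bound (Remark 5.40) with $t \asymp \sqrt{k\log p}$, use the sample-size condition \eqref{n-condition} to stay in the regime where $\max\{\delta,\delta^2\}=\delta$, and finish with a union bound over the at most $p^{c_0 k}$-many admissible subsets, choosing the constant in $t$ large enough to absorb the combinatorial factor. The only differences are bookkeeping-level (your binomial-coefficient count of subsets versus the paper's cruder $p^{c_0 k}$ bound, and your explicit tracking of $c_{\max}$ in the deviation constant), neither of which changes the argument.
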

	Fix an arbitrary $v\in \Sc\oplus\Sc$. Then, by our assumption that $\Sc\subseteq\{\beta:~\card(\beta)\leq k\}$, $v$ is $2k$ sparse. Denote by $A$ the support of $v$. Then, by employing \autoref{lem:concentration} with $c_0 =2$, we have
	\begin{align*}
		|v^\sT \Gamma v| = |v_A^\sT \Gamma_{A,A} v_{A}|\le \|\Gamma_{A,A}\|_2 \|v_A\|_2^2 
		\le C\sqrt{\frac{k\log p}{n}} \|v\|_2^2 \,,
	\end{align*} 
	with probability at least $1-2p^{-c_1k}$. Hence, for $n\ge (24C/\lambda_{\min})^2 \varphi^4 k\log p$, we obtain \eqref{eq:quad-upper-v-2s} for $\delta = {\lambda_{\min}}/{(24\varphi^2)}$. This completes the proof.
\end{proof}

\begin{proof}[Proof of \autoref{lem:concentration}]
	Fix $A\subseteq[p]$, with $|A|\le c_0 k$ and let $X_A\in \reals^{n\times |A|}$ be the sub-matrix containing columns of $X$ that are in the set $A$. We can write $\hSigma_{A,A} = (X_A^\sT X_A)/n$ and $\Sigma_{A,A} = \mathbb{E}(X_A^\sT X_A)$. By employing the result of Remark 5.40~in~\citep{vershynin2012}, for every $t\ge 0$, with probability at least $1-2 e^{-ct^2}$ the following holds:
	\[
	\|\hSigma_{A,A}-\Sigma_{A,A}\|_{2} \le \max\{\delta , \delta^2\}\,, \quad \text{ where } \delta = C\sqrt{\frac{k}{n}} + \frac{t}{\sqrt{n}}\,,
	\]
	where $C=C(\kappa, c_0)$ and $c= c({\kappa}) > 0$ depend only on the subgaussian norms of the rows of $X$ and constant~$c_0$. Choosing $t = \sqrt{\tilde{c} k \log p}$, and using \autoref{n-condition}, we get that with probability at least $1 - 2p^{c\tilde{c}k}$, 
	\[
	\|\hSigma_{A,A}-\Sigma_{A,A}\|_2\le (C+\sqrt{\tilde{c}}) \sqrt{\frac{k\log p}{n}}\,.
	\]
	We next define $\cF \equiv \{A\subseteq[p]: \, |A|\le c_0 k\}$. Note that $|\cF|\le p^{c_0 k}$. The proof is completed by taking union bound over all sets in $\cF$ and choosing $\tilde{c} = (c_0+c_1)/c$.
\end{proof}

\section{Computing $\varphi$ for Different Families of Norms}\label{app:varphi}
In each section below, we provide a characterization for the subdifferential and for the dual norm, and compute or upper bound $\varphi$. 

\subsection{Auxiliary Lemmas}
\begin{lemma}\label{lem:dist-ei-simplex}
For $\simplex_p = \{u\geq \zero_p:~ \one^\sT u = 1\}$ we have $\dist^2(-e_i, \simplex_p ) = 1+1/\max\{p-1,1/3\}$. 
\end{lemma}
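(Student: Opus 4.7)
The plan is to set up the projection problem directly and optimize by first eliminating the off-$i$ coordinates and then optimizing over $u_i$. By symmetry we may fix $i=1$. Writing $\|u+e_i\|_2^2 = u_i^2 + 2u_i + 1 + \sum_{j\neq i} u_j^2$, we minimize over $u\in\simplex_p$.

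First, I would fix $u_i\in[0,1]$ and minimize $\sum_{j\neq i} u_j^2$ subject to $\sum_{j\neq i} u_j = 1-u_i$ and $u_j\geq 0$. By convexity and symmetry the minimum is attained at $u_j = (1-u_i)/(p-1)$ (for $p\geq 2$), with value $(1-u_i)^2/(p-1)$. Thus the problem reduces to minimizing
\[
g(u_i) \;\equiv\; u_i^2 + 2u_i + 1 + \frac{(1-u_i)^2}{p-1}, \qquad u_i \in [0,1].
\]

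Next I compute $g'(u_i) = 2u_i + 2 - 2(1-u_i)/(p-1) = (2p/(p-1))\,u_i + 2(p-2)/(p-1)$, which for $p\geq 2$ is strictly positive on $[0,1]$ (the unconstrained critical point sits at $u_i = (2-p)/p \leq 0$). Hence the minimum is at $u_i=0$, giving $g(0) = 1 + 1/(p-1)$. Since $p\geq 2$ means $\max\{p-1,1/3\} = p-1$, this matches the claimed formula. The edge case $p=1$ is handled separately: $\simplex_1=\{1\}$ gives $\dist^2(-e_1,\simplex_1) = 4 = 1 + 3 = 1 + 1/\max\{0,1/3\}$, again matching.

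There is no real obstacle; the only subtle point is the $\max\{p-1,1/3\}$ in the statement, whose role is purely to unify the $p=1$ and $p\geq 2$ cases into a single formula (the value $1/3$ is simply $(4-1)^{-1}$ and has no intrinsic geometric meaning beyond making the expression continuous across the two regimes).
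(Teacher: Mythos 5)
Your argument is correct, and it reaches the stated value by a genuinely different route from the paper. The paper's proof is of the ``guess and verify'' type: it exhibits the candidate projection $b=\frac{1}{p-1}(\one_p-e_i)$ and confirms optimality via the Kolmogorov criterion $\langle -e_i-b,\,u-b\rangle\le 0$ for all $u\in\simplex_p$, after which the distance is read off from $\norm{-e_i-b}_2^2=1+\frac{1}{p-1}$. You instead derive the minimizer from scratch by a two-stage reduction: for fixed $u_i$ the inner minimization of $\sum_{j\ne i}u_j^2$ under $\sum_{j\ne i}u_j=1-u_i$, $u_j\ge 0$ is solved by the uniform vector (convexity plus symmetry), and the resulting one-variable convex function $g(u_i)$ is then minimized on $[0,1]$, its unconstrained critical point $(2-p)/p\le 0$ forcing the boundary value $u_i=0$. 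Your route has the advantage of not requiring one to guess the projection point, at the cost of a slightly longer calculation; the paper's verification is shorter once the candidate is known. One small wording slip: for $p=2$ the derivative $g'(0)=0$, so $g'$ is only nonnegative (not strictly positive) on $[0,1]$; your parenthetical observation that the critical point lies at $(2-p)/p\le 0$ together with convexity of $g$ already gives the correct conclusion, so nothing breaks. Your handling of the $p=1$ edge case and of the role of $\max\{p-1,1/3\}$ matches the statement.
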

\begin{proof}[Proof of \autoref{lem:dist-ei-simplex}] 
The case of $p=1$ is easy to verify; hence assume $p\geq 2$. Since the projection is unique, we provide a candidate and verify its optimality. In fact, we claim that $\proj(-e_i;\simplex_p) = b = \frac{1}{p-1}(\one_p - e_i)$. By Kolmogorov criteria, for this to be the projection, we need $\<-e_i - b, u-b\>\leq 0$ for any $u\in \simplex_p$; which can be easily verified. This establishes the claim. 
\end{proof}

\begin{lemma}\label{lem:dist-weighted-simplex}
For $w\in\mathbb{R}_{++}^p$, $A = \{u\geq 0:~ \langle w,u\rangle = 1\}$, and any $i\in[p]$, we have 
\[
\dist^2( -\frac{1}{w_i}e_i , A ) = 
\begin{cases}
 \frac{4}{w_i^2}& p=1, \\
\frac{1}{w_i^2} + \frac{1}{\norm{w}_2^2 - w_i^2}& p > 1,~ 2w_i^2 \leq \norm{w}_2^2, \\
\frac{4}{\norm{w}_2^2}& p > 1,~ 2w_i^2 \geq \norm{w}_2^2, \\
\end{cases}
\] 
where $e_i$ is the $i$-th standard basis vector. 
\end{lemma}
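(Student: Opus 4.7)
}

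The plan is to set $q = -\tfrac{1}{w_i}e_i$ and compute $\proj(q; A)$ via KKT conditions for the convex quadratic program
$\min_u \tfrac{1}{2}\norm{u - q}_2^2$ subject to $u \ge 0$ and $\langle w, u\rangle = 1$. The case $p = 1$ is immediate since then $A = \{1/w_1\}$ and the squared distance is $(1/w_1 + 1/w_1)^2 = 4/w_1^2$, matching the first branch, so from here on assume $p > 1$.

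Introducing a Lagrange multiplier $\lambda \in \mathbb{R}$ for the equality and $\mu \ge 0$ for the inequality with complementary slackness $\mu_j u_j = 0$, stationarity reads $u = q + \lambda w + \mu$. For $j \neq i$ we get $u_j = \lambda w_j + \mu_j$; taking $\lambda \ge 0$ (which will be verified a posteriori) we set $\mu_j = 0$ and $u_j = \lambda w_j$. For the $i$-th coordinate, $u_i = -\tfrac{1}{w_i} + \lambda w_i + \mu_i$, and I will split into two cases depending on whether the inequality is active at the optimum.

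In the inactive case $u_i > 0$, complementarity forces $\mu_i = 0$, so $u_i = \lambda w_i - 1/w_i$, which requires $\lambda > 1/w_i^2$. Substituting into $\langle w, u\rangle = 1$ gives $-1 + \lambda\norm{w}_2^2 = 1$, i.e., $\lambda = 2/\norm{w}_2^2$, and positivity of $u_i$ becomes exactly $2 w_i^2 \ge \norm{w}_2^2$. A direct computation then yields $\norm{u - q}_2^2 = \sum_j (\lambda w_j)^2 = \lambda^2 \norm{w}_2^2 = 4/\norm{w}_2^2$. In the active case $u_i = 0$, stationarity gives $\mu_i = 1/w_i - \lambda w_i \ge 0$, i.e., $\lambda \le 1/w_i^2$. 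The equality constraint reduces to $\lambda(\norm{w}_2^2 - w_i^2) = 1$, so $\lambda = 1/(\norm{w}_2^2 - w_i^2)$, whose feasibility $\lambda \le 1/w_i^2$ is exactly $2 w_i^2 \le \norm{w}_2^2$. The squared distance then splits as $(u_i - q_i)^2 + \sum_{j\ne i}(\lambda w_j)^2 = 1/w_i^2 + \lambda^2(\norm{w}_2^2 - w_i^2) = 1/w_i^2 + 1/(\norm{w}_2^2 - w_i^2)$.

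The only mild subtlety is checking that the two KKT branches exhaust all feasible configurations and agree at the threshold $2 w_i^2 = \norm{w}_2^2$, where both expressions collapse to $4/\norm{w}_2^2$; since the projection is unique by strict convexity, this consistency check together with the feasibility conditions above completes the proof. I do not foresee a real obstacle here: the main work is bookkeeping the KKT signs correctly, as the sign convention between the multiplier $\mu$ and the stationarity equation is easy to reverse by mistake.
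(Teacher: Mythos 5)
Your proof is correct. The paper handles this lemma differently: it pulls the two candidate projections out of thin air --- $b = \frac{1}{\norm{w}_2^2 - w_i^2}(w - w_ie_i)$ when $2w_i^2 \leq \norm{w}_2^2$ and $b = \frac{2}{\norm{w}_2^2}w - \frac{1}{w_i}e_i$ otherwise --- and verifies each via the Kolmogorov criterion, i.e., by checking $\langle -\frac{1}{w_i}e_i - b,\, u - b\rangle \leq 0$ for all $u \in A$ directly. Your KKT derivation produces exactly the same two points (your active and inactive branches correspond to the paper's first and second candidates, respectively), but it \emph{derives} them rather than verifying them, which is arguably more instructive; the price is that you must argue the two branches are exhaustive, which you do correctly by noting that strict convexity makes KKT necessary and sufficient and that the two feasibility conditions $\lambda \leq 1/w_i^2$ and $\lambda \geq 1/w_i^2$ partition the parameter space (agreeing at the threshold). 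One small point worth tightening: your claim that $\mu_j = 0$ for all $j \neq i$ deserves the one-line justification that $\lambda$ must be positive (if $\lambda \leq 0$ then all $u_j$ with $j \neq i$ vanish and the equality constraint forces $u_i = 1/w_i > 0$, hence $\mu_i = 0$ and $u_i = -1/w_i + \lambda w_i < 0$, a contradiction); since the $\lambda$ you compute in each branch is manifestly positive, your a posteriori check closes this. Both arguments are equally rigorous once complete; the paper's is shorter, yours explains where the candidates come from.
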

\begin{proof}[Proof of \autoref{lem:dist-weighted-simplex}] 
The case of $p=1$ is easy to verify; hence assume $p\geq 2$. Since the projection is unique, we provide a candidate and verify its optimality. There are two cases (illustrated in \autoref{fig:dist-weighted-simplex}): 
\begin{itemize}
\item If $2w_i^2 \leq \norm{w}_2^2$ then we claim that 
$\proj(-\frac{1}{w_i}e_i;A) = b = \frac{1}{\norm{w}_2^2 - w_i^2} (w - w_ie_i)$. To prove the claim, consider any $u\in A$ and observe that, 
\[
\langle -\frac{1}{w_i}e_i - b, u-b \rangle
= \norm{b}_2^2 - \langle b,u\rangle - \frac{u_i}{w_i}
= \frac{1}{\norm{w}_2^2 - w_i^2} - \frac{1-w_iu_i}{\norm{w}_2^2 - w_i^2} - \frac{u_i}{w_i}
= \frac{u_i}{w_i} ( \frac{2w_i^2 - \norm{w}_2^2}{\norm{w}_2^2 - w_i^2} ) \leq 0
\]
which establishes the claim by Kolmogorov criteria. 

\item If $2w_i^2 \geq \norm{w}_2^2$ then we claim that 
$\proj(-\frac{1}{w_i}e_i;A) = b = \frac{2}{\norm{w}_2^2}w - \frac{1}{w_i}e_i \geq 0$ with $\norm{b}_2^2 = \frac{1}{w_i^2}$. To prove the claim, consider any $u\in A$ and observe that, 
\begin{align*}
\langle -\frac{1}{w_i}e_i - b, u-b \rangle
= \norm{b}_2^2 - \langle b,u\rangle - \frac{u_i}{w_i} + \frac{2}{\norm{w}_2^2} - \frac{1}{w_i^2} = 0 
\end{align*}
which establishes the claim by Kolmogorov criteria. Note that the condition was used to make sure $b\in A$. 

\end{itemize}
With the projection at hand, calculating the distances is straightforward. 
\end{proof}

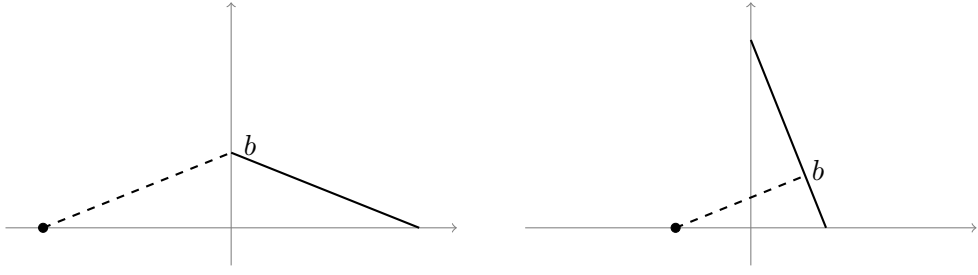
\begin{figure}[h]
\centering
\begin{tikzpicture}
	\draw [gray, ->] (-3,0)--(3,0);
	\draw [gray, ->] (0,-0.5)--(0,3);
	\draw [thick] (2.5,0)--(0,1);
	\draw [fill=black] (-2.5,0) circle [radius=.06];
	\draw [thick, dashed] (-2.5,0)--(0,1) node[pos=1.1] {$b$};
\end{tikzpicture}
\qquad
\begin{tikzpicture}
	\draw [gray, ->] (-3,0)--(3,0);
	\draw [gray, ->] (0,-0.5)--(0,3);
	\draw [thick] (0,2.5)--(1,0);
	\draw [fill=black] (-1,0) circle [radius=.06];
	\draw [thick, dashed] (-1,0)--($(1,0)!(-1,0)!(0,2.5)$) node[pos=1.1] {$b$};
	
\end{tikzpicture}
\caption{Illustrating the two possibilities in Proof of \autoref{lem:dist-weighted-simplex}. The thick line segments represent the set $A$ and the dot represents $-\frac{1}{w_i}e_i$. The projection is denoted by $b$. }
\label{fig:dist-weighted-simplex}
\end{figure}

\begin{lemma}\label{lem:min-norm-simplex}
For a given $w\in\mathbb{R}_{++}^p$, we have 
$\min\{\norm{u}_2^2:~ \langle w,u\rangle=1,~ u\geq 0\} = \frac{1}{\norm{w}_2^2}$ and 
$\argmin\{\norm{u}_2^2:~ \langle w,u\rangle=1,~ u\geq 0\} = \frac{1}{\norm{w}_2^2}w$. 
\end{lemma}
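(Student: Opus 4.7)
The plan is to observe that this is a strictly convex quadratic program over a nonempty closed convex feasible set, hence it admits a unique minimizer. The strategy is to first solve the relaxation obtained by dropping the nonnegativity constraint, and then verify that the resulting minimizer is nonetheless feasible thanks to the hypothesis $w \in \mathbb{R}_{++}^p$.

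First, I would apply the Cauchy--Schwarz inequality to the linear constraint: for any $u$ with $\langle w,u\rangle =1$, we have $1 = \langle w,u\rangle \le \norm{w}_2 \norm{u}_2$, hence $\norm{u}_2^2 \ge 1/\norm{w}_2^2$. This provides the lower bound on the objective, valid even without the nonnegativity constraint. Next, I would exhibit $u^\star = w/\norm{w}_2^2$ as a feasible candidate: clearly $\langle w,u^\star\rangle = \norm{w}_2^2/\norm{w}_2^2 = 1$, and since $w \in \mathbb{R}_{++}^p$, we have $u^\star \ge 0$, so $u^\star$ lies in the feasible set. A direct computation gives $\norm{u^\star}_2^2 = 1/\norm{w}_2^2$, matching the lower bound, so $u^\star$ is optimal.

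Uniqueness of the minimizer follows from strict convexity of $\|\cdot\|_2^2$ combined with convexity of the feasible set (intersection of a hyperplane and the nonnegative orthant). This identifies $\argmin = \{u^\star\}$ and completes the proof.

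There is no real obstacle here: the only subtle point is that we need $w \in \mathbb{R}_{++}^p$ (strict positivity of all entries) to ensure that the unconstrained minimizer $w/\norm{w}_2^2$ automatically satisfies the $u \ge 0$ constraint; without this assumption the inequality-constrained problem could have a strictly larger optimal value.
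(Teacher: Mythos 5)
Your proof is correct. The paper disposes of this lemma in one line by writing down the Lagrange dual of the quadratic program, whereas you argue directly: Cauchy--Schwarz gives the lower bound $\norm{u}_2^2 \ge 1/\norm{w}_2^2$ on the relaxed problem (hyperplane constraint only), the candidate $u^\star = w/\norm{w}_2^2$ is feasible for the full problem precisely because $w\in\mathbb{R}_{++}^p$, and it attains the bound; uniqueness then follows from strict convexity of $\norm{\cdot}_2^2$ over the convex feasible set. The two routes are close in spirit --- the duality argument produces the same multiplier structure that your Cauchy--Schwarz certificate encodes, and both hinge on the observation that the sign-unconstrained minimizer happens to satisfy $u\ge 0$ --- but yours is more elementary and self-contained, exhibiting an explicit optimal point rather than invoking duality machinery. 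One small refinement: strict positivity of $w$ is more than is needed; any nonzero $w\ge 0$ already makes $w/\norm{w}_2^2$ feasible, so your closing caveat about the value changing applies only when $w$ has genuinely negative entries, which is outside the lemma's hypothesis anyway.
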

\begin{proof}[Proof of \autoref{lem:min-norm-simplex}]
Writing down the Lagrange dual of this optimization problem we get the desired result. 
\end{proof}

\begin{lemma}\label{lem:ext-inf-conv}
Consider two atomic norms and their infimal convolution. The extreme points of the ball for infimal convolution is a subset of the union of extreme points for each norm ball. 
\end{lemma}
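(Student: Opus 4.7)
The plan is to identify the unit ball of the infimal convolution in terms of the unit balls of the two base norms, and then invoke a standard convex-geometric fact about extreme points of the convex hull of a union. Let $\nor_{(1)}$ and $\nor_{(2)}$ be the two atomic norms with unit balls $\Bc_1$ and $\Bc_2$, and let $\nor$ denote their infimal convolution, with unit ball $\Bc_\sq$. As already noted in the excerpt (following \autoref{ex:Mset-dual-owl}), one has $\nor^\star = \max\{\nor_{(1)}^\star, \nor_{(2)}^\star\}$, so the dual unit ball satisfies $\Bc_\sq^\star = \Bc_1^\star \cap \Bc_2^\star$.

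First I would take the polar of this identity. Since all balls are symmetric compact convex bodies containing the origin in the interior (because the base norms span $\mathbb{R}^p$), polarity is an involution and distributes as $(A\cap B)^\circ = \overline{\conv}(A^\circ \cup B^\circ)$. Applied to $\Bc_\sq^\star = \Bc_1^\star \cap \Bc_2^\star$, this yields
\begin{equation*}
\Bc_\sq = \conv(\Bc_1 \cup \Bc_2),
\end{equation*}
where the closure is unnecessary because both $\Bc_1$ and $\Bc_2$ are compact.

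Next I would apply the following elementary fact: if $A$ and $B$ are compact convex sets, then $\ext(\conv(A\cup B)) \subseteq \ext(A) \cup \ext(B)$. The short argument is that any extreme point $x$ of $\conv(A\cup B)$ lies in $A\cup B$ (since every point of the convex hull is a convex combination of points from the union, and extremality forces this combination to be trivial); and if, say, $x\in A$ were expressible as $x=(a_1+a_2)/2$ with $a_1,a_2\in A$ distinct, this same decomposition would contradict extremality of $x$ in $\conv(A\cup B)$. Applied to $A=\Bc_1$, $B=\Bc_2$, this gives $\ext(\Bc_\sq) \subseteq \ext(\Bc_1)\cup \ext(\Bc_2)$, which is precisely the claim that the atoms of the infimal convolution are contained in the union of the atoms of the two base norms.

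I do not expect a serious obstacle here: the only subtlety is making sure compactness and containment of the origin in the interior hold so that polarity is well-behaved, but these follow from the standing assumption that the atomic sets are compact, symmetric, and span $\mathbb{R}^p$. The identification $\Bc_\sq=\conv(\Bc_1\cup\Bc_2)$ could alternatively be verified directly from the definition of infimal convolution without going through polarity, by checking that $\beta\in\Bc_\sq$ iff $\beta = u+v$ with $\norm{u}_{(1)}+\norm{v}_{(2)}\le 1$, which is equivalent to $\beta$ being a convex combination of a point in $\Bc_1$ and a point in $\Bc_2$; this provides a sanity check on the polarity computation.
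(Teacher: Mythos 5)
Your proof is correct. The paper itself gives only a one-line justification (``easy from \eqref{eq:atomic_repr}''), whose intended route is purely primal: writing each norm in the atomic form \eqref{eq:atomic_repr}, the infimal convolution of the two atomic norms is itself the atomic norm with atom set $\Ac_1\cup\Ac_2$, so its unit ball is $\conv(\Bc_1\cup\Bc_2)$ immediately, without any duality. You reach the same key identity $\Bc_\sq=\conv(\Bc_1\cup\Bc_2)$ by a detour through the dual: $\nor^\star=\max\{\nor_{(1)}^\star,\nor_{(2)}^\star\}$, hence $\Bc_\sq^\star=\Bc_1^\star\cap\Bc_2^\star$, then the polar identity $(A\cap B)^\circ=\overline{\conv}(A^\circ\cup B^\circ)$. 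Both routes must then finish with the same Milman-type fact $\ext(\conv(A\cup B))\subseteq\ext(A)\cup\ext(B)$ for compact $A,B$, which you prove correctly. Your dual route costs a little more (you need the dual-norm formula for infimal convolutions and must check that polarity is well-behaved, i.e.\ compactness and the origin in the interior), but it is self-contained and rigorous; the direct verification you mention at the end as a ``sanity check'' is in fact the paper's intended one-line argument. No gaps.
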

\begin{proof}[Proof of \autoref{lem:ext-inf-conv}]
Easy from \eqref{eq:atomic_repr}. 
\end{proof}

\subsection{Weighted $\ell_1$ and $\ell_\infty$ Norms}
Given a positive vector $w\in\mathbb{R}_{++}^p$, one can define a pair of dual norms as 
\[
\sum_{i=1}^p w_i\abs{\beta_i} \qquad \text{and} \qquad \max_{i\in[p]} \frac{1}{w_i}\abs{\beta_i}
\]
which are commonly referred to as the {\em weighted $\ell_1$ norm} and the {\em weighted $\ell_\infty$ norm}, respectively. 

\begin{lemma}\label{lem:varphi-weighted-l1-exact}
Consider the weighted $\ell_1$ norm $f(\beta) = \sum_{i=1}^p w_i\abs{\beta_i}$ where $w\in\mathbb{R}_{++}^p$. Then, 
\[
\varphi^2(\beta; f) = 4 \norm{w_{\supp(\beta)}}_2^2
\]
\end{lemma}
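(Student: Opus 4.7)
My approach will use Lemma 6.2 (the characterization of $\varphi$ via extreme points of $\Bc^\star$), combined with explicit formulas for the dual ball and the subdifferential of the weighted $\ell_1$ norm.

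First, I would identify the relevant objects. Since $f$ is the weighted $\ell_1$ norm with weights $w \in \reals_{++}^p$, its dual norm is the weighted $\ell_\infty$ norm $f^\star(z) = \max_i |z_i|/w_i$, so $\Bc^\star = \prod_{i=1}^p [-w_i, w_i]$ is a box, and $\ext(\Bc^\star) = \{z : z_i = \epsilon_i w_i,\, \epsilon_i \in \{\pm 1\}\}$ has $2^p$ elements. The subdifferential at $\beta$ with $S = \supp(\beta)$ is the product set
\[
\partial f(\beta) = \bigl\{ g \in \reals^p :\; g_i = w_i \sign(\beta_i)\; \forall i \in S,\; g_i \in [-w_i, w_i]\; \forall i \notin S \bigr\}.
\]

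Next, I would compute $\min_{g \in \partial f(\beta)} \|z-g\|_2^2$ for a fixed extreme point $z$ with $z_i = \epsilon_i w_i$. Because both $\Bc^\star$ and $\partial f(\beta)$ are product sets (over coordinates), the minimization separates coordinate by coordinate. For $i \notin S$, the choice $g_i = z_i \in [-w_i, w_i]$ achieves $(z_i - g_i)^2 = 0$. For $i \in S$, $g_i$ is forced to be $w_i \sign(\beta_i)$, giving $(z_i - g_i)^2 = w_i^2(\epsilon_i - \sign(\beta_i))^2 \in \{0, 4w_i^2\}$. Hence
\[
\min_{g \in \partial f(\beta)} \|z - g\|_2^2 \;=\; 4 \sum_{i \in S:\, \epsilon_i \neq \sign(\beta_i)} w_i^2.
\]

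Finally, I would invoke Lemma 6.2 and maximize over $z \in \ext(\Bc^\star)$, i.e., over sign patterns $\epsilon \in \{\pm 1\}^p$. The maximum is achieved by taking $\epsilon_i = -\sign(\beta_i)$ for every $i \in S$ (the choice off-support is irrelevant), which yields
\[
\varphi^2(\beta; f) = \max_{\epsilon \in \{\pm 1\}^p} 4\sum_{i \in S :\, \epsilon_i \neq \sign(\beta_i)} w_i^2 = 4 \sum_{i \in S} w_i^2 = 4\,\norm{w_{\supp(\beta)}}_2^2,
\]
as claimed. There is no real obstacle here; the only mild subtlety is the observation that the coordinate-wise separation of the $\ell_2$-distance over the two product sets makes the inner minimization and outer maximization decouple, which is what reduces the problem to the stated closed form. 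This also matches the known value $\varphi^2(\beta;\nor_1) = 4\norm{\beta}_0$ when $w = \one$.
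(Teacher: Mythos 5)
Your proposal is correct and follows essentially the same route as the paper: both identify $\partial f(\beta)$ as the product set with $g_i = w_i\sign(\beta_i)$ on the support and $\abs{g_i}\le w_i$ off it, exploit the coordinate-wise decoupling of the max-min over the box-shaped dual ball, and conclude that flipping the sign on each supported coordinate yields $4\norm{w_{\supp(\beta)}}_2^2$. The only cosmetic difference is that you restrict to $\ext(\Bc^\star)$ via the extreme-point lemma, while the paper maximizes over the whole dual ball (where the off-support terms $(\abs{z_i}-w_i)_+^2$ vanish anyway), which changes nothing in substance.
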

\begin{proof}[Proof of \autoref{lem:varphi-weighted-l1-exact}]
Define $S = \{i\in[p]:~ \beta_i\neq 0\}$ and observe that 
\begin{align*}
	\partial f(\beta) 
	&= \{g:~ \langle g, \beta\rangle = f(\beta),~ \max_{i\in [p]}\frac{1}{w_i}\abs{\beta_i}<1\}\\
	&= \{g:~ g_i=w_i \sign(\beta_i) \text{ if } \beta_i\neq 0,~ \abs{g_i}\leq w_i \text{ otherwise}\}
\end{align*}
where we used the form of dual norm in the first equality. Then, \autoref{eq:varphi-max} implies 
\begin{align*}
\varphi^2(\beta; f) 
&= \max_{z} \min_{g} \;\bigl\{\norm{z-g}_2^2 :~ \abs{z_i}\leq w_i ~~i\in[p],~ g_i = w_i \sign(\beta_i)~~i\in S,~ \abs{g_i}\leq w_i ~~ i\in S^c \bigr\}\\
&= \max_{z}\; \bigl\{ \sum_{i\in S}(z_i-w_i \sign(\beta_i))_2^2 + \sum_{i\in S^c} (\abs{z_i} -w_i)_+^2:~ \abs{z_i}\leq w_i ~~i\in[p]\bigr\}\\
&= 4\norm{w_S}_2^2
\end{align*}
where $(a)_+ \equiv \max\{a,0\}$.
\end{proof}
\autoref{lem:varphi-weighted-l1-exact} recovers the earlier result $\varphi(\beta; \nor_1) = 2\sqrt{\norm{\beta}_0}$.

\begin{lemma}\label{lem:subdiff-weighted-linf}
For the weighted $\ell_\infty$ norm, namely $f(\beta) = \max_{i\in[p]} \frac{1}{w_i}\abs{\beta_i}$ with $w\in\mathbb{R}_{++}^p$, and $\beta\neq 0$, we have
\begin{align*}
\partial f(\beta) 
&= \{g:~ g\circ \beta \geq 0,~ g_{T^c}=0,~ \sum_{i\in T} w_i \abs{g_i} = 1\} 
\end{align*}
where $T\equiv \{i\in[p]:~ \frac{1}{w_i}\abs{\beta_i} = f(\beta)\}\neq \emptyset$. Moreover, $\min_{g\in\partial f(\beta)}\norm{g}_2^2 = \frac{1}{\norm{w_T}_2^2}$.
\end{lemma}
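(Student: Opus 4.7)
The plan is to derive the subdifferential from the general characterization $\partial f(\beta) = \{g : \langle g,\beta\rangle = f(\beta),\, f^\star(g)\leq 1\}$ used in the excerpt (e.g., as in the proof of \autoref{lem:varphi-weighted-l1-exact}), where $f^\star$ here is the weighted $\ell_1$ norm $f^\star(g) = \sum_i w_i|g_i|$. Set $M = f(\beta)$ and $T = \{i : |\beta_i|/w_i = M\}$, which is nonempty since $\beta\neq 0$.

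First I would verify the $\supseteq$ direction. For any $g$ with $g\circ\beta\geq 0$, $g_{T^c}=0$, and $\sum_{i\in T}w_i|g_i|=1$, one has $f^\star(g) = 1$ and
\[
\langle g,\beta\rangle = \sum_{i\in T}g_i\beta_i = \sum_{i\in T}|g_i||\beta_i| = M\sum_{i\in T}w_i|g_i| = M,
\]
where the second equality uses the sign alignment $g\circ\beta\geq 0$ and the third uses $|\beta_i| = Mw_i$ on $T$. So $g\in\partial f(\beta)$.

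For the $\subseteq$ direction, I would chain the inequalities
\[
M = \sum_i g_i\beta_i \,\leq\, \sum_i |g_i||\beta_i| \,\leq\, M\sum_i w_i|g_i| \,\leq\, M,
\]
forcing every step to be an equality. Equality in the first yields $g\circ\beta\geq 0$ pointwise. Equality in the second gives $\sum_i |g_i|(Mw_i - |\beta_i|) = 0$; since each summand is nonnegative and $Mw_i - |\beta_i| > 0$ precisely off $T$, this forces $g_{T^c}=0$. Equality in the third gives $\sum_{i\in T}w_i|g_i| = 1$.

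For the last claim, the set $\partial f(\beta)$ restricted to $T$ is parameterized by $u_i = |g_i|\geq 0$ with the signs of $g_i$ forced to match $\sign(\beta_i)$ on $T$, so $\|g\|_2^2 = \sum_{i\in T} u_i^2$ subject to $u\geq 0$ and $\langle w_T,u\rangle = 1$. The minimum is $1/\|w_T\|_2^2$ by a direct application of \autoref{lem:min-norm-simplex}. The only mildly subtle point is making sure the sign alignment constraint $g\circ\beta\geq 0$ does not interfere with the minimization, but since on $T$ we have $\beta_i\neq 0$ the sign of each $g_i$ is completely determined by $\sign(\beta_i)$ and only the magnitudes are optimized; hence the minimization reduces exactly to the setup of \autoref{lem:min-norm-simplex}, and there is no real obstacle.
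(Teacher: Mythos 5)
Your proposal is correct and follows essentially the same route as the paper's proof: both start from Watson's characterization of the subdifferential via the dual (weighted $\ell_1$) norm, force equality through the same chain of inequalities to extract the sign-alignment, support, and normalization conditions, and invoke \autoref{lem:min-norm-simplex} for the minimum-norm claim. Your explicit verification of the $\supseteq$ inclusion and of the reduction to the simplex (noting that $\beta_i\neq 0$ on $T$ pins down the signs) fills in steps the paper labels as straightforward, but the argument is the same.
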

\begin{proof}[Proof of \autoref{lem:subdiff-weighted-linf}]
For $\beta\neq 0$, consider $T$ and observe that 
\begin{align*}
\partial f(\beta)
&= \bigl\{g:~ \langle g,\beta\rangle = \max_{i\in[p]}\frac{1}{w_i}\abs{\beta_i},~ \sum_{i=1}^p w_i\abs{g_i} = 1 \bigr\}.
\end{align*}
For any $g$ in the above, we have
\[
\langle g,\beta\rangle
= \sum_{i\in S} g_i\beta_i
= \sum_{i\in S} (w_ig_i)(\frac{1}{w_i} \beta_i) 
\leq \sum_{i\in S} (w_i\abs{g_i})(\frac{1}{w_i} \abs{\beta_i})
\leq (\max_{i\in[p]} \frac{1}{w_i} \abs{\beta_i}) \sum_{i\in S}w_i\abs{g_i} 
= \max_{i\in[p]} \frac{1}{w_i} \abs{\beta_i}
\]
which implies that the inequalities have to hold with equality, establishing $g_i = 0$ for $i\not\in T$, $g\circ \beta \geq 0$, as well as $\sum_{i\in T}w_i\abs{g_i} = 1$. This completes the characterization of the subdifferential (checking that each such $g$ is a subgradient is straightforward). 

The last statement follows from \autoref{lem:min-norm-simplex}.
\end{proof}

\begin{lemma}\label{lem:varphi-weighted-linf-exact}
For the weighted $\ell_\infty$ norm, namely $f(\beta) = \max_{i\in[p]} \frac{1}{w_i}\abs{\beta_i}$ with $w\in\mathbb{R}_{++}^p$, and $\beta\neq 0$, we have
	\begin{equation*}
	\varphi^2(\beta; f) =
	\begin{cases}
	\max\left\{
	\frac{1}{\omega^2}+\frac{1}{\norm{w_T}_2^2}~,~
	\frac{4}{\norm{w_T}_2^2} 
	\right\} 
	&\text {if } \abs{T}=1
	\\
	\max\left\{
	\frac{1}{\omega^2}+\frac{1}{\norm{w_T}_2^2}~,~
	\frac{1}{\tau^2}+\frac{1}{\norm{w_T}_2^2-\tau^2}
	\right\} 
	& \text{if } \abs{T}\geq 2
	\end{cases}
	\end{equation*}
where $T\equiv \{i\in[p]:~ \frac{1}{w_i}\abs{\beta_i} = f(\beta)\}\neq \emptyset$, 
$T_1\equiv \{i\in T:~ 2w_i^2 \leq \norm{w_T}_2^2\}$, 
$\omega \equiv \min_{i\not\in T} w_i$, and $\tau \equiv \min_{i\in T_1} w_i$. Note that $\abs{T}\geq 2$ if and only if $T_1$ is non-empty. Moreover, if $\abs{T}\geq 2$ then $2\tau^2\leq \norm{w_T}_2^2$ which implies $\varphi^2 \leq \max\{\frac{1}{\omega^2},\frac{1}{\tau^2}\}+\frac{1}{\norm{w_T}_2^2-\tau^2} \leq \max\{\frac{1}{\omega^2},\frac{1}{\tau^2}\}+\frac{1}{\tau^2} \leq 2\max\{\frac{1}{\omega^2},\frac{1}{\tau^2}\}\leq 2/(\min_{i\in[p]}w_i)^2$. 
\end{lemma}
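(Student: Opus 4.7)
The plan is to apply \autoref{lem:varphi-max-ext}, which reduces the computation of $\varphi^2(\beta;f)$ to a max-min over the extreme points of the dual unit ball. Since the dual of the weighted $\ell_\infty$ norm is the weighted $\ell_1$ norm $z\mapsto \sum_i w_i|z_i|$, one has $\ext\cB^\star = \bigl\{\pm \tfrac{1}{w_j}e_j:~ j\in[p]\bigr\}$. By \autoref{lem:subdiff-weighted-linf}, after a harmless change of signs we may assume $\beta_i\ge 0$ for all $i\in T$, so that
\begin{equation*}
\partial f(\beta)\;=\;\bigl\{g\in\reals^p:~ g_{T^c}=0,~ g_T\ge 0,~ \langle w_T, g_T\rangle = 1\bigr\},
\end{equation*}
a weighted simplex supported on $T$.

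Next, I would split the maximization over $z=s\tfrac{1}{w_j}e_j$ into two cases according to whether $j\in T^c$ or $j\in T$. If $j\in T^c$, then for any $g\in \partial f(\beta)$, $z$ and $g$ have disjoint supports, giving $\|z-g\|_2^2 = \tfrac{1}{w_j^2} + \|g_T\|_2^2$; by \autoref{lem:min-norm-simplex} the minimum of $\|g_T\|_2^2$ over the weighted simplex is $1/\|w_T\|_2^2$, and taking a maximum over $j\in T^c$ yields the first candidate $\tfrac{1}{\omega^2} + \tfrac{1}{\|w_T\|_2^2}$. If $j\in T$ and $s=+1$, then $z$ itself lies in $\partial f(\beta)$ so the inner minimum is zero and contributes nothing. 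The interesting case is $j\in T$ with $s=-1$, where the inner minimization is exactly the distance from $-\tfrac{1}{w_j}e_j$ to the weighted simplex on the coordinates in $T$, which is computed by \autoref{lem:dist-weighted-simplex} applied to the weight vector $w_T$.

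Invoking \autoref{lem:dist-weighted-simplex} splits into subcases based on $|T|$ and the sign of $\|w_T\|_2^2 - 2w_j^2$. When $|T|=1$, only the single-point subcase applies and gives $4/w_j^2 = 4/\|w_T\|_2^2$, matching the first branch of the claim. When $|T|\ge 2$, the observation that $\min_{i\in T}w_i^2 \le \tfrac{1}{|T|}\|w_T\|_2^2 \le \tfrac{1}{2}\|w_T\|_2^2$ guarantees $T_1\neq \emptyset$, and the distance for $j\in T_1$ equals $h(w_j^2)$, where $h(x):=\tfrac{1}{x}+\tfrac{1}{\|w_T\|_2^2-x}$. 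The key analytic step is to observe that $h$ is strictly convex on $(0,\|w_T\|_2^2)$ with unique minimum at $x=\|w_T\|_2^2/2$, so on the subinterval $(0,\|w_T\|_2^2/2]$ that contains $\{w_j^2:j\in T_1\}$ the function $h$ is decreasing; the maximum over $j\in T_1$ is therefore attained at $w_j=\tau$, producing the second candidate $\tfrac{1}{\tau^2} + \tfrac{1}{\|w_T\|_2^2-\tau^2}$. For $j\in T\setminus T_1$ the distance is $4/\|w_T\|_2^2 = h(\|w_T\|_2^2/2)$, which by the same convexity is dominated by $h(\tau^2)$, so these $j$ contribute nothing new.

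Taking the overall maximum of the surviving candidates yields the two-branch formula in the statement. The final chain of inequalities for $|T|\ge 2$ follows from two simple facts: $\|w_T\|_2^2-\tau^2\ge \tau^2$ since $\tau\in T_1$, and $\omega,\tau \ge \min_{i\in[p]}w_i$. I expect the main technical nuisance to be the convexity/monotonicity argument for $h$ on $T_1$ (and the observation that the $T\setminus T_1$ contributions are redundant), since this is what reduces an apparently $|T|$-way maximization to a single optimizer at $w_j=\tau$; everything else is bookkeeping on top of \autoref{lem:subdiff-weighted-linf}, \autoref{lem:dist-weighted-simplex}, and \autoref{lem:min-norm-simplex}.
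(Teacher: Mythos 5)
Your proposal is correct and follows essentially the same route as the paper's proof: reduce to the extreme points $\pm\frac{1}{w_j}e_j$ of the dual ball via \autoref{lem:varphi-max-ext}, use the subdifferential from \autoref{lem:subdiff-weighted-linf}, and evaluate the three cases ($j\notin T$; $j\in T$ with matching sign; $j\in T$ with opposite sign) via \autoref{lem:min-norm-simplex} and \autoref{lem:dist-weighted-simplex}. Your explicit convexity/monotonicity argument for $h(x)=\tfrac{1}{x}+\tfrac{1}{\|w_T\|_2^2-x}$ just spells out the step the paper handles implicitly when it discards the $T\setminus T_1$ terms and takes the maximum at $w_j=\tau$.
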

\begin{proof}[Proof of \autoref{lem:varphi-weighted-linf-exact}]
Consider the characterization of $\partial f(\beta)$, for any $\beta\neq 0$, from \autoref{lem:subdiff-weighted-linf}. Moreover, note that $\ext(\Bc^\star) = \{\pm \frac{1}{w_i}e_i:~ i\in[p]\}$ where $e_i$ is the $i$-th standard basis vector. There are three cases: 
\begin{itemize}
\item If $i\not\in T$ and $z = \pm \frac{1}{w_i}e_i$, then $z_T =0$ and $\norm{z_{T^c}}_2 = \frac{1}{w_i}$. Therefore, 
\begin{align*}
\min_{g\in\partial f(\beta)} \norm{z-g}_2^2 
= \min_{g\in\partial f(\beta)} \norm{g_T}_2^2 + \norm{z_{T^c}}_2^2
= \frac{1}{\norm{w_T}_2^2} + \frac{1}{w_i^2},
\end{align*}
where we used \autoref{lem:subdiff-weighted-linf}.

\item If $i\in T$ and $z = \frac{1}{w_i}e_i$ then $z\in \partial f(\beta)$ which implies $\dist(z,\partial f(\beta)) = 0$. 

\item If $i\in T$ and $z = -\frac{1}{w_i}e_i$ then we use \autoref{lem:dist-weighted-simplex} to get 
\[
\dist^2(-\frac{1}{w_i}e_i, \partial f(\beta)) = 
\begin{cases}
 \frac{4}{w_i^2}& \abs{T}=1, \\
\frac{1}{w_i^2} + \frac{1}{\norm{w_T}_2^2 - w_i^2}& \abs{T} \geq 2,~ i\in T_1, \\
\frac{4}{\norm{w_T}_2^2}& \abs{T} \geq 2,~ i\in T\backslash T_1. \\
\end{cases}
\] 
Observe that for any $i\in T$, we have $\frac{1}{w_i^2} + \frac{1}{\norm{w_T}_2^2 - w_i^2} > \frac{4}{w_i^2}$. 
\end{itemize}
Combining all of the above, to find the maximum over all $z\in\ext(\Bc^\star)$, we get
	\begin{equation*}
	\varphi^2(\beta; f) =
	\begin{cases}
	\max\left\{
	\max_{i\not\in T} (\frac{1}{w_i^2}+\frac{1}{\norm{w_T}_2^2})~,~
	\frac{4}{\norm{w_T}_2^2} 
	\right\} 
	&\text {if } \abs{T}=1
	\\
	\max\left\{
	\max_{i\not\in T} (\frac{1}{w_i^2}+\frac{1}{\norm{w_T}_2^2})~,~
	\max_{i\in T_1} (\frac{1}{w_i^2}+\frac{1}{\norm{w_T}_2^2-w_i^2})
	\right\} 
	& \text{if } \abs{T}\geq 2
	\end{cases}
	\end{equation*}
The claim follows by defining $\omega$ and $\tau$. 
\end{proof}
\autoref{lem:varphi-weighted-linf-exact} provides an alternative proof for \autoref{lem:varphi-linf-exact}.

\subsection{Bounds for the Ordered Weighted $\ell_1$ Norm}\label{app:OWL}

Given $w_1\geq w_2 \geq \cdots \geq w_p \geq 0$, the ordered weighted $\ell_1$ norm is defined as
\begin{align}\label{eq:def-OWL}
\norm{\beta}_\owl = \sum_{i=1}^p w_i \bar\beta_i
\end{align}
where $\bar\beta$ is the sorted absolute value of $\beta$ satisfying $\bar\beta_1\geq \bar\beta_2 \geq \cdots \geq \bar\beta_p \geq 0$. The above is clearly 1-homogeneous. It is also convex due to the assumption on $w$.

\begin{lemma}[Lemma 1 in \citep{zeng2014ordered}]
The dual norm for $\nor_\owl$ is given by
\begin{align}\label{eq:OWL-dual}
\norm{z}_\owl^\star = \max_{i\in [p]} ~\frac{\sum_{j=1}^i \bar z_j}{\sum_{j=1}^i w_j}.
\end{align}
\end{lemma}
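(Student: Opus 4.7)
The plan is to compute the dual norm $\norm{z}_\owl^\star = \sup\{\langle z,\beta\rangle : \norm{\beta}_\owl \leq 1\}$ by exploiting the sign/permutation symmetries of $\nor_\owl$, then recasting the resulting linear program in a variable change that makes the extremal structure manifest.

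First, observe that $\nor_\owl$ depends only on $\bar\beta$, so its unit ball is closed under sign flips and coordinate permutations. By a standard rearrangement argument (swap-and-compare on pairs of coordinates, then match signs), for any fixed $z$ the supremum $\langle z,\beta\rangle$ over $\{\beta : \norm{\beta}_\owl\leq 1\}$ is attained at a $\beta$ whose absolute values are sorted in the same order as those of $z$ and whose signs match those of $z$. Consequently, it suffices to solve
\[
\norm{z}_\owl^\star \;=\; \sup\Bigl\{\sum_{i=1}^p \bar z_i\, u_i \;:\; \sum_{i=1}^p w_i u_i \leq 1,\; u_1 \geq u_2 \geq \cdots \geq u_p \geq 0\Bigr\},
\]
where $u_i$ plays the role of $\bar\beta_i$.

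Next, I would parametrize the sorted nonneg\-ative cone by its extreme rays: set $\gamma_j = u_j - u_{j+1}$ for $j<p$ and $\gamma_p = u_p$, so $\gamma_j \geq 0$ and $u_i = \sum_{j=i}^p \gamma_j$. Interchanging the order of summation gives
\[
\sum_{i=1}^p w_i u_i \;=\; \sum_{j=1}^p \gamma_j W_j, \qquad \sum_{i=1}^p \bar z_i u_i \;=\; \sum_{j=1}^p \gamma_j Z_j,
\]
with $W_j \equiv \sum_{i=1}^j w_i$ and $Z_j \equiv \sum_{i=1}^j \bar z_i$. (Since $w_1>0$ as $\nor_\owl$ is a norm, each $W_j>0$.) The problem thus reduces to the \emph{non-negative} LP
\[
\max\Bigl\{\sum_{j=1}^p Z_j \gamma_j \;:\; \sum_{j=1}^p W_j \gamma_j \leq 1,\; \gamma_j\geq 0\Bigr\},
\]
whose maximum is clearly $\max_{j\in[p]} Z_j / W_j$, attained by putting all mass on a single coordinate $j^\star$ achieving the maximum ratio.

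Substituting back gives exactly the claimed formula. The most delicate step is the first reduction -- establishing that one may assume $\beta$ has the same sorted-by-magnitude order and sign pattern as $z$ without loss of generality -- but this follows directly from the rearrangement inequality together with the symmetry of $\nor_\owl$ under signed permutations; the remainder of the argument is a transparent change of variables that linearizes the ordering constraint.
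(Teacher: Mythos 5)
Your proof is correct. The paper itself gives no argument for this lemma---it simply cites Lemma~1 of Zeng and Figueiredo---so there is nothing internal to compare against; your derivation (reduce to signed, similarly sorted vectors by the rearrangement inequality and the signed-permutation invariance of $\nor_\owl$, then telescope with $\gamma_j = u_j-u_{j+1}$ so the monotone cone becomes the nonnegative orthant and the problem is a one-constraint LP maximized on an extreme ray, giving $\max_i \bigl(\sum_{j\le i}\bar z_j\bigr)/\bigl(\sum_{j\le i} w_j\bigr)$) is the standard self-contained proof, and your observation that $w_1>0$ keeps every partial sum $\sum_{j\le i} w_j$ positive is exactly the point needed for the ratios to be well defined.
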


In the following, we present results on these norms, which to the best of our knowledge, are new.

\begin{remark}
Using the characterization of the norm ball for $\nor_\owl$ in \cite[Theorem 1]{zeng2014ordered}, it is easy to see that $\nor_\owl$ is a structure norm (all of the extreme points lie on the unit sphere) if and only if $w_i = \sqrt{i}-\sqrt{i-1}$ for $i\in[p]$. Observe that such $w$ satisfies $w_1\geq \cdots \geq w_p>0$ which is required in defining $\nor_\owl$. In the approach of \citep{obozinski2016unified}, for such norm $\nor_\owl$ we have $\norm{\beta}_\owl^\star = \max\{ \frac{1}{\sqrt{\abs{A}}} \norm{\beta_A}_1:~ A\subseteq [p]\}$. 
\end{remark}

\begin{lemma}\label{lem:extBst-owl}
Consider $w\in\mathbb{R}^p$ with $w_1\geq w_2\geq \cdots \geq w_p> 0$ and $\Bc_{\nor_\owl}^\star =\Bc_{\nor_\owl^\star} = \{z:~\norm{z}_\owl^\star \leq 1\}$. Then, $\ext(\Bc_{\nor_\owl^\star}) = \{Qw:~ Q\in \mathcal{P}_\pm \}$. This implies that $\norm{w}_2\nor_\owl^\star$ is a structure norm in the sense of \autoref{sec:Snorms-def}. 
\end{lemma}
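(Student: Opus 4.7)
The plan is to identify the extreme points of $\Bc_{\nor_\owl^\star}$ by a two-step argument: first show that the signed permutations of $w$ are \emph{enough} to generate the ball (as the convex hull), then show that each such point is actually extreme.

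For the first step, I would compute the support function of $\conv\{Qw:Q\in\mathcal{P}_\pm\}$ directly. For any $x\in\mathbb{R}^p$,
\begin{align*}
\sup_{Q\in\mathcal{P}_\pm}\langle Qw,x\rangle = \sup_{\pi,\epsilon} \sum_i \epsilon_i w_{\pi(i)}x_i,
\end{align*}
and by first choosing $\epsilon_i=\sign(x_i)$ and then applying the rearrangement inequality (using $w_1\geq\cdots\geq w_p\geq 0$), this maximum equals $\sum_i w_i\bar{x}_i=\nor_\owl(x)$. But $\nor_\owl$ is exactly the support function of $\Bc_{\nor_\owl^\star}$ (by double duality of norms), so the two compact convex sets $\Bc_{\nor_\owl^\star}$ and $\conv\{Qw:Q\in\mathcal{P}_\pm\}$ share the same support function and must coincide. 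By Minkowski's theorem, this already forces $\ext(\Bc_{\nor_\owl^\star})\subseteq\{Qw\}$.

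For the second step, I would show $w$ itself is an extreme point; invariance of the dual ball under signed permutations then hands us the rest. Suppose $w=(a+b)/2$ with $\|a\|_\owl^\star,\|b\|_\owl^\star\le 1$, i.e., $\sum_{j=1}^i \bar{a}_j\le \sum_{j=1}^i w_j$ for all $i$, and similarly for $b$. Taking $i=p$ and combining with $\sum_j a_j + \sum_j b_j = 2\sum_j w_j$ forces $\sum_j a_j = \sum_j |a_j|$ (so $a\geq 0$) and the analogous statement for $b$; in particular $\bar{a},\bar{b}$ are just $a,b$ re-sorted. Then for each $i$, the chain $\sum_{j=1}^i a_j \le \sum_{j=1}^i \bar{a}_j\le \sum_{j=1}^i w_j$ (and the same for $b$) must collapse to equality everywhere, because the two sums add to $2\sum_{j=1}^i w_j$. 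Hence $\sum_{j=1}^i a_j=\sum_{j=1}^i w_j$ for every $i\in[p]$, and telescoping gives $a=w$, so $b=w$ as well.

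Finally, to see that $\|w\|_2\,\nor_\owl^\star$ is a structure norm in the sense of \autoref{sec:Snorms-def}, define $\Sc\equiv\{\gamma Qw:\gamma\in\mathbb{R},\,Q\in\mathcal{P}_\pm\}$, which is closed, scale-invariant, and spans $\mathbb{R}^p$ (signed permutations of a strictly positive $w$ generate each coordinate direction via sums and differences). The unit ball of $\|w\|_2\nor_\owl^\star$ is $(1/\|w\|_2)\Bc_{\nor_\owl^\star}=\conv\{Qw/\|w\|_2:Q\in\mathcal{P}_\pm\}=\conv(\Sc\cap\mathbb{S}^{p-1})$ since every $Qw$ has $\ell_2$ norm $\|w\|_2$, matching the defining representation \eqref{eq:Snorm-gauge}.

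The main obstacle I anticipate is the second step: the temptation is to argue ``$w$ is extreme because its dual-norm-active constraints span $\mathbb{R}^p$,'' but the dual norm's dependence on $\bar{z}$ makes the set of active linear inequalities combinatorial, so a careful bookkeeping argument (as above, driven by the $i=p$ constraint to force non-negativity, then telescoping) is cleaner and avoids any casework on ties in $w$.
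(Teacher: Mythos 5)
Your proposal is correct and follows essentially the same route as the paper: compute the support function of $\{Qw:Q\in\mathcal{P}_\pm\}$ to identify its convex hull with $\Bc_{\nor_\owl^\star}$ (hence $\ext(\Bc_{\nor_\owl^\star})\subseteq\{Qw\}$), then use the partial-sum inequalities $\sum_{j=1}^i \bar a_j\le\sum_{j=1}^i w_j$ forced to equality to conclude $w$ is extreme, with signed-permutation invariance handling the rest. Your write-up merely makes explicit some steps the paper leaves terse (the nonnegativity/telescoping bookkeeping and the verification that the rescaled dual norm fits the structure-norm definition), so no substantive difference.
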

\begin{proof}[Proof of \autoref{lem:extBst-owl}]
First, the support function for the right-hand side is equal to $\nor_\owl$. Therefore, the convex hull of the right-hand side is $\Bc^\star$. On the other hand, without loss of generality consider $w=I\cdot w$ and assume $w=\alpha x+ (1-\alpha) y$ for some $\alpha\in[0,1]$ and $x,y\in\Bc^\star$. Then, for any $i\in[p]$, 
\[
1 
= \frac{\sum_{j=1}^i w_j}{\sum_{j=1}^i w_j}
= \frac{\alpha \sum_{j=1}^i x_j + (1-\alpha) \sum_{j=1}^i y_j}{\sum_{j=1}^i w_j}
\leq \frac{\alpha \sum_{j=1}^i \bar x_j + (1-\alpha) \sum_{j=1}^i \bar y_j }{\sum_{j=1}^i w_j}
\leq 1
\]
which implies that $x=y=w$. Therefore, $w$ is an extreme point of the dual norm ball. 
\end{proof}

\begin{definition}
Given $\beta$, sort $\abs{\beta}$ in descending order to get $\bar\beta$. Moreover, consider $d = \abs{\{ \abs{\beta_i}\neq 0:~ i\in[p]\}}$. Then, define $\cG = (\cG_1, \cdots, \cG_d)$ as a partition of $\supp(\bar\beta)$ into $d$ {\em intervals} where for any $i,j\in\supp(\bar\beta)$ and any $t\in[d]$: $i,j\in \cG_t$ if and only if $\bar\beta_i = \bar\beta_j$. Moreover, define $\cG_0 \equiv [p]\backslash \supp(\bar\beta)$.
\end{definition}

\begin{lemma}\label{lem:OWL-subdiff}
Given $w_1\geq w_2 \geq \cdots \geq w_p \geq 0$, the ordered weighted $\ell_1$ norm defined by \eqref{eq:def-OWL}. Then, the subdifferential at $\beta\in\mathbb{R}^p$ is given by 
\begin{align}\label{eq:OWL-subdiff}
\partial \norm{\beta}_\owl = 
\Bigl\{ g:~ 
&g\circ \beta \geq 0,~ 
\text{$\abs{g}$ and $\abs{\beta}$ are similarly sorted},~
\sum_{j=1}^i \bar g_j \leq \sum_{j=1}^i w_j ~~ \forall i\in [p],
\nonumber\\
&\sum_{j\in \cG_t} \bar g_j = \sum_{j\in \cG_t} w_j ~~ \forall\, t\in[d],~
\sum_{j\in \cG_0} \bar g_j \leq \sum_{j\in \cG_0} w_j
\Bigr\}. 
\end{align} 
\end{lemma}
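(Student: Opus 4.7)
The plan is to start from the standard dual-norm characterization of the subdifferential of a norm (used earlier in \autoref{subdiff_dualnorm_gen}),
\[
\partial \norm{\beta}_\owl \;=\; \bigl\{g:\; \langle g,\beta\rangle = \norm{\beta}_\owl,\; \norm{g}_\owl^\star \leq 1\bigr\},
\]
and translate each of the two defining conditions into the combinatorial inequalities listed in the statement.

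First, the dual norm formula \eqref{eq:OWL-dual} immediately gives
\[
\norm{g}_\owl^\star \leq 1 \;\Longleftrightarrow\; \sum_{j=1}^i \bar g_j \;\leq\; \sum_{j=1}^i w_j \quad \forall i\in[p],
\]
which is the third listed condition. Next, for the alignment equation $\langle g,\beta\rangle = \norm{\beta}_\owl = \sum_i w_i \bar\beta_i$, I will apply the Hardy--Littlewood rearrangement inequality: $\langle g,\beta\rangle \leq \sum_i \abs{g_i \beta_i} \leq \sum_i \bar g_i \bar\beta_i$, with the first inequality saturated iff $g\circ\beta \geq 0$ and the second iff $\abs{g}$ and $\abs{\beta}$ are similarly sorted (i.e.\ $\bar g$ and $\bar\beta$ can be obtained by a common permutation of $\abs{g}$ and $\abs{\beta}$). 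These two saturation conditions give the first two bullets in the statement and reduce the alignment equation to
\[
\sum_{i=1}^p \bar g_i \bar\beta_i \;=\; \sum_{i=1}^p w_i \bar\beta_i.
\]

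The main remaining step, and the crux of the argument, is to turn this scalar equation into the per-group identities in the lemma. Writing $S_i^g \equiv \sum_{j=1}^i \bar g_j$ and $S_i^w \equiv \sum_{j=1}^i w_j$ with $S_0^g = S_0^w = 0$, I will apply summation by parts (Abel's identity):
\[
\sum_{i=1}^p \bar g_i \bar\beta_i \;=\; \sum_{i=1}^{p-1} S_i^g (\bar\beta_i - \bar\beta_{i+1}) + S_p^g \bar\beta_p,
\]
and likewise for the $w$-side. Since $\bar\beta_i - \bar\beta_{i+1} \geq 0$ (with $\bar\beta_p\geq 0$) and the dual-norm bound gives $S_i^g \leq S_i^w$ for every $i$, the equality of the two sums forces $S_i^g = S_i^w$ at every index $i$ where $\bar\beta_i > \bar\beta_{i+1}$ (adopting $\bar\beta_{p+1} = 0$). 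By definition of the partition $\cG_1,\ldots,\cG_d$ of $\supp(\bar\beta)$ and $\cG_0 = [p]\setminus\supp(\bar\beta)$, these strict-drop indices are exactly $i_t \equiv \max(\cG_t)$ for $t\in[d]$; taking consecutive differences yields $\sum_{j\in\cG_t} \bar g_j = \sum_{j\in\cG_t} w_j$ for each $t\in[d]$, and the remaining constraint $S_p^g \leq S_p^w$ becomes $\sum_{j\in\cG_0}\bar g_j \leq \sum_{j\in\cG_0} w_j$. This establishes the inclusion ``$\subseteq$''.

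The reverse inclusion is a direct verification: given $g$ satisfying all five bullets, $\norm{g}_\owl^\star\leq 1$ is immediate from the third bullet, and running the Abel summation in reverse together with the group-sum equalities yields $\sum_i \bar g_i \bar\beta_i = \sum_i w_i\bar\beta_i$; the sign and co-sorting conditions then upgrade this to $\langle g,\beta\rangle = \norm{\beta}_\owl$, so $g\in\partial\norm{\beta}_\owl$. The only delicate step is the equality case of Hardy--Littlewood in the presence of ties in $\bar\beta$ (where the ``similarly sorted'' condition has some freedom): I expect this to be the main obstacle, and it will be handled by observing that the alignment equation only depends on $\abs{g}$ restricted to each level set $\cG_t$, on which $\bar\beta$ is constant and the rearrangement inequality becomes an identity.
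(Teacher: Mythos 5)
Your proposal is correct and follows essentially the same route as the paper's proof: the Watson dual-norm characterization of the subdifferential, the rearrangement inequality for the alignment equation, and the Abel-summation trick forcing the partial-sum equalities $\sum_{j\le i}\bar g_j=\sum_{j\le i}w_j$ at the strict-drop indices of $\bar\beta$, which are exactly the right endpoints of the groups $\cG_t$. Your explicit treatment of the reverse inclusion and of the tie/equality case of the rearrangement inequality only spells out what the paper leaves implicit, so there is no substantive difference.
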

\begin{proof}[Proof of \autoref{lem:OWL-subdiff}]
Consider from \citep{watson1992characterization} the characterization of the subdifferential for a norm as 
\[
\partial \norm{\beta}_\owl = \{g:~ \langle g,\beta\rangle = \norm{\beta}_\owl ,~ \norm{g}_\owl^\star =1 \}. 
\]
For any $g\in \partial \norm{\beta}_\owl $, we have 
\begin{align}\label{eq:owl-dummy1}
\< w, \bar\beta\> = \norm{\beta}_\owl = \<g, \beta\> \leq \<\bar{g}, \bar\beta\>
\end{align}
where the last inequality holds by the rearrangement inequality. Therefore, with the convention $\bar\beta_{p+1}=0$ we have, 
\begin{align*}
	\sum_{i=1}^p w_i \bar\beta_i
&\stackrel{(a)}{=}	
	\sum_{i=1}^p \left( (\bar\beta_i-\bar\beta_{i+1}) \sum_{j=1}^i w_j \right) \\
&\stackrel{(b)}{\geq}
	\sum_{i=1}^p \left( (\bar\beta_i-\bar\beta_{i+1}) \sum_{j=1}^i \bar g_j \right) \\
&\stackrel{(a)}{=}	
	\sum_{i=1}^p \bar g_i \bar\beta_i\\
&\stackrel{(c)}{\geq}	
	\sum_{i=1}^p w_i \bar\beta_i
\end{align*}
where $(a)$ is a trick we use, $(b)$ is by $\norm{g}_\owl^\star \leq 1$ and \eqref{eq:OWL-dual}, and $(c)$ is by \eqref{eq:owl-dummy1}. Therefore, all of the inequalities we have used must hold with equality: From \eqref{eq:owl-dummy1} we get that $g$ and $\beta$ are similarly signed, and, $\abs{g}$ and $\abs{\beta}$ are similarly sorted. Moreover, equality in $(b)$ implies
\begin{align}\label{eq:OWL-dummy2}
\sum_{j=1}^i w_j=\sum_{j=1}^i \bar g_j 
~~\text{whenever}~~
\bar\beta_i>\bar\beta_{i+1} 
\end{align}
with the previous convention $\bar\beta_{p+1}=0$. Recall the definitions $d = \abs{\{ \abs{\beta_i} \neq 0:~ i\in[p]\}}$ and $\cG = (\cG_1, \cdots, \cG_d)$ for~$\beta$, from right before the statement of \autoref{lem:OWL-subdiff}. Then, we get \eqref{eq:OWL-subdiff} where the last two conditions have been derived from \eqref{eq:OWL-dummy2}. 
\end{proof}
For example, consider $w=e_1$ which gives $\norm{\cdot}_\owl = \norm{\cdot}_\infty$ whose subdifferential is given in \eqref{eq:subdiff-linf}.

\begin{proof}[Proof of \autoref{lem:varphi-OWL}]
We use the min-max inequality to get 
\begin{align*}
\varphi^2(\beta) 
&= \max_{z\in \Bc^\star}\, \min_{g\in \partial\|\beta\|} ~\norm{g-z}_2^2 \nonumber \\
&\leq \min_{g\in \partial\|\beta\|} \,\max_{z\in \Bc^\star} ~\norm{g-z}_2^2 \nonumber \\
&= \min_{g\in \partial\|\beta\|} \,\max_{z\in \Bc^\star} ~\norm{g+z}_2^2 \nonumber 
\end{align*}
where we used the symmetry of $\Bc^\star$. 
We now focus on the inner optimization problem. Fix $g\in \partial \norm{\beta}_\owl$ and consider
\[
\max_z \bigl\{ \norm{z}_2^2 + 2\<z,g\>:~ 
{\sum_{j=1}^i \bar z_j} \leq {\sum_{j=1}^i w_j}
~~\forall\,i\in[p] \bigr\}.
\]
Observe that 1) the optimal $z$ will have the same sign pattern as $g$, 2) $\abs{z}$ and $\abs{g}$ are similarly ordered. Furthermore, we claim that the optimal $z$ satisfies $\bar z = w$, hence providing the optimal $z$ completely (one can use \autoref{lem:extBst-owl} to establish this claim). For this, we show that such choice of $z$ maximize each of the two terms in the objective subject to the constraint. Take any $z$ on the boundary of the dual norm ball. First, observe that 
\begin{align}
\<z,g\> = \<\bar z,\bar g\> 
= \sum_{i=1}^p \left( (\bar g_i-\bar g_{i+1}) \sum_{j=1}^i \bar z_j \right)
\leq \sum_{i=1}^p \left( (\bar g_i-\bar g_{i+1}) \sum_{j=1}^i w_j \right)
= \<w, \bar g\>.
\end{align}
Secondly, 
\begin{align}
\norm{z}_2^2 = \<\bar z,\bar z\> 
&= \sum_{i=1}^p \left( (\bar z_i-\bar z_{i+1}) \sum_{j=1}^i \bar z_j \right) \nonumber\\ 
&\leq \sum_{i=1}^p \left( (\bar z_i-\bar z_{i+1}) \sum_{j=1}^i w_j \right) \nonumber\\
&= \<w, \bar z\> \nonumber\\
&= \sum_{i=1}^p \left( (w_i-w_{i+1}) \sum_{j=1}^i \bar z_j \right)\nonumber\\
&\leq \sum_{i=1}^p \left( (w_i-w_{i+1}) \sum_{j=1}^i w_j \right) \nonumber\\
&= \<w,w\> = \norm{w}_2^2
\end{align}
Finally, note that $w$ (and any signed permuted version of it) is feasible in the above optimization program; i.e., $w\in\Bc^\star$. Therefore, the optimal value for the original inner optimization program is given by 
\[
\norm{g}_2^2 + \norm{w}_2^2 + 2 \<w,\bar g\>. 
\]
Next, we are interested in minimizing the above for all $g\in \partial \norm{\beta}_\owl$ where the subdifferential is characterized in~\eqref{eq:OWL-subdiff}. After a slight change in variable $g$, we would like to solve
\begin{align}\label{eq:OWL-minmax-min}
\min_g \Bigl\{ \norm{g+w}_2^2 :~
	 g_1 \geq \cdots \geq g_k \geq 0 ~,
	 \sum_{j\in \cG_t} \bar g_j = \sum_{j\in \cG_t} w_j ~~ \forall\, t\in[d] ~,
	 \sum_{j\in \cG_0} \bar g_j \leq \sum_{j\in \cG_0} w_j \Bigr\}
\end{align}
where $k = \norm{\beta}_0$. 
We upper bound the above by plugging in 
\[
g = \left[ 
\frac{\sum_{j\in \cG_1} w_j}{\abs{\cG_1}} \one_{\abs{\cG_1}}^\sT 
~,~ \cdots ~,~
\frac{\sum_{j\in \cG_d} w_j}{\abs{\cG_d}} \one_{\abs{\cG_d}}^\sT
~,~ -w_{\cG_0}^\sT
\right]^\sT 
\]
which gives
\begin{align*}
\varphi^2(\beta) 
\leq \norm{w_{\cG}}_2^2 + 3\sum_{t=1}^d \frac{(\sum_{j\in \cG_t} w_j)^2}{\abs{\cG_t}} 
\end{align*}
where we abuse the notation to denote $\cG = \cup_{t=1}^d \cG_t = \supp(\beta)$, and where $d = \abs{\{ \abs{\beta_i}\neq 0:~ i\in[p]\}}$, and the partition $\cG = (\cG_1, \cdots, \cG_d)$ is according to equal absolute values in $\beta$. This finishes proof. 
Moreover, 
\begin{align*}
\sum_{t=1}^d \frac{(\sum_{j\in \cG_t} w_j)^2}{\abs{\cG_t}} 
= \norm{w_\cG}_2^2 - \dist^2(w; \Sc_\cG(\beta))
\end{align*}
where $\Sc_\cG(\beta) = \{u:~ \bar\beta_i=\bar\beta_j \implies \bar u_i = \bar u_j\}$.
\end{proof}

\begin{proof}[Proof of \autoref{lem:varphi-linf}]
We use the min-max inequality to get 
\begin{align*}
\varphi^2(\beta^\star) 
&= \max_{z\in \Bc^\star}\, \min_{g\in \partial\|\beta^\star\|} ~\norm{g-z}_2^2 \nonumber \\
&\leq \min_{g\in \partial\|\beta^\star\|} \,\max_{z\in \Bc^\star} ~\norm{g-z}_2^2 \nonumber \\
&= \min_{g\in \partial\|\beta^\star\|} \,\max_{z\in \Bc^\star} ~\norm{g+z}_2^2 \nonumber \\
&\leq \min_{g\in \partial\|\beta^\star\|} \, \bigl\{ \norm{g}_2^2 + \max_{z\in \Bc^\star} ~\norm{z}_2^2 + 2\langle g,z \rangle \bigr\} 
\end{align*}
where we used the symmetry of $\Bc^\star$. We now focus on the inner optimization problem. 

It is easy to see that vertices of the (scaled) $\ell_1$ norm ball maximize both $\norm{z}_2^2$ and $\langle g,z \rangle$. Therefore, the optimal value of the original inner problem is given by 
\[
\norm{g}_2^2 + 1 + 2\norm{g}_\infty .
\]
Now, we would like to minimize the above over all $g\in \partial \norm{\beta^\star}_\infty$ where 
\begin{align}
\partial \norm{\beta^\star}_\infty
&= \bigl\{ g:~ \langle g, \beta^\star\rangle = \norm{\beta^\star}_\infty,~ \norm{g}_1\leq 1 \bigr\} \nonumber\\
&= \bigl\{ g:~ g_i = 0 \text{ if } \abs{\beta^\star_i}<\norm{\beta^\star}_\infty,~ 
\norm{g}_1=1,~ g \circ \beta \geq 0 \bigr\}.\label{eq:subdiff-linf}
\end{align}
This time, note that a vector with all equal values minimizes both the $\ell_2$ and the $\ell_\infty$ norm subject to $\ell_1$ constraints. Therefore, for $t = \abs{\{i\in[p]:~ \abs{\beta^\star_i} = \norm{\beta^\star}_\infty\}}$, the optimal $g$ has~$t$ nonzero entries with absolute values equal to $1/t$, which yields 
\begin{align*}
\varphi^2(\beta^\star) 
\leq \frac{1}{t} + 1 + 2 \cdot\frac{1}{t} 
= 1 + \frac{3}{t} 
\leq 4
\end{align*}
and finishes the proof. 
\end{proof}

\begin{proof}[Proof of \autoref{lem:varphi-linf-exact}]
Consider \eqref{eq:varphi-max-ext} and observe that $\ext(\Bc^\star) = \{\pm e_i:~ i\in[p]\}$ where $e_i$ is the $i$-th standard basis vector. 
Define $S = \{i\in[p]:~ \beta^\star_i = \norm{\beta^\star}_\infty \} $ and $t=\abs{S}$. 

\begin{itemize}
\item Case 1: For $i\not\in S$ and $z=\pm e_i$ we have $\dist^2(z, \partial \norm{\beta^\star}_\infty) = \min_{g\in \partial \norm{\beta^\star}_\infty} 1+\norm{g}_2^2=1 + \frac{1}{t}$. If $S=[p]$, we ignore this case in the maximum over $z\in \ext(\Bc^\star)$ in \eqref{eq:varphi-max-ext}. 

\item Case 2: For $i\in S$ and $z = \sign(\beta^\star_i)e_i$ we have $\dist(z, \partial \norm{\beta^\star}_\infty) =0$. 

\item Case 3: For $i\in S$ and $z = -\sign(\beta^\star_i)e_i$, the distance is equal to the distance of $-\abs{z}$ to a $t$-dimensional simplex whose square, by \autoref{lem:dist-ei-simplex}, is equal to $1+\frac{1}{t-1}$ when $t\geq 2$ and is equal to $4$ when $t=1$. 
\end{itemize}
Gathering all of the above into the maximum over $z\in \ext(\Bc^\star)$ in \eqref{eq:varphi-max-ext} yields the desired result.
\end{proof}

\subsection{Doubly-sparse Norms: \texorpdfstring{$k\square 1$}{$(k,1)$}}

\begin{lemma}\label{lem:all-k-1}
We have 
\begin{enumerate}
\item \label{lem:all-k-1-normk1}
$\norm{\beta}_{k\square 1} = \max\{\frac{1}{\sqrt{k}}\norm{\beta}_1 , \sqrt{k}\norm{\beta}_\infty \}$

\item \label{lem:all-k-1-normk1-dual}
$\norm{\beta}_{k\square 1}^\star = \frac{1}{\sqrt{k}}\sum_{i=1}^k \bar\beta_i = \inf_{u,v} \bigl\{ \frac{1}{\sqrt{k}}\norm{u}_1 + \sqrt{k}\norm{v}_\infty:~ \beta = u+v \bigr\}$ which leads to a representation as an ordered weighted $\ell_1$ norm, $\norm{\cdot}^\star_{k\square 1} = \norm{\cdot}_w$, with $w = \frac{1}{\sqrt{k}}[\one_k^\sT~,~ \zero_{p-k}^\sT]^\sT$.

\item \label{lem:all-k-1-normk1-ext}
$\ext(\Bc_{k\square 1}) = \Sc_{k,1}\cap \mathbb{S}^{p-1} = \{Q\theta:~ Q\in \mathcal{P}_\pm,~ \theta = \frac{1}{\sqrt{k}}[\one_k^\sT ~,~ \zero_{p-k}^\sT]^\sT\}$.

\item \label{lem:all-k-1-normk1-dual-ext}
$\ext(\Bc_{k\square 1}^\star) = \{Q\theta:~\theta\in A,~Q\in\mathcal{P}_\pm\}$ where $A = \{\sqrt{k}e_1, \frac{1}{\sqrt{k}}\one_p\}$.

\end{enumerate}
\end{lemma}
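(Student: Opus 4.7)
I would start with Part~\autoref{lem:all-k-1-normk1-dual}, since \autoref{lem:dual-len-proj} reduces computing $\nor_{k\square 1}^\star$ to evaluating the $\ell_2$ length of a projection onto $\Sc_{k,1}$. Using \autoref{lem:proj-bar-Skd} to pass to a sorted nonnegative input $\bar{\beta}$ and then applying the two-step recipe of \autoref{lem:proj-Skd} (zero out all but the top $k$ coordinates, then run $1$-dimensional $K$-means with a single cluster, which collapses the retained entries to their arithmetic mean), one obtains $\proj(\bar{\beta};\Sc_{k,1}) = \gamma [\one_k^\sT,\zero_{p-k}^\sT]^\sT$ with $\gamma = \frac{1}{k}\sum_{i=1}^k \bar{\beta}_i$, whose $\ell_2$ length is $\sqrt{k}\,\gamma = \frac{1}{\sqrt{k}}\sum_{i=1}^k\bar{\beta}_i$. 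This simultaneously identifies $\nor_{k\square 1}^\star$ as the ordered weighted $\ell_1$ norm of \autoref{sec:varphi-owl} with weights $w = \frac{1}{\sqrt{k}}[\one_k^\sT,\zero_{p-k}^\sT]^\sT$.

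For the infimal-convolution expression in the same item, I would use the variational identity $\sum_{i=1}^k\bar{\beta}_i = \min_{t\ge 0}\{kt + \sum_{i=1}^p(|\beta_i|-t)_+\}$, attained at $t=\bar{\beta}_k$; splitting $\beta = u+v$ via $v_i = \sign(\beta_i)\min(|\beta_i|,t)$ converts the inner minimization into $\min_{u+v=\beta}\{\norm{u}_1 + k\norm{v}_\infty\}$, and dividing by $\sqrt{k}$ yields the claim. Part~\autoref{lem:all-k-1-normk1} then follows by dualizing: the dual of an infimal convolution of two norms equals the maximum of their duals (cf.~Fact~2.21 in \citep{artacho2014applications}, already referenced in the paper), so $\nor_{k\square 1} = \max\{(\frac{1}{\sqrt{k}}\nor_1)^\star,(\sqrt{k}\nor_\infty)^\star\} = \max\{\sqrt{k}\nor_\infty,\tfrac{1}{\sqrt{k}}\nor_1\}$.

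For Part~\autoref{lem:all-k-1-normk1-ext}, the containment $\Bc_{k\square 1} \subseteq \Bc_2$ (since $\nor_{k\square 1}\ge\nor_{p\square p}=\nor_2$) together with $\Sc_{k,1}\cap\mathbb{S}^{p-1}\subseteq\partial\Bc_2$ forces, by strict convexity of the $\ell_2$ ball, every generator to be extreme in $\Bc_{k\square 1}=\conv(\Sc_{k,1}\cap\mathbb{S}^{p-1})$; the signed-permutation description is immediate because every element of $\Sc_{k,1}\cap\mathbb{S}^{p-1}$ has exactly $k$ nonzero entries of common absolute value $1/\sqrt{k}$. For Part~\autoref{lem:all-k-1-normk1-dual-ext}, I would polarize the intersection representation $\Bc_{k\square 1} = \sqrt{k}\Bc_1 \cap \frac{1}{\sqrt{k}}\Bc_\infty$ from Part~\autoref{lem:all-k-1-normk1} to obtain $\Bc^\star_{k\square 1} = \conv\bigl(\frac{1}{\sqrt{k}}\Bc_\infty \cup \sqrt{k}\Bc_1\bigr)$; the candidate extreme points are then the signed permutations of $\sqrt{k}e_1$ (vertices of $\sqrt{k}\Bc_1$) and of $\frac{1}{\sqrt{k}}\one_p$ (vertices of $\frac{1}{\sqrt{k}}\Bc_\infty$). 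The main step that requires care, and which I expect to be the most delicate, is ruling out absorption of either family of vertices inside the other convex piece; tracking a single coordinate (or the $\ell_1/\ell_\infty$ norm) along a hypothetical convex combination $\mu a + (1-\mu)b$ with $a\in\sqrt{k}\Bc_1$, $b\in\frac{1}{\sqrt{k}}\Bc_\infty$ forces a contradiction in the generic range $1<k<p$, because $\sqrt{k}e_1\notin\frac{1}{\sqrt{k}}\Bc_\infty$ (as $\sqrt{k}>1/\sqrt{k}$) and $\frac{1}{\sqrt{k}}\one_p\notin\sqrt{k}\Bc_1$ (as $p/\sqrt{k}>\sqrt{k}$), yielding the claimed characterization.
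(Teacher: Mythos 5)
Your proposal is correct, and in places it takes a genuinely different --- and somewhat more complete --- route than the paper. For item (2) the paper simply parametrizes $\Sc_{k,1}\cap\mathbb{S}^{p-1}$ as the signed permutations of $\tfrac{1}{\sqrt{k}}[\one_k^\sT,\zero_{p-k}^\sT]^\sT$ and evaluates $\sup_\theta\langle\theta,\beta\rangle$ directly, citing Bhatia both for the duality in item (1) and for the infimal-convolution identity; you instead compute $\norm{\proj(\bar\beta;\Sc_{k,1})}_2$ via \autoref{lem:dual-len-proj} and the one-cluster K-means step of \autoref{lem:proj-Skd}, prove the infimal-convolution identity yourself through the threshold representation $\sum_{i=1}^k\bar\beta_i=\min_{t\ge 0}\{kt+\sum_i(|\beta_i|-t)_+\}$, and then recover item (1) by biduality with the max-of-duals fact. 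All of this is sound; the one small elision is that your splitting $v_i=\sign(\beta_i)\min(|\beta_i|,t)$ only gives the inequality ``infimal convolution $\le$ top-$k$ sum,'' and the reverse bound $\sum_{i=1}^k\bar\beta_i\le\norm{u}_1+k\norm{v}_\infty$ for an arbitrary decomposition $\beta=u+v$ should be stated, though it is a one-line estimate. Item (3) is argued exactly as in the paper: all atoms lie on $\mathbb{S}^{p-1}$, so none is a proper convex combination of points of $\Bc_{k\square 1}\subseteq\Bc_2$; your appeal to strict convexity of the $\ell_2$ ball is the same observation. On item (4) you in fact do more than the paper: the paper invokes \autoref{lem:ext-inf-conv}, which only yields the inclusion $\ext(\Bc_{k\square 1}^\star)\subseteq\{Q\theta:\theta\in A,\,Q\in\mathcal{P}_\pm\}$, whereas your polarization of $\Bc_{k\square 1}=\sqrt{k}\Bc_1\cap\tfrac{1}{\sqrt{k}}\Bc_\infty$ together with the coordinate/$\ell_1$-tracking argument also certifies that the candidate points are genuinely extreme (e.g., writing $\sqrt{k}e_1=\mu a+(1-\mu)b$ with $a\in\sqrt{k}\Bc_1$, $b\in\tfrac{1}{\sqrt{k}}\Bc_\infty$ forces $\mu=1$ when $k>1$, after which extremality in $\sqrt{k}\Bc_1$ finishes the job, and symmetrically for $\tfrac{1}{\sqrt{k}}\one_p$ using $p>k$). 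Your restriction to $1<k<p$ is not merely for convenience: when $k=1$ the points $\pm e_i$ lie in the cube $\Bc_\infty$ but are not its vertices, and when $k=p$ the sign vectors $\pm\tfrac{1}{\sqrt{p}}\one_p$ lie on the boundary of $\sqrt{p}\Bc_1$ without being vertices, so the stated equality in item (4) really does require $1<k<p$ --- a caveat the paper's one-line citation passes over silently.
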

\begin{proof}[Proof of \autoref{lem:all-k-1}]
The duality of $\frac{1}{\sqrt{k}}\sum_{i=1}^k \bar\beta_i$ and $\max\{\frac{1}{\sqrt{k}}\norm{\beta}_1 , \sqrt{k}\norm{\beta}_\infty \}$ is well-known; e.g., see Exercise IV.1.18~in \citep{bhatia1997matrix}. The representation of $\sum_{i=1}^k \bar\beta_i$ as an infimal convolution can be found in \cite[Proposition IV.1.5]{bhatia1997matrix}. 

Recall the definition of $\Sc_{k,d}$ from \eqref{eq:def-Skd} which gives
\begin{align*}
\Sc_{k,1} 
&= \bigl\{\beta:~ \card(\beta) \leq k \,,~ \abs{\{\bar{\beta}_1,\ldots,\bar{\beta}_k\}} \leq 1 \bigr\} \\
&= \bigl\{\beta:~ \card(\beta) = k \,,~ \abs{\{\bar{\beta}_1,\ldots,\bar{\beta}_k\}} = 1 \bigr\} \cup\{0\}\\
&= \bigl\{\eta Q\theta:~ \theta = [\one_k^\sT, \zero_{p-k}^\sT],~ Q\in\mathcal{P}_\pm ,~ \eta \in\mathbb{R}\bigr\}.
\end{align*}
Therefore, 
\[
\norm{\beta}_{k\square 1}^\star 
= \sup\{\langle \theta,\beta\rangle: \theta\in\Sc_{k,1},~\norm{\theta}_2 =1 \}
= \sup\{\frac{1}{\sqrt{k}}\langle Q\theta,\beta\rangle: \theta= [\one_k^\sT, \zero_{p-k}^\sT],~ Q\in\mathcal{P}_\pm \}
= \frac{1}{\sqrt{k}} \sum_{i=1}^k \bar\beta_i.
\]
These establish \autoref{lem:all-k-1-normk1} and \autoref{lem:all-k-1-normk1-dual}. 
To prove \autoref{lem:all-k-1-normk1-ext}, observe that by the representation of $\Sc_{k,1}$ above, and by the definitions in \eqref{eq:Snorm-gauge} and \eqref{eq:gauge}, we have 
\[
\ext(\Bc_{k\square 1}) \subseteq \Sc_{k,1}\cap \mathbb{S}^{p-1} = \{Q\theta:~ Q\in \mathcal{P}_\pm,~ \theta = \frac{1}{\sqrt{k}}[\one_k^\sT ~,~ \zero_{p-k}^\sT]^\sT\}.
\] 
Then, since each element on the right-hand side has $\ell_2$ norm equal to $1$, no one can be in the convex hull of others. Therefore, we get equality which establishes the claim. 
Alternatively, assuming \autoref{lem:all-k-1-normk1-dual}, then $\nor_{k\square 1}^\star$ is an ordered weighted $\ell_1$ norm with $w = [\frac{1}{\sqrt{k}}\one_k^\sT ~,~ \zero_{p-k}^\sT]^\sT$. Therefore, \autoref{lem:all-k-1-normk1-ext} can also be seen from \autoref{lem:extBst-owl}. 
\autoref{lem:all-k-1-normk1-dual-ext} follows from \autoref{lem:ext-inf-conv} and \autoref{lem:all-k-1-normk1-dual}. 
\end{proof}

\begin{remark}
The representation of $\nor_{k\square 1}^\star$ as an ordered weighted $\ell_1$ norm in \autoref{lem:all-k-1-normk1-dual} and the atomic representation for this family of norms in \cite[Theorem 1]{zeng2014ordered}, provide
\[
\ext(\Bc_{k\square 1}^\star) \subseteq \{Q\theta:~ Q\in\mathcal{P}_\pm ,~ \theta = \frac{\sqrt{k}}{\min\{r,k\}} [\one_r^\sT, \zero_{p-r}^\sT]^\sT ~~~ r\in[p] \}.
\]
However, as evident from \autoref{lem:all-k-1} (\autoref{lem:all-k-1-normk1-dual-ext}), many of the points on the right-hand side are redundant (lie in the convex hull of others). 
\end{remark}

\begin{lemma}\label{lem:varphi-k-1}
For a given $\beta\neq 0$, define $k^\star = \norm{\beta}_0$ and $t^\star = \abs{\{i\in[p]:~ \abs{\beta_i} = \norm{\beta}_\infty\}}$. Then, 
\begin{itemize}
\item If $\norm{\beta}_1 > k \norm{\beta}_\infty$ then $\varphi^2(\beta; \nor_{k\square 1}) = \max\bigl\{\frac{4k^\star}{k}, 2+\frac{k^\star}{k}+k \bigr\}$. 
\item If $\norm{\beta}_1 < k \norm{\beta}_\infty$ then $\varphi^2(\beta; \nor_{k\square 1}) = \max\bigl\{ 
k(1+\frac{1}{\max\{t^\star-1,1/3\}}),
2+\frac{k}{t^\star}+\frac{p}{k}
\bigr\}$
\item If $\norm{\beta}_1 = k \norm{\beta}_\infty$ then $\varphi^2(\beta; \nor_{k\square 1})$ is bounded from above by the {\em minimum} of the two above values. 
\end{itemize}
\end{lemma}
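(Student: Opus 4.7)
The plan is to invoke \autoref{lem:varphi-max-ext} to rewrite $\varphi^2$ as a maximum over $\ext(\Bc_{k\square 1}^\star)$, and then to combine this with an explicit description of $\partial\|\beta\|_{k\square 1}$ to reduce the problem to a finite collection of elementary distance computations. The two necessary ingredients are already available: from \autoref{lem:all-k-1} (\autoref{lem:all-k-1-normk1-dual-ext}) we have
\[
\ext(\Bc_{k\square 1}^\star)=\bigl\{\pm\sqrt{k}\,e_i:\, i\in[p]\bigr\}\,\cup\,\bigl\{\tfrac{1}{\sqrt{k}}s:\,s\in\{\pm1\}^p\bigr\},
\]
and from $\nor_{k\square 1}=\max\bigl\{\tfrac{1}{\sqrt{k}}\nor_1,\sqrt{k}\nor_\infty\bigr\}$ (\autoref{lem:all-k-1-normk1}) we obtain $\partial\|\beta\|_{k\square 1}$ as (i) $\frac{1}{\sqrt{k}}\partial\|\beta\|_1$ when $\|\beta\|_1>k\|\beta\|_\infty$, (ii) $\sqrt{k}\,\partial\|\beta\|_\infty$ when $\|\beta\|_1<k\|\beta\|_\infty$, and (iii) the convex hull of the two in the tie case. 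Setting $S=\supp(\beta)$ and $T=\{i:|\beta_i|=\|\beta\|_\infty\}$, the subdifferentials reduce to standard forms: $\{g/\sqrt{k}:g_i=\sign(\beta_i)\text{ on }S,\,|g_i|\le1\text{ on }S^c\}$ and $\{\sqrt{k}\,g:\supp(g)\subseteq T,\,g\circ\beta\ge0,\,\|g_T\|_1=1\}$.

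In Case 1, I would compute $\dist^2(z,\partial\|\beta\|_{k\square 1})$ for the two families separately. For $z=\pm\sqrt{k}\,e_i$ the free coordinates in $S^c$ can be zeroed out, the coordinates on $S\setminus\{i\}$ contribute $(k^\star-1)/k$, and the $i$-th coordinate contributes $(\sqrt{k}\pm 1/\sqrt{k})^2$; maximizing the sign and over $i\in S$ versus $i\in S^c$ yields $k+2+k^\star/k$ as the dominant value. For $z=s/\sqrt{k}$ the optimal $g$ on $S^c$ matches $s$ exactly, so the distance squared equals $\frac{1}{k}\sum_{j\in S}(s_j-\sign(\beta_j))^2$, whose maximum over $s\in\{\pm1\}^p$ is $4k^\star/k$. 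Taking the max of the two gives the first formula. In Case 2, for $z=\pm\sqrt{k}\,e_i$ with $i\in T$ the key subproblem is $\min\bigl\{k(a+1)^2+k(1-a)^2/(t^\star-1):a\in[0,1]\bigr\}$ (with the convention that $t^\star=1$ forces $a=1$); the optimum is $k$ times $1+1/\max\{t^\star-1,1/3\}$, and the cases $i\in T^c$ are dominated. For $z=s/\sqrt{k}$ the optimal $s$ chooses $s_j=-\sign(\beta_j)$ on $T$, and after substituting a uniform minimizer $a_j=1/t^\star$ on $T$ one obtains $2+k/t^\star+p/k$. Case 3 is immediate from $\partial\|\beta\|_{k\square 1}\supseteq G_1$ and $\supseteq G_2$, hence distances to the subdifferential are at most the minima of those in Cases 1 and 2.

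The main obstacle will be the min-problem arising in Case 2 for $z=\pm\sqrt{k}e_i$ with $i\in T$: the subdifferential has sign and support constraints, so the optimization over $g$ is a constrained quadratic on the simplex and requires a small calculation (use Cauchy--Schwarz to reduce $\sum_{j\in T\setminus\{i\}}g_j^2$ to a function of $a=g_i$, then optimize over $a\in[0,1]$, treating $t^\star=1$, $t^\star=2$, and $t^\star\ge3$ appropriately to recover the factor $\max\{t^\star-1,1/3\}$). Everything else is linear algebra and bookkeeping; in particular, one must also verify that for each extreme point the other family of extreme points (those not dominating) are indeed smaller, which amounts to elementary comparisons like $k+2+k^\star/k\ge k-2+(k^\star+1)/k$ and $k(1+1/(t^\star-1))\ge k(1+1/t^\star)$.
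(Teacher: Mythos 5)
Your plan is correct and follows essentially the same route as the paper: split into the three activity regimes of $\nor_{k\square 1}=\max\{\tfrac{1}{\sqrt{k}}\nor_1,\sqrt{k}\nor_\infty\}$, compute the distance from dual-ball points to the corresponding subdifferential, and use the simplex-distance computation (the paper's \autoref{lem:dist-ei-simplex}) to get the $\max\{t^\star-1,1/3\}$ factor. The only substantive difference is in Case 1: the paper maximizes over the whole boundary of $\Bc^\star$ (reducing by sign/permutation invariance and then invoking Bauer's maximum principle twice to push the maximizer to a vertex), whereas you plug in the two families of extreme points directly via \autoref{lem:varphi-max-ext} and \autoref{lem:all-k-1}; your version is a bit cleaner and lands on the same two candidate values $k+2+k^\star/k$ and $4k^\star/k$. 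All the elementary comparisons you flag (dominance of $i\in S$ over $i\in S^c$, of $i\in T$ over $i\in T^c$, and the constrained quadratic over the simplex for the $t^\star=1$ versus $t^\star\ge2$ cases) check out.
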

As an example, consider $k=p$ and assume $\beta$ is not a multiple of $\one_p$, hence $t<p$. Then, using the second item above, we recover the result of \autoref{lem:varphi-linf-exact}.
 
\begin{proof}[Proof of \autoref{lem:varphi-k-1}]
Recall from \autoref{lem:all-k-1} (\autoref{lem:all-k-1-normk1}) that $\norm{\beta}_{k\square 1} = \max\{\frac{1}{\sqrt{k}}\norm{\beta}_1 , \sqrt{k}\norm{\beta}_\infty \}$. Therefore, 
\begin{align*}
\partial \norm{\beta}_{k \square 1} = \begin{cases}
	\frac{1}{\sqrt{k}}\partial \norm{\beta}_1 & \norm{\beta}_1 > k \norm{\beta}_\infty, \\
	\sqrt{k}\partial \norm{\beta}_\infty & \norm{\beta}_1 < k \norm{\beta}_\infty, \\
	\conv(\frac{1}{\sqrt{k}}\partial \norm{\beta}_1 \cup \sqrt{k}\partial \norm{\beta}_\infty) & \norm{\beta}_1 = k \norm{\beta}_\infty.
\end{cases}
\end{align*}
Consider $S = \supp(\beta)$. 
In the following, we first compute the distance to the subdifferential in each case. 
\begin{itemize}
\item If $\norm{\beta}_1 > k \norm{\beta}_\infty$, we have $k^\star = \norm{\beta}_0 \geq \norm{\beta}_1/ \norm{\beta}_\infty > k$. Fix $z\in \Bc^\star$ and observe that 
\[
\dist^2(z, \frac{1}{\sqrt{k}}\partial \norm{\beta}_1)
= \sum_{i\in S} (z_i - \frac{1}{\sqrt{k}} \sign(\beta_i) )^2 
+ \sum_{i\not\in S} (\abs{z_i} - \frac{1}{\sqrt{k}})_+^2.
\] 
In maximizing the above over all $z\in \Bc^\star$, we use the sign-invariance property to arrive at 
\[
\varphi^2 = \max_z\, \bigl\{ 
	\sum_{i\in S} (\abs{z_i} + \frac{1}{\sqrt{k}} )^2 
	+ \sum_{i\not\in S} (\abs{z_i} - \frac{1}{\sqrt{k}})_+^2
	:~ \sum_{i=1}^k \bar z_i = \sqrt{k}
\bigr\}.
\]
Denote by $z^\star$ an optimal solution to the above. Using the permutation-invariance property, it is easy to show that if $i\in S$ and $j\in S^c$, then $\abs{z_i^\star} \geq \abs{z_j^\star}$. This allows for replacing $S$ with $[k^\star]$ as well as for adding a constraint $\abs{z_1}\geq \abs{z_2} \geq \cdots \geq \abs{z_p}$ (or $\abs{z}=\bar z$) to the above optimization without changing the optimal solution. 

Therefore, as the constraint is insensitive to the lowest $p-k$ values, we can set $\bar z_{k} = \bar z_{k+1} = \cdots = \bar z_{p} = \theta $. Then, we get 
\begin{align*}
\varphi^2 = \max_{0\leq \theta \leq 1/\sqrt{k}} ~ \max_{h} ~ \Bigl\{ 
\sum_{i=1}^{k-1} (h_i + \theta + \frac{1}{\sqrt{k}})^2 + (k^\star-k+1)(\theta + \frac{1}{\sqrt{k}})^2 + (p-k^\star)(\theta - \frac{1}{\sqrt{k}})_+^2 \\
\sum_{i=1}^{k-1} h_i = \sqrt{k}-k\theta ,~ h_1,\ldots,h_{k-1} \geq 0
\Bigr\}
\end{align*}
where we used the assumption $k^\star>k$ to break $[k^\star]$ into $[k-1]$ and $[k^\star]\backslash [k-1]$. The optimization problem over $h$ is a continuous-convex maximization over a compact convex domain. Hence, by Bauer's Maximum Principle (e.g., see \citet[Proposition~1.7.8]{schirotzek2007nonsmooth}), the maximum is attained by one of the extreme points of the feasible set, which due to symmetry in variables can be taken to be $h_{\rm opt}=[\sqrt{k}-k\theta, \zero_{k-2}]^\sT$. Plugging this into the above gives 
\begin{align*}
\varphi^2 = \max_{0\leq \theta \leq 1/\sqrt{k}} ~ \Bigl\{ 
 (\sqrt{k}-k\theta + \theta + \frac{1}{\sqrt{k}})^2 + (k^\star-1)(\theta + \frac{1}{\sqrt{k}})^2 
\Bigr\}.
\end{align*}
Again, we are dealing with a convex maximization problem which will attain its maximum at the boundary. Therefore, plugging $\theta=0$ and $\theta=\frac{1}{\sqrt{k}}$ in the objective yields
\begin{align}\label{eq:varphi-k-1-gtrk}
\varphi^2 = \max\bigl\{ k+2+\frac{k^\star}{k} , \frac{4k^\star}{k}\bigr\}.
\end{align}

\item If $\norm{\beta}_1 < k \norm{\beta}_\infty$, then $t=t^\star = \abs{T} \leq \norm{\beta}_1/\norm{\beta}_\infty < k$ where $T=\{i\in[p]:~ \abs{\beta_i} = \norm{\beta}_\infty\}$. Fix $z\in\Bc^\star$ and observe that 
\[
\dist^2(z, \sqrt{k}\partial \norm{\beta}_\infty)
= \min_h \bigl\{ \sum_{i\in T} (z_i - \sqrt{k} h_i \sign(\beta_i) )^2 
+ \sum_{i\not\in T} z_i^2:~ h\geq \zero_{t},~ \one^\sT h=1 \bigr\}.
\] 
In maximizing the above over all $z\in \ext(\Bc^\star)$, we use the sign-invariance property to arrive at
\[
\varphi^2 = \max_z\min_h \bigl\{ 
k\cdot \sum_{i\in T} ( \frac{1}{\sqrt{k}}\abs{z_i} + h_i )^2 
+ \sum_{i\not\in T} z_i^2
:~
h\geq \zero_{t},~ \one^\sT h=1,~
z\in\ext(\Bc^\star)
\bigr\}.
\]
Denote by $z^\star$ an optimal solution to the above. Using the permutation-invariance property, it is easy to show that if $i\in T$ and $j\in T^c$, then $\abs{z_i^\star} \geq \abs{z_j^\star}$. This allows for replacing $T$ with $[t]$ as well as for adding a constraint $\abs{z_1}\geq \abs{z_2} \geq \cdots \geq \abs{z_p}$ (or $\abs{z}=\bar z$) to the above optimization without changing the optimal solution. 
Hence, we get 
\begin{align}\label{eq:varphi-k-1-dumm}
\varphi^2 = \max_z \min_h \bigl\{ 
k\cdot \sum_{i=1}^t ( \frac{1}{\sqrt{k}}z_i + h_i )^2 
+ \sum_{i=t+1}^p z_i^2
:~
h\geq \zero_{t},~ \one^\sT h=1,~ 
z\in\{\sqrt{k}e_1, \frac{1}{\sqrt{k}}\one_p\}
\bigr\}
\end{align}
where we used \autoref{lem:all-k-1}. We now divide the maximization in two parts depending on the choice of $z$:
\begin{itemize}
\item If $z=\sqrt{k}e_1$, then $\dist^2(-\frac{1}{\sqrt{k}}z_{1:t}; \simplex_t) = 1+1/\max\{t-1,1/3\}$ leading to a corresponding value for objective in~\eqref{eq:varphi-k-1-dumm} of $k(1+1/\max\{t-1,1/3\})$.

\item If $z=\frac{1}{\sqrt{k}}\one_p$, then $\frac{1}{\sqrt{k}}z_{1:t} = \frac{1}{k}\one_t$. Since $z_{1:t}$ is a multiple of $\one_t$, it is easy to see that $\dist^2(-\frac{1}{\sqrt{k}}z_{1:t}; \simplex_t) = t(\frac{1}{t}+\frac{1}{k})^2$ leading to a corresponding value for objective in~\eqref{eq:varphi-k-1-dumm} of 
\[
kt(\frac{1}{t}+\frac{1}{k})^2 + \frac{p-t}{k} = 2+\frac{k}{t}+\frac{p}{k}.
\]

\end{itemize}
Taking the maximum over the above three cases, we get 
\begin{align}\label{eq:varphi-k-1-lessk}
\varphi^2 = \max\bigl\{ 
k(1+\frac{1}{\max\{t-1,1/3\}}),
2+\frac{k}{t}+\frac{p}{k}
\bigr\}
\end{align}
for when $\norm{\beta}_1 < k \norm{\beta}_\infty$.

\item If $\norm{\beta}_1 = k \norm{\beta}_\infty$, we proceed with upper bounding $\varphi$ using \eqref{eq:varphi-k-1-gtrk} and \eqref{eq:varphi-k-1-lessk}. Observe that in this case, $\partial \norm{\beta}_{k\square 1}$ contains both $\frac{1}{\sqrt{k}}\partial\norm{\beta}_1$ and $\sqrt{k}\partial\norm{\beta}_\infty$. Therefore, for any fixed vector, the distance to $\partial \norm{\beta}_{k\square 1}$ is smaller than the distance to either of the other two subdifferentials. Therefore, 
\begin{align*}
\varphi^2 \leq \min\{\eqref{eq:varphi-k-1-gtrk} , \eqref{eq:varphi-k-1-lessk} \}
\end{align*}
for when $\norm{\beta}_1 = k \norm{\beta}_\infty$. For such condition to hold, it is necessary that $t\leq k \leq k^\star$. 
\end{itemize}
\end{proof}

\begin{lemma}\label{lem:varphi-bnd-norm-k-1-dual}
$\varphi^2(\beta; \nor^\star_{k\square 1}) \leq 4 \min\{1 ,\frac{\norm{\beta}_0}{k}\}$.
\end{lemma}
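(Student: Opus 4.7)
The plan is to reduce this to a direct application of \autoref{lem:varphi-OWL}. First, I will invoke \autoref{lem:all-k-1} (\autoref{lem:all-k-1-normk1-dual}) to identify $\nor^\star_{k\square 1}$ with an ordered weighted $\ell_1$ norm $\nor_\owl$ whose weight vector is
\[
w \;=\; \tfrac{1}{\sqrt{k}}\bigl[\one_k^\sT ,\; \zero_{p-k}^\sT\bigr]^\sT \in \mathbb{R}^p .
\]
This $w$ is nonnegative and non-increasing, so it is admissible for the definition of $\nor_\owl$ and hence for \autoref{lem:varphi-OWL}.

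Next, I will apply the upper bound in \autoref{lem:varphi-OWL} directly:
\[
\varphi^2(\beta;\nor^\star_{k\square 1}) \;=\; \varphi^2(\beta;\nor_\owl) \;\le\; 4\,\norm{w_\cG}_2^2,
\]
where, in the notation of \autoref{lem:varphi-OWL}, $\cG = \supp(\bar\beta) \subseteq [p]$ is the support of the sorted absolute-value vector $\bar\beta$. Since $\bar\beta$ is sorted in descending order, $\supp(\bar\beta) = \{1,2,\ldots,\norm{\beta}_0\}$, so the restriction $w_\cG$ picks out exactly the first $\norm{\beta}_0$ coordinates of $w$.

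Finally, I will compute $\norm{w_\cG}_2^2$ by a straightforward case split on whether $\norm{\beta}_0 \le k$ or $\norm{\beta}_0 > k$. Since $w_i^2 = 1/k$ for $i \le k$ and $w_i^2 = 0$ for $i > k$, we get
\[
\norm{w_\cG}_2^2 \;=\; \sum_{i=1}^{\norm{\beta}_0} w_i^2 \;=\; \tfrac{1}{k}\min\{k,\norm{\beta}_0\} \;=\; \min\Bigl\{1,\;\tfrac{\norm{\beta}_0}{k}\Bigr\},
\]
and combining with the previous display yields the claimed bound. There is no substantive obstacle here: the only care required is in matching the abused notation $w_\cG$ (union of blocks in the partition) used in \autoref{lem:varphi-OWL} with the support-of-$\bar\beta$ interpretation above, and in verifying that $\nor_\owl$ with the zero-tailed $w$ is still a bona fide norm (which it is, since $w_1 > 0$ guarantees $\nor_\owl(\beta) \ge w_1\bar\beta_1 > 0$ for any $\beta \ne 0$, so \autoref{lem:varphi-OWL} applies without modification).
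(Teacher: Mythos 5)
Your proposal is correct and follows essentially the same route as the paper: identify $\nor^\star_{k\square 1}$ as an ordered weighted $\ell_1$ norm via \autoref{lem:all-k-1}, apply the bound $\varphi^2 \le 4\norm{w_\cG}_2^2$ from \autoref{lem:varphi-OWL} with $\cG=\supp(\bar\beta)$, and evaluate $\norm{w_\cG}_2^2=\min\{1,\norm{\beta}_0/k\}$. Your added remarks on the zero-tailed weight vector and the notation $w_\cG$ are careful but not a point of divergence.
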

\begin{proof}[Proof of \autoref{lem:varphi-bnd-norm-k-1-dual}]
Recall the representation of $\nor_{k\square 1}^\star$ as an ordered weighted $\ell_1$ norm in \autoref{lem:all-k-1} (\autoref{lem:all-k-1-normk1-dual}) with $w = \frac{1}{\sqrt{k}}[\one_k^\sT~,~ \zero_{p-k}^\sT]^\sT$. Moreover, from \autoref{lem:varphi-OWL} we have $\varphi^2(\beta; \nor_\owl) \leq 4\norm{w_\cG}_2^2$ where $\cG = \supp(\bar\beta)$. 
These establish the result. 
\end{proof}

\clearpage 

\bibliography{JJF19}
\bibliographystyle{plainnat}

\end{document}